\def\eqref#1{equation~\ref{#1}}
\def\1{\bm{1}}
\def\vl{{\bm{l}}}
\def\vu{{\bm{u}}}
\def\vv{{\bm{v}}}
\def\mA{{\bm{A}}}
\def\mI{{\bm{I}}}
\def\mM{{\bm{M}}}
\def\mP{{\bm{P}}}
\DeclareMathAlphabet{\mathsfit}{\encodingdefault}{\sfdefault}{m}{sl}
\SetMathAlphabet{\mathsfit}{bold}{\encodingdefault}{\sfdefault}{bx}{n}
\def\gF{{\mathcal{F}}}
\def\gP{{\mathcal{P}}}
\def\gS{{\mathcal{S}}}
\def\gV{{\mathcal{V}}}
\definecolor{mydarkblue}{rgb}{0,0.08,0.45}
\newcommand*{\ldblbrace}{\{\mskip-5mu\{}
\newcommand*{\rdblbrace}{\}\mskip-5mu\}}
\definecolor{jcg}{RGB}{100,160,0}
\renewcommand*{\hom}{\mathsf{hom}}
\newcommand*{\Hom}{\mathsf{Hom}}
\newcommand*{\sub}{\mathsf{sub}}
\newcommand*{\hash}{\mathsf{hash}}
\newcommand*{\aut}{\mathsf{aut}}
\newcommand*{\cnt}{\mathsf{treeCount}}
\newcommand*{\treecount}{\mathsf{treeCount}}
\newcommand*{\dis}{\mathsf{dis}}
\newcommand*{\dep}{\mathsf{dep}}
\newcommand*{\Dep}{\mathsf{Dep}}
\newcommand*{\atp}{\mathsf{atp}}
\newcommand*{\meta}{\mathsf{Meta}}
\newcommand*{\twist}{\mathsf{twist}}
\newtheorem{theorem}{Theorem}[section]
\newtheorem{lemma}[theorem]{Lemma}
\newtheorem{fact}[theorem]{Fact}
\newtheorem{corollary}[theorem]{Corollary}
\theoremstyle{definition}
\newtheorem{definition}[theorem]{Definition}
\newtheorem{remark}[theorem]{Remark}
\renewcommand \thepart{}
\renewcommand \partname{}
\newdimen\origiwspc
\newdimen\origiwstr
\title{Homomorphism Expressivity of Spectral Invariant Graph Neural Networks}
\author{Jingchu Gai $^{1}$ \quad Yiheng Du $^{1}$ \quad Bohang Zhang$^{1}$\thanks{Project lead.}\quad Haggai Maron $^{2,3}$\quad Liwei Wang $^{1}$\\
{\small $^1$Peking University\quad $^2$Technion\quad $^3$NVIDIA Research }\\
{\small \texttt{gaijingchu@stu.pku.edu.cn}, \quad \texttt{zhangbohang@pku.edu.cn},\quad  \texttt{duyiheng@stu.pku.edu.cn}}\\
{\small\texttt{hmaron@nvidia.com},\quad \texttt{wanglw@pku.edu.cn}}
}
\begin{document}

\maketitle

\doparttoc 
\faketableofcontents

\vspace{-10pt}

\begin{abstract}
    
    Graph spectra are an important class of structural features on graphs that have shown promising results in enhancing Graph Neural Networks (GNNs). Despite their widespread practical use, the theoretical understanding of the power of spectral invariants --- particularly their contribution to GNNs --- remains incomplete. In this paper, we address this fundamental question through the lens of homomorphism expressivity, providing a comprehensive and quantitative analysis of the expressive power of spectral invariants. Specifically, we prove that spectral invariant GNNs can homomorphism-count exactly a class of specific tree-like graphs which we refer to as \emph{parallel trees}. We highlight the significance of this result in various contexts, including establishing a quantitative expressiveness hierarchy across different architectural variants, offering insights into the impact of GNN depth, and understanding the subgraph counting capabilities of spectral invariant GNNs. In particular, our results significantly extend \citet{arvind2024hierarchy} and settle their open questions. Finally, we generalize our analysis to higher-order GNNs and answer an open question raised by \citet{zhang2024expressive}.
    
    \end{abstract}
    \vspace{-3pt}
    
    \section{Introduction}
    
    The graph spectrum, defined as the eigenvalues of a graph matrix, is an important class of graph invariants. It encapsulates rich graph structural information including the graph connectivity, bipartiteness, node clustering patterns, diameter, and more \citep{brouwer2011spectra}. Besides eigenvalues, generalized spectral information may also include projection matrices, which further encodes node relations such as distances and random walk properties, enabling the definition of more fine-grained graph invariants \citep{furer2010power}. These spectral invariants possesses strong \emph{expressive power}. For example, a well-known conjecture raised by \citet{van2003graphs,haemers2004enumeration} claimed that almost all graphs can be uniquely determined by their spectra up to isomorphism. The rare exceptions, known as cospectral graphs, tend to be highly similar in their structure and continue to be an active area of research in graph theory \citep{lorenzen2022cospectral}.
    
    In the machine learning community, spectral invariants have recently gained increasing popularity in designing Graph Neural Networks (GNNs) \citep{bruna2013spectral,defferrard2016convolutional,lim2023sign,huang2024stability,feldman2023weisfeiler,zhang2024expressive,black2024comparing}, owing to several reasons. From a practical perspective, graph spectra have been shown to be closely related to certain practical applications such as molecular property prediction \citep{bonchev2018chemical}. Moreover, a recent line of works \citep{xu2019powerful,morris2019weisfeiler,li2020distance,chen2020can,zhang2023rethinking} has pointed out that the expressive power of classic message-passing GNNs (MPNNs) are inherently limited, and cannot encode important graph structure like connectivity or distance. Incorporating spectral invariants into the design of MPNNs can naturally alleviate the limitations.
    
    \looseness=-1 Therefore, from both theoretical and practical perspectives, it is beneficial to give a systematic understanding of the power of spectral invariants and their corresponding GNNs. The earliest study in this area may be traced back to \citet{furer2010power}, who first linked the power of several spectral invariants to the classic Weisfeiler-Lehman test \citep{weisfeiler1968reduction} by proving that these invariants are upper bounded by 2-FWL. More recently, \citet{rattan2023weisfeiler} further revealed a strict expressivity gap between F{\"u}rer's spectral invariants and 2-FWL. \citet{zhang2024expressive} and \citet{arvind2024hierarchy} analyzed \emph{refinement-based} spectral invariants, which offer insights into the power of real GNN architectures. Yet, all of these works study expressiveness through the lens of Weisfeiler-Lehman tests, which has inherent limitations. So far, there remains a lack of \emph{comprehensive} understanding of the \emph{practical} power of spectral invariants and their corresponding GNN architectures.
    
    
    \textbf{Current work.} In this paper, we investigate the aforementioned questions via a novel perspective called \emph{graph homomorphism}. Specifically, \citet{zhang2024beyond} recently proposed homomorphism expressivity as a quantitative framework to better understand the expressive power of various GNN architectures. As homomorphism expressivity is a fine-grained and practical measure, it naturally addresses several limitations of the WL test. However, extending this framework to other architectures, such as spectral invariant GNNs, poses significant challenges. In fact, whether homomorphism expressivity exists for a given architecture remains an open research direction (see \citet{zhang2024beyond}). In our context, this problem becomes even challenging since homomorphism and spectral invariants correspond to two orthogonal branches in graph theory. Here, we provide affirmative answers to all these questions by formally proving that the homomorphism expressivity for spectral invariant GNNs exists and can be elegantly characterized as a special class of \emph{parallel trees} (\cref{thm:main_theorem}). This offers deep insights into a series of previous studies, extending their results and answering several open questions. We summarize our results below:
    
    \begin{itemize}[topsep=0pt,leftmargin=25pt]
        \setlength{\itemsep}{0pt}
        \item \textbf{Separation power of spectral invariants/GNNs}. We offer a new proof that projection-based spectral invariants and corresponding GNNs are strictly bounded by 2-FWL (\cref{thm:compare_to_2fwl}). Moreover, we establish a \emph{quantitative hierarchy} among raw spectra information, projection, refinement-based spectral invariant, and various combinatorial variants of WL tests (see \cref{fig:hierarchy}). This $(\mathrm{i})$~recovers and extends results in \citet{rattan2023weisfeiler}, and $(\mathrm{ii})$~provides clear insights into the hierarchy established in \citet{zhang2024expressive}.
        
        \item \textbf{The power of refinement}. We offer a systematic understanding of the role of refinement in spectral invariant GNNs. We show increasing the number of iterations always leads to a strict improvement in expressive power (\cref{cor:k+1_iteration}), thus settling a key open question raised in \citet{arvind2024hierarchy}. Moreover, our counterexamples establish a tight lower bound on the number of iterations required to achieve maximal expressivity, which is in the same order of graph size. This advances a line of research regarding iteration numbers in WL tests \citep{furer2001weisfeiler,kiefer2016upper,lichter2019walk}.
        \item \textbf{Substructure counting power of spectral invariants/GNNs}. On the practical side, we precisely characterize the power of spectral invariants/GNNs in counting certain subgraphs as well as the required iterations. For example, they can count all cycles within 7 vertices, while using 1 iteration already suffices to count all cycles within 6 vertices (\cref{cor:subgraph_count_finite_iteration}).
    \end{itemize}
    Empirically, a set of experiments on both synthetic and real-world tasks validate our theoretical results, showing that the homomorphism expressivity of spectral invariant GNNs well reflects their performance in down-stream tasks.
    
    \section{Preliminaries}
    \label{sec:preliminary}
    \textbf{Notations.} We use $\{\ \}$ and $\ldblbrace\ \rdblbrace$ to denote sets and multisets, respectively. The cardinality of a given (multi)set $S$ is denoted as $|S|$. In this paper, we consider finite, undirected, simple graphs with no self-loops or repeated edges, and without loss of generality we only consider connected graphs. Let $G=(V_G,E_G)$ be a graph with vertex set $V_G$ and edge set $E_G$, where each edge in $E_G$ is a set $\{u,v\}\subset V_G$ of cardinality two. The \emph{neighbors} of vertex $u$ is denoted as $N_G(u):=\{v\in V_G|\{u,v\}\in E_G\}$. A \emph{walk} of length $k$ is a sequence of vertices $u_0,\cdots,u_{k}\in V_G$ such that $\{u_{i-1},u_i\}\in E_G$ for all $i\in[k]$. It is further called a \emph{path} if $u_i\neq u_j$ for all $i<j$, and it is called a \emph{cycle} if $u_0,\cdots,u_{k-1}$ is a path and $u_0=u_k$. The shortest path distance between two nodes $u,v\in V_G$, denoted as $\dis_G(u,v)$, is the minimum length of walk from $u$ to $v$. A graph $F=(V_F,E_F)$ is a \emph{subgraph} of $G$ if $V_F\subset V_G$ and $E_F\subset E_G$. We use $P_n$ (resp. $C_n$) to denote a graph corresponding to a path (resp. cycle) of $n$ vertices. A graph is called a tree if it is connected and contains no cycle as a subgraph. We denote by $T^r$ the rooted tree $T$ with root $r$. The depth of a rooted tree $T^r$ is defined as $\dep(T^r)=\max_{u\in V_T} \dis_T(r,u)$, and the depth of $T$ is defined as $\dep(T)=\min_{r\in V_T} \dep(T^r)$.
    
    \subsection{Spectral invariant GNNs}
    \label{sec:sepctral_invariant_gnn}
    
    Let $G$ be a graph of $n$ vertices where $V_G=[n]$, and denote by $\mA\in\{0,1\}^{n\times n}$ the adjacency matrix of $G$. The \emph{spectrum} of $G$ is defined as the multiset of all eigenvalues of $\mA$. In addition to eigenvalues, eigenspaces also provide important spectral information. Formally, the eigenspace associated with some eigenvalue $\lambda$ can be characterized by its projection matrix $\mP_\lambda$. It follows that there exist a unique set of orthogonal projection matrices $\{\mP_\lambda\}_{\lambda \in \Lambda}$, where $\Lambda$ is the set of all distinct eigenvalues of $\mA$, such that $\mA = \sum_{\lambda \in \Lambda} \lambda \mP_\lambda$, and the following conditions hold: $\sum_{\lambda} \mP_{\lambda} = \mI$, $\mP_{\lambda} \mP_{\lambda^\prime} = 0$ for $\lambda \neq \lambda^\prime$, and $\mA \mP_{\lambda} = \mP_{\lambda} \mA$ for all $\lambda \in \Lambda$.
Combining the projection matrices with the associated eigenvalues naturally define an invariant between node pairs, which we denote by $\gP$:
    \begin{equation*}
        \gP(u,v):=\ldblbrace (\lambda,\mP_\lambda(u,v))|\lambda\in\Lambda\rdblbrace\quad \text{for }u,v\in V_G.
    \end{equation*}
    Then, one can define the so-called ``spectral invariant'' of a graph as follows. Consider the following color refinement process by treating $\gP(u,v)$ as the edge feature between vertices $u$ and $v$:
    \begin{equation*}
        \chi_G^{\mathsf{Spec},(d+1)}(u)=\hash\left(\chi_G^{\mathsf{Spec},(d)}(u),\ldblbrace (\chi_G^{\mathsf{Spec},(d)}(v),\gP(u,v))|v\in V_G\rdblbrace\right) \quad\text{for }u\in V_G,d\in N_+,
    \end{equation*}
    where all colors $\chi_G^{\mathsf{Spec},(0)}(u)$ ($u\in V_G$) are constant in initialization, and $\hash$ is a perfect hash function. For each iteration $d$, the mapping $\chi_G^{\mathsf{Spec},(d)}$ induces an equivalence relation over vertex set $V_G$, and the relation gets \emph{refined} with the increase of $d$. Therefore, with a sufficiently large number of iterations $d\le |V_G|$, the relations get \emph{stable}. The spectral invariant $\chi_G^{\mathsf{Spec},(\infty)}(G)$ is then defined to be the multiset of stable node colors. We can similarly define $\chi_G^{\mathsf{Spec},(d)}(G)$ to be the multiset of node colors after $d$ iterations \citep{arvind2024hierarchy}. We remark that $\chi_G^{\mathsf{Spec},(1)}(G)$ is exactly the F{\"u}rer's (weak) spectral invariant proposed in \citet{furer2010power}.
    
    Owing to the relation between GNNs and color refinement algorithms, one can easily transform the above refinement process into a GNN architecture by replacing $\hash$ function with a continuous, non-linear, parameterized function, while maintaining the same expressive power \citep{xu2019powerful,morris2019weisfeiler}. We call the resulting architecture Spectral Invariant GNNs (see \citet{zhang2024expressive} for concrete implementations of spectral invariant GNN layer). Without ambiguity, we may also refer to $\chi_G^{\mathsf{Spec},(d)}(G)$ as the graph representation computed by a $d$-layer spectral invariant GNN. 
    
    \subsection{Homomorphism expressivity}
    
    Given two graphs $F$ and $G$, a homomorphism from $F$ to $G$ is a mapping $f:V_F\to V_G$ that preserves edge relations, i.e., $\{f(u),f(v)\}\in E_G$ for all $\{u,v\}\in E_F$. We denote by $\Hom(F,G)$ the set of all homomorphisms from $F$ to $G$ and define $\hom(F,G)=|\Hom(F,G)|$, which counts the number of homomorphisms. If $f$ is further surjective on both vertices and edges of $G$, we call $G$ a \emph{homomorphic image} of $F$. A mapping $f:V_F\to V_G$ is called an isomorphism if $f$ is a bijection and both $f$ and its inverse $f^{-1}$ are homomorphisms. We denote by $\sub(F,G)$ the number of subgraphs of $G$ that is isomorphic to $F$.
    
    In \citet{zhang2024beyond}, the authors introduced the concept the homomorphism expressivity to quantify the expressive power of a color refinement algorithm (or GNN). It is formally defined as follows:
    
    \begin{definition}
    \label{def:homo_expressivity}
        Let $M$ be a color refinement algorithm (or GNN) that outputs a graph invariant $\chi_G^M(G)$ given graph $G$. The homomorphism expressivity of $M$, denoted by $\gF^M$, is a family of connected graphs\footnote{For simplicity, we focus on \emph{connected} graphs in this paper. The results can be easily generalized to disconnected graphs following \citet{seppelt2024logical}.} satisfying the following conditions:
        \begin{enumerate}[label=\alph*),topsep=0pt,leftmargin=25pt]
        \setlength{\itemsep}{-4pt}
            \vspace{-3pt}
            \item For any two graphs $G,H$, $\chi^M_G(G)=\chi^M_H(H)$ \emph{iff} $\hom(F,G)=\hom(F,H)$ for all $F\in\gF^M$; 
            \item $\gF^M$ is maximal, i.e., for any connected graph $F\notin\gF^M$, there exists a pair of graphs $G,H$ such that $\chi^M_G(G)=\chi^M_H(H)$ and $\hom(F,G)\neq\hom(F,H)$.
        \end{enumerate}
    \end{definition}
    By characterizing the set $\gF^M$ for different GNN models $M$, one can quantitatively understand the expressivity gap between two models by simply computing their set inclusion relation and set difference. \citet{zhang2024beyond} examines several representative GNNs under this framework, including the standard MPNNs and Folklore GNNs \citep{maron2019provably,azizian2021expressive}, and recent architectures such as Subgraph GNN \citep{bevilacqua2022equivariant,qian2022ordered,cotta2021reconstruction} and Local GNN \citep{morris2020weisfeiler,zhang2023complete}. However, one implicit challenge not reflected in \cref{def:homo_expressivity}(a) is that the set $\gF^M$ may not even exist for a general GNN $M$. Proving the existence corresponds to an involved research topic known as homomorphism distinguishing closedness \citep{roberson2022oddomorphisms,seppelt2024logical,neuen2023homomorphism}, which is highly non-trivial. In the next section, we will give affirmative results showing that the homomorphism expressivity of spectral invariant GNNs does exist and give an elegant description of the graph family. 
    
    \section{Homomorphism Expressivity of Spectral Invariant GNNs}
    
    In this section, we investigate the homomorphism expressivity of spectral invariants and the corresponding GNNs. We will provide a complete characterization of the set $\gF^{\mathsf{Spec},(d)}$ for arbitrary model depth $d\in \mathbb N\cup\{\infty\}$. This allows us to analyze spectral invariants in a novel perspective, significantly extending prior research and resolving previously unanswered questions.
    
    \subsection{Main results}
    Our idea is motivated by the previous finding that the homomorphism expressivity of MPNNs is exactly the family of all trees \citep{zhang2024beyond}. Note that in the definition of spectral invariant GNN, if one replaces $\gP(u,v)$ by the standard adjacency $\mA_{uv}$, the resulting architecture is just an MPNN. Such a relationship perhaps implies that the homomorphism expressivity of spectral invariant GNNs also comprises ``tree-like'' graphs.    
    We will show this is indeed true. To present our results, let us define a special class of graphs, referred to as \emph{parallel trees}:
    \begin{definition}[\textbf{Parallel Edge}]
        A graph $G$ is called a \emph{parallel edge} if there exist two different vertices $u, v \in V_G$ such that the edge set $E_G$ can be partitioned into a sequence of simple paths $P_1, \ldots, P_m$, where all paths share endpoints $(u, v)$. We refer to $(u, v)$ as the endpoints of $G$.
    \end{definition}
    
    \begin{definition}[\textbf{Parallel Tree}]
        A graph $F$ is called a \emph{parallel tree} if there exists a tree $T$ such that $F$ can be obtained from $T$ by replacing each edge $\{u, v\} \in E_T$ with a parallel edge that has endpoints $\{u, v\}$. We refer to $T$ as the \emph{parallel tree skeleton} of graph $F$. Given a parallel tree $F$, define the \emph{parallel tree depth} of $F$ as the minimum depth of any parallel tree skeleton of $F$.
    \end{definition}
    
    \begin{figure}[t]
        \vspace{-30pt}
        \centering
        \small
        \setlength{\tabcolsep}{3pt}
        \begin{tabular}{cc}
            \includegraphics[height=0.16\textwidth]{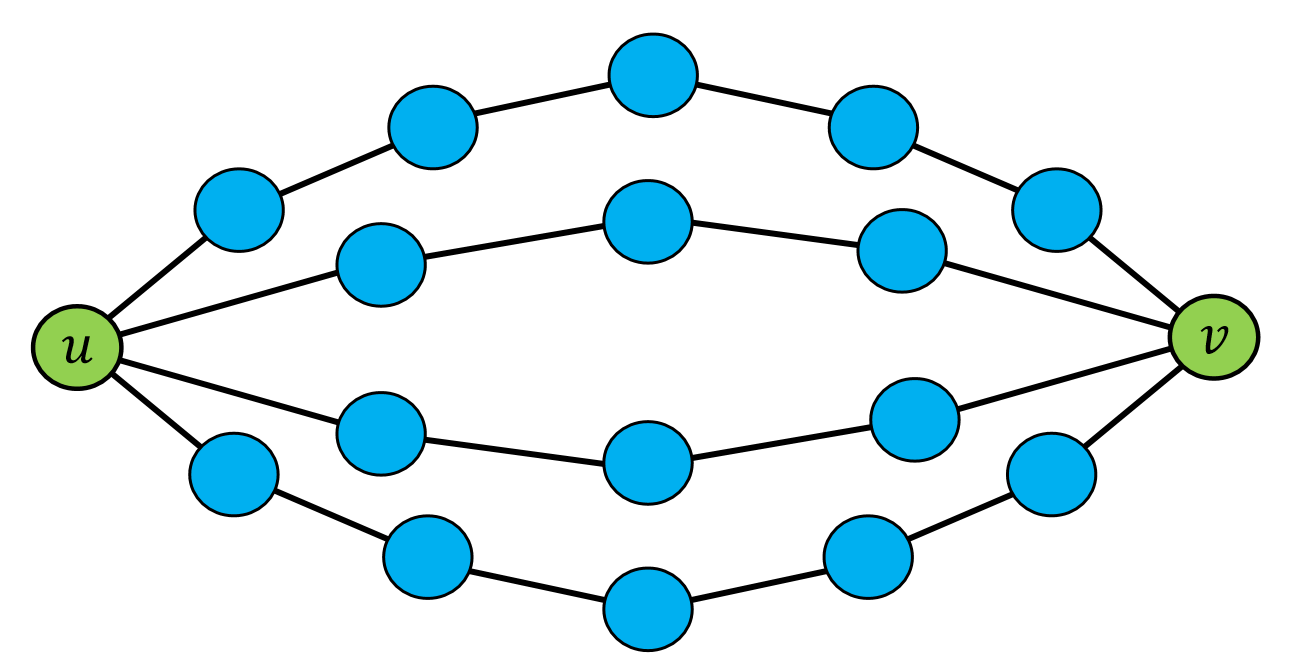} &\includegraphics[height=0.16\textwidth]{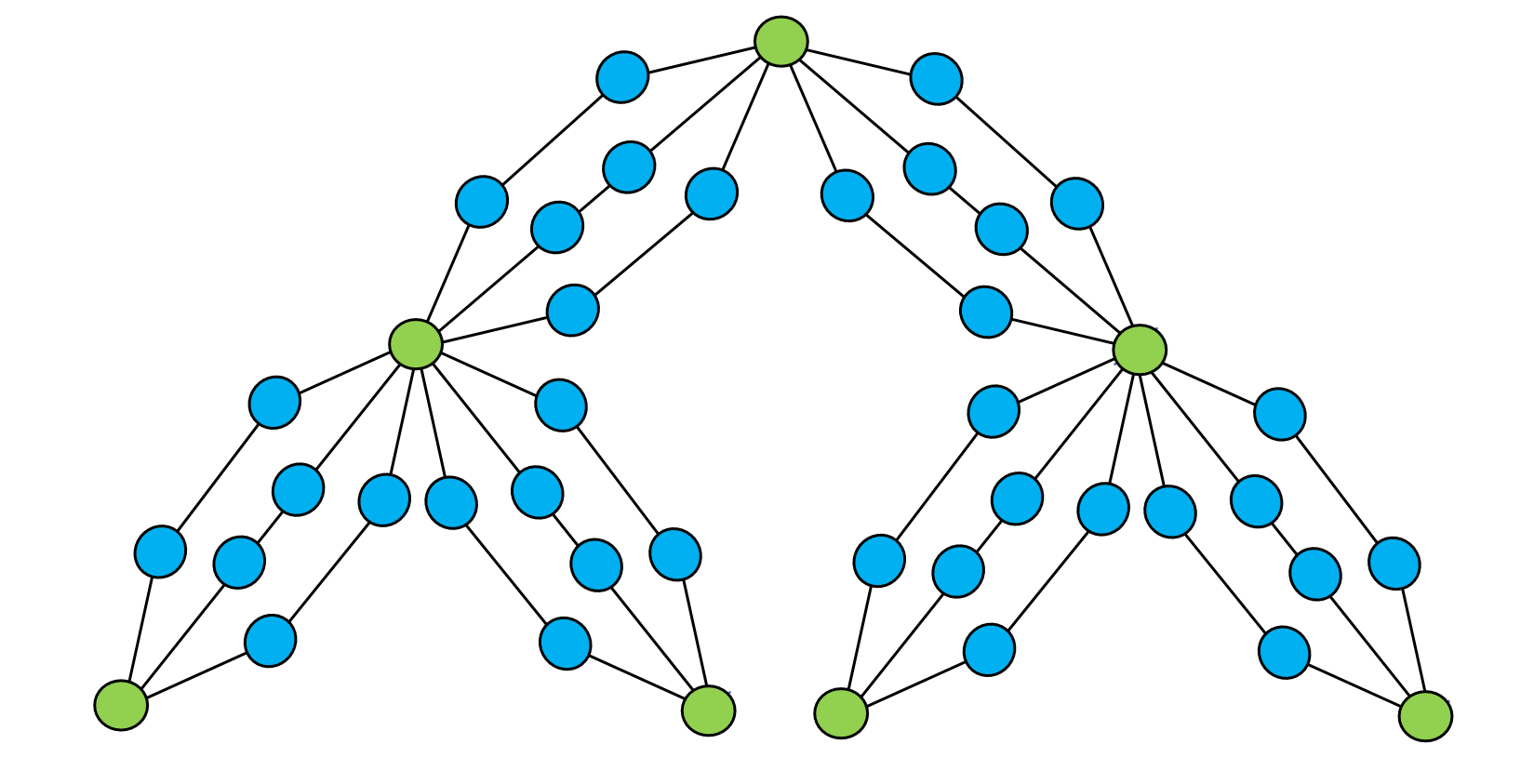}\hspace{20pt} \includegraphics[height=0.16\textwidth]{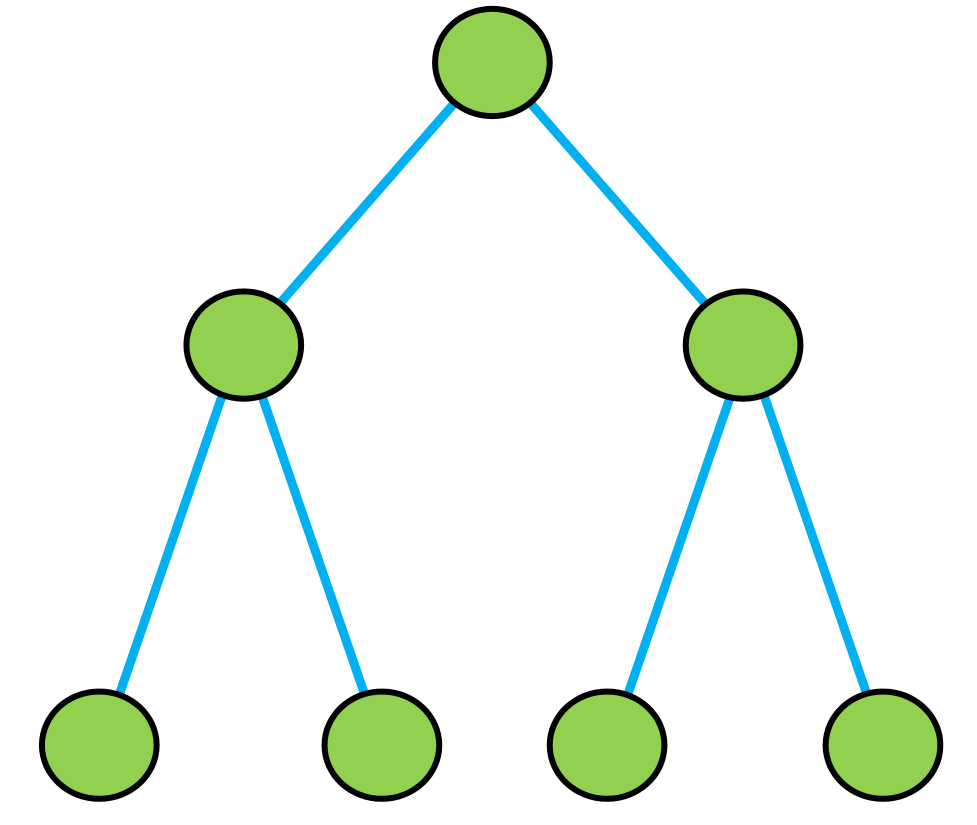}\\
            (a) A parallel edge with endpoints $(u,v)$ & \hspace{-6pt}(b) An example of parallel tree and its tree skeleton
        \end{tabular}
        \vspace{-8pt}
        \caption{Illustration of a parallel edge with endpoints $(u,v)$ in (a) and a parallel tree with its skeleton on the right in (b). }
        \label{fig:parallel_edge_tree}
        \vspace{-10pt}
    \end{figure}
    
    We give an illustration of parallel edge and parallel tree in \cref{fig:parallel_edge_tree}.
    With the above definitions, we are ready to state our main theorem:
    \begin{theorem}
    \label{thm:main_theorem}
        For any $d\in \mathbb N$, the homomorphism expressivity of spectral invariant GNNs with $d$ iterations exists and can be characterized as follows:
        \begin{align*}
            \mathcal{F}^{\mathsf{Spec},(d)} = \{F \mid \text{$F$ has parallel tree depth at most $d$}\}.
        \end{align*}
        Specifically, the following properties hold:
        \begin{itemize}[topsep=0pt,leftmargin=17pt]
            \setlength{\itemsep}{-3pt}
            \item Given any graphs $G$ and $H$, $\chi_G^{\mathsf{Spec},(d)}(G) = \chi_H^{\mathsf{Spec},(d)}(H)$ if and only if, for all connected graphs $F$ with parallel tree depth at most $d$, $\hom(F, G) = \hom(F, H)$. 
            \item $\mathcal{F}^{\mathsf{Spec},(d)}$ is maximal; that is, for any connected graph $F \notin \mathcal{F}^{\mathsf{Spec},(d)}$, there exist graphs $G$ and $H$ such that $\chi_G^{\mathsf{Spec},(d)}(G) = \chi_H^{\mathsf{Spec},(d)}(H)$ and $\hom(F, G) \neq \hom(F, H)$.
        \end{itemize}
    \end{theorem}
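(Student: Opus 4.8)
I would prove the two bullet points of \cref{thm:main_theorem} separately. The ``iff'' splits into \emph{soundness} (equal invariants $\Rightarrow$ equal homomorphism counts over $\mathcal F^{\mathsf{Spec},(d)}$) and \emph{completeness} (the converse), and then \emph{maximality} is a construction of counterexample pairs. The single observation driving everything is that, for homomorphism counting, a parallel edge is a weighted edge whose weight is a fixed function of $\gP$: a parallel edge between $u,v$ with constituent paths of lengths $\ell_1,\dots,\ell_m$ admits $\prod_{j=1}^m(\mA^{\ell_j})_{ab}$ extensions when its endpoints are mapped to $a,b$, and since $\mA^{\ell}=\sum_{\lambda\in\Lambda}\lambda^{\ell}\mP_\lambda$, this product is a function of the pair invariant $\gP(a,b)$. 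Hence a parallel tree is a $\gP$-edge-weighted tree, and its skeleton depth is exactly the number of message-passing rounds needed to evaluate the associated tree-structured sum --- which is why ``parallel tree depth $\le d$'' is the right family for $d$ iterations.

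\textbf{Soundness.} Fix connected $F$ with a skeleton $T$ and a root $r$ with $\dep(T^r)\le d$. Homomorphisms decompose over the skeleton, $\hom(F,G)=\sum_{g\colon V_T\to V_G}\prod_{e\in E_T}W_e\big(g(s_e),g(t_e)\big)$ (writing $e=\{s_e,t_e\}$), since internal vertices of distinct parallel edges are disjoint and extend independently once $g$ is fixed. I would evaluate this sum by bottom-up dynamic programming on $T^r$; each level is one aggregation $\sum_v W_e(u,v)\cdot(\text{child value at }v)$ followed by a product over the children of $u$. An induction on $d$ then shows $\chi_G^{\mathsf{Spec},(d)}(u)$ determines the recursion value $\hom_u(F^{r},G)$ (homomorphisms $F\to G$ sending $r$ to $u$) for every rooted parallel tree of skeleton depth $\le d$: the inductive step is precisely one spectral-refinement iteration, because $W_e(u,v)$ is a function of $\gP(u,v)$ and each child value is, by hypothesis, a function of $\chi_G^{\mathsf{Spec},(d-1)}(v)$. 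Summing over $u$ makes $\hom(F,G)$ a function of $\chi_G^{\mathsf{Spec},(d)}(G)$.

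\textbf{Completeness.} First reduce to the cospectral case: $K_1$, $P_2$ and the cycles $C_\ell$ ($\ell\ge 3$) are parallel edges, so equality of their homomorphism counts supplies all power sums $\Tr(\mA^{\ell})$ and forces $G,H$ to share the eigenvalue multiset and hence the distinct-eigenvalue set $\Lambda$ (for $d=0$ one only gets $|V_G|=|V_H|$, which already settles that case since $\chi_G^{\mathsf{Spec},(0)}$ is the constant coloring). With $\Lambda$ fixed, the Vandermonde matrix $(\lambda^{j})_{\lambda\in\Lambda,\,0\le j<|\Lambda|}$ is invertible, so $\gP(u,v)$ is interchangeable --- via a graph-independent bijection --- with the walk-count tuple $\big((\mA^{j})_{uv}\big)_{0\le j<|\Lambda|}$; each entry $(\mA^{j})_{uv}$, and each product of such entries, arises as a parallel edge attached at $u$. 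I would then show by induction on $d$ that $\chi_G^{\mathsf{Spec},(d)}(u)$ is a fixed function of the rooted profile $\big(\hom_u(F^{r},G)\big)_{(F,r)}$ over rooted parallel trees of skeleton depth $\le d$: a moment-type argument recovers the multiset $\ldblbrace\big(\gP(u,v),\,\text{profile}_{d-1}(v)\big)\colon v\in V_G\rdblbrace$ from sums of the form $\sum_v(\text{product of walk counts }(\mA^{j})_{uv})\cdot(\text{a depth-}(d{-}1)\text{ profile coordinate at }v)$, and together with the inductive hypothesis this is exactly the hash input $\big(\chi_G^{\mathsf{Spec},(d-1)}(u),\ldblbrace(\chi_G^{\mathsf{Spec},(d-1)}(v),\gP(u,v))\colon v\rdblbrace\big)$ of the $d$-th refinement step. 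Finally, gluing rooted parallel trees at a common root is again a rooted parallel tree of the same depth and multiplies the rooted counts, so equality of $\hom(F,\cdot)$ over $\mathcal F^{\mathsf{Spec},(d)}$ pins down all mixed moments, hence the multiset of rooted profiles, hence $\chi_G^{\mathsf{Spec},(d)}(G)$. (These steps follow the template of classical tree-homomorphism counting arguments and the profile machinery of \citet{zhang2024beyond}.)

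\textbf{Maximality and the main obstacle.} For $F\notin\mathcal F^{\mathsf{Spec},(d)}$ I must build $G\not\cong H$ with $\chi_G^{\mathsf{Spec},(d)}(G)=\chi_H^{\mathsf{Spec},(d)}(H)$ yet $\hom(F,G)\ne\hom(F,H)$. I would split on the reason for non-membership: either $F$ is not a parallel tree at all (its block/branch structure contains an obstruction --- a block that is not a generalized theta graph, or a theta block with an internal cut vertex that forces a cycle into every candidate skeleton), or $F$ is a parallel tree but every skeleton has depth $>d$ (controlled via long caterpillar-like skeletons, which should simultaneously yield the tight iteration lower bounds claimed later). For each obstruction I would plant F{\"u}rer/CFI-style gadgets into an appropriate base pattern and argue that $d$ rounds of spectral refinement cannot separate the two graphs while a direct count of $F$ does. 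The delicate point --- and what I expect to be the main obstacle --- is forcing the gadget pair to agree on the \emph{entire} spectral invariant: not merely cospectrality (classical for F{\"u}rer-type gadgets) but equality of all entries of every $\mP_\lambda$ up to the relabeling induced by the gadget symmetry, \emph{and} stability under the $d$ refinement rounds built on top of that. Since the spectral invariant sits strictly above $1$-WL, the usual CFI-style indistinguishability arguments do not transfer for free; making the gadgets simultaneously ``coprojectional'', refinement-stable, and still count-separating for $F$ is where the real work lies, and I would aim to cover both sub-cases with one sufficiently flexible gadget family.
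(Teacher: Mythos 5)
For the first bullet, your soundness/completeness split is a genuinely different route from the paper's. The paper replaces $\gP$ by walk-count tuples (\cref{lm:equivalence_walk_spectral}), characterizes the refinement color at a vertex by the isomorphism class of a custom ``unfolding tree'' (\cref{thm:equivalence_unfolding_tree_coloring}), relates $\hom$ to unfolding-tree counts via a linear system indexed by strong homomorphisms (\cref{thm:equivalence_homomorphism_tree_count}), and inverts that system through a M\"obius-style argument on a triangular matrix (\cref{lm:homomorphism_property}, \cref{thm:equivalence_cnt_hom}). You instead decompose $\hom(F,\cdot)$ directly as a weighted tree sum with $\gP$-determined edge weights and run a bottom-up dynamic program over the skeleton for soundness, then recover refinement colors by a moment argument together with root-gluing for completeness. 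Both templates should succeed, and your soundness direction is arguably more elementary; your completeness direction shifts the technical burden from the matrix inversion to a moment-recovery step that you leave at the level of a sketch. (One detail to pin down there: a parallel edge contains at most one path of length one, so not every monomial in $\mA_{uv}$ is directly attainable as an edge weight --- harmless because $\mA_{uv}\in\{0,1\}$ and so $\mA_{uv}^{k}=\mA_{uv}$, but it should be argued.)

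For the second bullet (maximality), there is a genuine gap. You correctly identify F{\"u}rer/CFI gadgets as the right tool and correctly locate the central difficulty --- making the gadget pair agree on the entire refinement-based spectral invariant, not merely be cospectral --- but your plan stops exactly where the proof must begin. The paper's resolution supplies what your plan does not: a pebble game matched to the spectral refinement (\cref{lm:original_pebble_game_equivalence}), a technical theorem on walk counts between meta-vertices in F{\"u}rer graphs that identifies precisely where $G(F)$ and $H(F)$ differ (\cref{sec:proof_part_4}), a \emph{simplified} pebble game played on the base graph $F$ itself and shown equivalent to spectral-GNN indistinguishability of $(G(F),H(F))$ (\cref{lm:simplified_pebble_game}), and a combinatorial analysis of Spoiler strategies showing Spoiler wins this simplified game in $d-1$ steps exactly when $F$ admits a parallel tree skeleton of depth at most $d$ (\cref{lm:tightness_low_dimension_finite_iteration}). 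That last lemma also subsumes your proposed case split (``not a parallel tree at all'' versus ``parallel tree but every skeleton has depth $>d$''), treating both uniformly. Without the walk-count theorem and the simplified-game equivalence there is no mechanism to verify $\chi_G^{\mathsf{Spec},(d)}(G(F))=\chi_H^{\mathsf{Spec},(d)}(H(F))$ for the claimed $F$, and that verification carries essentially all of the weight of the maximality proof.
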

    We will present a concise proof sketch of \cref{thm:main_theorem} in \cref{sec:proof_sketch}. Next, in \cref{sec:implication}, we will interpret this result in the context of GNNs and discuss its significance, including how it extends previous findings and addresses open problems identified in earlier studies.
    
        
      \subsection{Implications}
    \label{sec:implication}
    Our theory has a wide range of applications, which will be separately discussed in detail below. 
    \subsubsection{Comparison with 2-FWL}
    Firstly, we compare the expressive power of spectral invariant GNNs with the expressive power of the standard Weisfeiler-Lehman (WL) test. It immediately follows that the expressive power of spectral invariant GNNs strictly lies between the expressive power of $1$-WL and $2$-FWL test.
    \begin{wrapfigure}{r}{0.2\textwidth}
    \vspace{20pt}
    \includegraphics[width=0.2\textwidth]{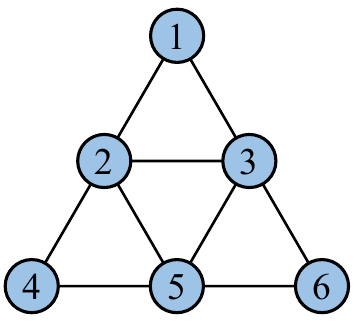}
    \caption{A counterexample graph in $\gF^{2-\mathsf{FWL}}\backslash\gF^{\mathsf{Spec},(\infty)}$.}
    \label{fig:counterexample_lfwl}
    \vspace{-20pt}
    \end{wrapfigure}
    \begin{corollary}
    \label{thm:compare_to_2fwl}
        The expressive power of spectral invariant GNNs is strictly stronger than $1$-WL and strictly weaker than $2$-FWL. 
    \end{corollary}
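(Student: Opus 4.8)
The plan is to deduce both separations from \cref{thm:main_theorem} by translating them into two inclusions of graph families. Recall from \citet{zhang2024beyond} that the homomorphism expressivity of $1$-WL (equivalently, MPNNs) is the family $\mathcal T$ of all trees, and that of $2$-FWL is the family $\mathcal{TW}_2$ of all graphs of treewidth at most $2$. By \cref{thm:main_theorem} (applied for every $d$, and noting that the spectral refinement stabilizes after at most $|V_G|$ iterations, so that $\mathcal F^{\mathsf{Spec},(\infty)}=\bigcup_{d\in\mathbb N}\mathcal F^{\mathsf{Spec},(d)}$ is exactly the family of parallel trees), the corollary follows once we establish
\[
  \mathcal T\ \subsetneq\ \{\text{parallel trees}\}\ \subsetneq\ \mathcal{TW}_2 .
\]
Here I will use the elementary \emph{bridge principle}: whenever two refinement algorithms $A,B$ have homomorphism expressivities with $\mathcal F^A\subseteq\mathcal F^B$, then $B$ is at least as expressive as $A$ by \cref{def:homo_expressivity}(a); and if moreover the inclusion is strict, pick $F_0\in\mathcal F^B\setminus\mathcal F^A$, apply maximality of $\mathcal F^A$ (\cref{def:homo_expressivity}(b)) to get graphs $G,H$ with $\chi^A_G(G)=\chi^A_H(H)$ but $\hom(F_0,G)\neq\hom(F_0,H)$, and conclude $\chi^B_G(G)\neq\chi^B_H(H)$ from $F_0\in\mathcal F^B$, so $B$ is \emph{strictly} stronger.

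For the left inclusion, every single edge is a parallel edge (take $m=1$), so every tree is a parallel tree whose skeleton is itself; hence $\mathcal T\subseteq\{\text{parallel trees}\}$. The inclusion is strict because $C_4$ is a parallel edge --- two internally disjoint paths of length two between its antipodal vertices --- hence a parallel tree (indeed of parallel tree depth $1$), but not a tree. Combined with the bridge principle and the known maximality of $\mathcal T$, this shows spectral invariant GNNs are strictly stronger than $1$-WL.

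For the right inclusion, I would show that every parallel tree $F$ is $K_4$-minor-free, which is equivalent to treewidth at most $2$. Contracting every parallel path of $F$ to a single edge transforms $F$ into the multigraph obtained from the skeleton tree $T$ by giving each edge multiplicity equal to its number of parallel paths; this multigraph is a ``multi-tree'' and hence $K_4$-minor-free (any minor of it is again a forest with multi-edges), and since $F$ is merely a subdivision of it, $F$ is $K_4$-minor-free too. Thus $\{\text{parallel trees}\}\subseteq\mathcal{TW}_2$. Strictness is witnessed by the graph $F^\ast$ of \cref{fig:counterexample_lfwl}: it has treewidth $2$ yet is not a parallel tree. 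To certify the latter I argue structurally: a parallel tree has all its $2$-connected blocks equal to a single edge, a cycle, or a generalized theta graph (a bundle of $m\ge 3$ internally disjoint $u$-$v$ paths), and in each such block at most two vertices have degree $\ge 3$ \emph{within the block}; but $F^\ast$ is $2$-connected with three vertices of degree $\ge 3$, a contradiction. (A concrete choice is the ``tripled triangle'': a triangle in which each of the three edges is additionally doubled by a length-two detour; one checks directly that it has no $K_4$ minor and is $2$-connected with three degree-$4$ vertices.) Applying the bridge principle with the maximality part of \cref{thm:main_theorem} then yields that spectral invariant GNNs are strictly weaker than $2$-FWL.

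I expect the right inclusion to be the crux: one must (i) correctly invoke ``treewidth $\le 2$ $\Leftrightarrow$ $K_4$-minor-free'' and check it survives the contract-then-subdivide reduction, and (ii) pin down the block structure of parallel trees carefully enough to rule out the counterexample. The only other point needing care is the passage from finite $d$ in \cref{thm:main_theorem} to $d=\infty$, i.e.\ the identification $\mathcal F^{\mathsf{Spec},(\infty)}=\{\text{parallel trees}\}$ together with its maximality, which rests on the standard stabilization bound of $|V_G|$ iterations for the spectral refinement.
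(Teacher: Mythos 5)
Your high-level strategy matches the paper's: translate both separations into inclusions of homomorphism-expressivity families and then use maximality to obtain distinguishing pairs. The left inclusion (every tree is a parallel tree, $C_4$ is a parallel tree but not a tree) and the ``bridge principle'' are both fine. The problem lies in the right separation, where your argument rests on a mischaracterization of parallel edges.

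The paper's definition requires only that $E_G$ be \emph{edge}-partitioned into simple $u$-$v$ paths; it does \emph{not} require the paths to be internally vertex-disjoint. Consequently a parallel edge is \emph{not} in general a generalized theta graph, and a parallel tree is \emph{not} in general a subdivision of a multi-tree. This breaks both of your steps: the claim ``contracting every parallel path yields a multi-tree, of which $F$ is a subdivision'' fails when paths share internal vertices, and so does the claim that a $2$-connected block of a parallel tree has at most two vertices of degree $\geq 3$. Your concrete witness, the ``tripled triangle'', illustrates the failure: with vertices $a,b,c,x_{ab},x_{bc},x_{ca}$, the nine edges partition into the $a$-$b$ paths
\[
a\,\text{--}\,b,\qquad a\,\text{--}\,x_{ab}\,\text{--}\,b,\qquad a\,\text{--}\,c\,\text{--}\,b,\qquad a\,\text{--}\,x_{ca}\,\text{--}\,c\,\text{--}\,x_{bc}\,\text{--}\,b,
\]
which are pairwise edge-disjoint simple paths (the last two share the internal vertex $c$, but that is permitted). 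So the tripled triangle \emph{is} a parallel edge with endpoints $(a,b)$, hence a parallel tree of depth $1$, and it does not witness $\{\text{parallel trees}\}\subsetneq\mathcal{TW}_2$. You would need a genuinely different treewidth-$2$ graph together with an argument showing it admits \emph{no} edge-partition of the required form (and the paper's own choice appears only as the undescribed graph of \cref{fig:counterexample_lfwl}). Incidentally, your proof of the non-strict inclusion $\{\text{parallel trees}\}\subseteq\mathcal{TW}_2$ is also affected by the same subdivision error; the paper instead argues directly that a parallel tree is a subgraph of a series-parallel graph, sidestepping the issue.
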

    
    \begin{proof}
        According to \citet{zhang2024beyond}, the homomorphism expressivity of $2$-FWL encompasses the set of all graphs with treewidth at most $2$. A classical result in graph theory states that any subgraph of any series-parallel graph has treewidth at most $2$ \citep{diestel2017graph}. Since any parallel tree is clearly a subgraph of some series-parallel graph, its treewidth is at most 2. It follows that the homomorphism expressivity of parallel trees is contained within that of the $2$-FWL. To show the gap, we give a counterexample graph in \cref{fig:counterexample_lfwl}. This implies that the expressive power of spectral invariant GNNs is strictly weaker than that of the $2$-FWL. The proof for the case of $1$-WL is similar and we omit it for clarity.
    \end{proof}
    
    \subsubsection{Hierarchy}
    \cref{thm:main_theorem} not only provides insights into the relationship between the expressive power of spectral invariant GNNs and $2$-FWL, but also allows for a comparison with a wide range of graph invariants and the corresponding GNNs. Specifically, similar to the analysis in \cref{thm:compare_to_2fwl}, for any GNN models \( A \) and \( B \) such that their homomorphism expressivity exists, if $\gF^A\subsetneq\gF^B$, then \( A \) is strictly weaker than \( B \) in expressive power. We now use this property to establish a comprehensive hierarchy by linking spectral invariant GNNs to other fundamental graph invariants and GNNs.
    \begin{corollary}
        Spectral invariant GNN with $1$ iteration is strictly weaker than subgraph GNN (also referred to as $(1,1)$-WL in \citet{rattan2023weisfeiler}).
    \end{corollary}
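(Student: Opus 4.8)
The plan is to argue entirely inside the homomorphism expressivity framework and establish $\gF^{\mathsf{Spec},(1)}\subsetneq\gF^{\mathsf{SG}}$, where $\gF^{\mathsf{SG}}$ denotes the homomorphism expressivity of Subgraph GNN ($=(1,1)$-WL), which exists by \citet{zhang2024beyond}. The only property of $\gF^{\mathsf{SG}}$ I would invoke is: \emph{every connected graph $F$ admitting a vertex $r$ with $F-r$ a forest lies in $\gF^{\mathsf{SG}}$.} This can be read off the characterization of $\gF^{\mathsf{SG}}$ in \citet{zhang2024beyond}; alternatively it follows directly, since for such an $F$ one has $\hom(F,G)=\sum_{v\in V_G}|\{f\in\Hom(F,G):f(r)=v\}|$, and for a fixed image $v=f(r)$ the remaining count is a homomorphism count out of the forest $F-r$ into $G$ with the $F$-neighbors of $r$ constrained to lie in $N_G(v)$ --- a leaf-to-root tree dynamic program that a Subgraph GNN (an MPNN run on $G$ with $v$ individualized) computes exactly; summing over $v$ shows the Subgraph GNN color determines $\hom(F,\cdot)$, hence $F\in\gF^{\mathsf{SG}}$ by maximality.

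\textbf{Step 1: $\gF^{\mathsf{Spec},(1)}\subseteq\gF^{\mathsf{SG}}$.} By \cref{thm:main_theorem}, a connected graph $F$ lies in $\gF^{\mathsf{Spec},(1)}$ iff it has a parallel tree skeleton $T$ with $\dep(T)\le 1$, i.e.\ $T$ is a star (or a single vertex); let $r\in V_T\subseteq V_F$ be its center. Deleting $r$ from $F$ breaks each parallel edge attached to $r$ into a union of internally disjoint paths all terminating at the corresponding leaf of $T$; since the skeleton is a star, the parallel edges attached to $r$ overlap only at $r$, so $F-r$ is a vertex-disjoint union of such ``path-spiders'', in particular a forest. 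By the property above, $F\in\gF^{\mathsf{SG}}$.

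\textbf{Step 2: strictness via the bull graph.} I would take $F$ to be the bull: a triangle on $\{b,e,c\}$ with pendant edges $\{a,b\}$ and $\{c,d\}$. On the one hand, $F-b$ is the disjoint union of the isolated vertex $a$ and the path $e-c-d$, a forest, so $F\in\gF^{\mathsf{SG}}$. On the other hand, by \cref{thm:main_theorem} it suffices to show $F$ has parallel tree depth exactly $2$. In any parallel tree representation of $F$, a vertex internal to a path inside a parallel edge has degree $2$ in $F$; since $\deg_F(a)=\deg_F(d)=1$ and $\deg_F(b)=\deg_F(c)=3$, the skeleton vertex set must contain $\{a,b,c,d\}$, and the only remaining vertex $e$ (of degree $2$) is internal to the path $b-e-c$, which forces a skeleton edge $\{b,c\}$; the two pendants likewise force skeleton edges $\{a,b\}$ and $\{c,d\}$. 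Hence the skeleton is the path $a-b-c-d\cong P_4$, of depth $2$; conversely this skeleton, with $\{b,c\}$ replaced by the two paths $b-c$ and $b-e-c$ (whose union is the triangle) and the other edges replaced trivially, realizes $F$, so its parallel tree depth is exactly $2$ and $F\in\gF^{\mathsf{Spec},(2)}\setminus\gF^{\mathsf{Spec},(1)}$. Combining the two steps gives $\gF^{\mathsf{Spec},(1)}\subsetneq\gF^{\mathsf{SG}}$, and by the criterion recalled in \cref{sec:implication} (if $\gF^A\subsetneq\gF^B$ then $A$ is strictly weaker than $B$) the corollary follows.

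\textbf{Main obstacle.} I expect the delicate point to be the preliminary step: correctly invoking the homomorphism expressivity of Subgraph GNN from \citet{zhang2024beyond} --- both its existence and the fact that it contains every ``forest after one vertex deletion'' graph --- or, if one prefers a self-contained route, carrying out the dynamic-programming argument sketched above. Everything else is elementary, although the lower bound ``parallel tree depth $\ge 2$'' for the bull is not visually obvious and genuinely needs the short degree argument that pins the skeleton down to $P_4$.
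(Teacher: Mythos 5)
Your proof is correct and follows essentially the same route as the paper's: the inclusion $\gF^{\mathsf{Spec},(1)}\subseteq\gF^{\text{Subgraph GNN}}$ is obtained by the same star-skeleton / forest-after-one-vertex-deletion argument together with the characterization of subgraph-GNN homomorphism expressivity from \citet{zhang2024beyond}. The paper leaves the strictness witness implicit, whereas your bull-graph computation (the degree sequence forces the skeleton onto $\{a,b,c,d\}$ with edges $\{a,b\},\{b,c\},\{c,d\}$, i.e.\ $P_4$, hence parallel tree depth exactly $2$) fills that in concretely and correctly.
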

    \begin{proof}
        According to \citet{zhang2024beyond}, the homomorphism expressivity of subgraph GNNs contains all graphs that become a forest upon the deletion of a specific vertex. On the other hand, \cref{thm:main_theorem} states that the homomorphism expressivity of spectral invariant GNNs with one iteration contains all parallel trees of depth $1$. Since any parallel tree of depth $1$ becomes a forest when deleting the root vertex, we have proved that $\gF^{\mathsf{Spec},(1)}$ is a subset of that of subgraph GNNs. Finally, one can easily construct a counterexample graph to prove the strict separation.
    \end{proof}

    \begin{remark}
        Our result recovers and strengthens the main result in \citet{rattan2023weisfeiler}, which only studied spectral invariants with $1$ iteration (F{\"u}rer's weak spectral invariant). We will next show this result actually does \emph{not} hold in case of more than $1$ iterations.
    \end{remark}

    \begin{corollary}
    \label{cor:2_iteration}
        Spectral invariant GNNs with $2$ iterations are incomparable to subgraph GNNs.
    \end{corollary}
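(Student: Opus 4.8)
The plan is to derive \cref{cor:2_iteration} from \cref{thm:main_theorem} together with the known characterization of the homomorphism expressivity of subgraph GNNs from \citet{zhang2024beyond}: $\gF^{\mathsf{Sub}}$ is exactly the family of connected graphs $F$ such that $F-v$ is a forest for some $v\in V_F$. Incomparability means neither $\gF^{\mathsf{Spec},(2)}\subseteq\gF^{\mathsf{Sub}}$ nor $\gF^{\mathsf{Sub}}\subseteq\gF^{\mathsf{Spec},(2)}$ holds, so it suffices to exhibit one graph $F_1\in\gF^{\mathsf{Spec},(2)}\setminus\gF^{\mathsf{Sub}}$ and one graph $F_2\in\gF^{\mathsf{Sub}}\setminus\gF^{\mathsf{Spec},(2)}$. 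Indeed, $F_1\in\gF^{\mathsf{Spec},(2)}$ means $2$-iteration spectral invariant GNNs homomorphism-count $F_1$, while maximality of $\gF^{\mathsf{Sub}}$ gives graphs $G,H$ with equal subgraph-GNN colors but $\hom(F_1,G)\neq\hom(F_1,H)$; this pair is then separated by the $2$-iteration spectral invariant GNN but not by any subgraph GNN, and symmetrically $F_2$ produces a pair separated by subgraph GNNs but not by the $2$-iteration spectral invariant GNN.

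For the first direction I would take $F_1$ to be two copies of $C_4$ joined by a single bridge edge (one endpoint lying in each cycle). Since the two $C_4$'s are vertex-disjoint, no single vertex hits both cycles, so $F_1-v$ contains a cycle for every $v$ and $F_1\notin\gF^{\mathsf{Sub}}$. On the other hand, $F_1$ is a parallel tree with skeleton the path $a-b-c-d$: the bundle on $\{a,b\}$ is two internally disjoint length-$2$ paths (one copy of $C_4$), the bundle on $\{c,d\}$ is likewise two length-$2$ paths, and the bundle on $\{b,c\}$ is the bridge edge. This skeleton has depth $2$, so $F_1\in\gF^{\mathsf{Spec},(2)}$ by \cref{thm:main_theorem}; depth $1$ is impossible because a depth-$1$ parallel tree becomes a forest upon deleting its root.

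For the second direction I would take $F_2$ to be the diamond $K_4$ minus one edge, with a single pendant edge attached to one of its two degree-$2$ vertices; concretely $V_{F_2}=\{v,p_1,p_2,p_3,p_4\}$ with edges $\{v,p_1\},\{v,p_2\},\{v,p_3\},\{p_1,p_2\},\{p_2,p_3\},\{p_3,p_4\}$. Deleting $v$ leaves the path $p_1-p_2-p_3-p_4$, so $F_2\in\gF^{\mathsf{Sub}}$. To show $F_2\notin\gF^{\mathsf{Spec},(2)}$ I will argue the stronger statement that $F_2$ is not a parallel tree at all (hence $F_2\notin\gF^{\mathsf{Spec},(d)}$ for every $d$). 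The key structural fact is that in any parallel-tree decomposition an internal vertex of a parallel-edge path has degree exactly $2$, so every vertex of degree $\ge 3$ must be a skeleton vertex; thus $v,p_2,p_3$ are all skeleton vertices and $p_4$ is a skeleton leaf. A short case check on the two edges at $p_1$ shows $p_1$ cannot be a skeleton leaf, so $p_1$ is path-internal; following its path to the nearest skeleton vertices on either side forces the bundle between $v$ and $p_2$ to consist exactly of the edge $\{v,p_2\}$ and the path $v-p_1-p_2$. Then accounting for the remaining edges at $v$ and at $p_2$ (each has one edge left, namely $\{v,p_3\}$ and $\{p_2,p_3\}$) forces $\{v,p_2\},\{v,p_3\},\{p_2,p_3\}$ all to be skeleton edges --- a triangle --- contradicting that the skeleton is a tree.

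The degree bookkeeping that pins down the bundles is routine; the main obstacle is making the last step rigorous, i.e.\ ruling out \emph{every} conceivable parallel-tree skeleton of $F_2$ rather than one specific candidate. This reduces to the observation that the skeleton of a parallel tree is essentially forced --- its vertex set must contain all vertices of degree $\ge 3$, and the bundles between consecutive skeleton vertices are then determined by the maximal induced paths of degree-$2$ vertices --- so that the presumed (non-acyclic) skeleton really is unavoidable.
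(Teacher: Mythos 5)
Your plan is the same as the paper's: exhibit a graph on each side of the two symmetric differences $\gF^{\mathsf{Spec},(2)}\setminus\gF^{\mathsf{Sub}}$ and $\gF^{\mathsf{Sub}}\setminus\gF^{\mathsf{Spec},(2)}$ and then invoke the maximality clause of \cref{def:homo_expressivity} to turn each into a separating pair of graphs; the paper merely points to a picture for this corollary, so your write-up supplies what it leaves implicit. The graph $F_1$ and its analysis are correct: the skeleton $a$--$b$--$c$--$d$ has depth $2$ (and no depth-$1$ skeleton can exist, since deleting the root of a depth-$1$ parallel tree leaves a forest while $F_1$ minus any vertex still contains a $C_4$), and every vertex deletion leaves one of the two disjoint $C_4$'s intact, so $F_1\in\gF^{\mathsf{Spec},(2)}\setminus\gF^{\mathsf{Sub}}$.

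For $F_2$ the conclusion and the choice of graph are right, but the ``key structural fact'' you lean on is stated too strongly. The paper's Definition of a parallel edge only requires the \emph{edges} to be partitioned into simple $u$--$v$ paths; it does not say the paths are internally vertex-disjoint, and if two paths in one bundle share an interior vertex $w$ then $\deg_F(w)\in\{4,6,\dots\}$, so ``every vertex of degree $\ge 3$ is a skeleton vertex'' is not forced by the definition as written. The robust statement --- and all you actually need --- is the parity version: a non-skeleton vertex $w$ lies inside exactly one bundle (the interior vertices of different bundles are introduced fresh when the tree edges are replaced), and each path through $w$ uses exactly two of its incident edges, so $\deg_F(w)$ is even. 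Since $v,p_2,p_3$ have degree $3$ they are skeleton vertices, and so is $p_4$ (degree $1$). Your case split then closes cleanly: if $p_1$ is also a skeleton vertex, every bundle is a single graph edge and the skeleton equals $F_2$, which contains the triangle on $v,p_2,p_3$; if $p_1$ is interior, its unique path is $v$--$p_1$--$p_2$ (a bundle path cannot pass through a third skeleton vertex), so $\{v,p_2\}$ is a skeleton edge, and each of the graph edges $\{v,p_2\},\{v,p_3\},\{p_2,p_3\}$, joining two skeleton vertices, must itself be a one-edge bundle and hence a skeleton edge --- again a triangle. Either way the skeleton cannot be a tree, so $F_2$ is not a parallel tree of any depth, which is even stronger than the needed $F_2\notin\gF^{\mathsf{Spec},(2)}$. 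The worry you raise at the end about ruling out every conceivable skeleton is thus resolved by this parity observation together with the fact that an edge between two skeleton vertices is forced to be a one-edge bundle; just replace ``degree $\ge 3$'' by ``odd degree''.
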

     We provide a counterexample in \cref{fig:counterexample}. Nevertheless, we can still bound the expressive power of spectral invariant GNNs with multiple iterations to that of Local 2-GNN, as stated in the following:
    \begin{corollary}
    \label{cor:2_iteration_local_gnn}
        For any $d\in \mathbb N_+\cup\{\infty\}$, spectral invariant GNNs with $d$ iterations are strictly weaker than Local 2-GNN \citep{morris2020weisfeiler,zhang2024beyond}.
    \end{corollary}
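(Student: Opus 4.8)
The plan is to argue entirely at the level of homomorphism expressivity, exactly as in the preceding corollaries. Since the homomorphism expressivity of Local 2-GNN exists \citep{zhang2024beyond}, it suffices to show that $\gF^{\mathsf{Spec},(d)}$ is a \emph{strict} subfamily of it. I would first collapse all the cases $d\in\mathbb N_+\cup\{\infty\}$ into one: because $\gF^{\mathsf{Spec},(d)}\subseteq\gF^{\mathsf{Spec},(\infty)}$ for every finite $d$ and the target family does not depend on $d$, it is enough to prove $\gF^{\mathsf{Spec},(\infty)}\subsetneq(\text{homomorphism expressivity of Local 2-GNN})$. By \cref{thm:main_theorem} the left-hand side is precisely the family of all parallel trees, and by \citet{zhang2024beyond} the right-hand side is the family of graphs admitting a \emph{strong nested ear decomposition}. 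So the whole statement reduces to two sub-claims: (i) every parallel tree has a strong nested ear decomposition, and (ii) some graph with a strong nested ear decomposition is not a parallel tree.

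For (i), I would induct on the number of edges of the parallel tree skeleton. The base case is a generalized theta graph — internally disjoint paths $Q_1,\dots,Q_m$ joining two vertices $u,v$ (ordered by non-decreasing length): start the decomposition with the cycle $Q_1\cup Q_2$ and attach $Q_3,\dots,Q_m$ as ears between $u$ and $v$, so that all ears share the same attachment and every nesting condition holds trivially. For the inductive step I would peel off a leaf of the skeleton, writing $F$ as a smaller parallel tree $F'$ with one additional theta-bundle $B$ glued on at a single cut vertex $x$; take a strong nested ear decomposition of $F'$ by induction, adjoin the first cycle of $B$ as a pendant closed ear based at $x$, and then adjoin the remaining paths of $B$ as in the base case. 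Since $B$ meets $F'$ only in $x$, the ears of $B$ never interact with those of $F'$, so the extended decomposition is again strong nested. Along the way I would record the structural fact that the $2$-connected blocks of a parallel tree are exactly generalized theta graphs — in particular a $2$-connected parallel tree is a single theta graph — which is also what sub-claim (ii) needs.

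For (ii), I would exhibit a concrete small graph: a triangle $abc$ together with two length-two ears $a\!-\!d\!-\!c$ and $a\!-\!e\!-\!b$. This graph is $2$-connected but is not a generalized theta graph (no pair of vertices is joined by a set of internally disjoint paths using all of its edges), hence by the structural fact it is not a parallel tree, so it lies outside $\gF^{\mathsf{Spec},(\infty)}$. Yet it admits a strong nested ear decomposition, namely $P_0$ the triangle with the two length-two paths attached along the two edge-disjoint edges $\{a,c\}$ and $\{a,b\}$ of $P_0$. This establishes the strict inclusion, and the homomorphism-expressivity comparison principle stated earlier in this section then yields the corollary.

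The main obstacle I expect is matching the combinatorics to the exact definition of \citet{zhang2024beyond}: I must verify that the ear decompositions I build for parallel trees — especially at the cut-vertex gluings, where an entire theta-block is attached — genuinely satisfy the \emph{strong} nesting conditions rather than merely the weaker nesting condition characterizing Local 2-FWL, and likewise that the separating graph satisfies the strong condition. A related technical point is ensuring the ambient notion of ear decomposition permits the pendant-cycle move used in the inductive step (introducing a new block as a closed ear at a cut vertex); if it does not, I would instead restructure the induction to introduce each block as a sequence of genuine open ears. This bookkeeping, rather than any conceptual difficulty, is where the real work lies.
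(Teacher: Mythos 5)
Your overall route is the same as the paper's: both arguments reduce the corollary to a strict inclusion $\gF^{\mathsf{Spec},(d)}\subsetneq\gF^{\mathsf{L}(2)}$, show every parallel tree admits a strong nested ear decomposition (the paper is terse here, simply observing that a parallel edge partitions into same-endpoint ears; you expand this into an induction on the skeleton, which is the right way to make it rigorous), and then exhibit a graph with a strong NED that is not a parallel tree. The paper uses the graph from \cref{fig:counterexample_lfwl} for the separation, whereas you construct a concrete one of your own. Your structural observation --- a $2$-connected parallel tree must be a generalized theta graph, because any branching vertex of the skeleton is a cut vertex --- is correct and is exactly the invariant that certifies the separating graph is not a parallel tree.

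However, the explicit strong NED you propose for the separating graph does not satisfy the \emph{strong} nesting condition as written. You take $P_0$ to be the triangle $abc$ and attach $a$-$d$-$c$ and $a$-$e$-$b$ ``along the two edge-disjoint edges $\{a,c\}$ and $\{a,b\}$ of $P_0$.'' Those two nested intervals are disjoint edges, not comparable under inclusion, so this is only an ordinary (Eppstein-style) nested ear decomposition, not a strong one --- precisely the distinction you yourself flag needing care about. The graph nevertheless does admit a strong NED: for instance, take the $4$-cycle $a$-$e$-$b$-$c$-$a$ as the initial ear, then attach $a$-$d$-$c$ with nested interval the edge $\{a,c\}$, and finally attach the chord $\{a,b\}$ with nested interval the two-edge arc $a$-$c$-$b$; now the intervals are linearly ordered by inclusion. (Equivalently, keep the triangle as $P_0$ but choose the long arc $a$-$c$-$b$ as the interval for the second ear.) So the counterexample graph is sound, but the decomposition should be repaired; once that is done the argument matches the paper's.
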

    \begin{proof}
        According to \citet{zhang2024beyond}, the homomorphism expressivity of Local 2-GNNs contains all graphs that admit a strong nested ear decomposition. Since any parallel edge can be partitioned into ears with the same endpoints, one can easily construct a nested ear decomposition for any parallel tree. This shows $\gF^{\mathsf{Spec},(d)}$ is a subset of that of Local 2-GNN. The expressivity gap can be seen using the same counterexample graph in \cref{fig:counterexample_lfwl}.
    \end{proof}
    \begin{remark}
        \cref{cor:2_iteration,cor:2_iteration_local_gnn} significantly extend the findings of \citet[Theorem 17]{arvind2024hierarchy} and provide additional insights into \citet[Theorem 4.3]{zhang2024expressive}.
    \end{remark}

    \textbf{The power of projection.} We next conduct a fine-grained analysis by separating eigenvalues and projections to better understand their individual contributions to enhancing the expressive power of GNN models. We first prove the following theorem:
    \begin{theorem}
        \label{thm:eigenvalues_homomorphic_expressivity}
        The homomorphism expressivity of graph spectra is the set of all cycles $C_n$ ($n\ge 3$) plus paths $P_1$ and $P_2$, i.e., $\{C_n|n\ge 3\}\cup\{P_1,P_2\}$.
    \end{theorem}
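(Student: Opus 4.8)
The plan is to verify the two conditions of homomorphism expressivity (\cref{def:homo_expressivity}) in turn; the characterization is routine spectral graph theory, while maximality is the substantive part. For condition (a), I would use the identities $\hom(C_n,G)=\Tr(\mA_G^n)$ (the number of closed walks of length $n$ in $G$), $\hom(P_2,G)=2|E_G|=\Tr(\mA_G^2)$, and $\hom(P_1,G)=|V_G|=\Tr(\mI)$. Since $\Tr(\mA_G)=0$ for every simple graph, the data $\bigl(|V_G|,\Tr(\mA_G^2),\Tr(\mA_G^3),\dots,\Tr(\mA_G^{|V_G|})\bigr)$ is equivalent, via Newton's identities, to the coefficients of the characteristic polynomial of $\mA_G$, hence to the multiset of its eigenvalues; conversely the spectrum trivially determines every homomorphism count from a cycle and from $P_1,P_2$. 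Therefore $\hom(F,G)=\hom(F,H)$ for all $F\in\{C_n:n\ge 3\}\cup\{P_1,P_2\}$ if and only if $G$ and $H$ are cospectral, which is condition (a). (By multiplicativity of $\hom$ over disjoint unions, this coincides with homomorphism indistinguishability over the class $\mathcal D$ of graphs each of whose connected components is a cycle, $K_1$, or $K_2$.)

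For condition (b), I would first invoke the structural fact that a connected graph with maximum degree at most $2$ is a path or a cycle, so any connected $F\notin\{C_n\}\cup\{P_1,P_2\}$ is either a path $P_k$ with $k\ge 3$ or contains a vertex of degree at least $3$. If $F=P_k$, then $\hom(P_k,G)=\vone^\top\mA_G^{\,k-1}\vone=\sum_{\lambda}\lambda^{\,k-1}\|\mP_\lambda\vone\|^2$ depends on the main angles $\|\mP_\lambda\vone\|^2$, not only on the spectrum, so it suffices to exhibit for each $k$ a cospectral pair whose main angles differ at power $k-1$: for instance $C_4\cup K_1$ and the star $K_{1,4}$ share the spectrum $\{2,0,0,0,-2\}$ but have $\sum_v\deg(v)^2$ equal to $16$ and $20$, settling $P_3$, and analogous non-regular cospectral pairs (chosen per value of $k$ if necessary) settle the remaining $P_k$. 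If instead $F$ has a vertex of degree at least $3$, I would show $\hom(F,\cdot)$ is not determined by the spectrum by constructing, guided by the bipartiteness and degree structure of $F$, a cospectral pair on which $\hom(F,\cdot)$ disagrees --- for example by combining small cospectral gadgets via disjoint union, by Godsil--McKay switching, or by using two strongly regular graphs with equal parameters. A cleaner alternative, if one is willing to import it, is to appeal to homomorphism-distinguishing closedness of the class $\mathcal D$ (which can be approached via linear independence of the functions $\{\hom(F,\cdot)\}_F$), from which condition (b) follows immediately, since the connected members of $\mathcal D$ are exactly the cycles together with $P_1$ and $P_2$.

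The main obstacle is condition (b), and in particular making maximality uniform over the infinitely many connected graphs outside $\{C_n\}\cup\{P_1,P_2\}$. There is no shortcut via the invariant theory of $O(n)$ acting on symmetric matrices by conjugation --- which would force any conjugation-invariant polynomial in $\mA$ to be a polynomial in the traces $\Tr(\mA^k)$ --- because the key identity $\hom(P_2,G)=\Tr(\mA_G^2)$ is an artifact of $0/1$ adjacency matrices and fails for weighted graphs, so $\hom(F,\cdot)$ is genuinely not conjugation-invariant and the adjacency matrices form too thin a set for a density argument. Consequently the maximality proof must actually manufacture cospectral graphs tailored to each forbidden $F$ (or appeal to an abstract homomorphism-distinguishing-closedness statement for $\mathcal D$), and the delicate point is engineering a construction that covers all such $F$ at once while avoiding the accidental agreements that occur when the chosen cospectral pair happens to be regular or bipartite.
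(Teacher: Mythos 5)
Your condition (a) argument is correct and matches the paper's: $\hom(C_n,G)=\Tr(\mA_G^n)$ together with Newton's identities (plus $\hom(P_1,G)=|V_G|$ and $\hom(P_2,G)=\Tr(\mA_G^2)$) gives equivalence with cospectrality. The case split for condition (b) --- a forbidden connected $F$ is either $P_k$ with $k\ge 3$ or contains a vertex of degree $\ge 3$ --- also matches the paper. But neither case is actually discharged. For paths, the pair $C_4\cup K_1$ versus $K_{1,4}$ separates only $P_3$: both graphs give $\hom(P_4,\cdot)=\vone^\top\mA^3\vone=32$ (on the $2$-regular side $2^3\cdot 4$; on the star side $\mA\vone=(4,1,1,1,1)^\top$, $\mA^2\vone=4\vone$, $\mA^3\vone=(16,4,4,4,4)^\top$). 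So ``chosen per $k$ if necessary'' is not a footnote but precisely the unsolved part, and no such choices are exhibited. The paper fixes a single cospectral pair, derives a closed form $\hom(P_k,G)=4\cdot 2^{k-1}+2$ on one side and solves the recurrence $a_k=a_{k-1}+2a_{k-2}$ on the other, verifying disagreement for all $k\ge 3$ at once.

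For graphs with a branching vertex, your proposal offers a menu of techniques (Godsil--McKay switching, gadget unions, strongly regular graphs) but no construction uniform in $F$, and the ``cleaner alternative'' of importing homomorphism-distinguishing closedness of the class $\mathcal{D}$ is circular: that closedness \emph{is} condition (b). The missing idea is to take F\"urer graphs with $F$ itself as the base. The separation $\hom(F,G(F))\ne\hom(F,H(F))$ is a standard property of F\"urer/twisted-F\"urer pairs for connected $F$; the substantive new claim is a lemma that $G(F)$ and $H(F)$ are cospectral whenever $F$ has a vertex of degree $\ge 3$. The paper proves this by matching $\Tr(\mA_{G(F)}^n)$ and $\Tr(\mA_{H(F)}^n)$ for all $n$ via a walk-counting argument: closed-walk counts in $G(F)$ and $H(F)$ are shown to agree once any underlying closed walk in $F$ must pass through the branching vertex, which is exactly where the hypothesis ``$F$ is not a cycle or a path'' enters. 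This single construction covers every forbidden $F$ outside the paths, which is the uniformity you correctly identified as the obstacle but did not overcome.
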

    The proof of \cref{thm:eigenvalues_homomorphic_expressivity} is provided in \cref{sec:proof_eigenvalues_homomorphism_expressivity}, which has the same structure as that of \cref{thm:main_theorem}. Previously, \citet{van2003graphs,dell2018lov} have proved that the spectra of two graphs \( G \) and \( H \) are identical if and only if for every cycle \( F \), \( \hom(F, G) = \hom(F, H) \). We extend their result by further proving the maximal property (\cref{def:homo_expressivity}(b)), which only adds two trivial graphs $P_1$ and $P_2$ to the homomorphism expressivity. From this result, one can easily see that using eigenvalues alone can already improve the expressive power of an MPNN since the homomorphism expressivity of MPNN contains only trees (but not cycles).

    To understand the role of projection, one can compare the set $\{C_n|n\ge 3\}\cup\{P_1,P_2\}$ with $\gF^{\mathsf{Spec},(1)}$ (the homomorphism expressivity of F{\"u}rer's spectral invariant). Clearly, the set of all parallel trees of depth 1 is strictly larger than $\{C_n|n\ge 3\}\cup\{P_1,P_2\}$, confirming that adding projection information significantly enhances the expressive power beyond graph spectra.

            
            
        

    \textbf{The power of refinement.} We finally investigate the power of iterations $d$ (or number of GNN layers) in enhancing the model's expressive power. We have the following result:
    
    \begin{corollary}
    \label{cor:k+1_iteration}
        For any $d\in\mathbb N$, spectral invariant GNNs with $d+1$ iterations are strictly more powerful than spectral invariant GNNs with $d$ iterations.
    \end{corollary}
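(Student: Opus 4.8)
The plan is to derive \cref{cor:k+1_iteration} directly from \cref{thm:main_theorem}. The ``at least as powerful'' direction is immediate: any graph of parallel tree depth at most $d$ also has parallel tree depth at most $d+1$, so $\mathcal F^{\mathsf{Spec},(d)}\subseteq\mathcal F^{\mathsf{Spec},(d+1)}$; applying the ``iff'' part of \cref{thm:main_theorem} at both levels, whenever $\chi_G^{\mathsf{Spec},(d+1)}(G)=\chi_H^{\mathsf{Spec},(d+1)}(H)$ we also get $\hom(F,G)=\hom(F,H)$ for every $F$ of parallel tree depth at most $d$, hence $\chi_G^{\mathsf{Spec},(d)}(G)=\chi_H^{\mathsf{Spec},(d)}(H)$. (This is also evident straight from the refinement process.) It then remains to produce a pair $G,H$ that is separated by $d+1$ iterations but not by $d$ iterations.

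To this end, I would construct a connected graph $F$ whose parallel tree depth is \emph{exactly} $d+1$, so that $F\in\mathcal F^{\mathsf{Spec},(d+1)}\setminus\mathcal F^{\mathsf{Spec},(d)}$. Take $T$ to be a tree with $\dep(T)=d+1$ that has no vertex of degree exactly $2$ --- for instance, the complete binary tree of depth $d+1$ with one additional pendant leaf attached to its root (the root then has degree $3$, internal vertices degree $3$, and the center still has eccentricity $d+1$, so $\dep(T)=d+1$). Set $F=T$. On one hand, $F$ is trivially a parallel tree with skeleton $T$, so its parallel tree depth is at most $d+1$. On the other hand, any skeleton $T'$ of $F$ must satisfy that $F$ arises from $T'$ by replacing each edge with a parallel edge; since $F$ is a tree it contains no cycle, so no parallel edge can consist of two or more internally disjoint paths, i.e.\ every parallel edge is a single path and $F$ is merely a subdivision of $T'$. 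But $F$ has no degree-$2$ vertex, hence no subdivision vertices, which forces $T'=F$; therefore the parallel tree depth of $F$ equals $\dep(T)=d+1$, as desired.

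Finally, I would invoke the maximality part of \cref{thm:main_theorem}: since $F\notin\mathcal F^{\mathsf{Spec},(d)}$, there exist graphs $G,H$ with $\chi_G^{\mathsf{Spec},(d)}(G)=\chi_H^{\mathsf{Spec},(d)}(H)$ but $\hom(F,G)\neq\hom(F,H)$. Because $F\in\mathcal F^{\mathsf{Spec},(d+1)}$, the ``iff'' part of \cref{thm:main_theorem} at level $d+1$, read contrapositively, gives $\chi_G^{\mathsf{Spec},(d+1)}(G)\neq\chi_H^{\mathsf{Spec},(d+1)}(H)$. Thus $d+1$ iterations distinguish $G$ and $H$ while $d$ iterations do not, which combined with the monotonicity above yields the strict separation.

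I expect the only substantive step to be the construction of $F$ together with the proof that its parallel tree depth is exactly $d+1$; the subtle point is ruling out alternative skeletons of smaller depth, which is exactly why I take $T$ to have no degree-$2$ vertex so that its skeleton is forced to be unique. The remainder is a purely formal consequence of the characterization in \cref{thm:main_theorem}.
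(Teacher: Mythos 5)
Your proposal is correct and follows essentially the same approach as the paper: construct a connected graph $F$ whose parallel tree depth is exactly $d+1$, and then invoke the maximality part of \cref{thm:main_theorem} at level $d$ together with the ``iff'' part at level $d+1$. The paper's witness is obtained by replacing each edge of $P_{2k+2}$ by a parallel edge and asserts the depth bound without detail, whereas your choice of a degree-$2$-free tree of depth $d+1$ makes the lower bound clean: for a tree, any parallel edge in a skeleton must be a single path (else it would contain a cycle), and the absence of degree-$2$ vertices rules out subdivision, forcing the skeleton to equal $F$ itself.
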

    \begin{proof}
        For any \( k \in \mathbb{N} \), we can construct a counterexample formed by replacing each edge in the path graph $P_{2k+2}$ with a parallel edge. We illustrate the construction in \cref{fig:counterexample}(b). One can easily see that the resulting graph is in $\gF^{\mathsf{Spec},(k+1)}$ but not $\gF^{\mathsf{Spec},(k)}$.
    \end{proof}
    \begin{remark}
        \cref{cor:k+1_iteration} addresses the key open question posed in \citet{arvind2024hierarchy}, who conjectured that spectral invariant GNNs converge within \emph{constant} iterations. Specifically, the authors questioned whether, for \( d \geq 4 \), spectral invariant GNNs with \( d+1 \) iterations are as powerful as those with \( d \) iterations. We disproved this conjecture by providing a family of example graphs that cannot be distinguished in \( d \) iterations but can be distinguished in \( d+1 \) iterations.
    \end{remark}
    Our counterexamples further leads to the following result:
    \begin{corollary}
    \label{cor:converge}
        For any $d\in \mathbb N_+$, There exist two graphs with \( \mathcal{O}(d)\) vertices such that spectral invariant GNNs require at least $d$ iterations to distinguish between them.
    \end{corollary}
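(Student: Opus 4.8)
The plan is to reuse, and sharpen, the counterexample family already exhibited in the proof of \cref{cor:k+1_iteration}. For $d \in \mathbb{N}_+$, let $F_d$ be the parallel tree obtained from the path $P_{2d}$ by replacing every edge with a fixed parallel edge of constant size (for instance two internally disjoint paths of length two, so that $F_d$ is a simple graph); then $|V_{F_d}| = \mathcal{O}(d)$. The (essentially unique) parallel tree skeleton of $F_d$ is $P_{2d}$, whose depth is $\lceil (2d-1)/2 \rceil = d$, so $F_d$ has parallel tree depth exactly $d$. By \cref{thm:main_theorem}, $F_d \in \mathcal{F}^{\mathsf{Spec},(d)} \setminus \mathcal{F}^{\mathsf{Spec},(d-1)}$.

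Next I would produce the distinguishing pair. Applying the maximality clause of \cref{thm:main_theorem} at depth $d-1$ to the graph $F_d \notin \mathcal{F}^{\mathsf{Spec},(d-1)}$ yields graphs $G_d$ and $H_d$ with $\chi_{G_d}^{\mathsf{Spec},(d-1)}(G_d) = \chi_{H_d}^{\mathsf{Spec},(d-1)}(H_d)$ yet $\hom(F_d, G_d) \ne \hom(F_d, H_d)$. The crucial additional point --- not needed for \cref{cor:k+1_iteration} but essential here --- is that the construction witnessing maximality is \emph{local}: $G_d$ and $H_d$ are built from (the skeleton of) $F_d$ by a CFI/F{\"u}rer-style gadget replacement in which each vertex and edge of the skeleton is blown up by only a constant factor. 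Consequently $|V_{G_d}| + |V_{H_d}| = \mathcal{O}(|V_{F_d}|) = \mathcal{O}(d)$.

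Finally, since $F_d \in \mathcal{F}^{\mathsf{Spec},(d)}$, the forward direction of \cref{thm:main_theorem} says that $\chi_{G_d}^{\mathsf{Spec},(d)}(G_d) = \chi_{H_d}^{\mathsf{Spec},(d)}(H_d)$ would force $\hom(F_d, G_d) = \hom(F_d, H_d)$; contraposing, $\chi_{G_d}^{\mathsf{Spec},(d)}(G_d) \ne \chi_{H_d}^{\mathsf{Spec},(d)}(H_d)$. Thus $G_d$ and $H_d$ are indistinguishable by a $(d-1)$-iteration spectral invariant GNN but distinguishable by a $d$-iteration one; that is, at least $d$ iterations are required, and both graphs have $\mathcal{O}(d)$ vertices, as claimed. (The edge case $d=1$ is trivial, since a $0$-iteration GNN outputs the constant coloring.)

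The main obstacle is the size bound in the second step: one has to inspect the construction used to prove the maximality half of \cref{thm:main_theorem} and verify that, when the forbidden pattern is a path-skeleton parallel tree, the resulting pair $(G_d,H_d)$ has size \emph{linear} --- rather than, say, quadratic or exponential --- in $d$. For the standard twisted-CFI--type constructions this is immediate, because each gadget is of bounded size and they are glued along the skeleton, but it must be checked explicitly; alternatively, one can sidestep the appeal to the generic maximality proof by writing down the twisted pair along $P_{2d}$ directly and verifying indistinguishability at iteration $d-1$ via the pebble-game/locality characterization of $\chi^{\mathsf{Spec},(d-1)}$, which is again manifestly linear-sized.
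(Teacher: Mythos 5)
Your proof is correct and takes essentially the same approach the paper implicitly relies on: apply the F\"urer-graph maximality construction to the parallel tree $F_d$ built on the path skeleton $P_{2d}$, and observe that the resulting pair $(G(F_d),H(F_d))$ is indistinguishable at iteration $d-1$ (by \cref{lm:maximal}) yet distinguishable at iteration $d$ (since $F_d\in\gF^{\mathsf{Spec},(d)}$). You also correctly isolate and justify the one point the paper leaves implicit, namely the $\mathcal{O}(d)$ size bound: because $F_d$ has maximum degree bounded by a constant (say $4$ when each edge of $P_{2d}$ is replaced by two internally disjoint length-$2$ paths), each base vertex $x$ contributes only $2^{\deg_{F_d}(x)-1}=\mathcal{O}(1)$ meta-vertices to the F\"urer graph, so $|V_{G(F_d)}|=|V_{H(F_d)}|=\mathcal{O}(|V_{F_d}|)=\mathcal{O}(d)$.
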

    \cref{cor:converge} establishes a tight bound on the number of layers needed for spectral invariant GNNs to reach maximal expressivity, showing that it scales with the order of graph size. This advances an important research topic that aims to study the relation between expressiveness and iteration number of color refinement algorithms \citep{furer2001weisfeiler,kiefer2016upper,lichter2019walk}.
    
    \begin{figure}[t]
        \vspace{-30pt}
        \centering
        \small
        \begin{tabular}{cc}
            \includegraphics[height=0.14\textwidth]{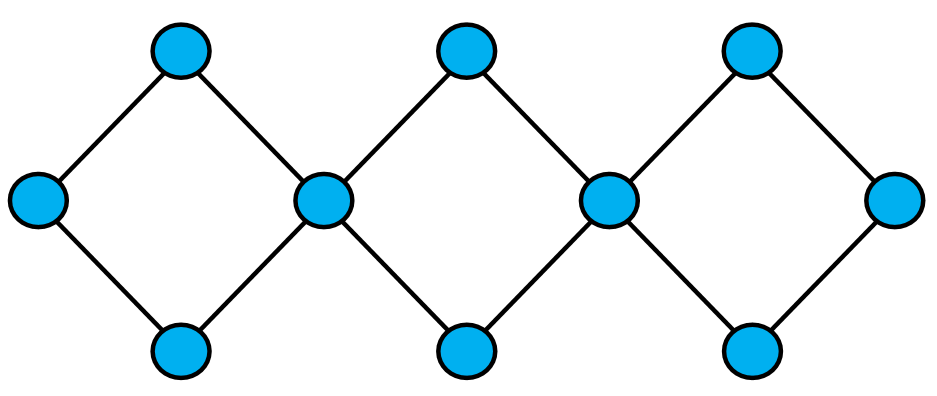} & \includegraphics[height=0.14\textwidth]{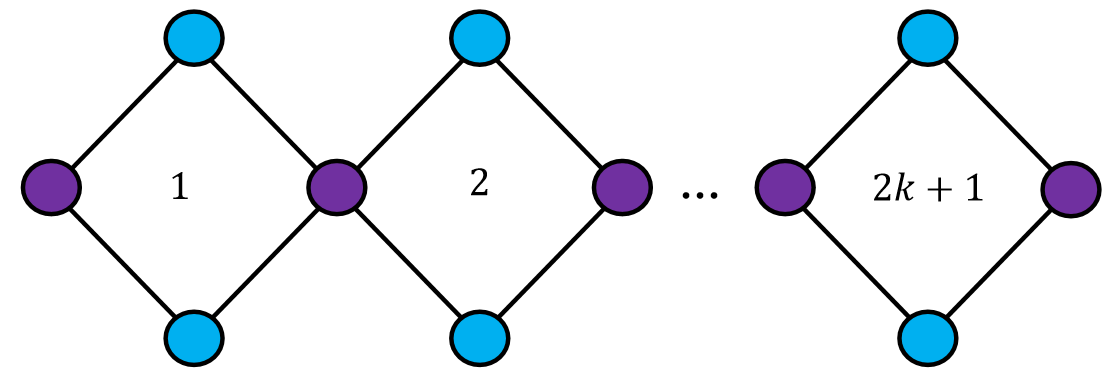} \\
            (a) Counterexample for \cref{cor:2_iteration} & \hspace{-5pt}(b) Counterexample for \cref{cor:k+1_iteration}
        \end{tabular}
        \vspace{-7pt}
        \caption{Counterexample for \cref{cor:2_iteration} and \cref{cor:k+1_iteration}}
        \label{fig:counterexample}
        \vspace{-10pt}
    \end{figure}
    To summarize all the above results, we illustrate the hierarchy established for spectral invariant GNNs and other mainstream GNNs in \cref{fig:hierarchy}.

    \subsubsection{Subgraph Count}
    
    In fact, our results can go beyond the WL framework and reveal the expressive power of spectral invariant GNNs in a more practical perspective. As an example, we will show below how \cref{thm:main_theorem} can be used to understand the subgraph counting capabilities of spectral invariant GNNs. Given any graph $F$, we say a GNN model \( M \) can subgraph-count substructure \( F \) if for any graphs \( G \) and \( H \), the condition \(\chi_G^M(G) = \chi_H^M(H)\) implies \(\sub(F, G) = \sub(F, H)\). Denote by $\mathsf{Spasm}(F)$ the set of all homomorphic images of $F$. Previous results have proved that, if the homomorphism expressivity $\gF^M$ exists for model $M$, then $M$ can subgraph-count $F$ if and only if $\mathsf{Spasm}(F)\subset \gF^M$ \citep{seppelt2023logical,zhang2024beyond}. This allows us to precisely analyze which substructure can be subgraph-counted by spectral invariant GNNs.
    \begin{corollary}
    \label{cor:subgraph_count}
        Spectral invariant GNN can count cycles and paths with up to \( 7 \) vertices.
    \end{corollary}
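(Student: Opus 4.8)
The plan is to derive the corollary from the subgraph-counting criterion recalled above together with \cref{thm:main_theorem}. Since any model $M$ whose homomorphism expressivity exists can subgraph-count $F$ exactly when $\Spasm(F)\subseteq\gF^M$, and since \cref{thm:main_theorem} with $d=\infty$ identifies $\gF^{\mathsf{Spec},(\infty)}$ with the family of all parallel trees, it suffices to show that every element of $\Spasm(C_n)$ for $3\le n\le7$, and of $\Spasm(P_n)$ for $1\le n\le7$, is a parallel tree.

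The first step is a walk-theoretic reading of these spasms together with an edge-count bound. A connected graph $G$ is a homomorphic image of $C_n$ precisely when it carries a closed walk of length $n$ traversing every edge, and a homomorphic image of $P_n$ precisely when it carries an open walk of length $n-1$ traversing every edge. I would then note that any such closed walk has length at least $|E_G|+k$ and any such open walk has length at least $|E_G|+k-1$, where $2k$ is the number of odd-degree vertices of $G$: at each odd-degree vertex the number of incident edge-traversals is even although its degree is odd, so some incident edge is traversed more than once, and double-counting these repetitions against the odd-degree vertices (each edge has at most two endpoints) yields the bound. Hence, if $G\in\Spasm(C_n)\cup\Spasm(P_n)$ for some $n\le7$, then $|E_G|+k\le7$ when $k\ge1$, and $|E_G|\le7$ when $k=0$.

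The second step feeds in two elementary facts about parallel trees, each verifiable by inspecting the finitely many small connected graphs: (a) every connected graph with at most $5$ edges is a parallel tree, and (b) every connected graph with at most $7$ edges that is \emph{not} a parallel tree has at least four odd-degree vertices. Granting these, a short case split on $k$ finishes the proof: if $k\ge2$ the bound gives $|E_G|\le5$, so $G$ is a parallel tree by (a); if $k\le1$ the bound gives $|E_G|\le7$ while $G$ has at most two odd-degree vertices, so by the contrapositive of (b) $G$ is a parallel tree. These cases are exhaustive, so the corollary follows.

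The step I expect to be the main obstacle is fact (b), i.e., pinning down all connected non-parallel-trees with at most $7$ edges. Unwinding the definition of a parallel tree, such a graph either has treewidth at least $3$ --- and then, having at most $7$ edges, it must be $K_4$, $K_4$ with a pendant edge, or $K_4$ with one edge subdivided --- or it is built tree-wise from parallel edges in a way that forces some generalized theta subgraph with three or more internal paths to be glued to the rest of the graph at a vertex that is not one of its two hubs (the two vertices of degree at least $3$); a finite check then confirms each of these graphs has at least four odd-degree vertices. A purely computational alternative, which avoids this structural analysis, is to generate $\Spasm(C_n)$ and $\Spasm(P_n)$ for each $n\le7$ directly --- by identifying vertices of the cycle or path, so that every image has at most $n$ edges --- and to verify parallel-tree membership for each element, the only nuisance being the moderate size of $\Spasm(C_7)$.
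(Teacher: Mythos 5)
Your proposal is correct in outline and takes a genuinely different route from the paper's. The paper simply asserts that one can verify by enumeration that every homomorphic image of $C_n$ and $P_n$ for $n\le 7$ is a parallel tree (and exhibits $K_4\in\Spasm(C_8)\cap\Spasm(P_8)$ for tightness). You instead extract an edge/odd-degree constraint from the walk-parity argument: a homomorphic image of $C_n$ (resp.\ $P_n$) corresponds to a closed walk of length $n$ (resp.\ a walk of length $n-1$) covering every edge, and summing the parity defect $\sum_{e\ni v}(m_e-1)$ over odd-degree vertices and dividing by two yields $|E_G|+k\le n$ in either case, with $2k$ the number of odd-degree vertices. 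I checked this bound; note that for the $P_n$ case your sketch quietly assumes the image walk is open and that every odd-degree vertex is interior --- an odd-degree walk-endpoint contributes nothing --- so one really needs the symmetric-difference count with the two endpoints, but the stated bound $|E_G|+k-1$ does come out right. The reduction then rests on two finite claims: (a) every connected graph with at most $5$ edges is a parallel tree, and (b) every connected graph with at most $7$ edges and at most $2$ odd-degree vertices is a parallel tree. Fact (a) is a short check. Fact (b) is true --- the non-parallel-trees with $\le 7$ edges you need to exclude are $K_4$, $K_4$ with a pendant edge, $K_4$ with one edge subdivided, and graphs obtained by gluing a parallel-edge block (such as $K_{2,3}$, the house, or $K_4-e$) to the rest at a non-hub vertex, and each of these has at least four odd-degree vertices --- but establishing it rigorously is still a finite enumeration, and a graph like the fan ($C_5$ plus two chords at a common vertex, $7$ edges, $2$ odd-degree vertices) illustrates that some cases require a non-obvious choice of endpoints before one sees the parallel-edge structure. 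So both your proof and the paper's ultimately bottom out in a finite check; what your route buys is a structural explanation of why $7$ is the right threshold and a tighter organization of the check by edge count and degree parity rather than by direct generation of $\Spasm(C_7)$.
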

    \begin{proof}
        For cycles or paths with at most \( 7 \) vertices, one can check by enumeration that their homomorphic images are all parallel trees. For cycles or paths with at least 8 vertices, the 4-clique is a valid homomorphic image but is not a parallel tree.
    \end{proof}

    We can further strengthen the above results by studying the number of iterations needed to count substructures. We have the following results:
    \begin{corollary}
    \label{cor:subgraph_count_finite_iteration}
        The following holds:
        \begin{enumerate}[topsep=0pt,leftmargin=25pt]
            \setlength{\itemsep}{0pt}
            \item Spectral invariant GNNs can subgraph-count all cycles up to \( 7 \) vertices within $2$ iterations.
            \item The above upper bound is tight: spectral invariant GNNs with only $1$ iteration (i.e., F{\"u}rer's weak spectral invariant) cannot subgraph-count $7$-cycle.
            \item Spectral invariant GNNs with $1$ iteration suffice to subgraph-count all cycles up to $6$ vertices.
        \end{enumerate}
    \end{corollary}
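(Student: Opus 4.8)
All three claims reduce, via the subgraph-counting criterion \citep{seppelt2023logical,zhang2024beyond} together with \cref{thm:main_theorem}, to a single combinatorial question: for which $d$ is every graph in $\mathsf{Spasm}(C_n)$ a parallel tree of depth at most $d$? Indeed, a $d$-iteration spectral invariant GNN subgraph-counts $C_n$ if and only if $\mathsf{Spasm}(C_n)\subseteq\gF^{\mathsf{Spec},(d)}$, and by \cref{thm:main_theorem} the latter set is exactly the parallel trees of depth at most $d$. \cref{cor:subgraph_count} already establishes that, for $n\le 7$, every element of $\mathsf{Spasm}(C_n)$ is a parallel tree; the remaining work is to pin down its \emph{depth}.

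\textbf{Upper bounds (parts 1 and 3).} The plan is to enumerate $\mathsf{Spasm}(C_n)$ for $n\le 7$ and exhibit, for each homomorphic image $F$, a parallel tree skeleton of the required depth. The enumeration is organized around the observation that a homomorphic image of $C_n$ is precisely a connected graph carrying a closed walk of length $n$ that is surjective on vertices and edges; hence it has at most $n$ edges, and a short parity argument (every edge lying in a cut of size one, in particular every pendant edge, and more generally every edge not on a cycle of the image, is traversed an even number of times by the walk) sharply restricts the possibilities. Carrying this out for $n\le 6$, one finds that every image admits a skeleton which is a star --- its ``thick'' parts are parallel edges sharing a single common vertex --- hence has depth at most $1$; this yields part 3 and, a fortiori, the cases $n\le 6$ of part 1. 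For $n=7$ the same enumeration shows every image admits a skeleton of depth at most $2$ (the extremal case being a path on four skeleton vertices), which completes part 1. I would relegate the full case analysis to an appendix and display only the non-trivial images (those containing a cycle) in the main text.

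\textbf{Lower bound (part 2).} Here the plan is to exhibit a single explicit witness. Let $F^\star$ be the graph obtained from a triangle on $\{x,y,z\}$ by attaching a pendant edge $\{x,w\}$ at $x$ and a pendant edge $\{y,t\}$ at $y$. First, $F^\star\in\mathsf{Spasm}(C_7)$: the closed walk $x\to w\to x\to y\to t\to y\to z\to x$ has length $7$ and is surjective onto the vertices and edges of $F^\star$. Second, $F^\star$ has parallel tree depth exactly $2$: in any realization of $F^\star$ as a parallel tree, $z$ has degree $2$ with neighbours only $x$ and $y$, which forces $z$ to be an internal vertex of a path $x-z-y$ inside a parallel edge with endpoints $\{x,y\}$ (any other choice either gives $x$ or $y$ degree greater than $2$ on some path, or puts a triangle $x,y,z$ into the skeleton, contradicting that the skeleton is a tree); the edge $\{x,y\}$ is then a second path of this same parallel edge, and $\{x,w\},\{y,t\}$ are width-one parallel edges, so the skeleton must contain the path $w-x-y-t$ and therefore has depth at least $2$ (it has depth exactly $2$ by part 1). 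Hence $F^\star\notin\gF^{\mathsf{Spec},(1)}$, so by the subgraph-counting criterion a $1$-iteration spectral invariant GNN cannot subgraph-count $C_7$.

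\textbf{Main obstacle.} The conceptual content is modest; the effort lies in the bookkeeping. The main obstacle is making the enumeration of $\mathsf{Spasm}(C_7)$ --- together with the accompanying minimal-depth skeleton computation --- genuinely exhaustive: one must verify that the parity constraints above really do capture \emph{all} homomorphic images and that no image secretly requires a depth-$3$ skeleton. By contrast, once the witness $F^\star$ is identified, the tightness statement in part 2 is clean.
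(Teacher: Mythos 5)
Your proof is correct and follows the route the paper implicitly intends: reduce subgraph-counting of $C_n$ to the condition $\Spasm(C_n)\subseteq\gF^{\mathsf{Spec},(d)}$ (every homomorphic image has parallel-tree depth $\le d$), then enumerate the spasm, using bridge-parity to prune. The paper states this corollary without a dedicated proof beyond the enumeration mentioned for \cref{cor:subgraph_count}, so your explicit witness $F^\star$ (a triangle with two pendant edges, whose parallel-tree depth is exactly~$2$ since $x,y,w,t$ are forced skeleton vertices and the skeleton must contain the path $w\text{--}x\text{--}y\text{--}t$) is a welcome concrete addition for part~2.
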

    \begin{remark}
         The subgraph counting power of spectral invariant has long been studied in the literature. \citet{cvetkovic1997eigenspaces} proved that the graph angles (which can be determined by projection) can subgraph-count all cycles of length no more than $5$. In comparison, our results significantly extend their findings, which even match the cycle counting power of 2-FWL \citep{arvind2020weisfeiler}. Moreover, we show that F{\"u}rer's weak spectral invariant can already count $6$-cycles, thus extending the work of \citet{furer2017combinatorial}.
    \end{remark}
    \begin{figure}[t]
        \vspace{-30pt}
        \centering
        \includegraphics[height=0.25\textwidth]{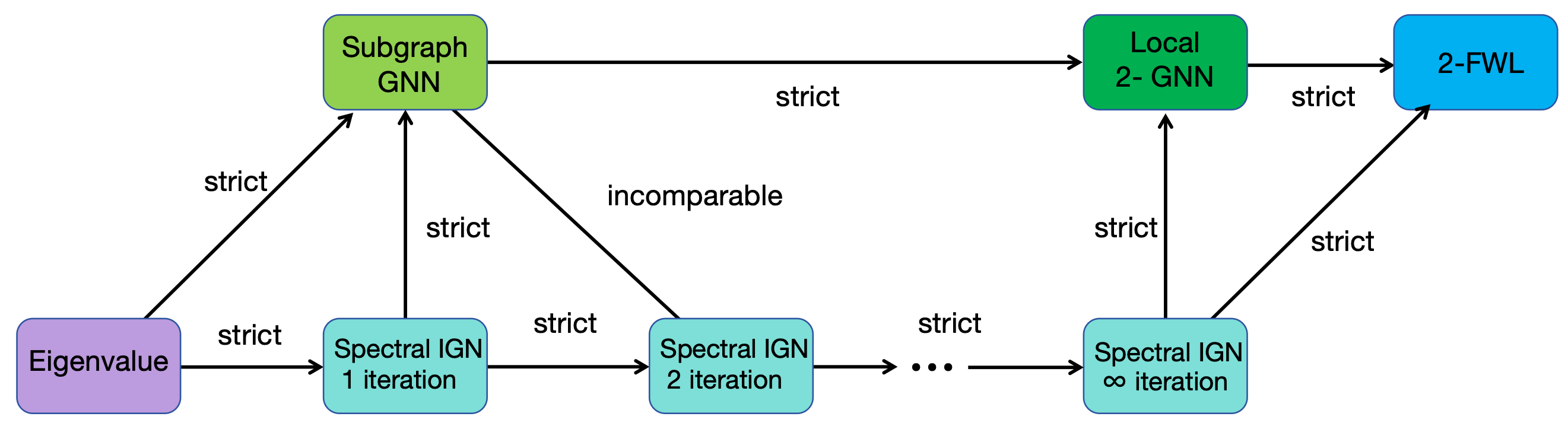}
        \vspace{-10pt}
        \caption{Hierarchy of spectral invariant GNN (abbreviated as Spectral IGN) and other mainstream GNNs. Each arrow points to the strictly stronger architecture.}
        \label{fig:hierarchy}
        \vspace{-15pt}
    \end{figure}
    \subsection{Proof sketch}
    \label{sec:proof_sketch}
    In this section, we provide a proof sketch of \cref{thm:main_theorem}, with the complete proof presented in the Appendix. We begin by demonstrating that the information encoded by spectral invariants is closely related to encoding \emph{walk information} in the aggregation process of GNNs. This corresponds to the following lemma (proved in \cref{sec:proof_step_1}, see also \citet{arvind2024hierarchy}):
    \begin{lemma}(\textbf{Equivalence of encoding walk and encoding spectral information})
    \label{lm:equivalence_walk_spectral}
    Let $G = (V_G, E_G)$ be a graph, with its adjacency matrix denoted by $\mA$. For vertices $x, y \in V_G$, define $\omega_G^k(x, y) = \mA^k_{x,y}$ for all $k \in \{0, 1, 2, \ldots, |V_G|\}$, which represents the number of $k$-walks from vertex $x$ to vertex $y$. Define the tuple $\omega^{*}_G(x, y) = (\omega^0_G(x, y), \omega^1_G(x, y), \ldots, \omega^{n-1}_G(x, y))$, where $n = |V_G|$. 
    Define the walk-encoding GNN with the following update rule:
    \begin{equation*}
        \chi_{G}^{\mathsf{Walk}, (d+1)}(x) = \mathsf{hash}(\chi_{G}^{\mathsf{Walk}, (d)}(x), \ldblbrace(\omega^{*}_G(x, y), \chi_{G}^{\mathsf{Walk}, (d)}(y)) \mid y \in V_G\rdblbrace).
    \end{equation*}
    The walk-encoding GNN outputs a representation $\chi_G^{\mathsf{Walk},(d)}(G) = \ldblbrace \chi_G^{\mathsf{Walk},(d)}(u) | u \in V_G \rdblbrace$. For any graphs $G$, $H$, we have $\chi_G^{\mathsf{Walk},(d)}(G) = \chi_H^{\mathsf{Walk},(d)}(H)$ if and only if $\chi_G^{\mathsf{Spec},(d)}(G) = \chi_H^{\mathsf{Spec},(d)}(H)$. 
    \end{lemma}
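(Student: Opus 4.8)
The plan is to show that on any fixed graph the two edge features $\gP(u,v)$ and $\omega^*_G(u,v)$ determine each other, so that the spectral and walk refinements advance in lockstep, and then to promote this to the stated graph-level equivalence by a color-refinement induction on $d$. The algebraic bridge is the following. From $\mA=\sum_{\lambda\in\Lambda}\lambda\mP_\lambda$ together with $\mP_\lambda\mP_{\lambda'}=0$ for $\lambda\neq\lambda'$ and $\sum_\lambda\mP_\lambda=\mI$ one gets $\mA^k=\sum_{\lambda\in\Lambda}\lambda^k\mP_\lambda$, hence
\[
\omega^k_G(u,v)=\sum_{(\lambda,p)\in\gP(u,v)}\lambda^k\,p\qquad\text{for all }k\ge 0 ,
\]
so $\omega^*_G(u,v)$ is a function of the multiset $\gP(u,v)$ alone (the vertex count $n=|V_G|$ being the only extra datum fixing the tuple length, and the coordinates with $k\ge|\Lambda|$ being forced by the lower ones since the minimal polynomial of $\mA$ has degree $|\Lambda|\le n$). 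Conversely, letting $p_\lambda$ be the interpolation polynomial of degree $<|\Lambda|$ with $p_\lambda(\mu)=\delta_{\lambda\mu}$ for $\mu\in\Lambda$, symmetry of $\mA$ gives $\mP_\lambda=p_\lambda(\mA)$, so $\mP_\lambda(u,v)$ is a linear combination of $\omega^0_G(u,v),\dots,\omega^{|\Lambda|-1}_G(u,v)$ with coefficients depending only on $\Lambda$; thus $\gP(u,v)=\ldblbrace(\lambda,\mP_\lambda(u,v))\mid\lambda\in\Lambda\rdblbrace$ is a function of $\omega^*_G(u,v)$ and $\Lambda$.

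The second conversion needs the eigenvalue set $\Lambda$, so I would next check that a single refinement round already exposes it. From $\chi_G^{\mathsf{Spec},(1)}(u)$ one reads off $\ldblbrace\gP(u,v)\mid v\in V_G\rdblbrace$, whose cardinality is $n$ and whose entries list $\Lambda$ explicitly. From $\chi_G^{\mathsf{Walk},(1)}(u)$ one reads off $\ldblbrace\omega^*_G(u,v)\mid v\in V_G\rdblbrace$; the diagonal term $\omega^*_G(u,u)$ is the unique one whose zeroth coordinate equals $1$ (as $\mA^0=\mI$), so $\chi_G^{\mathsf{Walk},(1)}(G)$ determines $\ldblbrace\omega^*_G(u,u)\mid u\in V_G\rdblbrace$ and hence the power sums $\Tr(\mA^k)=\sum_{\lambda\in\Lambda}m_\lambda\lambda^k$ (with $m_\lambda$ the multiplicity of $\lambda$), which by Newton's identities pin down the characteristic polynomial, and thus $\Lambda$ with multiplicities and $n$. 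Since $\chi^{(d)}$ refines $\chi^{(1)}$ for $d\ge 1$, equality of the depth-$d$ graph representations in either model already forces $n_G=n_H$ and $\Lambda_G=\Lambda_H$.

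The induction then goes as follows. For $d=0$ both colorings are constant; if $|V_G|\neq|V_H|$ the equivalence is vacuous for every $d\ge1$ (vertices carry $\omega^*$-tuples of different lengths, and the spectral representation already detects $n$). For $d\ge1$, assuming $|V_G|=|V_H|=n$ and, in the walk-to-spectral direction, $\Lambda_G=\Lambda_H=\Lambda$, induction on $d$ using the two conversions produces graph-independent maps $\Phi$ and $\Psi$ with $\chi_G^{\mathsf{Walk},(d)}(u)=\Phi\big(\chi_G^{\mathsf{Spec},(d)}(u)\big)$ and $\chi_G^{\mathsf{Spec},(d)}(u)=\Psi\big(\chi_G^{\mathsf{Walk},(d)}(u)\big)$ for all vertices $u$: at the inductive step, the level-$d$ root color and the aggregated neighbor multiset are recoverable from the level-$(d+1)$ color by injectivity of $\hash$, and the multiset aggregated by one procedure is the image, under a fixed elementwise map, of that of the other. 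Taking multisets of vertex colors yields $\chi_G^{\mathsf{Walk},(d)}(G)=\chi_H^{\mathsf{Walk},(d)}(H)$ if and only if $\chi_G^{\mathsf{Spec},(d)}(G)=\chi_H^{\mathsf{Spec},(d)}(H)$.

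I expect the main obstacle to be the bookkeeping around $\Lambda$: recovering projections from walk counts is graph-independent only once $\Lambda$ is known, so the delicate point is to argue rigorously that $\Lambda$ (with multiplicities, and the vertex count $n$) is genuinely reconstructible from the data each refinement has accumulated after one iteration, and to handle the degenerate case $|V_G|\neq|V_H|$ separately, rather than tacitly assuming these. The remaining ingredients---invertibility of a Vandermonde matrix, Lagrange interpolation, and the standard ``refinement up to relabeling'' argument---are routine.
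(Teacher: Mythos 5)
Your proposal is correct and tracks the paper's own proof closely. Both arguments rest on the same algebraic bridge (from $\mA^k=\sum_{\lambda}\lambda^k\mP_\lambda$ one direction, and from the Lagrange/Cayley--Hamilton expression of $\mP_\lambda$ as a polynomial in $\mA$ the other), both recover the eigenvalue multiset $\Lambda$ from the walk coloring by extracting $\omega^*_G(u,u)$ and using the traces $\Tr(\mA^k)$ to pin down the spectrum, and both close the argument with the same induction on the number of refinement rounds. Your additions---making explicit that $\Lambda$ and $n$ are already determined after one iteration, and handling $|V_G|\neq|V_H|$ separately---are bookkeeping points that the paper leaves implicit but are consistent with its reasoning; they don't change the route.
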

    Our next step aims to prove that for graphs \( G \) and \( H \), \( \chi_G^{\mathsf{Walk},(d)}(G) = \chi_H^{\mathsf{Walk},(d)}(H) \) iff, for all graphs \( F \) with parallel tree depth at most \( d \), \( \hom(F, G) = \hom(F, H) \). This will yield the first property outlined in \cref{thm:main_theorem}. The proof has a similar structure to that in \citet{zhang2024beyond}, which is based on the tools of tree-decomposed graphs and algebraic graph theory (see \cref{thm:equivalence_homomorphism_tree_count,lm:homomorphism_property,thm:equivalence_cnt_hom}). This part corresponds to \cref{sec:proof_step_2}.
    
    Now, it remains to prove that the set \( \mathcal{F}^{\mathsf{Spec},(d)} \) is maximal (the second property in \cref{thm:main_theorem}). To achieve this, we leverage the technique known as pebble game \citep{cai1992optimal}, which was originally used to construct counterexample graphs that cannot be distinguished by the $k$-FWL test. We extend the framework
    and define the pebble game for spectral invariant GNNs as follows:
    
    \begin{definition}(\textbf{Pebble game for spectral invariant GNNs}) The pebble game is conducted on two graphs $G = (V_G, E_G)$ and $H = (V_H, E_H)$. Without loss of generality, we assume $V_G=V_H$. Initially, each graph is equipped with two distinct pebbles denoted as $u$ and $v$, which initially lie outside the graphs. The game involves two players: the $spoiler$ and the $duplicator$. The game process is described as follows:
        \begin{itemize}[topsep=0pt,leftmargin=17pt]
            \setlength{\itemsep}{0pt}
            \item \emph{Initialization:} The spoiler first selects a non-empty subset \( V^{\mathsf{S}} \) from either \( V_G \) or \( V_H \), and the duplicator responds with a subset \( V^{\mathsf{D}} \) from the other graph, ensuring that \( |V^{\mathsf{D}}| = |V^{\mathsf{S}}| \).
            Then, the spoiler places the pebble \( u \) on some vertex in \( V^{\mathsf{D}} \), and the duplicator places the corresponding pebble \( u \) on some vertex in \( V^{\mathsf{S}} \). Similarly, the spoiler and duplicator repeat the process to place two pebbles \( v \). After the initialization, all pebbles will lie on the two graphs.
            \item \emph{Main Process:} The game iteratively repeats the following steps, where in each iteration the spoiler may choose freely between the following two actions:
            \begin{enumerate}[topsep=-1pt,leftmargin=14pt]
                \setlength{\itemsep}{0pt}
                \item Action 1 (moving pebble \( v \)). The spoiler first selects a non-empty subset \( V^{\mathsf{S}} \) from either \( V_G \) or \( V_H \), and the duplicator responds with a subset \( V^{\mathsf{D}} \) from the other graph, ensuring that \( |V^{\mathsf{D}}| = |V^{\mathsf{S}}| \). The spoiler then moves pebble \( v \) to some vertex in \( V^{\mathsf{D}} \), and the duplicator moves the corresponding pebble \( v \) to some vertex in \( V^{\mathsf{S}} \).
                \item Action 2 (moving pebble \( u \)). This action is similar to the above one except that both players move pebble $u$ instead of pebble $v$.
            \end{enumerate}
            
            \item \emph{Termination:} The spoiler wins if, after a certain number of rounds, $\omega_G^{\star}(u,v)$ for graph $G$ differs from $\omega_H^{\star}(u,v)$ for graph $H$. Conversely, the duplicator wins if the spoiler is unable to win after any number of rounds.
        \end{itemize}
    \end{definition}
    With the above definition, we can now prove the equivalence between the outcome of a pebble game and the ability to distinguish non-isomorphic graphs  using spectral invariant GNNs:
    \begin{lemma}(\textbf{Equivalence of pebble game and spectral invariant GNNs})
        \label{lm:original_pebble_game_equivalence}
        Given graphs $G$ and $H$ and the number of steps $d\in\mathbb N$, the spoiler cannot win the pebble game in $d$ steps iff $\chi_{G}^{\mathsf{Spec},(d+1)}(G)=\chi_{H}^{\mathsf{Spec},(d+1)}(H)$.
    \end{lemma}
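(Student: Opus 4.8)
The plan is to prove Lemma~\ref{lm:original_pebble_game_equivalence} by establishing a bijection-style correspondence between pebble-game positions and the color-refinement histories used by the spectral invariant GNN. In light of Lemma~\ref{lm:equivalence_walk_spectral}, it suffices to work with the walk-encoding GNN $\chi^{\mathsf{Walk},(d)}$ rather than $\chi^{\mathsf{Spec},(d)}$ directly, since the two induce the same graph invariant at every iteration; this is convenient because the pebble game's termination condition is phrased in terms of $\omega_G^{\star}(u,v)$, which is exactly the edge feature fed into the walk-encoding GNN. I would first define, for each $d$, the equivalence relation on pairs $(x,y) \in V_G \times V_G$ obtained by a ``two-pebble'' variant of color refinement: let $C_G^{(0)}(x,y) = \omega_G^{\star}(x,y)$, and let $C_G^{(d+1)}(x,y)$ refine $C_G^{(d)}(x,y)$ by also recording the multiset $\ldblbrace C_G^{(d)}(x,z) \mid z \in V_G \rdblbrace$ together with $\ldblbrace C_G^{(d)}(z,y) \mid z \in V_G\rdblbrace$ (the two ``sweeps'' corresponding to moving pebble $v$ and moving pebble $u$ respectively). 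The key structural claim is that this pair-coloring after $d$ rounds is exactly as fine as the distinguishing power of the walk-encoding GNN at iteration $d+1$ when restricted to pairs: more precisely, $C_G^{(d)}(x,y) = C_H^{(d)}(x',y')$ in the appropriate multiset-comparison sense captures the same information as $\chi^{\mathsf{Walk},(d+1)}$.

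The second step is the standard pebble-game$\leftrightarrow$refinement equivalence, adapted to this two-pebble setting. I would prove by induction on $d$ that the spoiler has a winning strategy in $d$ steps starting from pebbles on $(x,y)$ in $G$ and $(x',y')$ in $H$ if and only if $C_G^{(d)}(x,y) \neq C_H^{(d)}(x',y')$. The base case $d=0$ is immediate: the spoiler wins in zero further steps exactly when $\omega_G^{\star}(x,y) \neq \omega_H^{\star}(x',y')$, which is $C^{(0)}$. For the inductive step, the spoiler's Action~1 (respectively Action~2) corresponds precisely to the duplicator being forced to match a subset, and the usual argument shows the spoiler can win in $d+1$ steps iff for some choice of action the resulting pair-colors differ — which unfolds into the multiset inequality defining $C^{(d+1)}$ from $C^{(d)}$. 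The ``only if'' direction uses that the duplicator, to survive, must produce a color-preserving response; the ``if'' direction uses that whenever the multisets differ, the spoiler can pick a witnessing vertex subset. This is essentially the Cai--F\"urer--Immerman argument \citep{cai1992optimal} transported to the spectral/walk setting, and I would cite that lineage while spelling out the two-action bookkeeping.

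The third step connects the pair-coloring back to the \emph{graph-level} invariant $\chi^{\mathsf{Spec},(d+1)}(G)$. Here one shows that $\chi_G^{\mathsf{Walk},(d+1)}(G) = \chi_H^{\mathsf{Walk},(d+1)}(H)$ if and only if there is no spoiler win from \emph{any} starting configuration, i.e., for every choice the duplicator can match. The forward direction is routine: if the graph representations agree, then an isomorphism-respecting correspondence of stable colors gives the duplicator a strategy, chosen consistently across the initialization (where the spoiler picks $V^{\mathsf{S}}$, then places $u$, then $v$). The reverse direction unpacks the initialization phase of the game: the spoiler's freedom to choose the subset $V^{\mathsf{S}}$ and then place both pebbles exactly mirrors the outermost multiset aggregation that turns the pair-coloring $C^{(d)}(\cdot,\cdot)$ into the node-coloring $\chi^{\mathsf{Walk},(d+1)}(\cdot)$ and then into the multiset of node colors $\chi^{\mathsf{Walk},(d+1)}(G)$. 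Combining the three steps with Lemma~\ref{lm:equivalence_walk_spectral} yields the statement.

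The main obstacle I anticipate is getting the index bookkeeping exactly right — specifically, why $d$ pebble-game steps correspond to $d+1$ GNN iterations, and why the game tracks \emph{pairs} of pebbles while the GNN invariant is a multiset of \emph{single-node} colors. The off-by-one arises because the edge feature $\omega_G^{\star}(u,v)$ already encodes one ``layer'' of walk information (it is the initialization of the pair-coloring, matching the base case $d=0$), and each subsequent pebble move adds one refinement round, so $d$ moves reach the information content of iteration $d+1$; but making this precise requires carefully matching the initialization phase of the game (which fixes both pebbles before any move) against the single aggregation step that lifts pair-colors to the stable node colors defining $\chi^{\mathsf{Spec},(d+1)}$. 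A secondary subtlety is verifying that the two-sided sweep in $C^{(d+1)}$ (aggregating over $z$ in the first coordinate \emph{and} over $z$ in the second coordinate) genuinely corresponds to the spoiler's two available actions and does not overcount; I would handle this by a symmetry observation, noting that $\omega_G^{\star}(x,y) = \omega_G^{\star}(y,x)$ since $\mA$ is symmetric, so the pair-coloring is symmetric and the two sweeps are interchangeable up to relabeling the pebbles.
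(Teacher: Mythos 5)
Your approach is genuinely different from the paper's. The paper never introduces a pair-coloring: its two supporting lemmas are stated in terms of the game position ``pebble $u$ is placed on $(u_G, u_H)$ and pebble $v$ is not yet placed,'' and they prove directly (by induction) that from that configuration the Spoiler wins in $l-1$ more rounds iff $\chi_G^{\mathsf{Walk},(l)}(u_G)\neq\chi_H^{\mathsf{Walk},(l)}(u_H)$. In other words, the paper works at what you would call ``level~1'' of your three-level structure from the start, so the bridge from game to graph-level invariant is a single multiset aggregation. Your route instead lives at ``level~0'' (both pebbles placed), which forces you to do two aggregation steps to reach the graph invariant. This can be made to work, but it requires proving the intermediate lemma that $C_G^{(d)}(x,y)$ is equivalent, as a labeling of pairs, to the triple $\left(\omega_G^{\star}(x,y),\ \chi_G^{\mathsf{Walk},(d)}(x),\ \chi_G^{\mathsf{Walk},(d)}(y)\right)$, and then observing that the multiset $\ldblbrace C_G^{(d)}(u,v)\rdblbrace_{v\in V_G}$ collapses (since $\chi_G^{\mathsf{Walk},(d)}(u)$ is fixed when $u$ is) to exactly $\chi_G^{\mathsf{Walk},(d+1)}(u)$, after which a final outer multiset gives $\chi_G^{\mathsf{Walk},(d+1)}(G)$. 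You never state this reduction, and your step 3 as written elides it.

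Two concrete imprecisions to fix. First, your claim that ``$C_G^{(d)}(x,y)=C_H^{(d)}(x',y')$ in the appropriate multiset-comparison sense captures the same information as $\chi^{\mathsf{Walk},(d+1)}$'' conflates a pair-labeling with a vertex-labeling; the correct statement is the explicit factorization above, and the passage from the pair-labeling to the vertex-labeling must be tied to the second half of the initialization phase (placing pebble $v$ via the subset mechanism), while the passage from the vertex-labeling to the graph-level multiset corresponds to the first half (placing pebble $u$). Missing this two-stage unwinding is exactly where the off-by-one you flag would slip through. Second, your symmetry remark is off target: $\omega_G^{\star}(x,y)=\omega_G^{\star}(y,x)$ gives $C^{(d)}(x,y)=C^{(d)}(y,x)$, which makes the two sweeps \emph{structurally parallel} (each recovers the single-vertex color of its fixed endpoint), but it does not make either sweep redundant, since one fixes $x$ and the other fixes $y$; both are needed to encode Actions 1 and 2 and there is no overcounting issue to worry about in the first place. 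What your approach buys is an explicit bookkeeping object that mirrors the two-pebble state; what the paper's approach buys is a shorter proof, because the inductive invariant is already the single-vertex color, so the initialization phase is absorbed into a single aggregation rather than two.
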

    We give a proof in \cref{sec:proof_step_3}. Next, to identify counterexamples \( G \) and \( H \) for any \( F \notin \mathcal{F}^{\mathsf{Spec},(d)} \) such that \( \chi_G^{\mathsf{Spec},(d)}(G) = \chi_H^{\mathsf{Spec},(d)}(H) \) and \( \hom(F, G) \neq \hom(F, H) \), we draw inspiration from a special class of graphs called F\"urer graphs \citep{furer2001weisfeiler}, which is a principled approach to constructing pairs of non-isomorphic but structurally similar graphs. If graphs $G$ and $H$ are the F\"urer graph and twisted F\"urer graph constructed from the same base graph $F$, we show that the pebble game can be significantly simplified. Importantly, the simplified pebble game will be played on the base graph $F$ instead of the complex F\"urer graphs, making the subsequent analysis much easier. Due to space constraints, a detailed description of the simplified pebble game is provided in \cref{sec:proof_part_4}. We then establish the following lemma, which relates the simplified pebble game to spectral invariant GNNs:

    \begin{lemma}(\textbf{Equivalence of pebble game on F\"urer graphs and spectral invariant GNNs})
    \label{lm:simplified_pebble_game}Given a base graph $F$, let $G(F)$ and $H(F)$ be the F\"urer graph and twisted F\"urer graph of $F$, respectively. Then, the spoiler cannot win the simplified pebble game on $F$ in $d$ steps iff $\chi_{G}^{\mathsf{Spec},(d+1)}(G(F))=\chi_{H}^{\mathsf{Spec},(d+1)}(H(F))$.
    \end{lemma}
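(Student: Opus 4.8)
The plan is to reduce the claimed equivalence to two statements that are already in hand: the equivalence between the (full) pebble game and spectral invariant GNNs (\cref{lm:original_pebble_game_equivalence}), and the reduction of the full pebble game on the Fürer graphs $G(F),H(F)$ to the simplified game on the base graph $F$. So the proof is structured as a chain: spoiler cannot win the simplified game on $F$ in $d$ steps $\iff$ spoiler cannot win the full pebble game on $G(F),H(F)$ in $d$ steps $\iff$ $\chi_G^{\mathsf{Spec},(d+1)}(G(F))=\chi_H^{\mathsf{Spec},(d+1)}(H(F))$. The second $\iff$ is exactly \cref{lm:original_pebble_game_equivalence} applied with the two graphs being $G(F)$ and $H(F)$, so the whole content of the lemma lives in the first $\iff$, which I now describe.

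First I would recall the Fürer gadget construction: each vertex $w$ of the base graph $F$ of degree $\deg(w)$ is replaced by a gadget whose ``inner'' states are the even-weight subsets of the edges incident to $w$ (in $G(F)$) or a fixed odd-weight coset thereof (in $H(F)$, the twist), and for each edge of $F$ a pair of ``connector'' vertices is shared between the two incident gadgets. The key structural fact is that any configuration of the two pebbles $u,v$ on $G(F)$ (resp.\ $H(F)$) projects to a configuration of pebbles on $F$ together with a small amount of ``gadget-internal'' data, and — crucially — the only invariant the termination condition of the pebble game can read off is $\omega^\star(u,v)$, i.e.\ the walk profile between the two pebbled vertices. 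I would show, via the standard automorphism argument for Fürer graphs, that the duplicator can always maintain a matching between the two Fürer graphs that is an isomorphism ``away from'' the currently pebbled gadgets, so that the walk profile $\omega_G^\star(u,v)$ versus $\omega_H^\star(u,v)$ differs if and only if the spoiler has managed, in the projected game on $F$, to ``isolate'' the twisted edge — formally, to reach a position on $F$ from which the twist cannot be pushed out to infinity. This is precisely the winning condition defining the simplified pebble game in \cref{sec:proof_part_4}. Translating the two pebble moves (Action 1 / Action 2, each a spoiler-subset / duplicator-subset exchange followed by a pebble placement) from the Fürer graph down to $F$ is then a routine bookkeeping step: a subset of $V_{G(F)}$ projects to a subset of $V_F$ of the same or smaller ``effective'' size, and the duplicator's freedom in choosing gadget-internal coordinates is exactly what lets the simplified game on $F$ be well-defined.

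Concretely, the steps in order: (1) state the projection map from pebble configurations on $G(F)/H(F)$ to configurations of the simplified game on $F$, and verify it is compatible with the two move types; (2) prove the ``duplicator direction'': if the duplicator has a winning strategy in the simplified game on $F$ (can always push the twist away from the pebbled region), lift it to a winning strategy in the full game on $G(F),H(F)$ by composing with gadget automorphisms, so $\omega_G^\star(u,v)$ and $\omega_H^\star(u,v)$ stay equal forever; (3) prove the ``spoiler direction'': if the spoiler wins the simplified game on $F$ in $d$ steps, replay the same moves upstairs and show the resulting pebbled Fürer configuration has distinct walk profiles, using that a ``trapped'' twist forces a parity obstruction detectable in $\mA^k_{u,v}$ for some $k\le n$; (4) combine with \cref{lm:original_pebble_game_equivalence} to conclude. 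The main obstacle I anticipate is step (3): showing that an isolated twist is actually visible in the walk-count vector $\omega^\star$, rather than merely in graph isomorphism type. One needs a quantitative statement that the twisted and untwisted gadgets, once the pebbles pin down the relevant connector vertices, yield different numbers of walks of some bounded length between $u$ and $v$; this likely requires a direct linear-algebraic computation on the gadget adjacency matrices (or an appeal to the fact that $\chi^{\mathsf{Spec}}$ refines $\omega^\star$ and that Fürer graphs are distinguished by a high-enough-order invariant), and getting the length bound to be $\le n = |V_G|$ (so that $\omega^\star$ as defined actually captures it) is the delicate point.
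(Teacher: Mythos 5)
Your proposal follows essentially the same route as the paper: factor the equivalence through \cref{lm:original_pebble_game_equivalence}, then reduce the full pebble game on $G(F),H(F)$ to the simplified game on the base graph $F$ (the paper inserts a ``half-simplified'' bookkeeping game as an intermediate, but that is your step~(1) in disguise), with the crux being exactly the quantitative fact you flag in step~(3) --- the paper's theorem showing the twist is visible in the walk profile between pebbled vertices precisely when the confined connected component is a path. One small inaccuracy worth noting: the simplified game's termination condition is that the Duplicator's odd-cardinality component set $\mathcal{Q}$ contains a component that \emph{is a path}, not literally ``the twist cannot be pushed out to infinity''; also, your worry about the length bound $\le n$ for $\omega^\star$ is already dispatched by Cayley--Hamilton (as in \cref{lm:equivalence_walk_spectral}), so it is less delicate than you anticipate.
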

    Note that for any connected graph $F$, \( \hom(F, G(F)) \neq \hom(F, H(F)) \) \citep{roberson2022oddomorphisms,zhang2024beyond}. Furthermore, we demonstrate that the spoiler has a winning strategy on \( F \) in $d$ steps if and only if \( F \) is a parallel tree with parallel tree depth at most $d+1$ (see \cref{sec:proof_part_5}). By combining these results with \cref{lm:simplified_pebble_game}, we establish the following lemma:
    \begin{lemma}
        \label{lm:maximal}
        For any \( F \notin \mathcal{F}^{\mathsf{Spec},(d)} \), the spoiler cannot win the simplified pebble game on \( F \). Consequently, \( \chi_{G}^{\mathsf{Spec},(d)}(G(F)) =\chi_{H}^{\mathsf{Spec},(d)}(H(F)) \).
        \end{lemma}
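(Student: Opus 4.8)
The plan is to derive \cref{lm:maximal} by combining two ingredients: \cref{lm:simplified_pebble_game}, which says that the spoiler cannot win the simplified pebble game on $F$ in $k$ steps \emph{iff} $\chi_G^{\mathsf{Spec},(k+1)}(G(F)) = \chi_H^{\mathsf{Spec},(k+1)}(H(F))$; and the combinatorial characterization established in \cref{sec:proof_part_5}, namely that the spoiler has a winning strategy on $F$ in $k$ steps \emph{iff} $F$ is a parallel tree of parallel tree depth at most $k+1$. Granting these, the deduction is a one-line index chase. Since $F \notin \mathcal{F}^{\mathsf{Spec},(d)}$ means $F$ is not a parallel tree of depth at most $d$ (this includes the case where $F$ is not a parallel tree at all, with the convention that such $F$ has parallel tree depth $\infty$), and $d = (d-1)+1$, the contrapositive of the characterization shows the spoiler cannot win the simplified pebble game on $F$ in $d-1$ steps; applying \cref{lm:simplified_pebble_game} with $k = d-1$ then yields $\chi_G^{\mathsf{Spec},(d)}(G(F)) = \chi_H^{\mathsf{Spec},(d)}(H(F))$, which is exactly the conclusion.

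So all the content lies in the characterization from \cref{sec:proof_part_5}, and for \cref{lm:maximal} only its forward implication is needed: a spoiler winning strategy in $k$ rounds forces $F$ to be a parallel tree of depth at most $k+1$. I would prove the contrapositive by induction on $k$, exhibiting a surviving duplicator strategy whenever $F$ fails to be such a parallel tree. The guiding object is the ``twist'' distinguishing the F\"urer graph $G(F)$ from the twisted F\"urer graph $H(F)$; the duplicator's job is to answer every subset-and-pebble move so that, at the currently pebbled pair $(u,v)$, the walk-count tuples $\omega^\star(u,v)$ agree between $G(F)$ and $H(F)$. I would maintain a ``frontier'' in $F$ separating the portion already pinned down by the pebbles from the portion still concealing the twist, with the invariant that the twist can always be relocated outside the explored region. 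If $F$ is not a parallel tree at all, the explored region can never isolate the twist: $F$ contains a subgraph that does not decompose into parallel simple paths between two fixed vertices, and the duplicator exploits a suitable cycle there, exactly as in the classical pebble-game analysis of F\"urer graphs \citep{cai1992optimal,furer2001weisfeiler}. If $F$ is a parallel tree but of depth exceeding $k+1$, the duplicator simply parks the twist at a skeleton node that the $k$ rounds cannot reach, using that each round advances the frontier by at most one level of the skeleton.

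The main obstacle I anticipate is making the frontier/duplicator argument precise in the simplified pebble game — in particular establishing a clean structural dichotomy for each round: given the current pebbled pair $(u,v)$ and the spoiler's chosen subset of $F$, either this data already determines a parallel-edge-like sub-structure (so the game effectively recurses on a strictly smaller base graph and the induction hypothesis applies), or it does not, in which case the chosen move cuts through the interior of a bundle of parallel paths or branches in a non-tree-like way, and one must verify that the resulting $\omega^\star$ values at the pebbled endpoints coincide between $G(F)$ and $H(F)$ so the duplicator can respond freely. Proving this dichotomy is exhaustive — handling the interaction between the two pebbles $u$ and $v$ and the fact that the spoiler may move either one on any round — is the delicate step; once it is in place, the remainder is bookkeeping together with the already-established equivalences \cref{lm:equivalence_walk_spectral} and \cref{lm:simplified_pebble_game}.
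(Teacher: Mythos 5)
Your one-line derivation of \cref{lm:maximal} from \cref{lm:simplified_pebble_game} and the parallel-tree-depth characterization of spoiler wins is exactly the paper's proof: the characterization you invoke is \cref{lm:tightness_low_dimension_finite_iteration}, and your index chase $d=(d-1)+1$ (including the convention that non-parallel-trees have depth $\infty$) matches the appendix's statement ``the Spoiler cannot win in $d-1$ steps.'' As a proof of \cref{lm:maximal} itself, your argument is correct and coincides with the paper's.

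Where you diverge is in the sketch you volunteer for the underlying characterization, which you correctly identify as the only nontrivial ingredient. You propose the contrapositive: exhibit a surviving duplicator strategy by maintaining a ``frontier'' and parking the twist beyond it, and you flag the round-by-round structural dichotomy as the delicate step. The paper instead proves the forward implication directly: \cref{lm:contracted_game_state_graph} normalizes an arbitrary spoiler winning strategy so that every reachable non-terminal state in the game state graph is \emph{strictly contracted}, and \cref{lm:tightness_low_dimension} then reads off a parallel tree skeleton $T^r$ for $F$ straight from that contracted game state graph; \cref{lm:tightness_low_dimension_finite_iteration} recovers the depth bound by observing that each strictly contracted transition descends one level of the skeleton. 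The payoff of the paper's route is precisely that it sidesteps the dichotomy you anticipate trouble with: one never needs to classify what a spoiler subset carves out of $F$, because strict contraction forces each successor state into a strictly smaller single component, so the bags visited assemble into a tree skeleton for free. Your duplicator-side argument is a legitimate alternative in the Cai--F\"urer--Immerman tradition, but as written it stops at exactly the case analysis that the paper's normalization makes unnecessary, so you would still owe a proof of that dichotomy (essentially reproving that a non-path component always lets the duplicator relocate the twist, cf.\ \cref{thm:walk_count_Furer_graph}).
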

        This yields the second property in \cref{thm:main_theorem} and concludes the proof.
    \subsection{Extensions}
    So far, this paper mainly analyzes the standard spectral invariant GNNs, which refines \emph{node features} based on projection information. In this subsection, we will show the flexibility of our proposed homomorphism expressivity framework, which can also be used to analyze other spectral-based GNN models such as higher-order spectral invariant GNNs.
    
    \subsubsection{Higher Order}
    Let us consider generalizing \cref{sec:sepctral_invariant_gnn} to higher order spectral invariant GNNs. A natural update rule of higher order spectral invariant GNN can be defined as follows:
    \begin{definition}[\textbf{Higher-Order Spectral Invariant GNN}]
        For any $k\in\mathbb N_+$, the \( k \)-order spectral invariant GNN maintains a color \(\chi_G^{k\text{-}\mathsf{Spec}}(\vu)\) for each vertex \( k \)-tuple \(\vu = (u_1, \ldots, u_k) \in V_G^k\). Initially, \(\chi_G^{k\text{-}\mathsf{Spec},(0)}(\vu) = (\mathcal{P}(u_1, u_2), \ldots, \mathcal{P}(u_1, u_k), \ldots, \mathcal{P}(u_{k-1}, u_k))\). In each iteration \( t+1 \), the color is updated as follows:
        \begin{align*}
            \chi_G^{k\text{-}\mathsf{Spec},(t+1)}(\vu)=\hash(\chi_G^{k\text{-}\mathsf{Spec},(t)}(\vu),\ldblbrace(\chi_G^{k\text{-}\mathsf{Spec},(t)}(v,u_2,\ldots,u_k),\mathcal{P}(u_1,v)):v\in V_G\rdblbrace,\cdots,\\
            \ldblbrace(\chi_G^{k\text{-}\mathsf{Spec},(t)}(u_1,u_2,\ldots,u_{k-1},v),\mathcal{P}(u_k,v)):v\in V_G\rdblbrace).
        \end{align*}
        Denote the stable color of vertex tuple $\vu\in V_G^k$ as \(\chi_G^{k\text{-}\mathsf{Spec}}(\vu)\). The graph representation is defined as $\chi^{k\text{-}\mathsf{Spec}}_G(G):=\ldblbrace\chi^{k\text{-}\mathsf{Spec}}_G(\vu):\vu\in V_G^k\rdblbrace$.
        \end{definition}
        One can see that when $k=1$, the above definition degenerates to the standard spectral invariant GNN defined in \cref{sec:sepctral_invariant_gnn}. To illustrate the homomorphism expressivity of higher-order spectral invariant GNNs, we extend the concept of strong nested ear decomposition (NED) introduced by \citet{zhang2024beyond} and define the parallel strong NED. Our main result is stated below:
    \begin{theorem}[informal]
        \label{thm:higher_order_spectral_invariant_GNN}
        A graph \( F \) is said to have a parallel \( k \)-order strong nested ear decomposition (NED) if there exists a graph \( G \) such that \( G \) admits a strong NED and \( F \) can be obtained from \( G \) by replacing each edge \( \{u,v\} \in E_G \) with a parallel edge that has endpoints \( (u,v) \). Then, the homomorphism expressivity of \( k \)-order spectral invariant GNNs is the set of all graphs that admit a parallel \( k \)-order strong NED.
    \end{theorem}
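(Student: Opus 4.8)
The plan is to mirror, at order $k$, the three-stage argument behind \cref{thm:main_theorem}. \emph{Stage 1: reduction to walks.} First I would prove a higher-order analogue of \cref{lm:equivalence_walk_spectral}, namely that the $k$-order spectral invariant GNN has exactly the same distinguishing power as the $k$-order \emph{walk-encoding} GNN obtained by replacing every feature $\gP(u_i,v)$ in the update rule --- and every initial feature $\gP(u_i,u_j)$ --- with the corresponding walk profile $\omega^*_G(\cdot,\cdot)$. The argument is the same in spirit as for $k=1$: $\gP(u,v)$ and $\omega^*_G(u,v)$ mutually determine one another (one direction from $\mA^\ell=\sum_\lambda\lambda^\ell\mP_\lambda$, the reverse from invertibility of a Vandermonde system in the distinct eigenvalues), and since the order-$k$ update processes its $k$ channels coordinate-wise, this equivalence lifts verbatim to tuple colors.

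\emph{Stage 2: homomorphism counting.} Next I would show the $k$-order walk-encoding GNN homomorphism-counts exactly the graphs admitting a parallel $k$-order strong NED. The structural crux is a factorization: for a parallel edge $B$ with endpoints $(u,v)$ whose constituent simple paths have edge-lengths $\ell_1,\dots,\ell_m$, the number of homomorphisms $B\to G$ sending the endpoints to fixed $x,y$ equals $\prod_{i=1}^m (\mA^{\ell_i})_{x,y}$, and by Cayley--Hamilton this quantity is a function of $\omega^*_G(x,y)$ alone. Hence ``replace each edge of a base graph $G_0$ by a parallel edge'' corresponds, on the GNN side, to ``replace each adjacency query by a walk-profile query''. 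I would then lift the tree-decomposed-graph and nested-ear-decomposition toolkit underlying the Local $k$-GNN characterization of \citet{zhang2024beyond} (the order-$k$ analogues of \cref{thm:equivalence_homomorphism_tree_count,lm:homomorphism_property,thm:equivalence_cnt_hom}): for the ``if'' direction one constructs a decomposition of any parallel $k$-order strong NED graph by running the base graph's decomposition and inserting the parallel edges through the factorization; for the ``only if'' direction one argues that any connected $F$ outside the class forces a homomorphism-count discrepancy. I expect this stage to be the main obstacle, since it requires threading the parallel-edge factorization through the algebraic identities while respecting the considerably more intricate coordinate-wise bookkeeping of order-$k$ updates --- this is the technical heart of the theorem.

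\emph{Stage 3: maximality via pebble game.} Finally I would generalize the pebble game to a $(k{+}1)$-pebble game whose termination condition compares the pairwise walk profiles $\omega^*$ among the pebbled vertices, prove its equivalence to the $k$-order spectral invariant GNN exactly as in \cref{lm:original_pebble_game_equivalence}, and then specialize to the F\"urer graph and twisted F\"urer graph $G(F),H(F)$ over a base graph $F$ to obtain a \emph{simplified} game played on $F$ itself, the analogue of \cref{lm:simplified_pebble_game}. Since $\hom(F,G(F))\neq\hom(F,H(F))$ for every connected $F$, it remains to show --- by a game-tree analysis paralleling \cref{lm:maximal}, now using the $k$-tuple moves to ``cover'' a nested ear decomposition of the base graph rather than merely a path or tree --- that the spoiler has a winning strategy on $F$ in $d$ steps iff $F$ admits a parallel $k$-order strong NED. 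Combining Stages 1--3 gives both directions of \cref{thm:higher_order_spectral_invariant_GNN}.
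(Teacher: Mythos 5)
Your proposal is correct and mirrors the paper's own three-stage argument: the paper likewise reduces $\gP$ to walk profiles (implicitly, by building walks directly into the $k$-order unfolding-tree and parallel tree-decomposition), establishes the homomorphism characterization by lifting the $\strhom$/$\strsurj$/$\strinj$ inversion machinery to width-$k$ parallel tree-decomposed graphs and identifying those with parallel $k$-order strong NEDs, and proves maximality via a $(k{+}1)$-pebble game simplified on F\"urer/twisted-F\"urer pairs. The only superficial difference is that you would state the walk-encoding equivalence as a standalone lemma at order $k$, whereas the paper folds it into the unfolding-tree definition; the content is the same.
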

    Due to space constraints, we leave the formal definition of \( k \)-order strong NED and the technical proof of \cref{thm:higher_order_spectral_invariant_GNN} to the Appendix.
    \subsubsection{Symmetric Power}
    To generalize spectrum and projection to higher order, another classic approach in the literature is to use the symmetric power of a graph (also called the \emph{token graph}). \citet{audenaert2005symmetricsquaresgraphs} first introduced the graph symmetric power to generalize eigenvalues into higher-order graph invariants. The formal definition of the symmetric \( k \)-th power is presented as follows:
    \begin{definition}[\textbf{Symmetric Power}]
    For any $k\in\mathbb N_+$ and graph $G$, the symmetric \( k \)-th power of $G$, denoted by \( G^{\{k\}} \), is a graph where its vertices are \( k \)-subsets of \( V_G \), and two subsets are adjacent if and only if their symmetric difference is an edge in \( G \).
    \end{definition}
    Our homomorphism expressivity framework can be used to study the ability of mainstream GNNs to encode the symmetric power of graphs. Our main result is stated as follows:
\begin{theorem}
    The Local $2k$-GNN defined in \citet{morris2020weisfeiler,zhang2024beyond} can encode the symmetric \( k \)-th power. Specifically, for given graphs \( G \) and \( H \), if \( G \) and \( H \) have the same representation under Local $2k$-GNN, then \( G^{\{k\}} \) and \( H^{\{k\}} \) have the same representation under the spectral invariant GNN defined in \cref{sec:sepctral_invariant_gnn}.
\end{theorem}
    \textbf{Discussions with prior work.} Regarding the expressive power of symmetric power, \citet{alzaga2008spectrasymmetricpowersgraphs,barghi2009non} gave the first upper bound, showing that if \( 2k \)-FWL fails to distinguish between two non-isomorphic graphs, then their symmetric \( k \)-th powers are cospectral. However, it remains unclear whether the conclusion extends to the more powerful projection information (beyond eigenvalues), or if the stated upper bound is tight. These open questions are further highlighted in \citet{zhang2024expressive}. 
    Our result answers both questions by bounding the stronger refinement-based spectral invariant for the $k$-th symmetric power graphs to Local $2k$-GNN, which is strictly weaker than $2k$-FWL \citep{zhang2024beyond}. This offers a deeper understanding of the capability of mainstream GNNs in encoding higher-order spectral information.
    \section{Experiment}
    \label{sec:experiment}
    In this section, we validate our theoretical findings through empirical experiments. We evaluate the performance of GNN models on both synthetic and real-world tasks. For the synthetic tasks, we assess the homomorphic counting power and subgraph counting power of the GNN models. These experiments serve to confirm our theoretical results, including Theorem~\ref{thm:main_theorem} and Corollary~\ref{cor:subgraph_count}.
    In addition, for the real-world task, we focus on molecular reaction prediction, specifically evaluating GNN performance on the ZINC dataset \citep{dwivedi2020benchmarking}. Our primary objective is not to achieve SOTA results but to validate our theoretical findings.
    We compare the performance of spectral invariant GNNs to both MPNNs and subgraph GNNs on the ZINC dataset. Details about model architectures are in Appendix~\ref{sec:gnn_architecture}.
    
    \noindent\textbf{Homomorphism Count}
    We use the benchmark dataset from \citet{zhao2022stars} to evaluate the homomorphism expressivity of four mainstream GNN models. The reported performance is measured by the normalized Mean Absolute Error (MAE) on the test set. The empirical results are presented in \cref{tab:exp_homo}. We can see that concerning homomorphism: (i) MPNN is unable to encode any of the five substructures, and none of the five substructures is a tree; (ii) Spectral invariant GNN can only encode the 1st and 2nd substructures; (iii) Subgraph GNN can encode the 1st, 2nd, and 3rd substructures; and (iv) Local 2-GNN can encode the 1st, 2nd, 3rd, and 4th substructures. 
    The empirical results basically align with our theoretical findings.
    
    \noindent\textbf{Subgraph Count}
    Cycle counting is a fundamental problem in chemical and biological tasks. Following the settings in \citet{frasca2022understanding, zhang2023complete, huang2023boosting}, we evaluate the cycle counting power of four GNNs. The empirical results in \cref{tab:exp_homo} demonstrate that the spectral invariant GNN can accurately count 3-, 4-, 5-, and 6-cycles, indicating its strong performance in cycle counting tasks. This empirical result is also consistent with our theoretical predictions. 
    
    \noindent\textbf{Real-World Task}
    We evaluate our GNN models on the ZINC-subset and ZINC-full dataset \citep{dwivedi2020benchmarking}. Following the standard configuration, all models are constrained to a 500K parameter budget. The results show that the spectral invariant GNN outperforms MPNN while demonstrating comparable performance to the subgraph GNN on the real-world task. These findings are consistent with our theoretical predictions.
    \begin{table}[t]
    \label{tab:results}
        \small
        \centering
        \vspace{-20pt}
            \caption{Experimental results on homomorphism counting, real-world tasks and substructure count.}
            \label{tab:exp_homo}
            \setlength{\tabcolsep}{2pt}
            \resizebox{\textwidth}{!}{
            \begin{tabular}{c|ccccc|cc|ccccccc|}
            \Xhline{1pt}
            \multirow{2}{*}{\backslashbox{Model}{Task}} & \multicolumn{5}{c|}{Homomorphism Count} & \multicolumn{2}{c|}{ZINC}& \multicolumn{6}{c}{Substructure Count}\\
            &  \adjustbox{valign=c}{\includegraphics[height=0.04\textwidth]{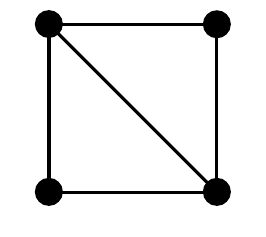}} & \adjustbox{valign=c}{\includegraphics[height=0.04\textwidth]{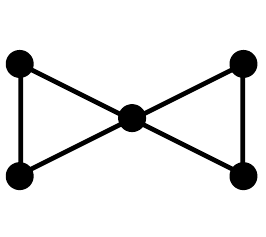}} & \adjustbox{valign=c}{\includegraphics[height=0.04\textwidth]{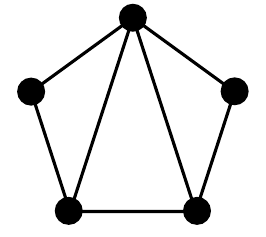}} & \adjustbox{valign=c}{\includegraphics[height=0.04\textwidth]{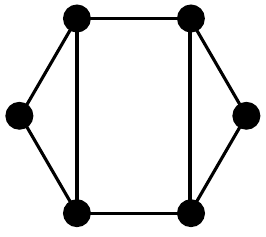}}  & \adjustbox{valign=c}{\includegraphics[height=0.04\textwidth]{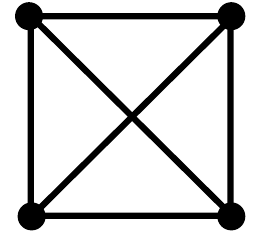}} &Subset& Full & \adjustbox{valign=c}{\includegraphics[height=0.04\textwidth]{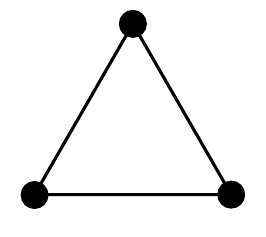}} & \adjustbox{valign=c}{\includegraphics[height=0.04\textwidth]{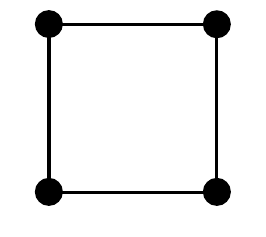}} & \adjustbox{valign=c}{\includegraphics[height=0.04\textwidth]{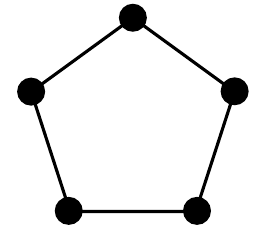}} & \adjustbox{valign=c}{\includegraphics[height=0.04\textwidth]{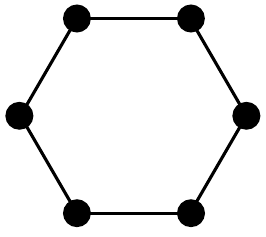}} & \adjustbox{valign=c}{\includegraphics[height=0.04\textwidth]{figure/exp_chordal_4-cycle.pdf}} & \adjustbox{valign=c}{\includegraphics[height=0.04\textwidth]{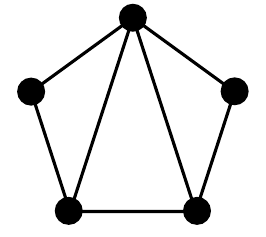}}\\ \Xhline{0.75pt}
            MPNN         & \cellcolor{red!20}$.300$ & \cellcolor{red!20}$.261$ & \cellcolor{red!20}$.276$ & \cellcolor{red!20}$.233$  & \cellcolor{red!20}$.341$ & $.138\pm.006$ & $.030\pm.002$ &\cellcolor{red!20} $.358$ &\cellcolor{red!20} $.208$ &\cellcolor{red!20} $.188$ &\cellcolor{red!20} $.146$ &\cellcolor{red!20} $.261$ &\cellcolor{red!20} $.205$ \\
            Spectral Invariant GNN &\cellcolor{cyan!20} $.045$ & \cellcolor{cyan!20}$.046$ &\cellcolor{orange!20} $.053$ &\cellcolor{orange!20} $.048$  &\cellcolor{red!20} $.303$ & $.103\pm.006$ & $.028\pm.003$ &\cellcolor{green!20} $.072$ &\cellcolor{green!20} $.072$ &\cellcolor{green!20} $.089$ &\cellcolor{green!20} $.089$ &\cellcolor{green!20} $.060$ &\cellcolor{orange!20} $.099$ \\
            Subgraph GNN  &\cellcolor{cyan!20} $.011$ &\cellcolor{cyan!20} $.013$ &\cellcolor{cyan!20} $.010$ &\cellcolor{cyan!20} $.015$  & \cellcolor{red!20}$.260$ & $.110\pm.007$ & $.028\pm.002$ &\cellcolor{cyan!20} $.010$ &\cellcolor{cyan!20} $.020$ & \cellcolor{cyan!20}$.024$ &\cellcolor{cyan!20} $.046$ &\cellcolor{cyan!20} $.007$ & \cellcolor{cyan!20}$.027$ \\
            Local 2-GNN & \cellcolor{cyan!20} $.008$ &\cellcolor{cyan!20}  $.006$ & \cellcolor{cyan!20} $.008$ & \cellcolor{cyan!20} $.008$  &\cellcolor{red!20} $.112$ & $.069\pm.001$ & $.024\pm.002$ &\cellcolor{cyan!20} $.008$ &\cellcolor{cyan!20} $.011$ &\cellcolor{cyan!20} $.017$ &\cellcolor{cyan!20} $.034$ &\cellcolor{cyan!20} $.007$ &\cellcolor{cyan!20} $.016$ \\ \Xhline{1pt}
            \end{tabular}
            }
        \hspace{-5pt}
        \vspace{-5pt}
        \end{table}

    \section{Conclusion}
    In this work, we investigate the expressive power of spectral invariant graph neural networks (GNNs). By leveraging the framework of homomorphism expressivity, we give a precise characterization the homomorphism expressivity of these networks.
    We then establish a comprehensive hierarchy of spectral invariant GNNs relative to other mainstream GNNs based on their homomorphism expressivity. Additionally, we analyze the subgraph counting capabilities of spectral invariant GNNs, with a focus on their ability to count essential substructures.
    Our results are extended to higher-order contexts and address additional problems related to spectral structures using our homomorphism framework. We demonstrate the significance of our findings by showing how our results extend previous work and address open problems identified in the literature. Finally, we conduct experiments to validate our theoretical results.

    \section*{Acknowledgements}
    This work is supported by National Science and Technology Major Project (2022ZD0114902)and National Science Foundation of China (NSFC92470123, NSFC62276005).

    \bibliography{iclr2024_conference}
    \bibliographystyle{iclr2025_conference}
    \newpage
    
    \appendix
    \renewcommand \thepart{} 
    \renewcommand \partname{}
    \part{Appendix} 
    \parttoc 
    \newpage

\newcommand*{\strhom}{\mathsf{strhom}}
\newcommand*{\strHom}{\mathsf{strHom}}
\newcommand*{\strsurj}{\mathsf{strsurj}}
\newcommand*{\strSurj}{\mathsf{strSurj}}
\newcommand*{\strinj}{\mathsf{strinj}}
\newcommand*{\strInj}{\mathsf{strInj}}
\newcommand*{\treeHom}{\mathsf{treeHom}}
\newcommand*{\TreeHom}{\mathsf{TreeHom}}
\newcommand*{\treeSurj}{\mathsf{treeSurj}}
\newcommand*{\TreeSurj}{\mathsf{TreeSurj}}
\newcommand*{\treeInj}{\mathsf{treeInj}}
\newcommand*{\TreeInj}{\mathsf{TreeInj}}
\section{Additional Related Work}
\looseness=-1 \textbf{Spectra-based Graph Neural Network.}
Spectral invariants refer to eigenvalues, projection matrices, and other generalized spectral information. In recent studies, spectral invariants have gained significant attention in the fields of graph learning and graph theory \citep{furer2010power, van2003graphs, haemers2004enumeration}. For instance, a well-known conjecture proposed by \citet{van2003graphs, haemers2004enumeration} posits that almost all graphs can be uniquely determined by their spectra, up to isomorphism. Given the importance and widespread application of graph spectral information \citep{bonchev2018chemical}, the machine learning community has also focused on analyzing the ability of graph neural networks (GNNs) to encode spectral information and on designing GNN models that incorporate more spectral features. As a result, several recent works have concentrated on the spectral-based design of GNNs \citep{bruna2013spectral, defferrard2016convolutional, lim2023sign, huang2024stability, feldman2023weisfeiler, zhang2024expressive}. Specifically, \citet{dwivedi2023benchmarking, dwivedi2021graph, kreuzer2021rethinking, rampavsek2022recipe} have designed spectral GNNs by encoding Laplacian eigenvectors as absolute positional encodings. A key drawback of using Laplacian eigenvectors is the ambiguity in choosing eigenvectors; thus, follow-up works have sought to design GNNs that are invariant to the choice of eigenvectors. \citet{lim2023sign} introduced BasisNet, which achieves spectral invariance for the first time using projection matrices. \citet{huang2024stability} further generalized BasisNet by proposing the Spectral Projection Encoding (SPE), which performs soft aggregation across different eigenspaces, as opposed to the hard separation implemented in BasisNet.

In addition to the design of spectral-based GNNs, several recent works have also focused on analyzing the expressive power of spectral GNNs and comparing them with other mainstream GNN models. \citet{balcilar2021breaking} investigate the relationship between ChebNet \citep{defferrard2016convolutional} and the 1-WL test, demonstrating that for graphs with similar maximum eigenvalues, ChebNet is as expressive as 1-WL. \citet{geerts2022expressiveness} revisit this analysis and prove that CaleyNet \citep{levie2018cayleynets} is bounded by the 2-WL test.

\citet{black2024comparing} introduced several new WL algorithms based on absolute and relative positional encodings (PE). The authors further established a bunch of equivalence relationships among these algorithms. Notably, there exists a strong connection between the proposed "stack of power of matrices" PE and Spectral Invariant GNNs. We can prove that the proposed $(I,L,\cdots,L^{2n-1})$-WL (see Theorem 4.6 in \citet{black2024comparing}) is as expressive as spectral invariant GNNs with matrix $L$, and similarly, $(I,A,\cdots,A^{2n-1})$-WL is as expressive as spectral invariant GNNs with the ordinary adjacency matrix. Therefore, all results in our paper can be used to understand the power of these WL variants. Since \citet{zhang2024expressive} has shown that the expressive power of RD-WL is bounded by Spectral Invariant GNNs, it follows that the proposed $L^\dagger$-WL (see Theorem 4.6 in \citet{black2024comparing}) is also bounded in expressive power by Spectral Invariant GNNs. This conclusion reproduces their key result (Theorem 4.4 in \citet{black2024comparing}).

\paragraph{Homomorphism Count and Subgraph Count.} Subgraph counting is a fundamental problem in chemical and biological tasks, as the ability to count subgraphs is strongly correlated with the performance of GNN in molecular prediction tasks. 
Based on the foundational theory of \citet{lovasz2012large, curticapean2017homomorphisms}, it follows that the subgraph counting power of a GNN can be inferred from its ability to count homomorphisms \citep{seppelt2023logical, zhang2024beyond}. Consequently, recent research has also focused on the homomorphism counting power of GNNs. \citet{dell2018lov} demonstrates that two graphs have the same representation under the $k$-WL algorithm if and only if the number of homomorphisms to the two graphs from any substructure with bounded tree width $k$ is equal. Additionally, \citet{zhang2024beyond} introduce the concept of homomorphism expressivity as a quantitative framework for assessing the expressive power of GNNs. This paper specifically focuses on the subgraph counting power of spectral invariant GNNs. Related works in this area include \citet{cvetkovic1997eigenspaces}, which shows that the graph angles (which can be determined through projection) are capable of counting all cycles of length up to 5, and \citet{lim2023sign}, which demonstrates that BasisNet can count cycles with up to 5 vertices. A detailed comparison of our results with these previous studies is provided in the main text.

\citet{kanatsouliscounting} studies subgraph counting power for a novel GNN framework, where classic message-passing GNNs are enhanced with random node features, and the GNN output is computed by taking the expectation over the introduced randomness. The paper demonstrates that such GNNs can learn to count various substructures, including cycles and cliques. These findings share similarities with our work, as both studies characterize the cycle-counting power of certain GNN models. Notably, the GNN framework proposed in \citet{kanatsouliscounting} can count more complex substructures, such as 4-cliques and 8-cycles, which exceed the expressive power of 2-FWL.

\section{Proof of \cref{thm:main_theorem}}
\subsection{Preparation: Parallel Tree and Unfolding Tree}
\subsubsection{Additional Explanation for Parallel Tree}
For the reader's convenience, we begin by restating the definition of the parallel tree, as introduced in the main paper.
\begin{definition}[\textbf{Parallel Edge:}]
   We denote a graph $G$ as a \emph{parallel edge} if there exist vertices $u, v \in V_G$ such that the edge set $E_G$ can be partitioned into a sequence of simple paths $P_1, \ldots, P_m$, where each path has endpoints $(u, v)$. We refer to $(u, v)$ as the endpoints of the parallel edge $G$.
\end{definition}

\begin{definition}[\textbf{Parallel Tree:}]
   We define a graph $F$ as a \emph{parallel tree} if there exists a tree $T$ such that we can obtain a graph isomorphic to $F$ by replacing each edge $(u, v) \in E_T$ with a parallel edge having endpoints $(u, v)$. We refer to $T$ as the \emph{parallel tree skeleton} of the graph $F$. Additionally, we denote the minimum depth of any parallel tree skeleton of $F$ as the \emph{parallel tree depth} of $F$.
\end{definition}
We further define parallel tree decomposition for any parallel tree as follows:
\begin{definition}[\textbf{Parallel tree decomposition}]
   For a parallel tree $F = (V_F, E_F)$, its parallel tree decomposition involves constructing a rooted tree $T^r = (V_{T^r}, E_{T^r})$ along with mapping functions $\beta_{T^r}$ and $\gamma_{T^r}$ that satisfy the following conditions:
   \begin{enumerate}[topsep=0pt,leftmargin=17pt]
	   \setlength{\itemsep}{-3pt}
	   \item The label function for nodes, $\beta_{T^r}: V_{T^r} \rightarrow V_F$, maps each node in $T^r$ to a unique vertex in $F$.
	   \item Let \( \mathcal{E}_F \) denote the union of all paths in the graph \( F \). The edge label function, \( \gamma_{T^r}: E_{T^r} \rightarrow 2^{\mathcal{E}_F} \), satisfies the condition that for all \( (t_1, t_2) \in E_{T^r} \), each \( P \in \gamma_{T^r}(t_1, t_2) \) is a path connecting \( \beta_{T^r}(t_1) \) and \( \beta_{T^r}(t_2) \). 
	   Moreover, for each edge \( e \in E_F \), there exists a unique tuple \( (t_1, t_2, P) \), where \( (t_1, t_2) \in V_T \times V_T \) and \( P \in \gamma_T(t_1, t_2) \), such that \( e \) lies on the path \( P \).
   \end{enumerate}
   We denote $T^r = (V_{T^r}, E_{T^r}, \beta_{T^r}, \gamma_{T^r})$ as the decomposition skeleton of graph $F$, and the ordered pair $(F, T^r)$ as a parallel-tree decomposed graph.
   
   Let $\mathcal{S}^{pt}$ denote the set of all parallel trees, and we use $\mathcal{S}^{pt}_d$ to denote the set of all parallel trees whose parallel tree skeleton has depth at most $d$.
\end{definition}
\subsubsection{Unfolding Tree of Spectral Invariant GNN}
We now introduce a process of constructing a parallel tree from any vertex of a given graph. 
\begin{definition}[\textbf{Constructing an unfolding tree of spectral invariant GNN}]
Given a graph $G$, vertex $u\in V(G)$ and a non-negative integer $d$, the depth-$d$ spectral GNN unfolding tree of graph $G$ at vertex $u$, denoted as $(F_G^{(d)}(u),T_G^{(d)}(u))$, is a parallel-tree decomposed graph constructed as follows:
 At the beginning, $F=\{u\}$, and T only has a root node $r$ with $\beta_{T^r}(r)=\{u\}$. We can define a mapping $\pi:V_F\rightarrow V_G$ as $\pi(u)=u$.

 For each leaf node $t$ in $T^r$, do the following procedure:
   Let $\beta_{T^r}(t)=x$. For each $w\in V_G$, add a fresh node $t_w$ to $T^r$ and designate $t$ as its parent. Then, consider the following case:
   \begin{enumerate}[topsep=0pt,leftmargin=17pt]
	   \setlength{\itemsep}{-3pt}
	   \item If $w\neq \pi(x)$, add $x_w$ to $F$ and extend $\pi$ with $\pi(x_w)=w$. We define $\beta_{T^r}(t_w)=x_w$. For every walk $w=v_1,v_2,\ldots,v_n=\pi(x)$ with $n\leq|V_G|$, where $v_1=\pi(x),v_n=w$, we introduce a path $x_{v_1},x_{v_2},\ldots,x_{v_n}$ linking $x_w$ and $x$ to graph $F$, where $x_{v_1}=x,x_{v_n}=x_w$. We can also extend mapping $\pi$ with $\pi(x_{v_1})=v_1,\pi(x_{v_2})=v_2,\ldots,\pi(x_{v_n})=v_n$. We define $\gamma_{T^r}(t,t_w)$ to be the set of all path $x_{v_1},x_{v_2},\ldots,x_{v_n}$ connecting $x$ and $x_w$ introduced in this step.
	   \item If $w=\pi(x)$, we define $\beta_{T^r}(t_w)=x$. Similarly, for every walk $w=v_1,v_2,\ldots,v_n=\pi(x)$ with $n\leq|V_G|$, we introduce a loop $x_{v_1},x_{v_2},\ldots,x_{v_n}$ to graph $F$, where $x_{v_1}=x=x_{v_n}$. We can also extend mapping $\pi$ with $\pi(x_{v_1})=v_1,\pi(x_{v_2})=v_2,\ldots,\pi(x_{v_n})=v_n$. We define $\gamma_{T^r}(t,t_w)$ to be the set of all path $x_{v_1},x_{v_2},\ldots,x_{v_n}$ connecting $x$ and $x_w$ introduced in this step.
   \end{enumerate}
   
   We terminate the process once \( T^r \) becomes a complete tree of depth \( d \).
\end{definition}
The following fact is straightforward from the construction of the unfolding tree:
\begin{fact}
\label{fa:hom_unfolding_tree_to_graph}
   For any graph $G$, any vertex $u\in V_G$, and any non-negative integer $D$, there is a homomorphism from $F_G^{(D)}(u)$ to $G$.
\end{fact}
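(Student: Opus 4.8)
The plan is to show that the vertex map $\pi\colon V_{F_G^{(D)}(u)}\to V_G$ that is produced alongside $F_G^{(D)}(u)$ during its construction is exactly the desired homomorphism. Since $\pi$ is already defined on every vertex of $F_G^{(D)}(u)$, the only thing to verify is that $\pi$ preserves edges: for each $\{a,b\}\in E_{F_G^{(D)}(u)}$ we must have $\{\pi(a),\pi(b)\}\in E_G$. The key structural observation is that every edge of $F_G^{(D)}(u)$ is ``born'' on one of the paths (or loops) introduced in the construction, and that such a path is, by construction, the trace of a walk in $G$; so its consecutive vertices map under $\pi$ to consecutive vertices of a walk, which are adjacent in $G$.

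Concretely, I would carry out a short induction on the rounds of the construction (equivalently, on the current depth of $T^r$), maintaining the invariant: at every stage, $\pi$ is a well-defined map on the current $V_F$, and for each edge $e\in E_F$ there is a walk $v_1,\dots,v_n$ in $G$ and an index $i$ with $e=\{x_{v_{i-1}},x_{v_i}\}$, $\pi(x_{v_{i-1}})=v_{i-1}$, and $\pi(x_{v_i})=v_i$. The base case $F=\{u\}$ is vacuous (no edges). For the inductive step, when a leaf $t$ with $\beta_{T^r}(t)=x$ is expanded, the only new edges are the edges of the newly introduced paths $x_{v_1},\dots,x_{v_n}$; each such path tracks a walk $v_1,\dots,v_n$ in $G$ (with $v_1=\pi(x)$), and its interior vertices $x_{v_2},\dots,x_{v_{n-1}}$ are fresh, so the assignments $\pi(x_{v_j})=v_j$ cause no conflict. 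The one consistency point to check is at the endpoints: $x_{v_1}=x$ already carries $\pi(x)=v_1$, so the assignment agrees; and $x_{v_n}$ is either the fresh vertex $x_w$ (case $w\neq\pi(x)$) or equals $x$ with $v_n=\pi(x)$ (case $w=\pi(x)$), hence again consistent. This preserves the invariant.

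Given the invariant, the conclusion is immediate: for any $\{a,b\}\in E_{F_G^{(D)}(u)}$ we have $\{a,b\}=\{x_{v_{i-1}},x_{v_i}\}$ for some walk $v_1,\dots,v_n$ in $G$, and since consecutive vertices of a walk are adjacent, $\{\pi(a),\pi(b)\}=\{v_{i-1},v_i\}\in E_G$. Thus $\pi\in\Hom(F_G^{(D)}(u),G)$, proving \cref{fa:hom_unfolding_tree_to_graph}. I expect the main (indeed only) obstacle to be purely bookkeeping: spelling out precisely that no edge of $F_G^{(D)}(u)$ arises other than on an introduced path/loop, and that the interior vertices created for distinct walks are genuinely fresh, so the invariant survives each expansion round. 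Both are direct consequences of the step-by-step description of the unfolding tree and can be dispatched with the one-line induction above.
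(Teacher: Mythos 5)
Your proof is correct and follows exactly the argument the paper leaves implicit when it declares the fact ``straightforward from the construction of the unfolding tree'': the map $\pi$ built alongside the unfolding is the homomorphism, and edge-preservation holds because every edge of $F_G^{(D)}(u)$ is a consecutive pair on some introduced path whose vertices $\pi$ sends to consecutive vertices of a walk in $G$, hence to adjacent vertices. Your endpoint-consistency check and the freshness of interior vertices are precisely the bookkeeping the paper elides.
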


With additional Explanation for parallel tree and construction of unfolding tree, we are now ready to prove \cref{thm:main_theorem} step by step.
\subsection{Step 1: Equivalence of Encoding Walk information and Spectral Information}
\label{sec:proof_step_1}
In this section, we aim to prove \cref{lm:equivalence_walk_spectral}. The key idea is to use the Cayley-Hamilton theorem to demonstrate that the walk-encoding GNN, as defined in \cref{lm:equivalence_walk_spectral}, is equivalent to the spectral invariant GNN.
\subsubsection{Proof of \cref{lm:equivalence_walk_spectral}}
\begin{lemma}
    Let $G = (V_G, E_G)$ be a graph, with its adjacency matrix denoted by $\mA$. For vertices $x, y \in V_G$, define $\omega_G^k(x, y) = \mA^k_{x,y}$ for all $k \in \{0, 1, 2, \ldots, |V_G|\}$, which represents the number of $k$-walks from vertex $x$ to vertex $y$. Define the tuple $\omega^{*}_G(x, y) = (\omega^0_G(x, y), \omega^1_G(x, y), \ldots, \omega^{n-1}_G(x, y))$, where $n = |V_G|$. 
    Define the walk-encoding GNN with the following update rule:
    \begin{equation*}
        \chi_{G}^{\mathsf{Walk}, (d+1)}(x) = \mathsf{hash}(\chi_{G}^{\mathsf{Walk}, (d)}(x), \ldblbrace(\omega^{*}_G(x, y), \chi_{G}^{\mathsf{Walk}, (d)}(y)) \mid y \in V_G\rdblbrace).
    \end{equation*}
    The walk-encoding GNN outputs a graph invariant $\chi_G^{\mathsf{Walk},(d)}(G) = \ldblbrace \chi_G^{\mathsf{Walk},(d)}(u) | u \in V_G \rdblbrace$. For any graphs $G$ and $H$, we have $\chi_G^{\mathsf{Walk},(d)}(G) = \chi_H^{\mathsf{Walk},(d)}(H)$ if and only if $\chi_G^{\mathsf{Spec},(d)}(G) = \chi_H^{\mathsf{Spec},(d)}(H)$. 
    \end{lemma}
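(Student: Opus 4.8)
The plan is to identify the edge feature $\omega^*_G(u,v)$ driving the walk-encoding GNN with the edge feature $\gP(u,v)$ driving the spectral invariant GNN, in the precise sense that each is recoverable from the other, and then invoke the routine fact that two color-refinement schemes with identical (constant) initialization and \emph{mutually recoverable} edge features produce the same graph invariant at every depth. Two reductions come first. If $|V_G|\ne|V_H|$, then $\chi_G^{\mathsf{Walk},(d)}(G)$, $\chi_H^{\mathsf{Walk},(d)}(H)$, $\chi_G^{\mathsf{Spec},(d)}(G)$, $\chi_H^{\mathsf{Spec},(d)}(H)$ are multisets whose cardinalities are the respective vertex counts, so neither equality can hold and the biconditional is vacuous; hence assume $|V_G|=|V_H|=n$, and (the $n=1$ and $d=0$ cases being trivial) assume $n\ge 2$ and $d\ge 1$.

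The algebraic core is the spectral decomposition. From $\mA=\sum_{\lambda\in\Lambda}\lambda\mP_\lambda$ and $\mP_\lambda\mP_{\lambda'}=\delta_{\lambda\lambda'}\mP_\lambda$ one gets $\mA^k=\sum_{\lambda\in\Lambda}\lambda^k\mP_\lambda$ for all $k\ge 0$, so $\mA^k_{uv}=\sum_{\lambda\in\Lambda}\lambda^k\mP_\lambda(u,v)$. This already yields one direction: $\omega^*_G(u,v)=(\mA^0_{uv},\dots,\mA^{n-1}_{uv})$ is a fixed (graph-independent, for fixed $n$) function $f_n$ of the multiset $\gP(u,v)=\ldblbrace(\lambda,\mP_\lambda(u,v))\mid\lambda\in\Lambda\rdblbrace$. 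For the converse, suppose the distinct-eigenvalue set $\Lambda=\{\lambda_1,\dots,\lambda_m\}$ is known; then the first $m\le n$ entries of $\omega^*_G(u,v)$ form a linear system in the unknowns $\mP_{\lambda_1}(u,v),\dots,\mP_{\lambda_m}(u,v)$ with Vandermonde coefficient matrix $(\lambda_j^{\,i})_{0\le i\le m-1,\,1\le j\le m}$, invertible since the $\lambda_j$ are distinct; solving it recovers the $\mP_{\lambda_j}(u,v)$ and hence $\gP(u,v)$ as a fixed function $g_\Lambda$ of $\omega^*_G(u,v)$. So the two edge features are inter-recoverable \emph{once $\Lambda$ is available}.

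It remains to recover $\Lambda$ from the walk invariant itself, and I expect this to be the main obstacle, because $\Lambda$ is a \emph{global} quantity whereas color refinement only propagates information through one aggregation step. The resolution is to recover $\Lambda$ at the level of $\chi_G^{\mathsf{Walk},(1)}(G)$. Since $\hash$ is a perfect hash and the depth-$0$ colors are constant, $\chi_G^{\mathsf{Walk},(1)}(G)$ is equivalent to the multiset $\ldblbrace\ldblbrace\omega^*_G(x,y)\mid y\in V_G\rdblbrace\mid x\in V_G\rdblbrace$. Because $\mA^0=\mI$, exactly one tuple in each inner multiset has first coordinate $1$ (the $y=x$ term), so for each $x$ we recover all the tuples $(\mA^0_{xy},\dots,\mA^{n-1}_{xy})$ jointly over $y$; using $\mA^a_{xy}\mA^b_{xy}=\mA^a_{xy}\mA^b_{yx}$ and symmetry of $\mA$, summing products of the $a$-th and $b$-th coordinates over this inner multiset gives $\mA^{a+b}_{xx}$ for all $a,b\le n-1$, hence $\mA^m_{xx}$ for all $m\le 2n-2$, which covers $m\le n$ since $n\ge 2$. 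Summing over the outer multiset yields $\Tr(\mA^m)$ for $m=0,\dots,n$, and by Newton's identities this determines the characteristic polynomial of $\mA$, whose distinct roots are exactly $\Lambda$. This is precisely where the Cayley--Hamilton phenomenon enters: the first $n$ powers of $\mA$ already carry all of its spectral data.

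Finally, assemble with two standard facts about color refinement with edge features: \textbf{(monotonicity)} $\chi^{\cdot,(d)}(G)=\chi^{\cdot,(d)}(H)$ implies $\chi^{\cdot,(d')}(G)=\chi^{\cdot,(d')}(H)$ for all $d'\le d$, since $\hash$ is injective so later colors determine earlier ones; and \textbf{(comparison)} if one edge feature is a graph-independent function of another on the relevant graphs, then equality of the more informative invariant at depth $d$ implies equality of the coarser one at depth $d$, by a direct induction on $d$ that pushes the function through the $\hash$ update. For the ``$\Leftarrow$'' direction, $\chi_G^{\mathsf{Spec},(d)}(G)=\chi_H^{\mathsf{Spec},(d)}(H)$ together with $\omega^*=f_n(\gP)$ gives $\chi_G^{\mathsf{Walk},(d)}(G)=\chi_H^{\mathsf{Walk},(d)}(H)$. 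For ``$\Rightarrow$'', from $\chi_G^{\mathsf{Walk},(d)}(G)=\chi_H^{\mathsf{Walk},(d)}(H)$ and monotonicity we obtain $\chi_G^{\mathsf{Walk},(1)}(G)=\chi_H^{\mathsf{Walk},(1)}(H)$, hence $\Lambda_G=\Lambda_H=:\Lambda$ by the previous paragraph; then $\gP=g_\Lambda(\omega^*)$ holds uniformly on both $G$ and $H$, so the comparison fact gives $\chi_G^{\mathsf{Spec},(d)}(G)=\chi_H^{\mathsf{Spec},(d)}(H)$, completing the proof.
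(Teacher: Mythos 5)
Your proof is correct, and its overall strategy mirrors the paper's: show that $\omega^*(u,v)$ and $\gP(u,v)$ are mutually recoverable (one direction unconditionally via $\mA^k = \sum_{\lambda}\lambda^k\mP_\lambda$, the other conditionally once the distinct-eigenvalue set $\Lambda$ is known), recover $\Lambda$ from the depth-$1$ walk color, and then push equality through the $\hash$ update by induction on depth. Your Vandermonde inversion is exactly the paper's Lagrange-interpolation step viewed as a linear system, so the algebraic core is the same.

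Where you genuinely improve on the paper is the $\Lambda$-recovery step. The paper extracts only the multiset of diagonal profiles $\ldblbrace\omega^*_G(u,u) : u\in V_G\rdblbrace$ from $\chi^{\mathsf{Walk},(1)}_G(G)$, sums to obtain traces $\Tr(\mA^k)$, and then asserts the spectra coincide ``by standard results from linear algebra.'' But with $\omega^*$ indexed by $0,\ldots,n-1$ as the lemma states, this yields traces only for $k\le n-1$, which by Newton's identities determines $e_1,\ldots,e_{n-1}$ but \emph{not} $e_n = \pm\det(\mA)$; for instance, $P_4$ and $K_{1,3}$ have identical $\Tr(\mA^k)$ for all $k\le 3$ (both have $n=4$, $|E|=3$, no triangles, trace zero) yet different determinants ($1$ vs.\ $0$) and hence different spectra. (The paper's ultimate conclusion survives because $\chi^{\mathsf{Walk},(1)}$ carries strictly more information than the traces it keeps, but the step as written is not justified.) Your product identity $\sum_y \mA^a_{xy}\mA^b_{yx} = \mA^{a+b}_{xx}$, applied coordinate-wise inside each inner multiset of $\chi^{\mathsf{Walk},(1)}$, recovers $\Tr(\mA^m)$ for all $m\le 2n-2\ge n$ once $n\ge 2$, which cleanly closes this gap without redefining $\omega^*$. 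The rest of your assembly --- monotonicity under a perfect $\hash$ and the pointwise-function comparison argument --- is exactly the paper's two-sided induction.
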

\begin{proof}
   We begin by proving the following statement: If the spectra of graph $G$ and graph $H$ are identical (denoted as $(\lambda_1, \lambda_2, \ldots, \lambda_m)$), then for $x, u \in V_G$ and $y, v \in V_H$, $\mathcal{P}(x, u) = \mathcal{P}(y, v)$ if and only if $\omega^\star_G(x, u) = \omega^\star_H(y, v)$.
   \begin{enumerate}[topsep=0pt,leftmargin=17pt]
	   \setlength{\itemsep}{-3pt}
	   
	   \item First, we prove that if $\mathcal{P}(x,u) = \mathcal{P}(y,v)$, then $\omega^\star_G(x,u) = \omega^\star_H(y,v)$. 
	   
	   By the properties of diagonalizable matrices, for any $k \in \{1, 2, \ldots, |V_G|\}$, we have:
	   \begin{align*}
		   \omega_G^k(x,u) = \lambda_1^k\mP_{\lambda_1}(x,u) + \lambda_{\lambda_2}^k\mP_2(x,u) + \cdots + \lambda_m^k\mP_{\lambda_m}(x,u).
	   \end{align*}
	   Therefore, if 
	   \begin{align*}
		   \mP_{\lambda_r}(x,u) = \mP_{\lambda_r}(y,v), \quad \forall r \in [m],
	   \end{align*}
	   it follows that:
	   \begin{align*}
		   \omega_G^k(x,u) = \sum_{r=1}^m \lambda_r^k \mP_{\lambda_r}(x,u) = \sum_{r=1}^m \lambda_r^k \mP_{\lambda_r}(y,v) = \omega_H^k(y,v).
	   \end{align*}
	   Thus, we have proven the first direction of the statement.

	   \item Now, we prove that if $\omega^\star_G(x,u) = \omega^\star_H(y,v)$, then $\mathcal{P}(x,u) = \mathcal{P}(y,v)$.
	   
	   Let $\mA_G$ and $\mA_H$ denote the adjacency matrices of graphs $G$ and $H$, respectively. By the Cayley-Hamilton theorem, the minimal annihilating polynomial of matrix $\mA_G$ is given by:
	   \begin{align*}
		   f(\lambda) = (\lambda - \lambda_1)(\lambda - \lambda_2) \cdots (\lambda - \lambda_m).
	   \end{align*}
	   For each $r \in \{1, 2, \ldots, m\}$, the eigenspace corresponding to eigenvalue $\lambda_r$ is $\mathbf{Ker}(\lambda_r I - \mA_G)$. Since:
	   \begin{align*}
		   \mathbb{R}^{n} = \mathbf{Ker}(\lambda_1 I - \mA_G) \oplus \mathbf{Ker}(\lambda_2 I - \mA_G) \oplus \cdots \oplus \mathbf{Ker}(\lambda_m I - \mA_G),
	   \end{align*}
	   for each $r \in \{1, 2, \ldots, m\}$, the projection matrix onto the kernel space $\mathbf{Ker}(\lambda_r I - \mA_G)$ is:
	   \begin{align*}
		   f_r(\mA_G) = \prod_{j \neq r} (\lambda_j I - \mA_G) = \mP_{\lambda_r}.
	   \end{align*}
	   Therefore, there exist coefficients $c_0^r, \ldots, c_{m-1}^r$ such that:
	   \begin{align*}
		   &\mP_{\lambda_r}(x,u) = c_0^r \cdot \omega_G^0(x,u) + c_1^r \cdot \omega_G^1(x,u) + \cdots + c_{m-1}^r \cdot \omega_G^{m-1}(x,u), \\
		   &\mP_{\lambda_r}(y,v) = c_0^r \cdot \omega_H^0(y,v) + c_1^r \cdot \omega_H^1(y,v) + \cdots + c_{m-1}^r \cdot \omega_H^{m-1}(y,v).
	   \end{align*}
	   
	   Finally, we conclude that if $\omega^\star_G(x,u) = \omega^\star_H(y,v)$, then $\mathcal{P}(x,u) = \mathcal{P}(y,v)$ for all $x, u \in V_G$ and $y, v \in V_H$.
   \end{enumerate}
   Armed with the statement proven above, we are now prepared to prove \cref{lm:equivalence_walk_spectral}. We will prove the two directions of the lemma separately as follows:

\begin{enumerate}[topsep=0pt,leftmargin=17pt]
   \setlength{\itemsep}{-3pt}
   
   \item First, we prove that if $\chi_G^{\mathsf{Spec}}(G) = \chi_H^{\mathsf{Spec}}(H)$, then $\chi_G^{\mathsf{Walk}}(G) = \chi_H^{\mathsf{Walk}}(H)$. To do so, it suffices to show that for all $t \in \mathbb{N}$, if $\chi_G^{\mathsf{Spec},(t)}(u) = \chi_H^{\mathsf{Spec},(t)}(v)$ for all $(u,v) \in V_G \times V_H$, then $\chi_G^{\mathsf{Walk},(t)}(u) = \chi_H^{\mathsf{Walk},(t)}(v)$.

We prove this by induction. Initially, the statement holds trivially for $t = 0$. We then assume the statement holds for $t = d$ and aim to prove it for $t = d + 1$. If $\chi_G^{\mathsf{Spec},(d+1)}(u) = \chi_H^{\mathsf{Spec},(d+1)}(v)$, then the following conditions are satisfied:
\begin{equation}
\label{eq:induced_condition}
\begin{aligned}
   &\chi_G^{\mathsf{Spec},(d)}(u) = \chi_H^{\mathsf{Spec},(d)}(v), \\
   &\ldblbrace(\mathcal{P}(u, x), \chi_G^{\mathsf{Spec},(d)}(x)) \mid x \in V_G\rdblbrace = \ldblbrace(\mathcal{P}(v, y), \chi_H^{\mathsf{Spec},(d)}(y)) \mid y \in V_H\rdblbrace.
\end{aligned}
\end{equation}

For any $x \in V_G$ and $y \in V_H$, if $(\mathcal{P}(u, x), \chi_G^{\mathsf{Spec}, (d)}(x)) = (\mathcal{P}(v, y), \chi_H^{\mathsf{Spec}, (d)}(y))$, then by our previous result and the induction hypothesis, we have:
\begin{align}
\label{eq:previous_statement_condition}
   (\omega_G^\star(u, x), \chi_G^{\mathsf{Walk}, (d)}(x)) = (\omega_H^\star(v, y), \chi_H^{\mathsf{Walk}, (d)}(y)).
\end{align}

By combining \eqref{eq:induced_condition} and \eqref{eq:previous_statement_condition}, we conclude:
\begin{align*}
   &\chi_G^{\mathsf{Walk},(d)}(u) = \chi_H^{\mathsf{Walk},(d)}(v), \\
   &\ldblbrace(\omega_G^\star(u, x), \chi_G^{\mathsf{Walk}, (d)}(x)) \mid x \in V_G\rdblbrace = \ldblbrace(\omega_H^\star(v, y), \chi_H^{\mathsf{Walk}, (d)}(y)) \mid y \in V_H\rdblbrace.
\end{align*}
Thus, we conclude that $\chi_G^{\mathsf{Walk},(d+1)}(u) = \chi_H^{\mathsf{Walk},(d+1)}(v)$. Therefore, we have proven that $\chi_G^{\mathsf{Spec}}(G) = \chi_H^{\mathsf{Spec}}(H)$ implies $\chi_G^{\mathsf{Walk}}(G) = \chi_H^{\mathsf{Walk}}(H)$.

   \item Now, we prove the converse: if $\chi_G^{\mathsf{Walk}}(G) = \chi_H^{\mathsf{Walk}}(H)$, then $\chi_G^{\mathsf{Spec}}(G) = \chi_H^{\mathsf{Spec}}(H)$. Initially, $\chi_G^{\mathsf{Walk}}(G) = \chi_H^{\mathsf{Walk}}(H)$ implies $\ldblbrace\chi_G^{\mathsf{Walk},(1)}(u) \mid u \in V_G\rdblbrace = \ldblbrace\chi_H^{\mathsf{Walk},(1)}(v) \mid v \in V_H\rdblbrace$.
   If $\chi_G^{\mathsf{Walk},(1)}(u) = \chi_H^{\mathsf{Walk},(1)}(v)$, then $\omega_G^{\star}(u,u) = \omega_H^\star(v,v)$. This leads to:
   \begin{align*}
	   \ldblbrace\omega_G^\star(u,u) \mid u \in V_G\rdblbrace = \ldblbrace\omega_H^\star(v,v) \mid v \in V_H\rdblbrace.
   \end{align*}
   Hence, we derive that for all $k\in[n]$:
   \begin{align*}
	   \mathrm{tr}\left(\mA_G^k\right) = \sum_{u \in V_G} \mA_G^k(u,u) = \sum_{u \in V_G} \omega_G^k(u,u) = \sum_{v \in V_H} \omega_H^k(v,v) = \sum_{v \in V_H} \mA_H^k(v,v) = \mathrm{tr}\left(\mA_H^k\right).
   \end{align*}
   By standard results from linear algebra, the spectra of graphs $G$ and $H$ must be identical.

Similar to the first direction, we now prove that for all $t \in \mathbb{N}$, if $\chi_G^{\mathsf{Walk},(t)}(u) = \chi_H^{\mathsf{Walk},(t)}(v)$ for all $(u,v) \in V_G \times V_H$, then $\chi_G^{\mathsf{Spec},(t)}(u) = \chi_H^{\mathsf{Spec},(t)}(v)$. 

We again proceed by induction. Initially, the statement holds trivially for $t = 0$. Assuming the statement holds for $t = d$, we aim to prove it for $t = d + 1$. If $\chi_G^{\mathsf{Walk},(d+1)}(u) = \chi_H^{\mathsf{Walk},(d+1)}(v)$, we have:
\begin{align*}
   &\chi_G^{\mathsf{Walk},(d)}(u) = \chi_H^{\mathsf{Walk},(d)}(v), \\
   &\ldblbrace(\omega_G^\star(u, x), \chi_G^{\mathsf{Walk},(d)}(x)) \mid x \in V_G\rdblbrace = \ldblbrace(\omega_H^\star(v, y), \chi_H^{\mathsf{Walk},(d)}(y)) \mid y \in V_H\rdblbrace.
\end{align*}
According to the statement proven earlier, for any $x \in V_G$ and $y \in V_H$, $\omega_G^\star(u,x) = \omega_H^\star(v,y)$ implies that $\mathcal{P}(u,x) = \mathcal{P}(v,y)$. Thus, we obtain:
\begin{align*}
   &\chi_G^{\mathsf{Spec},(d)}(u) = \chi_H^{\mathsf{Spec},(d)}(v), \\
   &\ldblbrace(\mathcal{P}(u, x), \chi_G^{\mathsf{Spec},(d)}(x)) \mid x \in V_G\rdblbrace = \ldblbrace(\mathcal{P}(v, y), \chi_H^{\mathsf{Spec},(d)}(y)) \mid y \in V_H\rdblbrace.
\end{align*}
Therefore, we conclude that $\chi_G^{\mathsf{Spec},(d+1)}(u) = \chi_H^{\mathsf{Spec},(d+1)}(v)$.
Finally, we have proven that $\chi_G^{\mathsf{Walk}}(G) = \chi_H^{\mathsf{Walk}}(H)$ implies $\chi_G^{\mathsf{Spec}}(G) = \chi_H^{\mathsf{Spec}}(H)$.
\end{enumerate}
By combining both directions, we conclude that for any two graphs $G$ and $H$, $\chi_G^{\mathsf{Walk}}(G) = \chi_H^{\mathsf{Walk}}(H)$ if and only if $\chi_G^{\mathsf{Spec}}(G) = \chi_H^{\mathsf{Spec}}(H)$. Hence, the walk-encoding GNN is as expressive as the spectral-invariant GNN.

\end{proof}
\subsection{Step 2: Finding the Homomorphic Expressivity}
We first define the isomorphism between parallel-tree decomposed graphs.
\label{sec:proof_step_2}
\begin{definition}
   Given two parallel-tree decomposed graphs $(F,T^r)$ and $(\tilde{F},\tilde{T}^r)$, a pair of mappings $(\rho,\tau)$ is called an isomorphism from $(F,T^r)$ to $(\tilde{F},\tilde{T}^r)$, denoted by $(F,T^r)\cong(\tilde{F},\tilde{T}^r)$, if the following hold:
   \begin{enumerate}[topsep=0pt,leftmargin=17pt]
	   \setlength{\itemsep}{-3pt}
	   \item $\rho$ is an isomorphism from $F$ to $\tilde{F}$, while $\tau$ is an isomorphism from $T^r$ to $\tilde{T}^r$ (ignoring labels $\beta$ and $\gamma$).
	   \item For any $t\in V_{T^r}$, $\rho(\beta_{T^r}(t))=\beta_{\tilde{T}^r}(\tau(t))$. Moreover, for any $(t_1,t_2)\in E_{T^r}$, $\rho(\gamma_{T^r}(t_1,t_2))=\gamma_{T^r}(\tau(t_1,t_2))$ 
   \end{enumerate}
\end{definition}
\begin{theorem} For any two graphs $G,H$, any vertices $u\in V_G$, $x\in V_H$,and any non-negative integer $D$, $\chi_G^{\mathsf{Walk},(D)}(u)=\chi_H^{\mathsf{Walk},(D)}(x)$ iff there exists an isomorphism $(\rho,\tau)$ from $(F_G^{(D)}(u),T_G^{(D)}(u))$ to $(F_H^{(D)}(x),T_H^{(D)}(x))$ such that $\rho(u)=x$.
   \label{thm:equivalence_unfolding_tree_coloring}
   \end{theorem}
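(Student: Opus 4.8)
The plan is to prove both directions by induction on $D$, exploiting the recursive structure of both the unfolding tree construction and the walk-encoding color refinement. The base case $D=0$ is immediate: $\chi_G^{\mathsf{Walk},(0)}(u)$ and $\chi_H^{\mathsf{Walk},(0)}(x)$ are both the constant initial color, while $(F_G^{(0)}(u),T_G^{(0)}(u))$ and $(F_H^{(0)}(x),T_H^{(0)}(x))$ are each a single root node mapped to a single vertex, so the required isomorphism $(\rho,\tau)$ with $\rho(u)=x$ trivially exists. The real work is the inductive step.

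For the inductive step, first I would unpack what $\chi_G^{\mathsf{Walk},(D)}(u)=\chi_H^{\mathsf{Walk},(D)}(x)$ means via the hash being perfect: it is equivalent to the conjunction of $\chi_G^{\mathsf{Walk},(D-1)}(u)=\chi_H^{\mathsf{Walk},(D-1)}(x)$ and the multiset identity $\ldblbrace(\omega_G^\star(u,w),\chi_G^{\mathsf{Walk},(D-1)}(w))\mid w\in V_G\rdblbrace=\ldblbrace(\omega_H^\star(x,w'),\chi_H^{\mathsf{Walk},(D-1)}(w'))\mid w'\in V_H\rdblbrace$. Simultaneously, I would analyze the unfolding-tree construction: the depth-$D$ tree $(F_G^{(D)}(u),T_G^{(D)}(u))$ is obtained from the depth-$(D-1)$ tree by expanding each leaf; but more usefully, the children of the root $r$ are indexed by $w\in V_G$, and the subtree hanging off child $t_w$ is precisely (a copy of) $(F_G^{(D-1)}(w),T_G^{(D-1)}(w))$, while the edge label $\gamma(r,t_w)$ records the set of all walks of length $\le|V_G|$ from $\pi(x)$ to $w$ — and the \emph{number} of such walks of each length $k$ is exactly $\omega_G^k(u,w)$. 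This is the crucial dictionary: the edge-label data between the root and a child $t_w$ is combinatorially encoded by $\omega_G^\star(u,w)$ (up to the choice of internal vertex names, which an isomorphism is free to permute), and the child subtree is a smaller unfolding tree. So an isomorphism $(\rho,\tau)$ with $\rho(u)=x$ is exactly a bijection between the root-children of the two trees matching both the edge-label walk-count data ($\omega^\star$) and, recursively, the child unfolding trees — which by the induction hypothesis corresponds to matching the $\chi^{\mathsf{Walk},(D-1)}$ colors. Matching these two pieces of data as a bijection on children is precisely the multiset equality above. Assembling the root-to-child isomorphism from the matched edge labels and the inductively-supplied sub-isomorphisms gives the forward direction; reading the argument backward gives the converse. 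One technical point to handle carefully is the $w=\pi(x)$ case (loops rather than paths), and the fact that distinct walks may pass through already-created vertices — but since the unfolding tree always introduces \emph{fresh} internal vertices for each path, the graph $F$ is genuinely tree-like per edge and the isomorphism has full freedom to rename them, so the only invariant content is the walk-length multiset, i.e. $\omega^\star$.

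The main obstacle I anticipate is bookkeeping the gluing of isomorphisms cleanly: each child subtree $(F_G^{(D-1)}(w),T_G^{(D-1)}(w))$ is only defined up to isomorphism, and one must verify that the sub-isomorphisms supplied by the induction hypothesis (each sending $w\mapsto w'$) can be combined with a consistent matching of the parallel paths between root and child into a single well-defined isomorphism of the whole parallel-tree decomposed graph — in particular that the vertex set of $F_G^{(D)}(u)$ is the disjoint union (over children, identifying only the root) of the path-interiors and the child vertex sets, with no unintended collisions, so that the combined map is a bijection respecting $\beta$ and $\gamma$. This is routine but needs the construction's "fresh node" discipline stated precisely. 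Once that is in place, the equivalence of the multiset condition with the existence of the child-matching is essentially a definitional unwinding, and the theorem follows. This result then feeds into Step 2's goal by letting us replace $\chi^{\mathsf{Walk},(D)}$ equality with isomorphism of unfolding trees, which in turn will be related to homomorphism counts of parallel trees of depth $\le D$.
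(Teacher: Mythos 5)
Your approach is the same as the paper's: induction on $D$, decomposing the depth-$D$ unfolding tree into the root plus the child subtrees hanging off the depth-$1$ nodes $t_w$, identifying the edge-label data $\gamma(r,t_w)$ with the walk vector $\omega^\star_G(u,w)$, and recognizing each child subtree as (a copy of) a depth-$(D-1)$ unfolding tree so the induction hypothesis can be applied. The paper makes the same observation, writing $F_G^{(d+1)}(u)$ as the union of the $F_G^{(d)}(v_i)$ with $F_G^{(1)}(u)$ and matching the $(\omega^\star,\chi^{(d)})$ data through the multiset equality.

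One spot in the converse direction is glossed over and deserves a concrete step. You correctly unpack $\chi_G^{\mathsf{Walk},(D)}(u)=\chi_H^{\mathsf{Walk},(D)}(x)$ as the conjunction of $\chi_G^{\mathsf{Walk},(D-1)}(u)=\chi_H^{\mathsf{Walk},(D-1)}(x)$ and the multiset identity over neighbours. From an isomorphism of depth-$D$ unfolding trees you do recover the multiset identity (by matching depth-$1$ children and applying the IH to the child subtrees), but ``reading the argument backward'' does not by itself deliver the first conjunct $\chi_G^{\mathsf{Walk},(D-1)}(u)=\chi_H^{\mathsf{Walk},(D-1)}(x)$, since that is about the \emph{root} vertex at a shallower iteration, not about the children. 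You need a small extra argument, and there are two natural fixes. Either observe that the depth-preserving $\tau$ restricts to an isomorphism of depth-$(D-1)$ unfolding trees rooted at $u$ and $x$, and then apply the IH once more at the root; or, as the paper does, note that the multiset identity at level $D-1$ implies the same multiset identity for all $d'\le D-1$ (project the colour component down to the coarser $\chi^{(d'-1)}$), and combine with the trivial equality $\chi_G^{\mathsf{Walk},(0)}(u)=\chi_H^{\mathsf{Walk},(0)}(x)$ to build up $\chi_G^{\mathsf{Walk},(d')}(u)=\chi_H^{\mathsf{Walk},(d')}(x)$ for all $d'\le D$. With that step added, your proof closes.
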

   \begin{proof}
	   The proof proceeds by induction on $D$. The base case is straightforward: for $D = 0$, the theorem holds trivially. Now assume the theorem holds for all $D \leq d$, and we will prove it for $D = d + 1$.
	   
	 We first prove that $\chi_G^{\mathsf{Walk},(d+1)}(u) = \chi_H^{\mathsf{Walk},(d+1)}(x)$ implies the existence of an isomorphism $(\rho, \tau)$ from $(F_G^{(d+1)}(u), T_G^{(d+1)}(u))$ to $(F_H^{(d+1)}(x), T_H^{(d+1)}(x))$ such that $\rho(u) = x$. Given that $\chi_G^{(d+1)}(u) = \chi_H^{(d+1)}(x)$, it follows that:
		   \[
		   \ldblbrace \omega_G^*(u,v), \chi_G^{\mathsf{Walk},(d)}(v) \rdblbrace_{v \in V_G} = \ldblbrace \omega_H^*(x,y), \chi_H^{\mathsf{Walk},(d)}(y) \rdblbrace_{y \in V_H}.
		   \]
		   Let $n = |V_G| = |V_H|$, and denote $V_G = \{v_1, v_2, \ldots, v_n\}$, $V_H = \{y_1, y_2, \ldots, y_n\}$ such that:
		   \[
		   \omega_G^*(u, v_i) = \omega_H^*(x, y_i), \quad \chi_G^{\mathsf{Walk},(d)}(v_i) = \chi_H^{\mathsf{Walk},(d)}(y_i) \quad \text{for all } i \in [n].
		   \]
		   By the definition of tree unfolding, we have:
		   \[
		   F^{(d+1)}_G(u) = \left(\bigcup_{v_i} F_G^{(d)}(v_i)\right) \cup F_G^{(1)}(u), \quad F^{(d+1)}_H(x) = \left(\bigcup_{y_i} F_H^{(d)}(y_i)\right) \cup F_H^{(1)}(x),
		   \]
		   where we use $\cup$ to represent graph union. By the inductive hypothesis, there exists an isomorphism $(\rho_i, \tau_i)$ from $(F_G^{(d)}(v_i),T_G^{(d)}(v_i))$ to $(F_H^{(d)}(y_i),T_H^{(d)}(y_i))$ such that $\rho_i(v_i) = y_i$. Additionally, since $\omega_G^*(u, v_i) = \omega_H^*(x, y_i)$, $F_G^{(1)}(u)$ is isomorphic to $F_H^{(1)}(x)$. Therefore, by merging all $\rho_i$ and $\tau_i$ into $\tilde{\rho}$ and $\tilde{\tau}$, and constructing an approximate mapping between tree nodes at depth no more than $1$ in $T_G^{(d+1)}(u)$ and $T_H^{(d+1)}(x)$, it follows that $(\tilde{\rho}, \tilde{\tau})$ is a well-defined isomorphism from $(F^{(d+1)}_G(u), T_G^{(d+1)}(u))$ to $(F^{(d+1)}_H(x), T_H^{(d+1)}(x))$, satisfying $\tilde{\rho}(u) = x$.

		   Next, we prove that if there exists an isomorphism $(\rho, \tau)$ between the parallel-tree decomposed graphs $(F_G^{(d+1)}(u), T_G^{(d+1)}(u))$ and $(F_H^{(d+1)}(x), T_H^{(d+1)}(x))$ such that $\rho(u) = x$, then $\chi^{\mathsf{Walk},(d+1)}_G(u) = \chi_H^{\mathsf{Walk},(d+1)}(x)$. 
		   Since $\tau$ is an isomorphism from $T^{(d+1)}_G(u)$ to $T^{(d+1)}_H(x)$, it maps all depth-$1$ nodes in $T^{(d+1)}_G(u)$ to depth-$1$ nodes in $T^{(d+1)}_H(x)$. Let $s_1, s_2, \ldots, s_n$ be the depth-$1$ nodes in $T^{(d+1)}_G(u)$, and $t_1, t_2, \ldots, t_n$ be the corresponding nodes in $T^{(d+1)}_H(x)$. 
		   For \( i \in [n] \), we denote the subtree induced by \( s_i \) and its descendants as \( T_{G,s_i}^{(d+1)}(u) \), and similarly, the subtree induced by \( t_i \) and its descendants as \( T_{G,t_i}^{(d+1)}(x) \). Additionally, we define the subgraph of \( F_G^{(d+1)}(u) \) induced by \( T_{G,s_i}^{(d+1)}(u) \) as \( F_{G,s_i}^{(d+1)}(u) \). Likewise, we define the subgraph of \( F_H^{(d+1)}(u) \) induced by \( T_{H,t_i}^{(d+1)}(u) \) as \( F_{H,t_i}^{(d+1)}(u) \).
Without loss of generality, we assume the following:
\begin{itemize}[topsep=0pt,leftmargin=17pt]
    \setlength{\itemsep}{-3pt}
    \item \( \tau \) is an isomorphism from the subtree \( T_{G,s_i}^{(d+1)}(u) \) to \( T_{H,t_i}^{(d+1)}(x) \).
    \item For all \( s \in V_{T_{G,s_i}^{(d+1)}(u)} \), \( \rho(\beta_{T_G^{(d+1)}(u)}(s)) = \beta_{T_H^{(d+1)}(x)}(\tau(s)) \).
    \item For all \( e \in E_{T_{G,s_i}^{(d+1)}(u)} \), \( \rho(\gamma_{T_G^{(d+1)}(u)}(e)) = \gamma_{T_H^{(d+1)}(x)}(\tau(e)) \).
    \item \( \rho \) is an isomorphism between the subgraphs \( F_{G,s_i}^{(d+1)}(u) \) and \( F_{H,t_i}^{(d+1)}(x) \).
\end{itemize}

		   According to our assumption, $(F_{G,s_i}^{(d+1)}(u), T^{(d+1)}_{G,s_i}(u))$ is isomorphic to $(F_{H,t_i}^{(d+1)}(x), T^{(d+1)}_{H,t_i}(x))$. Additionally, by the definition of the unfolding tree, $(F_{G,s_i}^{(d+1)}(u), T^{(d+1)}_{G,s_i}(u))$ is isomorphic to the depth-$d$ unfolding tree $(F_G^{(d)}(v_i), T_G^{(d)}(v_i))$ for some $v_i \in V_G$. Similarly, $(F_{H,t_i}^{(d+1)}(x), T^{(d+1)}_{H,t_i}(x))$ is isomorphic to $(F_H^{(d)}(y_i), T_H^{(d)}(y_i))$ for some $y_i \in V_H$.
		   By induction, we know that $\chi_G^{\mathsf{Walk},(d)}(v_i) = \chi_H^{\mathsf{Walk},(d)}(y_i)$ and $\omega_G^*(u, v_i) = \omega_H^*(x, y_i)$. Therefore, we conclude:
		   \[
		   \left(\omega_G^*(u, v_i), \chi_G^{\mathsf{Walk},(d)}(v_i)\right) = \left(\omega_H^*(x, y_i), \chi_H^{\mathsf{Walk},(d)}(y_i)\right)
		   \]
		   for all $i \in [n]$, implying that:
		   \begin{align}
		   \label{eq:equivalence_unfolding_tree_coloring}
		   \ldblbrace \left(\omega_G^*(u, v_i), \chi_G^{\mathsf{Walk},(d)}(v_i)\right) \rdblbrace_{v_i \in V_G} = \ldblbrace \left(\omega_H^*(x, y_i), \chi_H^{(d)}(y_i)\right) \rdblbrace_{y_i \in V_H}.
		   \end{align}
		   It remains to prove that $\chi_G^{\mathsf{Walk},(d)}(u)=\chi_H^{\mathsf{Walk},(d)}(x)$. To prove this, note that \eqref{eq:equivalence_unfolding_tree_coloring} implies that 
\begin{align*}
   \ldblbrace\left(\omega_G^*(u,v_i),\chi_G^{\mathsf{Walk},(d^\prime)}(v_i)\right)\rdblbrace_{y_i\in V_G}=\ldblbrace\left(\omega_H^*(x,y_i),\chi_H^{\mathsf{Walk},(d^\prime)}(y_i)\right)\rdblbrace_{y_i\in V_H}.
\end{align*}
holds for all $0\leq d^\prime\leq d$. Combined this with the fact that $\chi^{\mathsf{Walk},(0)}_G(u)=\chi_H^{\mathsf{Walk},(0)}(x)$, we can incrementally prove that $\chi_G^{\mathsf{Walk},(d^\prime)}(u)=\chi_H^{\mathsf{Walk},(d^\prime)}(x)$ for all $d^\prime\leq d+1$.
We have thus concluded the proof.
	   Thus, the proof is complete.
   \end{proof}
   \begin{definition}
    Given a graph \( G \) and a parallel-tree decomposed graph \( (F,T^r) \), we define the function \( \treecount((F,T^r),G) \) as the number of ordered pairs \( (u,d) \in V_G \times \mathbb{N} \) such that the depth-\( d \) unfolding tree \( (F_G^{(d)}(u),T^{(d)}_G(u)) \) at vertex \( u \) is isomorphic to \( (F,T^r) \).
\end{definition}

   \begin{corollary}
   \label{coro:1-WL_spectral_treecount_equivalence_color_refinement}
	   For any graph $G,H$, $\chi_G^{\mathsf{Walk}}(G)=\chi_H^{\mathsf{Walk}}(H)$ iff $\treecount((F,T^r),G)=\treecount((F,T^r),H)$ holds for all parallel-tree decomposed graph $(F,T^r)$.
	   \begin{proof}
		 We first prove one direction of the corollary. We aim to prove that if $\chi_G^{\mathsf{Walk}}(G)=\chi_H^{\mathsf{Walk}}(H)$, then $\treecount((F,T^r),G)=\treecount((F,T^r),H)$. If $\chi_G^{\mathsf{Walk}}(G)=\chi_H^{\mathsf{Walk}}(H)$, then $\ldblbrace \chi_G^{\mathsf{Walk}}(u):u\in V_G \rdblbrace=\ldblbrace \chi_H^{\mathsf{Walk}}(x):x\in V_H \rdblbrace$. For each color $c$ in the above multiset, pick $u\in V_G$ with $\chi_G^{\mathsf{Walk}}(u)=c$.
			   It follows that if $(F,T^r)\cong (F_G^{(D)}(u),T_G^{(D)}(u))$ for some $D$, then $\treecount((F,T^r),G)=|\ldblbrace u\in V_G:\chi_G^{\mathsf{Walk}}(u)=c \rdblbrace|=|\ldblbrace x\in V_H:\chi_H(x)=c \rdblbrace|=\treecount((F,T^r),H)$ by \cref{thm:equivalence_unfolding_tree_coloring}.
			   On the other hand, if $(F,T^r)\centernot\cong(F_G^{(D)}(u),T_G^{(D)}(u))$ for all $u\in V_G$ and all $D$, then clearly $\treecount((F,T^r),G)=\treecount((F,T^r),H)=0$.

			   We then aim to prove the second direction of the corollary. If $\treecount((F,T^r),G)=\treecount((F,T^r),H)$ holds for all parallel-tree decomposed graph $(F,T^r)$, it clearly holds for
			   all  $(F^{(D)}_G(u),T^{(D)}_G(u))$ with $u\in V_G$ and a sufficiently large $D$. This guarantees that for all color $c$, $|\ldblbrace u\in V_G:\chi_G^{\mathsf{Walk}}(u)=c \rdblbrace|=|\ldblbrace x\in V_H:\chi_H^{\mathsf{Walk}}(x)=c \rdblbrace|$ by \cref{thm:equivalence_unfolding_tree_coloring}. 
			   Therefore, $\ldblbrace \chi_G^{\mathsf{Walk}}(u):u\in V_G \rdblbrace=\ldblbrace \chi_H^{\mathsf{Walk}}(x):x\in V_H \rdblbrace$, concluding the proof.
	   \end{proof}
   \end{corollary}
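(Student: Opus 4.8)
The plan is to obtain this as a direct consequence of \cref{thm:equivalence_unfolding_tree_coloring}, the rest being bookkeeping. First I would record two elementary facts. (i) The walk-coloring is a color refinement, hence stabilizes within $\max(|V_G|,|V_H|)$ rounds; so I fix once and for all a depth $D\ge\max(|V_G|,|V_H|)$ and note that $\chi_G^{\mathsf{Walk},(D)}(u)=\chi_G^{\mathsf{Walk}}(u)$ for every $u\in V_G$, and similarly for $H$. (ii) By construction the skeleton of a depth-$d$ unfolding tree is a fully branching rooted tree of depth exactly $d$; therefore, if a parallel-tree decomposed graph $(F,T^r)$ arises at all as an unfolding tree, it does so only at the single depth $d=\dep(T^r)$, and $\treecount((F,T^r),G)$ reduces to $|\{u\in V_G:(F_G^{(d)}(u),T_G^{(d)}(u))\cong(F,T^r)\}|$ (and if $(F,T^r)$ is not an unfolding tree of either graph, both sides are $0$). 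I will also use that truncating a depth-$D$ unfolding tree to depth $d\le D$ gives the depth-$d$ unfolding tree of the same vertex, and that this operation respects isomorphism; here it is convenient that a rooted-tree isomorphism automatically maps root to root, so the ``$\rho(u)=x$'' clause in \cref{thm:equivalence_unfolding_tree_coloring} comes for free.

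For the forward direction, assume $\chi_G^{\mathsf{Walk}}(G)=\chi_H^{\mathsf{Walk}}(H)$. By \cref{thm:equivalence_unfolding_tree_coloring} this says exactly that $G$ and $H$ have the same multiset of isomorphism types of depth-$D$ unfolding trees. For a depth-$D$ decomposed graph $(F,T^r)$ this immediately yields $\treecount((F,T^r),G)=\treecount((F,T^r),H)$. For a decomposed graph of depth $d<D$, I would write $\treecount((F,T^r),G)$ as the sum, over all depth-$D$ unfolding-tree types whose depth-$d$ truncation is isomorphic to $(F,T^r)$, of the multiplicity of that depth-$D$ type in $G$ — the indexing set of this sum being graph-independent — and likewise for $H$; since each summand is equal, so is the total. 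This establishes the ``only if'' direction.

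For the ``if'' direction, assume $\treecount((F,T^r),G)=\treecount((F,T^r),H)$ for all $(F,T^r)$. Instantiating the hypothesis at the decomposed graphs $(F_G^{(D)}(u),T_G^{(D)}(u))$ for $u\in V_G$ and, symmetrically, at $(F_H^{(D)}(x),T_H^{(D)}(x))$ for $x\in V_H$, and using \cref{thm:equivalence_unfolding_tree_coloring} to identify the treecount of a depth-$D$ unfolding tree with the number of vertices sharing the corresponding stable color, I conclude that $\ldblbrace\chi_G^{\mathsf{Walk}}(u):u\in V_G\rdblbrace=\ldblbrace\chi_H^{\mathsf{Walk}}(x):x\in V_H\rdblbrace$, i.e. $\chi_G^{\mathsf{Walk}}(G)=\chi_H^{\mathsf{Walk}}(H)$. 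I do not expect a genuine obstacle here: the content lies entirely in \cref{thm:equivalence_unfolding_tree_coloring}, and the only points demanding a little care are the uniform choice of a stabilization depth $D$ valid for both graphs and the observation that $\treecount((F,T^r),\cdot)$ effectively counts at the unique depth $\dep(T^r)$, which is what lets the $d<D$ case be reduced to the $d=D$ case by a finite sum over refinements.
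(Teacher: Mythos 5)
Your proposal is correct and follows essentially the same route as the paper: both directions are reduced to \cref{thm:equivalence_unfolding_tree_coloring}, identifying $\treecount$ of depth-$D$ unfolding trees with color multiplicities. Your forward-direction handling of the $d<D$ case via a sum over depth-$D$ refinements is a slightly more careful version of the paper's bookkeeping, but it is not a genuinely different argument.
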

   \begin{definition}
	   For parallel-tree decomposed graph $(F,T^r)$, we use $\Dep(T^r)$ to denote the depth of tree $T$. For any tree note $t\in V_T$, we use $\dep_{T}(t)$ to denote the depth of node $t$ in $T^r$.
   \end{definition}
   Using techniques similar to those in \cref{coro:1-WL_spectral_treecount_equivalence_color_refinement}, we can derive a finite-iteration version of \cref{coro:1-WL_spectral_treecount_equivalence_color_refinement} as follows:
\begin{corollary}
For any graphs $G$ and $H$, $\chi_G^{\mathsf{Walk},(d)}(G) = \chi_H^{\mathsf{Walk},(d)}(H)$ if and only if $\treecount\left((F,T^r),G\right) = \treecount\left((F,T^r),H\right)$ holds for all parallel-tree decomposed graphs $(F,T^r)$ with $\Dep(T^r) \leq d$.
\end{corollary}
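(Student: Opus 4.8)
The plan is to mirror the proof of \cref{coro:1-WL_spectral_treecount_equivalence_color_refinement}, adding a small amount of depth bookkeeping so that ``$d$ iterations'' corresponds exactly to ``unfolding trees of skeleton-depth at most $d$''. Two structural observations will drive everything. First, by the construction of the unfolding tree, $(F_G^{(D)}(u),T_G^{(D)}(u))$ has a tree skeleton that is a complete $|V_G|$-ary tree of depth exactly $D$; hence if a parallel-tree decomposed graph $(F,T^r)$ is isomorphic to some $(F_G^{(D)}(u),T_G^{(D)}(u))$, then $D=\Dep(T^r)$ is forced, and in particular only unfolding trees of skeleton-depth at most $d$ can be isomorphic to a test graph $(F,T^r)$ with $\Dep(T^r)\le d$. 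Second, \cref{thm:equivalence_unfolding_tree_coloring} identifies, for each fixed $D$, equality of depth-$D$ unfolding trees (with roots/centers matched) with equality of the depth-$D$ walk colors. Combining the two, I obtain that for any $(F,T^r)$ with $\Dep(T^r)=d'$, $\treecount((F,T^r),G)=|\ldblbrace u\in V_G:(F_G^{(d')}(u),T_G^{(d')}(u))\cong(F,T^r)\rdblbrace|$, a quantity depending only on the multiset $\ldblbrace\chi_G^{\mathsf{Walk},(d')}(u):u\in V_G\rdblbrace$.

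For the forward direction I would first note that, because $\hash$ is a perfect hash, for every $d'\le d$ the color $\chi^{\mathsf{Walk},(d')}(\cdot)$ can be read off from $\chi^{\mathsf{Walk},(d)}(\cdot)$; hence there is a well-defined map carrying the multiset of depth-$d$ colors to the multiset of depth-$d'$ colors, and $\chi_G^{\mathsf{Walk},(d)}(G)=\chi_H^{\mathsf{Walk},(d)}(H)$ forces $\chi_G^{\mathsf{Walk},(d')}(G)=\chi_H^{\mathsf{Walk},(d')}(H)$ for all $d'\le d$. Then, exactly as in \cref{coro:1-WL_spectral_treecount_equivalence_color_refinement}, for any $(F,T^r)$ with $\Dep(T^r)=d'\le d$: if some $u\in V_G$ (equivalently, by the previous observation together with \cref{thm:equivalence_unfolding_tree_coloring}, some $x\in V_H$) satisfies $(F_G^{(d')}(u),T_G^{(d')}(u))\cong(F,T^r)$, put $c=\chi_G^{\mathsf{Walk},(d')}(u)$, and then both $\treecount((F,T^r),G)$ and $\treecount((F,T^r),H)$ equal the number of vertices of the respective graph carrying depth-$d'$ color $c$, which agree since the depth-$d'$ color multisets agree; otherwise both $\treecount$ values are $0$.

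For the backward direction I would specialise the hypothesis to the unfolding trees $(F_G^{(d)}(u),T_G^{(d)}(u))$ for $u\in V_G$ and $(F_H^{(d)}(x),T_H^{(d)}(x))$ for $x\in V_H$, which all have skeleton-depth $d\le d$ and so are admissible. For each depth-$d$ color $c$ occurring in $G$, choosing a representative $u_0$ with $\chi_G^{\mathsf{Walk},(d)}(u_0)=c$ and applying \cref{thm:equivalence_unfolding_tree_coloring} (within $G$, then across $G$ and $H$) shows that the number of vertices of $G$ with color $c$ equals $\treecount((F_G^{(d)}(u_0),T_G^{(d)}(u_0)),G)=\treecount((F_G^{(d)}(u_0),T_G^{(d)}(u_0)),H)$, which equals the number of vertices of $H$ with color $c$ (in particular this is positive, so $c$ occurs in $H$). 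By symmetry $\ldblbrace\chi_G^{\mathsf{Walk},(d)}(u):u\in V_G\rdblbrace=\ldblbrace\chi_H^{\mathsf{Walk},(d)}(x):x\in V_H\rdblbrace$, i.e.\ $\chi_G^{\mathsf{Walk},(d)}(G)=\chi_H^{\mathsf{Walk},(d)}(H)$. Beyond \cref{coro:1-WL_spectral_treecount_equivalence_color_refinement}, the only step needing attention is the forward-direction propagation of color-multiset equality from level $d$ to every level $d'\le d$ (and the bookkeeping that $\Dep(T^r)$ exactly selects the relevant iteration count); I do not expect a genuine obstacle.
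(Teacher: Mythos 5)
Your proof is correct, and it is exactly the "similar techniques" the paper gestures at without writing out: you mirror the infinite-iteration argument of \cref{coro:1-WL_spectral_treecount_equivalence_color_refinement} and add the two pieces of bookkeeping that make the finite-depth restriction work --- that the skeleton of $(F_G^{(D)}(u),T_G^{(D)}(u))$ is a complete $|V_G|$-ary tree of depth exactly $D$ (so each $(F,T^r)$ contributes to $\treecount$ only at $D=\Dep(T^r)$, and $(F,T^r)$ with $\Dep(T^r)\le d$ sees only unfolding trees of depth $\le d$), and that the perfect-hash update rule lets $\chi^{\mathsf{Walk},(d')}$ for $d'\le d$ be recovered from $\chi^{\mathsf{Walk},(d)}$, so equality of the depth-$d$ color multisets propagates down to every earlier iteration. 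In fact, your insistence on working at the specific iteration level $d'=\Dep(T^r)$ (rather than appealing to stable colors as the paper's infinite-case proof loosely does) is slightly cleaner, since the set of vertices whose depth-$d'$ unfolding tree is isomorphic to $(F,T^r)$ is a single depth-$d'$ color class by \cref{thm:equivalence_unfolding_tree_coloring}, which is not in general a single stable color class.
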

   In the following theorem, we will bridge homomorphic count with unfolding tree count. Before presenting the result, we first introduce some notations used to present the theorem.
   \begin{definition}
   \label{def:strong_homomorphism}
   Given two parallel-tree decomposed graphs \( (F, T^r) \) and \( (\tilde{F}, \tilde{T}^r) \), a pair of mappings \( (\rho, \tau) \) is called a \textit{strong homomorphism} from \( (F, T^r) \) to \( (\tilde{F}, \tilde{T}^s) \) if it satisfies the following conditions: 
   First, \( \tau \) is a homomorphism from \( T \) to \( \tilde{T} \), ignoring the labels \( \beta \) and \( \gamma \), and is depth-preserving, i.e., \( \mathsf{dep}_{T^r}(t) = \mathsf{dep}_{\tilde{T}^s}(\tau(t)) \) for all \( t \in V_T \). 
   Additionally, \( \rho \) is a homomorphism from \( F[\gamma_T(t_1, t_2)] \) to \( \tilde{F}[\gamma_{\tilde{T}}(\tau(t_1), \tau(t_2))] \). 
   Finally, the depth of \( T^r \) is equal to the depth of \( \tilde{T}^s \).
   
	   We use $\strHom((F,T^r),(\tilde{F},\tilde{T}^r))$ to denote the set of all strong homomorphism from $(F,T^r)$ to $(\tilde{F},\tilde{T}^r)$, and let $\strhom((F,T^r),(\tilde{F},\tilde{T}^r))=|\strHom((F,T^r),(\tilde{F},\tilde{T}^r))|$.
   \end{definition}
   \begin{theorem}
	   \label{thm:equivalence_homomorphism_tree_count}
		   Let $(F,T^r)$ be parallel-tree decomposed graph and let $G$ be a graph. We have
		   \begin{align*}
			   \hom(F,G)=\sum_{\left(\tilde{F},\tilde{T}^r\right)\in \mathcal{S}^{pt}}\strhom\left((F,T^r),\left(\tilde{F},\tilde{T}^r\right)\right)\cdot\cnt(\left(\tilde{F},\tilde{T}^r\right),G)
		   \end{align*}
		   \begin{proof}
			We assume that \( \beta_{T^r}(r) = u \) for \( (F, T^r) \), and the depth of \( (F, T^r) \) is \( d \). Let \( x \in V_G \) be any vertex in \( G \), and denote \( ( F_G^{(d)}(x), T_G^{(d)}(x) ) \) as the depth-\( d \) unfolding tree at \( x \). We define \( S_1(x) \) as the set of all homomorphisms from \( F \) to \( G \) that map the vertex \( u \in V_F \) to \( x \in V_G \). Furthermore, we define \( S_2(x) \) as the set of strong homomorphisms \( (\rho, \tau) \) from \( (F, T^r) \) to \( ( F_G^{(d)}(x), T_G^{(d)}(x) ) \), such that \( \rho(u) = x \).
			   Then \cref{thm:equivalence_homomorphism_tree_count} is equivalent to the following equation: $\sum_{x\in V_G}\left|S_1(x)\right|=\sum_{x\in V_G}\left|S_2(x)\right|.$
			   We will prove that $\left|S_1(x)\right|=\left|S_2(x)\right|$ for all $x\in V_G$.
			   Given $x\in V_G$, according to Fact \ref{fa:hom_unfolding_tree_to_graph}, there exists a homomorphism $\pi$ from $F_G^{(d)}(x)$ to graph $G$.
			   Define a mapping $\sigma$ such that $\sigma(\rho,\tau)=\pi\circ\rho$ for all $(\rho,\tau)\in S_2(x)$. It suffices to prove that $\sigma$ is a bijection from $S_2(x)$ to $S_1(x)$.\\

		    We first prove that $\sigma$ is a mapping from $S_2(x)$ to $S_1(x)$.
		   Since $\rho$ is a homomorphism from $F$ to $F^{(d)}_G(x)$, and $\pi$ is a homomorphism from $F^{(d)}_G(x)$ to $G$. The composition of homomorphism is still a homomorphism.
		   Therefore, $\pi\circ\rho$ is a homomorphism from $F$ to graph $G$.\\

		   We then prove that $\sigma$ is a surjection.
		   For all $g\in S_1(x)$, we define a mapping $(\rho,\tau)$ from $(F,T^r)$ to $(F_G^{(d)}(x),T_G^{(d)}(x))$ as follows. 
		   First define $\rho(u)=x$ and set $\tau(r)$ to be the root of $(F_G^{(d)}(x),T_G^{(d)}(x))$. Let $v_1,v_2,\ldots,v_m\in V_{T^r}$ be the tree nodes of depth $1$.
		   Similarly, by definition of the unfolding tree, let $y_1,y_2,\ldots,y_n\in V_{F_G^{(d)}(x)}$ be tree nodes of depth $1$. For all $i\in[m]$, we denote $\{P_{i1},P_{i2},\ldots,P_{ia_i}\}=\gamma_{T^{r}}(u,v_i)$, 
		   to be the paths associated with edge $(u,v_i)\in E_{T^r}$. Similarly, for $i\in[n]$ we denote $\{\tilde{P}_{i1},\tilde{P}_{i2},\ldots,\tilde{P}_{ib_i}\}=\gamma_{T^{r}}(x,y_i)$ to be the paths associated with edge $(x,y_i)\in E_{T^{(d)}_G(x)}$. Since $g$ and $\pi$ are both homomorphism, we have:
		   \begin{itemize}[topsep=0pt,leftmargin=15pt]
			   \setlength{\itemsep}{-3pt}
			   \item For every $v_i(i\in[m])$, there exists $y_j$ $(j\in[n])$, such that $g(\beta_{T^r}(v_i))=\pi(\beta_{T^{(d)}_G(x)}(y_j))=\tilde{z}_j$ for some $\tilde{z}_j\in V_G$. 
			   \item For every path $P_{ik}\in\gamma_{T^r}(u,v_i)$ $(k\in[a_i])$ linking $u$ and $\beta_{T^r}(v_i)$, there exists $\tilde{P}_{jl}$ $(l\in[b_j])$ linking $x$ and $\beta_{T^{(d)}_G(x)}(y_j)$ such that $g(P_{ik})=\pi(\tilde{P}_{jl})$.
		   \end{itemize}
		   We then define $\rho(\beta_{T^r}(v_i))=\beta_{T^{(d)}_G(x)}(y_j)$ and $\rho(P_{ik})=\tilde{P}_{jl}$ for each $i\in[m]$ and $k\in[a_i]$. Based on the above two items, one can easily define $\tau$ such that
		   each node $s$ in $T^r$ of depth 1 is mapped by $\tau$ to a node $t$ in $T^{(d)}_G(x)$ of the same depth, such that $\rho(\beta_{T^r}(s))=\beta_{T^{(d)}_G(x)}(t)$ and $\rho(\gamma_{T^r}(r,s))=\gamma_{T^{(d)}_G(x)}(x,t)$.
		   Continuing, we denote the subtree of \( T^r \) induced by \( s \) and all its descendants as \( T_s^r \), and the subgraph of \( F \) induced by \( T^r_s \) as \( F_s \). Similarly, we denote the subtree of \( T_G^{(d)}(x) \) induced by \( \tau(s) \) and its descendants as \( T^{(d)}_{G,\tau(s)}(x) \), and the subgraph of \( F_G^{(d)}(x) \) induced by \( T_{G,\tau(s)}^{(d)}(x) \) as \( F_{G,\tau(s)}^{(d)}(x) \). We can recursively define the image of \( \rho \) on \( F_s \) for each tree node of depth 1, following the same construction described above.
This recursive definition holds because \( g \) remains a homomorphism from \( (F_s, T^r_s) \) to \( G \), and \( \pi \) remains a homomorphism from \( (F^{(d)}_{G,\tau(s)}(x), T^{(d)}_{G,\tau(s)}(x)) \) to \( G \), with \( g(\beta_{T^r}(s)) = \pi(\beta_{T^{(d)}_G(x)}(\tau(s))) \).
By recursively applying this procedure, we can construct \( (\rho, \tau) \) such that it becomes a strong homomorphism (denoted \( \strHom \)) from \( (F, T^r) \) to \( (F^{(d)}_G(x), T^{(d)}_G(x)) \). Therefore, we have shown that for any \( g \in S_1(x) \), there exists a preimage \( (\rho, \tau) \in S_2(x) \) such that \( \sigma(\rho, \tau) = g \).\\

		    Finally, we prove that $\sigma$ is an injection.\\
		   Let $(\rho_1,\tau_1),(\rho_2,\tau_2)\in S_2(x)$ such that $\pi\circ\rho_1=\pi\circ\rho_2$. Similar to previous item, we define $v_1,v_2,\ldots,v_m\in V_{T^r}$ to be the tree nodes of depth $1$.
		   Similarly, by definition of the unfolding tree, let $y_1,y_2,\ldots,y_n\in V_{T_G^{(d)}(x)}$ be tree nodes of depth $1$. 
		   \begin{itemize}[topsep=0pt,leftmargin=15pt]
			   \setlength{\itemsep}{-3pt}
			   \item For all $i\in[m]$, we denote $\{P_{i1},P_{i2},\ldots,P_{ia_i}\}=\gamma_{T^{r}}(u,v_i)$, 
			   to be the paths associated with edge $(u,v_i)\in E_{T^r}$. Similarly, for $i\in[n]$ we denote $\{\tilde{P}_{i1},\tilde{P}_{i2},\ldots,\tilde{P}_{ib_i}\}=\gamma_{T^{r}}(x,y_i)$ to be the paths associated with edge $(x,y_i)\in E_{T^{(d)}_G(x)}$.
			   For each $i\in[m]$, let $j_1(i)$ and $j_2(i)$ be indices satisfying $\rho_1(w_i)=x_{j_1(i)}$ and $\rho_2(w_i)=x_{j_2(i)}$. It follows that $\pi(x_{j_1(i)})=\pi(x_{j_2(i)})$.
			   By the definition of unfolding tree, we must have $x_{j_1(i)}=x_{j_2(i)}$, and thus $\rho_1(w_i)=\rho_2(w_i)$. 
			   \item 	For each $k\in[a_i]$, $P_{ik}\in\gamma_{T^{r}}(u,v_i)$, let $l_1(k)$ and $l_2(k)$ be indices satisfying $\rho_1(P_{ik})=\tilde{P}_{jl_1(j)}$ and $\rho_2(P_{ik})=\tilde{P}_{jl_2(j)}$, where we use $j$ to denote $j=j_1(i)=j_2(i)$.
			   With similar analysis as the previous item, we have $\pi(\tilde{P}_{jl_1(j)})=\pi(\tilde{P}_{jl_2(j)})$. By the definition of the unfolding tree, we must have $\tilde{P}_{jl_1(j)}=\tilde{P}_{jl_2(j)}$, and thus $\rho_1(P_{ik})=\rho_2(P_{ik})$.
		   \end{itemize}
		   Next, we recursively apply the previously described procedure to the subtree induced by the tree node \( s \) at depth 1 and its descendants, following the same steps outlined earlier. Through this process, we can ultimately demonstrate that \( \rho_1 = \rho_2 \). Consequently, \( \sigma \) is injective.

		   Combining the above three parts completes the proof.
		   \end{proof}
	   \end{theorem}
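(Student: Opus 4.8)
The plan is to imitate the classical identity that expresses tree‑homomorphism counts of $G$ in terms of homomorphisms into the unfolding trees of $G$, but now in the parallel‑tree world. Write $u=\beta_{T^r}(r)$ and $d=\Dep(T^r)$. Every $f\in\Hom(F,G)$ sends $u$ to some vertex, so $\hom(F,G)=\sum_{x\in V_G}|S_1(x)|$ where $S_1(x)=\{f\in\Hom(F,G):f(u)=x\}$. Dually, for each $x\in V_G$ let $(F_G^{(d)}(x),T_G^{(d)}(x))$ be the depth‑$d$ unfolding tree at $x$ and let $S_2(x)$ be the set of strong homomorphisms $(\rho,\tau)$ from $(F,T^r)$ into it; since $\tau$ is depth‑preserving it must send $r$ to the root, whence $\rho(u)=x$ automatically (the root of the unfolding tree is labelled by $x$). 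A depth‑$d'$ unfolding tree has tree‑depth exactly $d'$, whereas a strong homomorphism requires the two tree‑depths to agree, so only depth‑$d$ unfolding trees can receive strong homomorphisms from $(F,T^r)$; grouping the vertices of $G$ by the isomorphism type of their depth‑$d$ unfolding tree then turns $\sum_{x\in V_G}|S_2(x)|$ into $\sum_{(\tilde F,\tilde T^r)}\strhom((F,T^r),(\tilde F,\tilde T^r))\cdot\cnt((\tilde F,\tilde T^r),G)$. Hence it suffices to prove $|S_1(x)|=|S_2(x)|$ for every fixed $x$.

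For this I would use the canonical homomorphism $\pi:F_G^{(d)}(x)\to G$ furnished by \cref{fa:hom_unfolding_tree_to_graph} and show that $\sigma(\rho,\tau):=\pi\circ\rho$ is a bijection from $S_2(x)$ onto $S_1(x)$, in three steps. \emph{Well-definedness:} the subgraphs $F[\gamma_{T^r}(t_1,t_2)]$ cover every edge of $F$ and $\rho$ is a homomorphism on each of them, so $\rho$ is a homomorphism $F\to F_G^{(d)}(x)$; composing with $\pi$ gives a homomorphism $F\to G$ sending $u$ to $x$, i.e.\ an element of $S_1(x)$. \emph{Surjectivity:} given $g\in S_1(x)$, construct $(\rho,\tau)$ by descending $T^r$ level by level---once a node $t$ has been matched to an unfolding‑tree node whose label projects under $\pi$ to $g(\beta_{T^r}(t))$, every path of each $\gamma_{T^r}(t,t')$ is carried by $g$ to a walk of $G$, and the construction of the unfolding tree provides exactly one fresh path realizing that walk, which we declare to be its $\rho$‑image; the subtree hanging below the new child is again a lower‑depth unfolding tree of the corresponding vertex of $G$, so the construction recurses and produces a strong homomorphism with $\pi\circ\rho=g$. \emph{Injectivity:} if $\pi\circ\rho_1=\pi\circ\rho_2$ then induction on depth forces $\rho_1=\rho_2$, since two unfolding‑tree vertices lying below an already‑identified node and sharing a $\pi$‑image must coincide (distinct walks are assigned pairwise disjoint fresh vertices in the construction). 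Combining the three steps gives $|S_1(x)|=|S_2(x)|$, and summing over $x\in V_G$ proves the identity.

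The step I expect to be the real obstacle is the surjectivity: converting an arbitrary homomorphism $g:F\to G$ into a genuine \emph{strong} homomorphism means simultaneously exhibiting a depth‑preserving tree homomorphism $\tau$ and a vertex map $\rho$ that is compatible with the node labels $\beta$ and the edge labels $\gamma$ along every branch. Making this precise needs a ``richness'' property of the unfolding tree---for each walk of $G$ induced by $g$ there is exactly the path needed in the relevant $\gamma$‑set---together with careful use of the self‑similarity of unfolding trees (each subtree below a depth‑one node is itself an unfolding tree of lower depth), so that the recursively assembled $(\rho,\tau)$ is consistent at the shared boundary vertices, is genuinely strong, and is the unique preimage of $g$ under $\sigma$. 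Injectivity and the gluing bookkeeping rely on the same mechanism, so once the surjectivity construction is set up carefully the remainder should follow routinely.
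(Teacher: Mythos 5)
Your proposal mirrors the paper's proof step by step: same decomposition $\hom(F,G)=\sum_x|S_1(x)|$, same identification of $\sum_x|S_2(x)|$ with the right-hand side via grouping vertices by the isomorphism type of their depth-$d$ unfolding tree, same use of the canonical homomorphism $\pi$ from \cref{fa:hom_unfolding_tree_to_graph}, same map $\sigma(\rho,\tau)=\pi\circ\rho$, and the same three-part verification (well-definedness, surjectivity by recursive level-by-level construction exploiting self-similarity of unfolding trees, injectivity by induction on depth using disjointness of freshly introduced paths). The argument is correct and coincides with the paper's in all essentials.
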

	   \begin{theorem}
		   \label{thm:equivalence_homomorphism_tree_count_finite_iteration}
		   Let $(F,T^r)$ be parallel-tree decomposed graph with $\Dep(T^r)\leq d$ and let $G$ be a graph. We have
		   \begin{align*}
			   \hom(F,G)=\sum_{\left(\tilde{F},\tilde{T}^r\right)\in \mathcal{S}^{pt}_d}\strhom\left((F,T^r),\left(\tilde{F},\tilde{T}^r\right)\right)\cdot\cnt(\left(\tilde{F},\tilde{T}^r\right),G)
		   \end{align*}
	   \end{theorem}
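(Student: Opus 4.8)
The plan is to obtain \cref{thm:equivalence_homomorphism_tree_count_finite_iteration} as an immediate consequence of \cref{thm:equivalence_homomorphism_tree_count}, exploiting the fact that strong homomorphisms are depth-preserving by definition. Write $D := \Dep(T^r)$, so $D \le d$ by hypothesis. Applying \cref{thm:equivalence_homomorphism_tree_count} verbatim gives $\hom(F,G) = \sum_{(\tilde F,\tilde T^r) \in \mathcal{S}^{pt}} \strhom((F,T^r),(\tilde F,\tilde T^r)) \cdot \cnt((\tilde F,\tilde T^r),G)$, a sum with only finitely many nonzero terms (those whose index actually arises as an unfolding tree of $G$). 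It therefore suffices to argue that every index $(\tilde F,\tilde T^r) \notin \mathcal{S}^{pt}_d$ contributes $0$.

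This is exactly where \cref{def:strong_homomorphism} does the work: a strong homomorphism $(\rho,\tau)$ from $(F,T^r)$ to $(\tilde F,\tilde T^r)$ requires $\tau$ to be depth-preserving and, in addition, requires the depth of $T^r$ to equal the depth of $\tilde T^r$. Consequently $\strHom((F,T^r),(\tilde F,\tilde T^r)) = \emptyset$, hence $\strhom((F,T^r),(\tilde F,\tilde T^r)) = 0$, whenever $\Dep(\tilde T^r) \ne D$. In particular every nonzero term of the sum has $\Dep(\tilde T^r) = D \le d$ and is thus indexed by an element of $\mathcal{S}^{pt}_d$ (identifying a parallel tree of skeleton depth at most $d$ with its decomposition). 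Restricting the index set to $\mathcal{S}^{pt}_d$ therefore changes nothing, and we recover the claimed identity.

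The only point requiring genuine attention — and it is slight — is confirming that this restriction is lossless, which, as just noted, reduces to the depth-equality clause of \cref{def:strong_homomorphism}. As a self-contained alternative that does not invoke \cref{thm:equivalence_homomorphism_tree_count} as a black box, one can rerun its proof directly with the depth parameter taken to be $D$: for each $x \in V_G$ one forms $S_1(x)$, the homomorphisms $F \to G$ sending the root $u$ to $x$, and $S_2(x)$, the strong homomorphisms from $(F,T^r)$ into the depth-$D$ unfolding tree $(F_G^{(D)}(x),T_G^{(D)}(x))$ fixing $u \mapsto x$; the map $\sigma(\rho,\tau) = \pi \circ \rho$ (with $\pi$ the canonical homomorphism of \cref{fa:hom_unfolding_tree_to_graph}) is a bijection $S_2(x) \to S_1(x)$; summing over $x$ and grouping the strong homomorphisms by the isomorphism type of their target — each of which has tree-depth exactly $D \le d$ — yields a sum indexed by $\mathcal{S}^{pt}_d$. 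I do not expect any obstacle beyond routine bookkeeping, since the construction underlying \cref{thm:equivalence_homomorphism_tree_count} is already carried out at finite depth $D$.
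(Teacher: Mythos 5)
Your argument is correct and matches the paper's proof exactly: both derive the finite-iteration version from \cref{thm:equivalence_homomorphism_tree_count} by observing that the depth-equality clause in \cref{def:strong_homomorphism} forces $\strhom((F,T^r),(\tilde F,\tilde T^r)) = 0$ whenever $\Dep(\tilde T^r) \neq \Dep(T^r)$, so the sum automatically collapses to the index set $\mathcal{S}^{pt}_d$. The alternative self-contained rerun you sketch is a nice bonus but is not needed, and the paper does not pursue it.
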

	   \begin{proof}
		   According to the third condition in \cref{def:strong_homomorphism}, for $(F, T^r) \in \mathcal{S}^{pt}_d$ and $(\tilde{F}, \tilde{T}^r) \in \mathcal{S}^{pt}$, if $\strhom((F, T^r), (\tilde{F}, \tilde{T}^r)) \neq 0$, then $\Dep(T^r) = \Dep(\tilde{T}^r)$. Therefore, we have $(\tilde{F}, \tilde{T}^r) \in \mathcal{S}^{pt}_d$. Thus, the conclusion of the lemma follows.
		   \end{proof}
	   \begin{definition}
		Given two parallel-tree decomposed graphs \( (F, T^r) \) and \( ( \tilde{F}, \tilde{T}^r ) \), along with a strong homomorphism \( (\rho, \tau) \), we define \( (\rho, \tau) \) as a \textit{surjective strong homomorphism} if both \( \rho \) and \( \tau \) are surjective mappings, and as an \textit{injective strong homomorphism} if both \( \rho \) and \( \tau \) are injective mappings. We denote the set of all surjective strong homomorphisms from \( (F, T^r) \) to \( ( \tilde{F}, \tilde{T}^r ) \) by \( \strSurj((F, T^r), (\tilde{F}, \tilde{T}^r)) \), and further define \( \strsurj((F, T^r), (\tilde{F}, \tilde{T}^r)) = |\strSurj((F, T^r), (\tilde{F}, \tilde{T}^r))| \). Similarly, we denote the set of all injective strong homomorphisms from \( (F, T^r) \) to \( ( \tilde{F}, \tilde{T}^r ) \) by \( \strInj((F, T^r), (\tilde{F}, \tilde{T}^r)) \), and further define \( \strinj((F, T^r), (\tilde{F}, \tilde{T}^r)) = |\strInj((F, T^r), (\tilde{F}, \tilde{T}^r))| \).
	   \end{definition}
	   We now present the following lemma regarding the relationships between strong homomorphisms, surjective strong homomorphisms, and injective strong homomorphisms.
	   \begin{lemma}
	   \label{lm:homomorphism_property}
		   For any parallel-tree decomposed graph $(F,T^r)$ and $(\tilde{F},\tilde{T}^s)$, we have
		   \begin{align*}
			   \strhom\left((F,T^r),\left(\tilde{F},\tilde{T}^s\right)\right)=\sum_{(\hat{F},\hat{T}^t)\in \mathcal{S}^{\mathsf{pt}}}\frac{\strsurj\left((F,T^r),\left(\hat{F},\hat{T}^t\right)\right)\cdot \strinj\left(\left(\hat{F},\hat{T}^t\right),\left(\tilde{F},\tilde{T}^s\right)\right)}{\aut\left(\hat{F},\hat{T}^t\right)},
		   \end{align*}
		   where $\aut(\hat{F},\hat{T}^t)$ denotes the number of automorphism of $(\hat{F},\hat{T}^r)$. Here, the summation ranges over all non-isomorphic (parallel-tree decomposed) graphs in $\mathcal{S}^{\mathsf{pt}}$ and is well-defined as there are only a finite number of graphs making the value in the summation non-zero.
		   \begin{proof}
			We initially define the set \( S \) as the set of triples \( ((\hat{F}, \hat{T}^t), (\rho, \tau), (\phi, \psi)) \) that satisfy \( (\hat{F}, \hat{T}^t) \in \mathcal{S}^{\mathsf{pt}} \), \( (\rho, \tau) \in \strSurj((F, T^r), (\hat{F}, \hat{T}^t)) \), and \( (\phi, \psi) \in \strInj((\hat{F}, \hat{T}^t), (\tilde{F}, \tilde{T}^s)) \). 
We define a mapping \( \sigma \) such that \( \sigma( (\hat{F}, \hat{T}^t), (\rho, \tau), (\phi, \psi) ) = (\phi \circ \rho, \psi \circ \tau) \) for all \( ((\hat{F}, \hat{T}^t), (\rho, \tau), (\phi, \psi)) \in S \). Our goal is to prove that \( \sigma \) is a mapping from \( S \) to \( \strHom((F, T^r), (\tilde{F}, \tilde{T}^s)) \). Moreover, we aim to show that \( \sigma( (\hat{F}_1, \hat{T}_1^{t_1}), (\rho_1, \tau_1), (\phi_1, \psi_1) ) = \sigma( (\hat{F}_2, \hat{T}_2^{t_2}), (\rho_2, \tau_2), (\phi_2, \psi_2)) \) if and only if there exists an isomorphism \( (\hat{\rho}, \hat{\tau}) \) from \( (\hat{F}_1, \hat{T}_1^{t_1}) \) to \( (\hat{F}_2, \hat{T}_2^{t_2}) \) such that
\( \hat{\rho} \circ \rho_1 = \rho_2 \), \( \hat{\tau} \circ \tau_1 = \tau_2 \), \( \phi_1 = \phi_2 \circ \hat{\rho} \), and \( \psi_1 = \psi_2 \circ \hat{\tau} \).

			   
			   We will prove these statements one by one. We first prove that $\sigma$ is a mapping from $S$ to $\strHom((F,T^r),(\tilde{F},\tilde{T}^s))$.
			   This simply follows from the fact that $\strSurj$ and $\strInj$ are both $\strHom$, and the composition of two $\strHom$s are still a $\strHom$.\\
			   
			   Next, we will prove that $\sigma$ is surjective.
			   Given $(\tilde{\rho},\tilde{\tau})\in \strHom((F,T^r),(\tilde{F},\tilde{T}^s))$, we define $(\hat{F},\hat{T}^r)$, $(\rho,\tau)$, and $(\phi,\psi)$ as follows:
			   
			   \begin{enumerate}[topsep=0pt,leftmargin=17pt]
				   \setlength{\itemsep}{-3pt}
				   \item We define $\hat{F}$ as the subgraph of $\tilde{F}$ induced by $\tilde{\rho}(V_F)$, and we define $\hat{T}^t$ as the subgraph of $\tilde{T}^s$ induced by $\tau^{\mathsf{H}}(V_T)$. We clearly have $(\hat{F},\hat{T}^t)\in\mathcal{S}^{\mathsf{pt}}$.
				   \item Let $\rho=\tilde{\rho}$ and $\tau=\tilde{\tau}$. Obviously, $(\rho,\tau)$ is a $\strSurj$ from $(F,T^r)$ to $(\hat{F},\hat{T}^t)$.
				   \item Define identity mappings $\phi(u)=u$ for all $u\in V_{\hat{F}}$ for $\psi(t)=t$ for all $t\in V_{\hat{T}}$.
				   Obviously, $(\phi,\psi)$ is a $\strInj$ from $(\hat{F},\hat{T}^t)$ to $(\tilde{F},\tilde{T}^s)$.
			   \end{enumerate}
			   We clearly have $\tilde{\rho}=\phi\circ\rho$ and $\tilde{\tau}=\psi\circ\tau$. Thus, $\sigma$ is a surjection.

			   We will now prove that $\sigma((\hat{F}_1,\hat{T}^{t_1}_1),(\rho_1,\tau_1),(\phi_1,\psi_1))$=$\sigma((\hat{F}_2,\hat{T}^{t_2}_2),(\rho_2,\tau_2),(\phi_2,\psi_2))$ iff there exist an isomorphism $(\hat{\rho},\hat{\tau})$ from $(\hat{F}_1,\hat{T}_1^{t_1})$ to $(\hat{F}_2,\hat{T}_2^{t_2})$ such that
			   $\hat{\rho}\circ\rho_1=\rho_2$, $\hat{\tau}\circ\tau_1=\tau_2$, $\phi_1=\phi_2\circ\hat{\rho}$, $\psi_1=\psi_2\circ\hat{\tau}$.
			   It suffices to prove only one direction, namely, $\sigma((\hat{F}_1,\hat{T}^{t_1}_1),(\rho_1,\tau_1),(\phi_1,\psi_1))$=$\sigma((\hat{F}_2,\hat{T}^{t_2}_2),(\rho_2,\tau_2),(\phi_2,\psi_2))$
			   implies that there exist an isomorphism $(\hat{\rho},\hat{\tau})$ from $(\hat{F}_1,\hat{T}^{t_1}_1)$ to $(\hat{F}_2,\hat{T}^{t_2}_2)$
			   such that $\hat{\rho}\circ\rho_1=\rho_2$, $\hat{\tau}\circ\tau_1=\tau_2$, $\phi_1=\phi_2\circ\hat{\rho}$, $\psi_1=\psi_2\circ\hat{\tau}$.
			   \begin{enumerate}[topsep=0pt,leftmargin=17pt]
				   \setlength{\itemsep}{-3pt}
				   \item We first prove that $\hat{F}_1\cong\hat{F}_2$ and $\hat{T}_1^{t_1}\cong\hat{T}_2^{t_2}$.
				   For any $u,v\in V_F$, if $\rho_1(u)\neq \rho_1(v)$, then $\phi_1\circ\rho_1(u)\neq \phi_1\circ\rho_1(v)$ since $\phi$ is an injection. Therefore, $\phi_2\circ\rho_2(u)\neq \phi_2\circ\rho_2(v)$,
				   and thus $\rho_2(u)\neq \rho_2(v)$. By symmetry, we also have that $\rho_2(u)\neq \rho_2(v)$ implies $\rho_1(u)\neq \rho_1(v)$.
				   This proves that $\rho_1(u)= \rho_1(v)$ iff $\rho_2(u)= \rho_2(v)$.
				   For any $u,v\in V_F$, if $\{\rho_1(u),\rho_1(v)\}\in E_{\hat{F}_1}$, then $\{u,v\}\in E_F$ since $\rho_1$ is a surjection. 
				   Therefore, $\{\rho_2(u),\rho_2(v)\}\in E_{\hat{F}_2}$ since $\rho_2$ is a homomorphism.

				   By symmetry, it follows that if $\{\rho_2(u), \rho_2(v)\} \in E_{\hat{F}_2}$, then $\{\rho_1(u), \rho_1(v)\} \in E_{\hat{F}_1}$. Therefore, we conclude that $\hat{F}_1 \cong \hat{F}_2$. Similarly, it follows that $\hat{T}_1^{t_1} \cong \hat{T}_2^{t_2}$.
	   
				   \item Consequently, there exist isomorphism $\hat{\rho}$ and $\hat{\tau}$ such that $\hat{\rho}\circ\rho_1=\rho_2$,
				   $\hat{\tau}\circ\tau_1=\tau_2$. For any node $q\in V_T$,
				   \begin{align*}
					   \hat{\rho}(\beta_{\hat{T}_1}(\tau_1(q)))=\hat{\rho}\circ\rho_1(\beta_T(q))=\rho_2(\beta_T(q))=\beta_{\hat{T}_2}(\tau_2(q))=\beta_{\hat{T}_2}(\hat{\tau}\circ\tau_1(q)).
				   \end{align*}
				   Moreover, for any $\{q_1,q_2\}\in E_T$,
				   \begin{align*}
					   \hat{\rho}(\gamma_{\hat{T}_1}(\tau_1(q_1,q_2)))=\hat{\rho}\circ\rho_1(\gamma_T(q_1,q_2))=\rho_2(\gamma_T(q_1,q_2))=\gamma_{\hat{T}_2}(\tau_2(q_1,q_2))=\gamma_{\hat{T}_2}(\hat{\tau}\circ\tau_1(q_1,q_2)).
				   \end{align*}
				   Since $\tau_1$ is surjective, $\tau_1(q)$ ranges over all nodes in $\hat{T}_1^{t_1}$ when $q$ ranges over $V_T$, and $\tau_1(q_1,q_2)$ ranges over all edges in $\hat{T}_1^{t_1}$ when $(q_1,q_2)$ ranges over $E_T$.
				   We thus conclude that $(\rho,\tau)$ is an isomorphism from $(\hat{F}_1,\hat{T}^{t_1}_1)$ to $(\hat{F}_2,\hat{T}^{t_2}_2)$
				   \item We finally prove that $\phi_1=\phi_2\circ\hat{\rho}$ and $\psi_1=\psi_2\circ\hat{\tau}$. Pick any $u\in V_F$, we have 
				   $\phi_2\circ\hat{\rho}\rho_1(u)=\phi_2\circ\rho_2(u)=\phi_1\circ\rho_1(u)$. Since $\rho_1$ is surjective, 
				   $\phi_1(u)$ ranges over all vertices in $\hat{F}_1$ when $u$ ranges over $V_F$. This proves that $\phi_1=\phi_2\circ\hat{\rho}$.
				   Following the same procedure, we can prove that $\psi_1=\psi_2\circ\hat{\tau}$.
			   \end{enumerate}
			   Combining the above three items concludes the proof.
		   \end{proof}
	   \end{lemma}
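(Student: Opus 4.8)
The plan is to prove the identity by the standard epi--mono factorization argument, organized as a free group action whose orbits correspond to strong homomorphisms. First I would form the set $S$ of triples $\big((\hat F,\hat T^t),(\rho,\tau),(\phi,\psi)\big)$ with $(\hat F,\hat T^t)\in\mathcal{S}^{\mathsf{pt}}$, $(\rho,\tau)\in\strSurj\big((F,T^r),(\hat F,\hat T^t)\big)$ and $(\phi,\psi)\in\strInj\big((\hat F,\hat T^t),(\tilde F,\tilde T^s)\big)$, together with the composition map $\sigma\big((\hat F,\hat T^t),(\rho,\tau),(\phi,\psi)\big)=(\phi\circ\rho,\psi\circ\tau)$. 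This lands in $\strHom\big((F,T^r),(\tilde F,\tilde T^s)\big)$ because the composite of two strong homomorphisms is again one: each of the three defining conditions --- that the tree map is a depth-preserving homomorphism, that the graph map restricts to a homomorphism on every path bundle $\gamma$, and that the two skeleton depths agree --- is stable under composition.

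Next I would verify that $\sigma$ is surjective by factoring an arbitrary $(\tilde\rho,\tilde\tau)$ through its image. Let $\hat F$ be the subgraph of $\tilde F$ on the vertices $\tilde\rho(V_F)$ with edge set $\tilde\rho(\mathcal{E}_F)$, and $\hat T^t$ the rooted subtree of $\tilde T^s$ on $\tilde\tau(V_T)$, both carrying the inherited labels $\beta,\gamma$; one checks that $(\hat F,\hat T^t)$ is again a parallel-tree decomposed graph (the inherited path bundles still partition $E_{\hat F}$ correctly because $\tilde\rho$ is a homomorphism on each bundle of $T^r$ and $\tilde\tau$ is depth-preserving). Then $(\tilde\rho,\tilde\tau)$ corestricts to a surjective strong homomorphism onto $(\hat F,\hat T^t)$ and the inclusion $(\hat F,\hat T^t)\hookrightarrow(\tilde F,\tilde T^s)$ is an injective strong homomorphism, whose composite is $(\tilde\rho,\tilde\tau)$.

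Then I would pin down the fibers of $\sigma$. For a fixed representative $(\hat F,\hat T^t)$, the group $\aut(\hat F,\hat T^t)$ acts on $\strSurj\big((F,T^r),(\hat F,\hat T^t)\big)\times\strInj\big((\hat F,\hat T^t),(\tilde F,\tilde T^s)\big)$ by $\alpha\cdot\big((\rho,\tau),(\phi,\psi)\big)=\big((\alpha_F\circ\rho,\alpha_T\circ\tau),(\phi\circ\alpha_F^{-1},\psi\circ\alpha_T^{-1})\big)$; this action is free because surjectivity of $\rho$ forces $\alpha_F\circ\rho=\rho\Rightarrow\alpha_F=\mathrm{id}$ (similarly $\alpha_T=\mathrm{id}$), so each orbit has size $\aut(\hat F,\hat T^t)$. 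I would show that $\sigma$ is constant on orbits, and conversely that two triples with the same $\sigma$-value must have isomorphic intermediate graphs and, after transporting along that isomorphism, lie in a common orbit --- here injectivity of the $\phi$'s lets one pull equalities of images back to equalities of maps, giving $\hat F_1\cong\hat F_2$ and $\hat T_1^{t_1}\cong\hat T_2^{t_2}$, and surjectivity of the $\rho$'s, $\tau$'s promotes this to an isomorphism of parallel-tree decomposed graphs compatible with $\beta$, $\gamma$ and with the four maps. Since composites with non-isomorphic images are distinct and, by the previous paragraph, every strong homomorphism is such a composite, the orbits (over all non-isomorphic $(\hat F,\hat T^t)\in\mathcal{S}^{\mathsf{pt}}$) are in bijection with $\strHom\big((F,T^r),(\tilde F,\tilde T^s)\big)$. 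Counting sizes gives $\strsurj\cdot\strinj = \aut(\hat F,\hat T^t)\cdot\#\{\text{strong homs with image}\cong(\hat F,\hat T^t)\}$ for each $(\hat F,\hat T^t)$; dividing by $\aut(\hat F,\hat T^t)$ and summing yields the claim. Finiteness of the sum is clear since such an intermediate graph is a homomorphic image of $F$, hence of bounded size.

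The step I expect to be the main obstacle is the fiber characterization: showing that equality of the composite forces a genuine isomorphism of the intermediate \emph{decomposed} graphs, i.e.\ one that also matches the edge-label bundles $\gamma_{\hat T}$ (not merely the underlying graphs, trees, and node labels) and respects the depth grading. Keeping the three strong-homomorphism conditions straight through composition, corestriction to the image, and transport along this isomorphism --- while verifying the $\aut$-action is free so the orbit count is exactly $\aut(\hat F,\hat T^t)$ --- is the delicate bookkeeping; the remainder is routine.
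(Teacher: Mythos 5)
Your proposal follows the same epi--mono factorization argument as the paper: define $\sigma$ by composing surjective and injective strong homomorphisms through an intermediate decomposed graph, verify it lands in $\strHom$ and is surjective via the image factorization, and characterize the fibers so each contributes $\aut(\hat F,\hat T^t)$ to the count. Your phrasing in terms of a free $\aut(\hat F,\hat T^t)$-action on $\strSurj\times\strInj$ is just a repackaging of the paper's fiber characterization (two triples map to the same strong homomorphism iff related by an isomorphism of the intermediate graph compatible with the four maps), so the substance is identical.
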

	   From \cref{lm:homomorphism_property}, we can also obtain the finite-iteration version of \cref{lm:homomorphism_property} as follows:
	   \begin{lemma}
		   \label{lm:homomorphism_property_finite_iteration}
		   For any parallel-tree decomposed graph $(F,T^r)\in\mathcal{S}^{pt}_d$ and $\left(\tilde{F},\tilde{T}^s\right)\in\mathcal{S}^{pt}_d$, we have
		   \begin{align*}
			   \strhom\left((F,T^r),\left(\tilde{F},\tilde{T}^s\right)\right)=\sum_{(\hat{F},\hat{T}^t)\in \mathcal{S}^{\mathsf{pt}}_d}\frac{\strsurj\left((F,T^r),\left(\hat{F},\hat{T}^t\right)\right)\cdot \strinj\left(\left(\hat{F},\hat{T}^t\right),\left(\tilde{F},\tilde{T}^s\right)\right)}{\aut\left(\hat{F},\hat{T}^t\right)},
		   \end{align*}
	   \end{lemma}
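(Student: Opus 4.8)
The plan is to obtain this as an immediate corollary of \cref{lm:homomorphism_property}, exactly mirroring how \cref{thm:equivalence_homomorphism_tree_count_finite_iteration} was deduced from \cref{thm:equivalence_homomorphism_tree_count}. First I would invoke \cref{lm:homomorphism_property} verbatim for the given $(F,T^r)$ and $(\tilde F,\tilde T^s)$, which already writes $\strhom((F,T^r),(\tilde F,\tilde T^s))$ as a weighted sum over all parallel-tree decomposed graphs $(\hat F,\hat T^t)\in\mathcal{S}^{\mathsf{pt}}$ with weights $\strsurj((F,T^r),(\hat F,\hat T^t))\cdot\strinj((\hat F,\hat T^t),(\tilde F,\tilde T^s))/\aut(\hat F,\hat T^t)$. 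It then remains only to argue that every summand indexed by a $(\hat F,\hat T^t)\notin\mathcal{S}^{\mathsf{pt}}_d$ vanishes, so the sum may be truncated to $\mathcal{S}^{\mathsf{pt}}_d$ without changing its value.

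The key observation is the third clause of \cref{def:strong_homomorphism}: a strong homomorphism, and hence in particular a surjective one, equates the depths of the two decomposition skeletons. Concretely, if $\strsurj((F,T^r),(\hat F,\hat T^t))\neq 0$ then $\Dep(T^r)=\Dep(\hat T^t)$; since $(F,T^r)\in\mathcal{S}^{\mathsf{pt}}_d$ gives $\Dep(T^r)\le d$, we get $\Dep(\hat T^t)\le d$, i.e.\ $(\hat F,\hat T^t)\in\mathcal{S}^{\mathsf{pt}}_d$. Thus any $(\hat F,\hat T^t)$ lying outside $\mathcal{S}^{\mathsf{pt}}_d$ forces the factor $\strsurj((F,T^r),(\hat F,\hat T^t))=0$ and kills the whole term. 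One could equally run the argument through the injective factor together with $(\tilde F,\tilde T^s)\in\mathcal{S}^{\mathsf{pt}}_d$; both routes work, and in the degenerate case $\Dep(T^r)\neq\Dep(\tilde T^s)$ the third clause makes both sides of the claimed identity simply $0$.

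There is essentially no real obstacle here; the only bookkeeping to be careful about is that the automorphism weights $\aut(\hat F,\hat T^t)$ in the denominators are untouched by the truncation (they are fixed positive integers depending only on $(\hat F,\hat T^t)$), and that the finiteness of the sum guaranteed in \cref{lm:homomorphism_property} is inherited, so the truncated sum is well-defined. A fully self-contained alternative would be to replay the triple-counting bijection from the proof of \cref{lm:homomorphism_property} while carrying the bound $\Dep(\cdot)\le d$ through each intermediate object; this also works but is strictly more laborious and yields no extra insight, so I would not pursue it.
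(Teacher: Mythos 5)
Your proposal is correct and follows exactly the paper's argument: start from Lemma \ref{lm:homomorphism_property} and observe that the depth-matching condition (clause 3 of Definition \ref{def:strong_homomorphism}) forces $\strsurj((F,T^r),(\hat F,\hat T^t))=0$ whenever $\Dep(\hat T^t)>d$, so the sum truncates to $\mathcal{S}^{\mathsf{pt}}_d$. The paper phrases the vanishing condition through the product $\strsurj\cdot\strinj\neq 0$ rather than isolating the surjective factor, but the content is identical, and your remark on the degenerate case $\Dep(T^r)\neq\Dep(\tilde T^s)$ is a harmless extra observation.
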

	   \begin{proof}
		   According to the third condition in \cref{def:strong_homomorphism}, for $(\hat{F}, \hat{T}^r) \in \mathcal{S}^{pt}$, if $\strsurj((F, T^r), (\hat{F}, \hat{T}^t)) \cdot \strinj((\hat{F}, \hat{T}^t), (\tilde{F}, \tilde{T}^s)) \neq 0$, it follows that $(\hat{F}, \hat{T}^r) \in \mathcal{S}^{pt}_d$. Therefore, the conclusion of the lemma is immediate.
		   \end{proof}
		   
	   \begin{definition}
		   We can list all non-isomorhpic parallel-tree decomposed graphs into an infinite sequence $\left(F_1,T^{r_1}_1\right),\left(F_2,T^{r_2}_2\right),\ldots$ with the following order.
		   \begin{itemize}[topsep=0pt,leftmargin=17pt]
			   \setlength{\itemsep}{-3pt}
			   \item The order requires $|V_{T_i}|\leq |V_{T^j}|$ for any $i<j$.
			   \item If $|V_{T_i}|=|V_{T^j}|$ for any $i<j$, then $|F_{T_i}|\leq |F_{T_j}|$.
		   \end{itemize}
		   Then we define following function matrix and function vector based on the order defined above.
		   \begin{enumerate}[topsep=0pt,leftmargin=17pt]
			   \setlength{\itemsep}{-3pt}
			   \item Let $f:\mathcal{S}^{\mathsf{pt}} \times \mathcal{S}^{\mathsf{pt}} \rightarrow \mathbb{N}$ be any mapping. Define the associated matrix $\mM^{f} \in \mathbb{N}^{\mathbb{N}_{+} \times \mathbb{N}_{+}}$, where $A_{i,j}^f = f\left((F_i, T_i^{r_i}), (F_j, T_j^{r_j})\right)$. Similarly, we consider the finite-iteration version. Let $f:\mathcal{S}^{\mathsf{pt}}_d \times \mathcal{S}^{\mathsf{pt}}_d \rightarrow \mathbb{N}$ be any mapping. Define the associated matrix $\mM^{f} \in \mathbb{N}^{\mathbb{N}_{+} \times \mathbb{N}_{+}}$, where $M_{i,j}^{f,(d)} = f\left((F_i, T_i^{r_i}), (F_j, T_j^{r_j})\right)$.
			   \item Let $g:\mathcal{S}^{\mathsf{pt}} \times \mathcal{G} \rightarrow \mathbb{N}$ be any mapping. Given a graph $G \in \mathcal{G}$, define the (infinite) vector $\vl_G^g \in \mathbb{N}^{\mathbb{N}_{+}}$, where $l_{G,i}^g = g\left((F_i, T_i^{r_i}), G\right)$. For the finite-iteration version, let $g:\mathcal{S}^{\mathsf{pt}}_d \times \mathcal{G} \rightarrow \mathbb{N}$ be any mapping. Given a graph $G \in \mathcal{G}$, define the (infinite) vector $\vl_G^{g,(d)} \in \mathbb{N}^{\mathbb{N}_{+}}$, where $l_{G,i}^{g,(d)} = g\left((F_i, T_i^{r_i}), G\right)$.
			   \item Let $h:\mathcal{G} \times \mathcal{G} \rightarrow \mathbb{N}$ be any mapping. Given a graph $G \in \mathcal{G}$, define the (infinite) vector $\vl_G^h \in \mathbb{N}^{\mathbb{N}_{+}}$, where $l_{G,i}^h = h(F_i, G)$. In the finite-iteration setting, let $h:\mathcal{G} \times \mathcal{G} \rightarrow \mathbb{N}$ be any mapping. Given a graph $G \in \mathcal{G}$, define the (infinite) vector $\vl_G^{h,(d)} \in \mathbb{N}^{\mathbb{N}_{+}}$, where $l_{G,i}^{h,(d)} = h(F_i, G)$.
		   \end{enumerate}
		   
		   \end{definition}
		   \begin{theorem}
			   \label{thm:equivalence_cnt_hom}
			   For any two graphs $G$ and $H$, we have $\hom((F, T^r), G) = \hom((F, T^r), H)$ for all parallel-tree decomposed graphs $(F, T^r)$ iff $\cnt((F, T^r), G) = \cnt((F, T^r), H)$ for all parallel-tree decomposed graphs. Similarly, in the finite-iteration setting, $\hom((F, T^r), G) = \hom((F, T^r), H)$ holds for all $(F, T^r) \in \mathcal{S}^{pt}_d$ iff $\cnt((F, T^r), G) = \cnt((F, T^r), H)$ for all $(F, T^r) \in \mathcal{S}^{pt}_d$.
			   \begin{proof}
				   We consider each direction separately.
				   \begin{enumerate}[topsep=0pt,leftmargin=17pt]
					   \setlength{\itemsep}{-3pt}
					   \item First, we prove that if $\cnt((F, T^r), G) = \cnt((F, T^r), H)$ for all parallel-tree decomposed graphs, then $\hom((F, T^r), G) = \hom((F, T^r), H)$ for all such graphs $(F, T^r)$. According to \cref{thm:equivalence_homomorphism_tree_count}, this result can be expressed in matrix form as $\vl^{\hom}_G = \mM^{\strhom} \cdot \vl^{\cnt}_G$ and $\vl^{\hom}_H = \mM^{\strhom} \cdot \vl^{\cnt}_H$ for all parallel trees $F$. This directly implies that $\vl^{\cnt}_G = \vl^{\cnt}_H$ leads to $\vl^{\hom}_G = \vl^{\hom}_H$. Similarly, in the finite-iteration setting, the result from \cref{thm:equivalence_homomorphism_tree_count_finite_iteration} can be rewritten in matrix form as $\vl^{\hom,(d)}_G = \mM^{\strhom,(d)} \cdot \vl^{\cnt,(d)}_G$. Therefore, if $\vl^{\cnt,(d)}_G = \vl^{\cnt,(d)}_H$, it follows that $\vl^{\hom,(d)}_G = \vl^{\hom,(d)}_H$.
					   \item For the second direction of the lemma, it suffices to prove the finite-iteration setting, as the general case directly follows. According to \cref{lm:homomorphism_property_finite_iteration}, we have the following equations:
					   \begin{align*}
						   &\vl_G^{\strhom,(d)} = \mM^{\strsurj,(d)} \cdot \mM^{\strinj,(d)} \cdot (\mM^{\aut,(d)})^{-1} \cdot \vl_G^{\cnt,(d)},\\
						   &\vl_H^{\strhom,(d)} = \mM^{\strsurj,(d)} \cdot \mM^{\strinj,(d)} \cdot (\mM^{\aut,(d)})^{-1} \cdot \vl_H^{\cnt,(d)}.
					   \end{align*}
					   By simple observation, $\mM^{\aut}$ is a diagonal matrix where all diagonal elements are positive integers. Moreover, $\mM^{\strinj}$ is an upper triangular matrix with positive diagonal elements. This holds because $\strinj((F_i, T_i^{r_i}), (F_j, T_j^{r_j})) > 0$ only when $|V_{T_i}| \leq |V_{T_j}|$. Since $\mM^{\strsurj,(d)}$ is a lower triangular matrix with positive diagonal elements, it is invertible. Thus,
					   \begin{align*}
						   \mM^{\strinj,(d)} \cdot \vl_G^{\cnt,(d)} = \mM^{\strinj,(d)} \cdot \vl_H^{\cnt,(d)}.
					   \end{align*}
					   Additionally, by the definition of an unfolding tree, there are only finitely many non-zero elements in both $\vl_G^{\cnt,(d)}$ and $\vl_H^{\cnt,(d)}$, and the corresponding non-zero indices are restricted to a fixed (finite) set. In this case, the upper triangular matrix $\mM^{\strinj,(d)}$ reduces to a finite-dimensional matrix, so we conclude that $\vl_G^{\cnt,(d)} = \vl_H^{\cnt,(d)}$. By enumerating over all $d \geq 0$, we obtain that $\vl_G^{\cnt} = \vl_H^{\cnt}$.
				   \end{enumerate}
				   Combining item 1 and item 2, we finish the proof of the lemma.
			   \end{proof}
			   
			   \end{theorem}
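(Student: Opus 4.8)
The plan is to prove both implications by recasting the two combinatorial identities already in hand --- \cref{thm:equivalence_homomorphism_tree_count} (which expresses $\hom$ through $\strhom$ and $\cnt$) and \cref{lm:homomorphism_property} (which expresses $\strhom$ through $\strsurj$, $\strinj$, $\aut$) --- as identities between the infinite matrices and vectors introduced above, and then exploiting triangularity, in the same spirit as the classical Möbius-type inversions for ordinary homomorphism counts \citep{lovasz2012large,curticapean2017homomorphisms}. I would carry out the finite-iteration statement (over $\mathcal{S}^{pt}_d$) in full and recover the unrestricted statement by ranging over all $d\in\mathbb N$, since every parallel-tree decomposed graph has some finite depth.

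For the direction ``$\cnt$-equality $\Rightarrow$ $\hom$-equality'', I would simply read \cref{thm:equivalence_homomorphism_tree_count_finite_iteration} in matrix form as $\vl_G^{\hom,(d)}=\mM^{\strhom,(d)}\,\vl_G^{\cnt,(d)}$; the defining sum of this product is finite because a strong homomorphism preserves tree-depth while $\cnt((\tilde F,\tilde T^r),G)$ vanishes for all but finitely many $(\tilde F,\tilde T^r)$. Hence $\vl_G^{\cnt,(d)}=\vl_H^{\cnt,(d)}$ forces $\vl_G^{\hom,(d)}=\vl_H^{\hom,(d)}$, and taking $d\to\infty$ settles the general case.

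The real content is the converse. Here I would combine the two identities: \cref{lm:homomorphism_property_finite_iteration} yields the factorization $\mM^{\strhom,(d)}=\mM^{\strsurj,(d)}\,(\mM^{\aut,(d)})^{-1}\,\mM^{\strinj,(d)}$, so that $\vl_G^{\hom,(d)}=\mM^{\strsurj,(d)}\,(\mM^{\aut,(d)})^{-1}\,\mM^{\strinj,(d)}\,\vl_G^{\cnt,(d)}$. The structural facts I would establish are: $\mM^{\aut,(d)}$ is diagonal with positive entries; $\mM^{\strsurj,(d)}$ is lower triangular with positive diagonal, because a surjective strong homomorphism can increase neither $|V_T|$ nor $|V_F|$ and always includes the identity on an isomorphic target; and $\mM^{\strinj,(d)}$ is upper triangular with positive diagonal, by the dual property of injective strong homomorphisms --- both using the depth-preservation clause of \cref{def:strong_homomorphism} to stay inside $\mathcal{S}^{pt}_d$. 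From $\vl_G^{\hom,(d)}=\vl_H^{\hom,(d)}$ I would then peel off the invertible lower-triangular factor $\mM^{\strsurj,(d)}$ and the invertible diagonal factor $(\mM^{\aut,(d)})^{-1}$, obtaining $\mM^{\strinj,(d)}\vl_G^{\cnt,(d)}=\mM^{\strinj,(d)}\vl_H^{\cnt,(d)}$; since $\vl_G^{\cnt,(d)}$ and $\vl_H^{\cnt,(d)}$ are supported on a common finite index set (the unfolding trees of $G$ and of $H$ of depth at most $d$), the relevant block of $\mM^{\strinj,(d)}$ is an honest finite invertible upper-triangular matrix, whence $\vl_G^{\cnt,(d)}=\vl_H^{\cnt,(d)}$; ranging over all $d$ gives $\vl_G^{\cnt}=\vl_H^{\cnt}$.

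I expect the main obstacle to be the bookkeeping with infinite matrices: verifying that every sum appearing is finite (this rests on depth-preservation of strong homomorphisms and on $\cnt$ having finite support for a fixed graph), that the triangularity claims are exactly synchronized with the chosen ordering of $\mathcal{S}^{pt}$ --- first by $|V_T|$, then by $|V_F|$, which is precisely where the surjectivity and injectivity constraints on $\rho$ and $\tau$ come in --- and that ``inverting'' the triangular matrices is legitimate. The last point holds on finite-support vectors by back-substitution (minimal-index argument for the lower-triangular factor, maximal-index argument for the upper-triangular one), which is exactly why the reduction to $\mathcal{S}^{pt}_d$ is the right first step.
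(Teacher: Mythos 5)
Your proposal follows the paper's argument step for step: express $\hom$ via $\strhom$ and $\cnt$ (Theorems~\ref{thm:equivalence_homomorphism_tree_count}/\ref{thm:equivalence_homomorphism_tree_count_finite_iteration}) for the easy direction, factor $\mM^{\strhom,(d)}$ through $\strsurj$, $(\aut)^{-1}$, $\strinj$ via Lemma~\ref{lm:homomorphism_property_finite_iteration}, and invert using the triangularity of $\mM^{\strsurj,(d)}$ and $\mM^{\strinj,(d)}$ together with the finite support of $\vl_G^{\cnt,(d)}$. This matches the paper's proof (your ordering $\mM^{\strsurj}(\mM^{\aut})^{-1}\mM^{\strinj}$ and your justification that injective/surjective strong homomorphisms are monotone in both $|V_T|$ and $|V_F|$ are in fact the cleaner reading of what the paper intends).
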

   \subsection{Step 3: Finding Pebble Game for Spectral Invariant GNN}
   \label{sec:proof_step_3}
   In this section, we introduce the pebble game and demonstrate its equivalence to the expressive power of spectral invariant GNN.	
   \subsubsection{Pebble Game}
   We first formally define the rules of pebble game. 
   \begin{definition}[\textbf{Pebble game for spectral invariant GNN}] The pebbling game is conducted on two graphs $G = (V_G, E_G)$ and $H = (V_H, E_H)$. Initially, each graph is equipped with two distinct pebbles, denoted as $u$ and $v$, which start off outside the graphs. The game involves two players: the $Spoiler$ and the $duplicator$. We now describe the procedure of the game as follows:
	\begin{itemize}[topsep=0pt,leftmargin=17pt]
		\setlength{\itemsep}{0pt}
		\item \emph{Initialization:}The Spoiler first selects a non-empty subset \( V^S \) from either \( V_G \) or \( V_H \), and the duplicator responds with a subset \( V^D \) from the other graph, ensuring that \( |V^D| = |V^S| \). The duplicator loses the game if no feasible choice is available. The Spoiler places a pebble \( u \) on a vertex in \( V^D \), and the duplicator places a corresponding pebble \( u \) in \( V^S \). Similarly, the Spoiler and duplicator repeat the process to place two pebbles, \( v \). Specifically, the Spoiler selects a non-empty subset \( V^S \) from either \( V_G \) or \( V_H \), and the duplicator responds by selecting a subset \( V^D \) from the other graph, maintaining \( |V^S| = |V^D| \). The Spoiler then places \( v \) on a vertex in \( V^D \), while the duplicator places the corresponding \( v \) in \( V^S \).
		\item \emph{Main Process:} The game iteratively repeats the following steps, where, in each iteration, the Spoiler may choose freely between the following two actions:
		\begin{enumerate}[topsep=0pt,leftmargin=17pt]
			\setlength{\itemsep}{0pt}
			\item Action 1 (moving pebble \( v \)): The Spoiler first selects a non-empty subset \( V^S \) from either \( V_G \) or \( V_H \), and the duplicator responds with a subset \( V^D \) from the other graph, ensuring that \( |V^D| = |V^S| \). The Spoiler then moves pebble \( v \) to a vertex in \( V^D \), and the duplicator moves the corresponding pebble \( v \) to a vertex in \( V^S \).
			\item Action 2 (moving pebble \( u \)): The Spoiler first selects a non-empty subset \( V^S \) from either \( V_G \) or \( V_H \), and the duplicator responds with a subset \( V^D \) from the other graph, ensuring that \( |V^D| = |V^S| \). The Spoiler then moves pebble \( u \) to a vertex in \( V^D \), and the duplicator moves the corresponding pebble \( u \) to a vertex in \( V^S \).
		\end{enumerate}
		
		\item \emph{Termination:} The Spoiler wins if, after a certain number of rounds, $\omega_G^{\star}(u,v)$ for graph $G$ differs from $\omega_H^{\star}(u,v)$ for graph $H$. Conversely, the duplicator wins if the Spoiler is unable to achieve a win after any number of rounds.
	\end{itemize}
\end{definition}
   \subsubsection{Equivalence between Spectral GNNs and Pebbling Games}

\begin{lemma}
	Let \( l \in \mathbb{N} \) be any integer. For any vertices \( u_G, v_G \in \mathcal{V}_G \) and \( u_H, v_H \in \mathcal{V}_H \), if \( \chi_G^{\mathsf{Walk},(l)}(u) \neq \chi_H^{\mathsf{Walk},(l)}(v) \), then the Spoiler can win the game in \( l-1 \) rounds when the two pebbles $u$ are initially placed on vertices \( u_G \in V_G \) and \( u_H \in V_H \) in graphs \( G \) and \( H \), respectively.
	\begin{proof}
   The proof proceeds by induction on $l$. First, consider the base case where $l=0$. In this case, the statement is trivially true.\\
   Now, assume that the lemma holds for all $l \leq L$, and consider the case where $l = L+1$. Suppose $\chi_G^{\mathsf{Walk},(L+1)}(u_G)\neq\chi_H^{\mathsf{Walk},(L+1)}(u_H)$. If $\chi_G^{\mathsf{Walk},(L)}(u_G)\neq\chi_H^{\mathsf{Walk},(L)}(u_H)$, then by the inductive hypothesis, Spoiler wins. Otherwise, we have
   \begin{align*}
	   \ldblbrace(\omega_G^\star(u_G,v_G),\chi_G^{\mathsf{Walk},(L)}(v_G)):v_G\in\mathcal{V}_G \rdblbrace \neq \ldblbrace(\omega_H^\star(u_H,v_H),\chi_H^{\mathsf{Walk},(L)}(v_H)):v_H\in\mathcal{V}_H \rdblbrace.
   \end{align*}
   Therefore, there exists a color $c$ and $x\in\mathbb{R}^{|V_G|}$ such that $|\mathcal{C}_G(u_G,c,x)|\neq|\mathcal{C}_H(u_H,c,x)|$, where
   \begin{align*}
	   \mathcal{C}_G(u_G,c,x) = \left\{v_G\in\mathcal{V}_G : \chi_G^{\mathsf{Walk},(L)}(v_G) = c, \omega_G^\star(u_G,v_G) = x\right\}.
   \end{align*}
   If \( |\mathcal{C}_G(u_G, c, x)| > |\mathcal{C}_H(u_H, c, x)| \), the Spoiler can select the vertex subset \( V^S = \mathcal{C}_G(u_G, c, x) \subset \mathcal{V}_G \). Regardless of how the Duplicator responds with a subset \( V^D \subset \mathcal{V}_H \), there exists a vertex \( v_H \in V^D \) such that \( (\omega_H^\star(u_H, v_H), \chi_H^{\mathsf{Walk},(L)}(v_H)) \neq (x, c) \). The Spoiler then selects this vertex \( x^{\mathsf{S}} = v_H \), and no matter how the Duplicator responds with \( x^{\mathsf{D}} = v_G \in V^S \), we have either \( \omega_G^\star(u_G, v_G) \neq \omega_H^\star(u_H, v_H) \) or \( \chi_G^{\mathsf{Walk},(L)}(v_G) \neq \chi_H^{\mathsf{Walk},(L)}(v_H) \). If \( \omega_G^\star(u_G, v_G) \neq \omega_H^\star(u_H, v_H) \), the Spoiler wins the game immediately. If \( \chi_G^{\mathsf{Walk},(L)}(v_G) \neq \chi_H^{\mathsf{Walk},(L)}(v_H) \), the remainder of the game is equivalent to one where the two pebbles $u$ are initially placed on \( v_G \in V_G \) and \( v_H \in V_H \) in graphs \( G \) and \( H \) respectively. By the inductive hypothesis, the Spoiler wins the game.
   
   If $|\mathcal{C}_G(u_G,c,x)| < |\mathcal{C}_H(u_H,c,x)|$, Spoiler can select the vertex subset $V^S = \mathcal{C}_H(u_H,c,x) \subset \mathcal{V}_H$, and the conclusion follows analogously.
   \end{proof}
   \end{lemma}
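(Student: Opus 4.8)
The plan is a straightforward induction on $l$, following the classical template by which a pebble game characterizes a color‑refinement procedure, specialized here to the walk‑encoding refinement $\chi^{\mathsf{Walk}}$ — whose only ``edge feature'' is the walk vector $\omega^\star$ — and to the two‑pebble game whose Spoiler‑win condition is a mismatch between $\omega_G^\star(u,v)$ and $\omega_H^\star(u,v)$. For $l=0$ the colors $\chi_G^{\mathsf{Walk},(0)}$ and $\chi_H^{\mathsf{Walk},(0)}$ are constant, so the hypothesis $\chi_G^{\mathsf{Walk},(0)}(u_G)\neq\chi_H^{\mathsf{Walk},(0)}(u_H)$ is never met and there is nothing to prove.

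For the inductive step, assume the claim at all depths $\le L$ and suppose $\chi_G^{\mathsf{Walk},(L+1)}(u_G)\neq\chi_H^{\mathsf{Walk},(L+1)}(u_H)$ with the $u$‑pebbles on $u_G$ and $u_H$. If already $\chi_G^{\mathsf{Walk},(L)}(u_G)\neq\chi_H^{\mathsf{Walk},(L)}(u_H)$, the inductive hypothesis finishes the game in $L-1\le L$ rounds; taking this shortcut is what keeps the round bound tight rather than merely linear. Otherwise, unfolding the definition of $\chi^{\mathsf{Walk},(L+1)}$ shows that the two aggregated multisets
\[
\ldblbrace(\omega_G^\star(u_G,y),\chi_G^{\mathsf{Walk},(L)}(y)) : y\in V_G\rdblbrace \quad\text{and}\quad \ldblbrace(\omega_H^\star(u_H,y),\chi_H^{\mathsf{Walk},(L)}(y)) : y\in V_H\rdblbrace
\]
differ, so some pair $(x,c)$ — a walk vector together with a depth‑$L$ color — occurs with different multiplicity on the two sides; by the symmetry of the game in $G$ and $H$ we may assume the $G$‑side multiplicity is strictly larger.

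The Spoiler then spends one round on a $v$‑move, challenging with the set $V^{\mathsf{S}}\subseteq V_G$ of all $y$ with $(\omega_G^\star(u_G,y),\chi_G^{\mathsf{Walk},(L)}(y))=(x,c)$. Any response $V^{\mathsf{D}}\subseteq V_H$ of the same (strictly larger) size must contain a vertex $v_H$ with $(\omega_H^\star(u_H,v_H),\chi_H^{\mathsf{Walk},(L)}(v_H))\neq(x,c)$; the Spoiler places $v$ on $v_H$, forcing the Duplicator to answer with some $v_G\in V^{\mathsf{S}}$, hence with $(\omega_G^\star(u_G,v_G),\chi_G^{\mathsf{Walk},(L)}(v_G))=(x,c)$. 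If $\omega_G^\star(u_G,v_G)\neq\omega_H^\star(u_H,v_H)$ the Spoiler has already won; otherwise $\chi_G^{\mathsf{Walk},(L)}(v_G)\neq\chi_H^{\mathsf{Walk},(L)}(v_H)$, and since the two pebbles and the termination predicate are symmetric, the remaining position is an instance of the same game with the labels $u$ and $v$ interchanged and the active pebble pair sitting on $(v_G,v_H)$ under the depth‑$L$ mismatch hypothesis. The inductive hypothesis yields a win in $L-1$ further rounds, for a total of $1+(L-1)=(L+1)-1$.

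I expect no conceptual obstacle; the work is all in the bookkeeping. Two points need care: first, one must actually take the ``finish directly by the inductive hypothesis when the depth‑$L$ colors already differ'' branch, and pin down a convention for the small cases $l\in\{0,1\}$ where the Spoiler's very first $v$‑placement already exposes the mismatch, so that the count comes out as $l-1$; second, one must justify that after the $v$‑move the Spoiler is genuinely in a fresh copy of the same game — i.e.\ that the rules and win condition are invariant under swapping $u\leftrightarrow v$, and that a configuration with only one pebble pair placed is never worse for the Spoiler than one with both placed. The converse direction (no Spoiler win $\Rightarrow$ equal walk colors, hence equal spectral colors via \cref{lm:equivalence_walk_spectral}) then follows by the analogous induction, giving \cref{lm:original_pebble_game_equivalence}.
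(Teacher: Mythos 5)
Your proof is correct and follows essentially the same induction as the paper's: the same two-case split (depth-$L$ colors already differ vs. multiset mismatch), the same Spoiler challenge with the over-represented set $\mathcal{C}_G(u_G,c,x)$, and the same appeal to the inductive hypothesis on the resulting pair $(v_G,v_H)$. The extra care you flag about the $u\leftrightarrow v$ relabeling and the $l\in\{0,1\}$ boundary is sound bookkeeping, but it does not change the argument.
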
	
   \begin{lemma}
	For any vertices \( u_G \in \mathcal{V}_G \) and \( u_H \in \mathcal{V}_H \), if \( \chi_G^{\mathsf{Walk},(l+1)}(u_G) = \chi_H^{\mathsf{Walk},(l+1)}(u_H) \), then the Spoiler cannot win the game within \( l \) rounds when the two pebbles are initially placed on vertices \( u_G \in V_G \) and \( u_H \in V_H \) in graphs \( G \) and \( H \), respectively.
	   \begin{proof}
	   The proof proceeds by induction on $l$. The base case $l = 0$ is trivially true. Now, assume the statement holds for $l \leq L$, and consider the case $l = L+1$. Suppose $\chi_G^{\mathsf{Walk},(L+2)}(u_G) = \chi_H^{\mathsf{Walk},(L+2)}(u_H)$. Then,
	   \begin{align*}
		   \ldblbrace\left(\omega_G^\star(u_G,v_G), \chi_G^{\mathsf{Walk},(L+1)}(v_G)\right): v_G \in \mathcal{V}_G \rdblbrace = \ldblbrace\left(\omega_H^\star(u_H,v_H), \chi_H^{\mathsf{Walk},(L+1)}(v_H)\right): v_H \in \mathcal{V}_H \rdblbrace.
	   \end{align*}
	   If Spoiler selects a subset $V^S$, and if $V^S \subset \mathcal{V}_G$, Duplicator can respond with a subset $V^D \subset \mathcal{V}_H$ such that
	   \begin{align*}
		   \ldblbrace\left(\omega_G^\star(u_G,v_G), \chi_G^{\mathsf{Walk},(L+1)}(v_G)\right): v_G \in V^S \rdblbrace = \ldblbrace\left(\omega_H^\star(u_H,v_H), \chi_H^{\mathsf{Walk},(L+1)}(v_H)\right): v_H \in V^D \rdblbrace.
	   \end{align*}
	   Similarly, if $V^S \subset \mathcal{V}_H$, Duplicator can respond with a subset $V^D \subset \mathcal{V}_G$ such that
	   \begin{align*}
		   \ldblbrace\left(\omega_G^\star(u_G,v_G), \chi_G^{\mathsf{Walk},(L+1)}(v_G)\right): v_G \in V^D \rdblbrace = \ldblbrace\left(\omega_H^\star(u_H,v_H), \chi_H^{\mathsf{Walk},(L+1)}(v_H)\right): v_H \in V^S \rdblbrace.
	   \end{align*}
	   In both cases, it is clear that $|V^S| = |V^D|$. Next, regardless of how Spoiler moves the pebble $v$ to a vertex $x^{\mathsf{S}} \in V^D$, Duplicator can always respond by moving the corresponding pebble $v$ to a vertex $x^{\mathsf{D}} \in V^S$, such that
	   \begin{align*}
		   \left(\omega_G^\star(u_G,\tilde{v}_G), \chi_G^{\mathsf{Walk},(L+1)}(\tilde{v}_G)\right) = \left(\omega_H^\star(u_H,\tilde{v}_H), \chi_H^{\mathsf{Walk},(L+1)}(\tilde{v}_H)\right),
	   \end{align*}
	   where $(\tilde{v}_G, \tilde{v}_H)$ represents the new positions of the pebbles. The remaining game is then equivalent to a game in which the two pebbles are initially placed on vertices \( \tilde{v}_G \in V_G \) and \( \tilde{v}_H \in V_H \) in graphs \( G \) and \( H \), respectively.

	   \end{proof}
	   \end{lemma}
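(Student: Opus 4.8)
The plan is to prove the statement by induction on $l$, dualizing the preceding (Spoiler‑side) lemma: instead of a Spoiler strategy I will construct a Duplicator strategy. The one ingredient used throughout is the following reading of the walk‑encoding update rule in \cref{lm:equivalence_walk_spectral}: if $a_G\in V_G$ and $a_H\in V_H$ satisfy $\chi_G^{\mathsf{Walk},(k+1)}(a_G)=\chi_H^{\mathsf{Walk},(k+1)}(a_H)$, then there is a bijection $\pi=\pi_{a_G,a_H}\colon V_G\to V_H$ with $\omega_G^{\star}(a_G,w)=\omega_H^{\star}(a_H,\pi(w))$ and $\chi_G^{\mathsf{Walk},(k)}(w)=\chi_H^{\mathsf{Walk},(k)}(\pi(w))$ for every $w\in V_G$; moreover $\chi^{(k+1)}$ determines $\chi^{(k)}$, so agreement at level $k+1$ implies agreement at level $k$.

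First I would fix the invariant the Duplicator maintains. Call a position with $k$ rounds still to play \emph{$k$-good} if pebble $u$ lies on $(a_G,a_H)$, pebble $v$ lies on $(b_G,b_H)$, and the three conditions $\chi_G^{\mathsf{Walk},(k+1)}(a_G)=\chi_H^{\mathsf{Walk},(k+1)}(a_H)$, $\chi_G^{\mathsf{Walk},(k+1)}(b_G)=\chi_H^{\mathsf{Walk},(k+1)}(b_H)$, and $\omega_G^{\star}(a_G,b_G)=\omega_H^{\star}(a_H,b_H)$ all hold; the last one says precisely that the Spoiler has not won. I would then show by induction on $k$ that from a $k$-good position the Duplicator survives all $k$ remaining rounds. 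In the inductive step, from a $(k+1)$-good position suppose the Spoiler plays Action~1 and picks $V^{\mathsf{S}}$: if $V^{\mathsf{S}}\subseteq V_G$, the Duplicator replies with $V^{\mathsf{D}}:=\pi_{a_G,a_H}(V^{\mathsf{S}})$, which has the same size; when the Spoiler moves pebble $v$ in $H$ to some $b_H'\in V^{\mathsf{D}}$, the Duplicator moves pebble $v$ in $G$ to $b_G':=\pi_{a_G,a_H}^{-1}(b_H')\in V^{\mathsf{S}}$, so that $\omega_G^{\star}(a_G,b_G')=\omega_H^{\star}(a_H,b_H')$ (the Spoiler does not win this round) and $\chi_G^{\mathsf{Walk},(k+1)}(b_G')=\chi_H^{\mathsf{Walk},(k+1)}(b_H')$; combined with $\chi_G^{\mathsf{Walk},(k+1)}(a_G)=\chi_H^{\mathsf{Walk},(k+1)}(a_H)$ the new position is $k$-good, and the hypothesis closes the case. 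The subcase $V^{\mathsf{S}}\subseteq V_H$ is symmetric (reply with $\pi_{a_G,a_H}^{-1}(V^{\mathsf{S}})$), and Action~2 is the same after swapping $a$ with $b$ and using $\pi_{b_G,b_H}$, which is allowed because $\omega^{\star}$ is symmetric in its arguments ($\mathbf{A}^{j}$ is symmetric for an undirected graph). The base case $k=0$ is immediate, since a $0$-good position has $\omega_G^{\star}(a_G,b_G)=\omega_H^{\star}(a_H,b_H)$ and no moves remain.

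To conclude the lemma I would check that the hypothesis $\chi_G^{\mathsf{Walk},(l+1)}(u_G)=\chi_H^{\mathsf{Walk},(l+1)}(u_H)$ makes the starting position $l$-good: placing both pebbles on $(u_G,u_H)$ gives the first two conditions at once, and the third, $\omega_G^{\star}(u_G,u_G)=\omega_H^{\star}(u_H,u_H)$, follows because $\chi_G^{\mathsf{Walk},(1)}(u_G)=\chi_H^{\mathsf{Walk},(1)}(u_H)$ (a consequence of the hypothesis) together with the constancy of $\chi^{(0)}$ forces $\ldblbrace\omega_G^{\star}(u_G,w):w\in V_G\rdblbrace=\ldblbrace\omega_H^{\star}(u_H,w):w\in V_H\rdblbrace$, and $\omega^{0}(x,y)=\mathbf{1}[x=y]$ picks out $\omega^{\star}(u_G,u_G)$, resp.\ $\omega^{\star}(u_H,u_H)$, as the unique entry of this multiset whose $0$-th coordinate is $1$. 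Then the invariant claim gives that the Spoiler cannot win within $l$ rounds. (Under the alternative convention in which pebble $v$ is placed by a free Spoiler move before the rounds start, that placement is answered identically by the Duplicator via $\pi_{u_G,u_H}$ and again yields an $l$-good position, so nothing is lost.)

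The part I expect to be delicate is not a single idea but the bookkeeping: verifying that one round lowers both the round counter and the refinement level in all three conditions of ``$k$-good'' by exactly one; that the Duplicator has an answer for each of the four combinations (Action~1 or Action~2, subset chosen in $G$ or in $H$); and that the starting position really satisfies the invariant at the bottom of the induction, which is where the $\omega^{0}$-coordinate remark is needed and where one tacitly uses that agreement of $\chi^{\mathsf{Walk},(l+1)}$ forces agreement of $\chi^{\mathsf{Walk},(j)}$ for all $j\le l+1$. Beyond that the argument mirrors the Spoiler‑side lemma step for step.
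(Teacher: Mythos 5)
Your proposal is correct and takes essentially the same route as the paper: induction on $l$, with the Duplicator's reply at each round read off from the bijection $\pi$ that the walk-encoding update rule guarantees between $V_G$ and $V_H$ whenever $\chi^{\mathsf{Walk},(k+1)}$ agrees. Your explicit three-condition ``$k$-good'' invariant tracking both pebbles is a modest sharpening of the paper's argument, since it makes the Action-2 (move-$u$) case, the role of $\omega^\star$'s symmetry, and the verification that the initial position satisfies $\omega^\star_G(u_G,u_G)=\omega^\star_H(u_H,u_H)$ explicit, where the paper compresses all of this into the informal remark that ``the remaining game is equivalent to a game in which the two pebbles are initially placed on $\tilde v_G$ and $\tilde v_H$.''
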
		
Combining previous two lemmas, we have the following result:
\begin{lemma}
   Given graph $G$ and $H$, Spoiler cannot wins the pebble game in $d$ steps iff $\chi_{G}^{\mathsf{Spec},(d)}(G)=\chi_{H}^{\mathsf{Spec},(d)}(H)$.
\end{lemma}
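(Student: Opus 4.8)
The plan is to transfer the question to the walk-encoding GNN through \cref{lm:equivalence_walk_spectral} (so that $\chi_G^{\mathsf{Spec},(d)}(G)=\chi_H^{\mathsf{Spec},(d)}(H)$ may be replaced by $\chi_G^{\mathsf{Walk},(d)}(G)=\chi_H^{\mathsf{Walk},(d)}(H)$), and then to obtain the full pebble game from the two preceding lemmas, which already describe the game \emph{after} pebble $u$ has been placed; the only additional work is the opening move that places pebble $u$.

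The first step is to combine the two preceding lemmas into a single equivalence: starting from a configuration in which the pebbles $u$ occupy $u_G\in V_G$ and $u_H\in V_H$, the spoiler has a winning strategy in at most $k$ further rounds if and only if $\chi_G^{\mathsf{Walk},(k+1)}(u_G)\neq\chi_H^{\mathsf{Walk},(k+1)}(u_H)$ — the forward direction is the first preceding lemma (with $l=k+1$), and the backward direction is the contrapositive of the second preceding lemma (with $l=k$). The second step is to analyze the placement of pebble $u$. For the ``duplicator survives'' direction, suppose $\chi_G^{\mathsf{Spec},(d)}(G)=\chi_H^{\mathsf{Spec},(d)}(H)$; by \cref{lm:equivalence_walk_spectral} the multisets $\ldblbrace\chi_G^{\mathsf{Walk},(d)}(w):w\in V_G\rdblbrace$ and $\ldblbrace\chi_H^{\mathsf{Walk},(d)}(w):w\in V_H\rdblbrace$ agree, so to any subset $V^{\mathsf{S}}$ of one graph named by the spoiler the duplicator can reply with a subset $V^{\mathsf{D}}$ of the other graph that contains, for every color, exactly as many vertices of that level-$d$ color as $V^{\mathsf{S}}$ does; this is possible precisely because the two global color multisets coincide, and it forces $|V^{\mathsf{D}}|=|V^{\mathsf{S}}|$. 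Then no matter which vertex of $V^{\mathsf{D}}$ the spoiler pebbles, the duplicator places its pebble on a vertex of $V^{\mathsf{S}}$ with the same level-$d$ color, so the resulting pair $(u_G,u_H)$ satisfies $\chi_G^{\mathsf{Walk},(d)}(u_G)=\chi_H^{\mathsf{Walk},(d)}(u_H)$ and hence also agrees at every lower level; the combined equivalence then prohibits a spoiler win in the remaining rounds.

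For the converse, assume $\chi_G^{\mathsf{Spec},(d)}(G)\neq\chi_H^{\mathsf{Spec},(d)}(H)$, so the level-$d$ color multisets differ and some color $c$ has strictly larger multiplicity in, say, $G$. The spoiler names $V^{\mathsf{S}}=\{w\in V_G:\chi_G^{\mathsf{Walk},(d)}(w)=c\}$; any equal-size reply $V^{\mathsf{D}}\subseteq V_H$ must contain, by pigeonhole, a vertex whose level-$d$ color is not $c$, and the spoiler pebbles $u$ there while the duplicator is forced onto a $c$-colored vertex of $V^{\mathsf{S}}$. Since now $\chi_G^{\mathsf{Walk},(d)}(u_G)\neq\chi_H^{\mathsf{Walk},(d)}(u_H)$, the combined equivalence supplies the spoiler with a win in the remaining rounds. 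Transporting the conclusion back through \cref{lm:equivalence_walk_spectral} replaces $\mathsf{Walk}$ by $\mathsf{Spec}$ and finishes the argument.

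The part I expect to be the main obstacle is the step-counting bookkeeping that makes level $d$ (rather than $d\pm 1$) the correct index: placing pebble $u$ initiates the process, placing pebble $v$ is the first move governed by the two preceding lemmas, and each subsequent pebble move raises the relevant $\chi^{\mathsf{Walk}}$-index by one, so one must check carefully that ``$d$ steps'' of the full game lines up with agreement of the level-$d$ colors. A secondary point to verify is that the duplicator's color-balancing choice of $V^{\mathsf{D}}$ during the opening move is always available, which follows from the equality of the global level-$d$ color multisets together with the size constraint $|V^{\mathsf{D}}|=|V^{\mathsf{S}}|$.
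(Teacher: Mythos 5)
Your proof is correct and is precisely the gap-filling the paper leaves implicit: the paper's entire justification of this lemma is the single phrase ``Combining previous two lemmas, we have the following result,'' and your argument — first replacing $\mathsf{Spec}$ by $\mathsf{Walk}$ via \cref{lm:equivalence_walk_spectral}, then packaging the two preceding lemmas into the one-pebble equivalence (``Spoiler wins in at most $k$ further rounds from position $(u_G,u_H)$ iff $\chi^{\mathsf{Walk},(k+1)}(u_G)\neq\chi^{\mathsf{Walk},(k+1)}(u_H)$''), and finally handling the opening move by color-balancing $V^{\mathsf{D}}$ against $V^{\mathsf{S}}$ using equality of the global level-$d$ color multisets — is exactly what a reader is expected to reconstruct. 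Your worry about the step-counting offset is justified by the paper itself: the appendix states this lemma with index $d$ while the main text (\cref{lm:original_pebble_game_equivalence}) states the ostensibly identical lemma with index $d+1$, and the appendix then claims to have proven the main-text version; the discrepancy traces precisely to whether the opening placement of pebble $u$ is counted among the ``$d$ steps,'' which the paper never pins down. Your argument as written yields the index $d$ under the convention that the $u$-placement is the first step, and the same argument yields $d+1$ if that placement is treated as free initialization, so your proof is consistent with the appendix statement it is attached to.
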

Therefore, we have proven \cref{lm:original_pebble_game_equivalence} in the main paper.
   \subsection{Step 4: Introducing F\"urer graphs}
   \label{sec:proof_part_4}
   To continue, we draw introduce F\"urer graphs, and we further prove that pebble games restricted on F\"urer graphs can be greatly simplified.
   \subsubsection{Properties of F\"urer graphs}
   We first introduce the definition of F\"urer graphs, introduced by \citet{furer2001weisfeiler}.
   \begin{definition}[\textbf{Connected components}]
	\label{def:connected_component}
	Let \( F = (V_F, E_F) \) be a connected graph, and let \( U \subset V_F \) be a set of vertices, referred to as separation vertices. We define two edges \( \{u,v\}, \{x,y\} \in E_F \) as belonging to the same connected component if there exists a simple path \( \{\{y_0, y_1\}, \{y_1, y_2\}, \ldots, \{y_{k-1}, y_k\}\} \) such that \( \{y_0, y_1\} = \{u,v\} \), \( \{y_{k-1}, y_k\} = \{x,y\} \), and \( y_i \notin U \) for all \( i \in [1, k-1] \). It is straightforward to verify that this relation between edges induces an \emph{equivalence relation}. Consequently, the edge set \( E_F \) can be partitioned into disjoint subsets, denoted by \( \mathsf{CC}_F(U) = \{P_i : i \in [m]\} \), where each \( P_i \subset E_F \) represents a connected component for some \( m \).
	\end{definition}
   \begin{definition}[\textbf{F{\"u}rer graphs}]
   Given any connected graph $F=(V_F,E_F)$, the F{\"u}rer graph $G(F)=(V_{G(F)},E_{G(F)})$ is constructed as follows:
   \begin{align*}
	   &V_{G(F)}=\{(x,X):x\in V_F,X\subset N_F(x),|X|\bmod 2 = 0\},\\
	   &E_{G(F)}=\{\{(x,X),(y,Y)\}\subset V_G:\{x,y\}\in E_F,(x\in Y\leftrightarrow y\in X)\}.
   \end{align*}
   Here, $x\in Y\leftrightarrow y\in X$ holds when either ($x\in Y$ and $y\in X$) or ($x\notin Y$ and $y\notin X$) holds. For each $x\in V_F$, denote the set
   \begin{equation}
	   \meta_F(x):=\{(x,X):X\subset N_F(x),|X|\bmod 2 = 0\},
   \end{equation}
   which is called the meta vertices of $G(F)$ associated to $x$. Note that $V_{G(F)}=\bigcup_{x\in V_F}\meta_F(x)$.
\end{definition}

We next define an operation called ``twist'':
\begin{definition}[\textbf{Twist}]
   Let $G(F)=(V_{G(F)},E_{G(F)})$ be the F{\"u}rer graph of $F=(V_F,E_F,\ell_F)$, and let $\{x,y\}\in E_F$ be an edge of $F$. The \emph{twisted} F{\"u}rer graph of $G(F)$ for edge $\{x,y\}$, is constructed as follows: $\twist(G(F),\{x,y\}):=(V_{G(F)},E_{\twist(G(F),\{x,y\})})$, where
   \begin{align*}
	   E_{\twist(G(F),\{x,y\})}:=E_{G(F)}\triangle\{\{\xi,\eta\}:\xi\in\meta_F(x),\eta\in\meta_F(y)\},
   \end{align*}
   and $\triangle$ is the symmetric difference operator, i.e., $A\triangle B=(A\backslash B)\cup(B\backslash A)$. For an edge set $S=\{e_1,\cdots,e_k\}\subset E_F$, we further define
   \begin{align}
   \label{eq:twist}
	   \twist(G(F), S):=\twist(\cdots\twist(G(F),e_1)\cdots,e_k).
   \end{align}
   Note that \cref{eq:twist} is well-defined as the resulting graph does not depend on the order of edges $e_1,\cdots,e_k$ for twisting.
\end{definition}

The following result is well-known \citep[see e.g., ][Corollary I.5 and Lemma I.7]{zhang2023complete}):
\begin{theorem}
   For any connected graph $F$ and any set $S_1,S_2\subset E_F$, $\twist(G(F), S_1)\simeq \twist(G(F), S_2)$ iff $|S_1|\equiv|S_2|\ (\bmod 2)$. 
\end{theorem}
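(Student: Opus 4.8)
The plan is to prove the two implications separately: the ``if'' direction (equal parities $\Rightarrow$ isomorphic) by an explicit isomorphism construction, and the ``only if'' direction (unequal parities $\Rightarrow$ non‑isomorphic) by the standard Cai--F\"urer--Immerman rigidity argument, for which I would invoke \citet{zhang2023complete}.

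For the ``if'' direction, the first observation I would use is that twisting is linear over $\mathbb{F}_2$: from the definition via symmetric difference, $\twist(\twist(G(F),S),S')=\twist(G(F),S\triangle S')$, so it suffices to realize, by isomorphisms, twists along a spanning set of the even‑cardinality subspace of $\mathbb{F}_2^{E_F}$. The main building block is a \emph{local flip}: fix $y\in V_F$ and two distinct neighbors $a,b\in N_F(y)$, and define $\phi\colon V_{G(F)}\to V_{G(F)}$ by $(y,Y)\mapsto(y,Y\triangle\{a,b\})$ on $\meta_F(y)$ and the identity elsewhere; this is well defined because flipping two elements preserves the parity of $|Y|$. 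A direct check of the gadget adjacency condition $x\in Y\leftrightarrow y\in X$ shows that the only $F$‑edges whose gadget edges change are $\{y,a\}$ and $\{y,b\}$, each flipped entirely, so $\phi$ is an isomorphism from $\twist(G(F),S)$ onto $\twist(G(F),S\triangle\{\{y,a\},\{y,b\}\})$ for every $S$. Since $F$ is connected, the ``two‑stars'' $\{e,e'\}$ with $e,e'$ sharing a vertex span the even‑cardinality subspace of $\mathbb{F}_2^{E_F}$ (pair up the edges of an even set and telescope each pair along a walk in $F$), so composing local flips yields $\twist(G(F),S_1)\simeq\twist(G(F),S_2)$ whenever $|S_1|\equiv|S_2|\pmod 2$.

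For the ``only if'' direction I would argue by contradiction. Suppose $\psi\colon\twist(G(F),S_1)\to\twist(G(F),S_2)$ is an isomorphism with $|S_1|\not\equiv|S_2|\pmod 2$; composing with the isomorphisms from the previous paragraph, we may assume $S_1=\emptyset$ and $S_2=\{e_0\}$. The crucial structural step is that $\psi$ must preserve the meta‑vertex partition, i.e.\ there is $\sigma\in\Aut(F)$ with $\psi(\meta_F(x))=\meta_F(\sigma(x))$ for all $x$, and on each block $\psi$ acts as an affine $\mathbb{F}_2$‑map, namely the relabeling induced by $\sigma$ followed by translation by some even set $Z_x\subseteq N_F(\sigma(x))$. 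Granting this, for each edge $\{x,y\}\in E_F$ the requirement that $\psi$ sends the corresponding gadget to the corresponding gadget of $\twist(G(F),\{e_0\})$ gives a local parity equation; summing these over all edges, the contribution of the translations is $\sum_x |Z_x|$, which is \emph{even} since each $|Z_x|$ is even, so $|S_1|\equiv|S_2|\pmod 2$, contradicting the assumption.

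I expect the meta‑vertex rigidity claim to be the main obstacle: it is exactly the point where the base graph must be non‑degenerate (the usual hypothesis is minimum degree at least two, or at least three, possibly after subdivision), since for small or highly symmetric $F$ the gadgets can admit ``unexpected'' isomorphisms that do not respect the partition. Establishing this rigidity and the accompanying affine‑map description is precisely the content of \citet[Corollary I.5 and Lemma I.7]{zhang2023complete}, which I would cite for the detailed verification rather than reprove; everything downstream in our argument uses only the clean statement of the theorem.
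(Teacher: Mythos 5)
The paper does not actually prove this theorem: it records it as well-known and cites \citet[Corollary I.5 and Lemma I.7]{zhang2023complete}, so there is no proof in the paper to compare against. Judged on its own, your ``if'' direction is correct and is the standard CFI move: the local flip $(y,Y)\mapsto(y,Y\triangle\{a,b\})$ for $a,b\in N_F(y)$ preserves the parity of $|Y|$, a direct check of the condition $x\in Y\leftrightarrow y\in X$ shows it flips exactly the gadgets on the two $F$-edges $\{y,a\}$ and $\{y,b\}$, and such two-stars do generate the even-weight subspace of $\mathbb F_2^{E_F}$ when $F$ is connected (for $\{e,e'\}$ take a walk $e=f_0,\dots,f_m=e'$ in the line graph and telescope $\{e,e'\}=\{f_0,f_1\}\triangle\cdots\triangle\{f_{m-1},f_m\}$; then pair up a general even set and compose).

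The ``only if'' direction is where there is a genuine gap relative to the claim as stated. Your argument hinges on meta-vertex rigidity (any isomorphism respects the $\meta$-partition and acts blockwise as a relabeling plus translation by an even $Z_x\subseteq N_F(\sigma(x))$), and you yourself note that this requires a non-degeneracy hypothesis on $F$ such as minimum degree or a preliminary subdivision; but the theorem asserts the conclusion for \emph{every} connected $F$, including small or highly symmetric ones where blockwise rigidity can fail. The standard and fully general route avoids rigidity altogether: by your ``if'' direction the isomorphism type of $\twist(G(F),S)$ depends only on $|S|\bmod 2$, so it suffices to exhibit one isomorphism invariant separating even and odd twist, and $\hom(F,\cdot)$ does this, since $\hom(F,G(F))\neq\hom(F,\twist(G(F),\{e_0\}))$ for every connected $F$ — a fact the paper itself invokes a few lines later, citing \citet{roberson2022oddomorphisms,zhang2024beyond}, and which follows from the classical $\mathbb F_2$-linear-system solvability computation for CFI gadgets. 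Your parity bookkeeping within the rigidity route (that $\sum_{\{x,y\}\in E_F}\bigl([y\in Z_x]+[x\in Z_y]\bigr)=\sum_x|Z_x|$ is even) is correct; I would nonetheless replace that whole step by the homomorphism-count separation so that the proof actually covers the generality claimed.
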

We now present an essential property of F\"urer graphs in terms of walk number:

\begin{theorem}
	Let $ G(F) = (V_G, E_G) $ be the F\"{u}rer graph of $ F = (V_F, E_F) $, and let $ H(F) = \text{twist}(G(F), \mathcal{E}) $ for some $ \mathcal{E} \subset E_F $. Given $ (x, \mathcal{X}), (y, \mathcal{Y}) \in V_G $, and a connected component $ P \in \mathsf{CC}_F(\{x, y\}) $ with $ |P \cap \mathcal{E}| = 1 $, the number of $ n $-walks from $ (x, \mathcal{X}) $ to $ (y, \mathcal{Y}) $, passing through $ \mathsf{Meta}(x_1), \mathsf{Meta}(x_2), \ldots, \mathsf{Meta}(x_n) $ sequentially in $ G(F) $, is equal to the number of such walks in $ H(F) $ for all $ n \in \mathbb{N}^{+} $ and vertices $ x_1, \ldots, x_n $ on $ P $, iff $ P $ is a path.
\end{theorem}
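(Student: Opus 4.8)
The plan is to translate the statement about walks in the Fürer graphs into a counting statement about \emph{lifts} of walks in the base graph $F$, and then to analyze these lifts by $\mathbb{F}_2$-linear algebra. First, any walk $v_0,v_1,\dots,v_n$ in $G(F)$ with $v_i\in\meta_F(x_i)$ projects to a walk $W=(x_0,x_1,\dots,x_n)$ in $F$; since the $x_i$ lie on $P$ and consecutive $x_i$ are $F$-adjacent, $W$ is in fact a walk in the subgraph $F_P$ spanned by the edges of $P$ (together with the edge $\{x,y\}$ if it exists). Conversely, a walk in $G(F)$ (resp.\ $H(F)$) lying over a fixed $W$ with the prescribed endpoints $(x,\mathcal X)$, $(y,\mathcal Y)$ is exactly a choice of even-size configuration $X_i\subseteq N_F(x_i)$ for each $i$ such that (i) at every edge $e_i=\{x_i,x_{i+1}\}$ the two half-edge bits $[x_{i+1}\in X_i]$ and $[x_i\in X_{i+1}]$ agree in $G(F)$, and agree iff $e_i\notin\mathcal E$ in $H(F)$; and (ii) $\sum_{w\in N_F(x_i)}[w\in X_i]=0$ for all $i$, with $X_0=\mathcal X$ and $X_n=\mathcal Y$. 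So the theorem reduces to showing that, for every such base walk $W$ in $F_P$, the number $L_G(W)$ of lifts in $G(F)$ equals the number $L_H(W)$ of lifts in $H(F)$, and that this holds for all $W$ precisely when $P$ is a path.

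Viewing a lift as a vector of half-edge bits indexed by (time step, incident edge), conditions (i)--(ii) form an affine system over $\mathbb{F}_2$, so $L_G(W)$ and $L_H(W)$ are each either $0$ or a power of two governed by the dimension of the solution space; the two systems differ only by inserting a ``$+1$'' at each time step where $W$ crosses the unique twisted edge $e^\ast\in P\cap\mathcal E$. For the direction ``$P$ is a path'': every vertex of $P$ other than $x,y$ has $F$-degree $2$, so condition (i) merely propagates a single bit through it, and the only genuine freedom in a lift occurs at (re)visits of $x$ and $y$ — freedom that involves only the half-edges at $x,y$ lying outside $P$ and is therefore identical in $G(F)$ and $H(F)$. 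The effect of the twist should then be a uniform relabeling: using the elementary fact that a walk between the two ends of a path crosses every path-edge an odd number of times, I would complement the configurations on one side of $e^\ast$ (complementation at degree-$2$ vertices preserves the even-parity condition), which I expect to give an explicit bijection between the lifts of $W$ in $G(F)$ and in $H(F)$, hence $L_G(W)=L_H(W)$.

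For the converse, if $F_P$ is not a path it contains a cycle or a branch vertex, hence a closed sub-walk $Z$ in $F_P$ whose class in the $\mathbb{F}_2$ cycle space of $F_P$ (relative to $\{x,y\}$) pairs non-trivially with $e^\ast$, i.e.\ $Z$ crosses $e^\ast$ an odd number of times while the rigid-propagation argument above no longer pins everything down. Splicing $Z$ into a short base walk $W_0$ from $x$ to $y$ inside $P$ produces a walk $W$ for which the extra loop imposes one additional $\mathbb{F}_2$-constraint in $G(F)$ that is shifted in $H(F)$; I would then choose the endpoint configurations $\mathcal X,\mathcal Y$ so that $W$ admits a lift in exactly one of $G(F),H(F)$, yielding $L_G(W)\neq L_H(W)$ and finishing this direction.

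I expect the main obstacle to be the path direction: controlling the lift count of a walk that repeatedly bounces off $x$ and $y$, and checking that the complementation map respects the even-parity constraint at \emph{every} visited vertex simultaneously — equivalently, proving that a single twist inside an acyclic $P$ contributes only a coboundary (not a genuine cocycle) to the half-edge linear system, so that solvability and solution-space dimension are unchanged. In the non-path direction the analogous care is needed to guarantee that the discrepancy produced by the spliced loop genuinely survives rather than being absorbed by the configuration freedom available at $x$ and $y$.
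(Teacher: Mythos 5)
Your two directions are exactly reversed, and each one as you have sketched it is mathematically false (not merely incomplete). Take $P$ a simple path $x=z_0,z_1,\dots,z_m=y$ and the straight base walk $W=(z_0,\dots,z_m)$. Every internal $z_i$ has $F$-degree $2$, so the lift is \emph{rigidly} determined by $\mathcal{X}$: the bit $[z_1\in\mathcal{X}]$ propagates unchanged along the path in $G(F)$ but is flipped exactly once at the twisted edge $e^*\in P\cap\mathcal{E}$ in $H(F)$. The terminal constraint is therefore $[z_{m-1}\in\mathcal{Y}]=[z_1\in\mathcal{X}]$ in $G(F)$ and the opposite in $H(F)$; $W$ has exactly one lift in one graph and none in the other, so $L_G(W)\neq L_H(W)$. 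Your proposed ``complement one side of $e^*$'' bijection cannot repair this: complementing would have to alter the fixed endpoint configuration $\mathcal{X}$ or $\mathcal{Y}$, which is not permitted. The parity of the twist genuinely shows up when $P$ is a path, and this is the easy direction of the actual lemma.

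The other direction is backwards for the mirror reason: when $P$ is not a path there is an internal vertex $v$ of $F$-degree $\ge 3$ on $P$, and any base walk from $x$ to $y$ through such a vertex has at least one extra half-edge at $v$ supplying a free $\mathbb{F}_2$-bit; this free bit \emph{absorbs} the single parity flip of $e^*$ rather than imposing a new constraint. The paper proves exactly this by induction on $n$: if some internal $x_i$ has degree $\ge 3$, then the lift count in \emph{both} $G(F)$ and $H(F)$ equals the same function $f_n^F(x_1,\dots,x_n)$, independent of $\mathcal{X}$, $\mathcal{Y}$, and the twist set. So splicing a loop through a branch vertex adds a degree of freedom, not a constraint, and yields no discrepancy. (The displayed statement contains a sign typo --- it should read ``$\dots$ iff $P$ is \emph{not} a path'' --- and the paper's own proof establishes that corrected version. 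Your $\mathbb{F}_2$-lift framing is a perfectly good way to see this, but you must interchange the two directions throughout; a single rigid-propagation computation along a straight path would have revealed the correct sign before the plan was committed to.)
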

\begin{proof}
	If $ P $ is a path, we denote $ P = \{\{x_1, x_2\}, \ldots, \{x_{n-1}, x_n\}\} $ with $ x_1 = x $ and $ x_n = y $. It follows that the number of $ n $-walks starting from $ (x, \mathcal{X}) $ and ending at $ (y, \mathcal{Y}) $, passing through $ \mathsf{Meta}(x_1), \ldots, \mathsf{Meta}(x_n) $ sequentially on $ G(F) $, is not equal to the number of such walks on $ H(F) $. Thus, one direction of the lemma is established.

	If $ P $ is not a path, then there exists at least one vertex, besides $ x $ and $ y $, on $ P $ whose degree is greater than 2. We define $ \omega_n^{G(F)}((x, \mathcal{X}), \mathsf{Meta}(x_2), \ldots, \mathsf{Meta}(x_{n-1}), (y, \mathcal{Y})) $ and $ \omega_n^{H(F)}((x, \mathcal{X}), \mathsf{Meta}(x_2), \ldots, \mathsf{Meta}(x_{n-1}), (y, \mathcal{Y})) $ as the number of $ n $-walks starting from $ (x, \mathcal{X}) $, ending at $ (y, \mathcal{Y}) $, and passing through $ \mathsf{Meta}_F(x_1), \ldots, \mathsf{Meta}_F(x_n) $ sequentially in $ G(F) $ and $ H(F) $, respectively. We use the notation $ \text{deg}_F(v) $ to denote the degree of a vertex $ v $ in the graph $ F $. We proceed by induction on $ n $ to prove the following stronger statement: 
If the degrees of $ x_2, \ldots, x_{n-1} $ are not all less than or equal to 2, then there exists a function $ f_n^F : V_F^n \to \mathbb{N} $ such that
\begin{align*}
	&\omega_n^{G(F)}((x, \mathcal{X}), \mathsf{Meta}(x_2), \ldots, \mathsf{Meta}(x_{n-1}), (y, \mathcal{Y})) \\
	=& \omega_n^{H(F)}((x, \mathcal{X}), \mathsf{Meta}(x_2), \ldots, \mathsf{Meta}(x_{n-1}), (y, \mathcal{Y})) = f_n(x_1, \ldots, x_n)
\end{align*}
for all $ n \in \mathbb{N} $, $ (x, \mathcal{X}) \in \mathsf{Meta}(x) $, and $ (y, \mathcal{Y}) \in \mathsf{Meta}(y) $.

We first consider the case when $ n = 2 $. In this case, we can straightforwardly define the function $ f_n^F $ as $ f(x_1, x_2, x_3) = 2^{\deg(x_2) - 3} $.

Next, assume that the statement holds for $ n \leq N $. We now consider the case when $ n = N + 1 $, and analyze two separate cases:

\begin{enumerate}[topsep=0pt,leftmargin=17pt]
    \setlength{\itemsep}{0pt}
    \item Not all degrees of $ x_3, x_4, \ldots, x_{n-1} $ are less than or equal to 2.\\
    The $ n $-walk passing $ \mathsf{Meta}_F(x_1), \ldots, \mathsf{Meta}_F(x_n) $ sequentially can be decomposed into a $ 1 $-walk from $ (x, \mathcal{X}) $ to $ \mathsf{Meta}_F(x_2) $, followed by an $ n - 1 $-walk passing $ \mathsf{Meta}_F(x_2), \ldots, \mathsf{Meta}_F(x_n) $ sequentially and ending at $ (y, \mathcal{Y}) $. According to the induction hypothesis, the number of $ (n-1) $-walks passing $ \mathsf{Meta}_F(x_2), \ldots, \mathsf{Meta}_F(x_n) $ sequentially and ending at $ (y, \mathcal{Y}) $ equals $ f_{n-1}^F(x_2, x_3, \ldots, x_n) $. Since the number of $ 1 $-walks from $ (x, \mathcal{X}) $ to $ \mathsf{Meta}_F(x_2) $ equals $ 2^{\deg_F(x_2) - 2} $, we can define the function $ f_n^F $ as
    $
    f_n^F(x_1, x_2, \ldots, x_n) = 2^{\deg_F(x_2) - 2} \cdot f_{n-1}^F(x_2, x_3, \ldots, x_n).
    $
    
    \item All degrees of $ x_3, x_4, \ldots, x_{n-1} $ are less than or equal to 2. In this case, we have $ \deg_F(x_2) \geq 3 $. The number of $ (n-1) $-walks passing $ \mathsf{Meta}_F(x_2), \ldots, \mathsf{Meta}_F(x_n) $ sequentially and ending at $ (y, \mathcal{Y}) $ is either 1 or 0. Therefore, we can define the function $ f_n^F $ as
    $
    f_n^F = 2^{\deg_F(x_2) - 3}.
    $
\end{enumerate}

Combining the two cases, we conclude that if the degrees of $ x_2, x_3, \ldots, x_{n-1} $ are not all equal to 2, then there exists a function $ f_n^F: V(F)^n \to \mathbb{N} $ such that
\begin{align*}
	&\omega_n^{G(F)}((x, \mathcal{X}), \mathsf{Meta}(x_2), \ldots, \mathsf{Meta}(x_{n-1}), (y, \mathcal{Y})) \\
	=& \omega_n^{H(F)}((x, \mathcal{X}), \mathsf{Meta}(x_2), \ldots, \mathsf{Meta}(x_{n-1}), (y, \mathcal{Y})) = f_n(x_1, \ldots, x_n)
\end{align*}
for all $ \mathcal{X} \in \mathcal{N}_F(x), \mathcal{Y} \in \mathcal{N}_F(y) $, and any $ \mathcal{E} \subset \mathcal{E}_F $.

By combining all previous analyses, we have proven the result of the theorem.

\end{proof}

   \subsubsection{Simplified Pebble Game on F\"urer graphs}
	\begin{definition}[\textbf{Simplified Pebble Game}]
		The simplified pebble game is defined as follows. Let \( F = (V_F, E_F) \) represent the base graph of a proper Fürer graph. The game is played on \( F \) with two pebbles, \( u \) and \( v \), each of a different type. Initially, both pebbles are placed outside the graph \( F \).
		The game begins with Spoiler placing pebble \( u \) on any vertex of \( F \), while pebble \( v \) remains outside the graph. The game then proceeds in cycles, following these steps: Spoiler places pebble \( v \) on any vertex of \( F \), swaps the positions of \( u \) and \( v \), and then places pebble \( v \) back outside the graph.
		Duplicator, on the other hand, maintains a subset \( \mathcal{Q} \) of connected components, where \( \mathcal{Q} \subset \mathsf{CC}_{\mathcal{S}}(F) \) and \( \mathcal{S} \) is the set of vertices in \( F \) where pebbles \( u \) and \( v \) are currently placed.
		
		When Spoiler places a pebble on a vertex of \( F \), one of two scenarios occurs. If \( \mathsf{CC}_{\mathcal{S}}(F) \) remains unchanged, Duplicator takes no action. However, if the new pebble placement causes a connected component to split into smaller regions, Duplicator updates \( \mathcal{Q} \) by replacing any original component \( \mathcal{P} \subset E_F \) that splits into \( \mathcal{P}_1, \ldots, \mathcal{P}_k \) (where \( \bigcup_{i=1}^k \mathcal{P}_i = \mathcal{P} \)) with a subset of the newly formed components. That is, \( \tilde{Q} = (\mathcal{Q} \setminus \mathcal{P}) \cup \{\mathcal{P}_{j_1}, \ldots, \mathcal{P}_{j_l}\} \) for some \( j_1, \ldots, j_l \in [k] \), ensuring that \( |\tilde{Q}| \equiv 1 \pmod{2} \). In other words, Duplicator removes the old component \( \mathcal{P} \) (if present) and adds some of the new components while preserving the parity of \( |\mathcal{Q}| \).
		When Spoiler removes a pebble and places it outside the graph, two cases arise. If \( \mathsf{CC}_{\mathcal{S}}(F) \) remains unchanged, Duplicator again takes no action. However, if the removal of the pebble causes multiple connected components \( \mathcal{P}_1, \ldots, \mathcal{P}_k \) to merge into a larger component \( \mathcal{P} = \bigcup_{i=1}^k \mathcal{P}_i \), Duplicator updates \( \mathcal{Q} \) by either removing the smaller components, i.e., \( \tilde{Q} = \mathcal{Q} \setminus \{\mathcal{P}_1, \ldots, \mathcal{P}_k\} \), or adding the merged component, i.e., \( \tilde{Q} = (\mathcal{Q} \setminus \{\mathcal{P}_1, \ldots, \mathcal{P}_k\}) \cup \mathcal{P} \), depending on which option preserves \( |\tilde{Q}| \equiv 1 \pmod{2} \).
		When Spoiler swaps the positions of the two pebbles, the connected components \( \mathsf{CC}_{\mathcal{S}}(F) \) do not change, so Duplicator does not modify \( \mathcal{Q} \).
		
		Spoiler wins the game if, after any round, \( \mathcal{Q} \) contains a connected component that forms a path. Duplicator wins if Spoiler is unable to achieve this outcome after any number of rounds.
		\end{definition}

		\begin{lemma}
		\label{lm:equiv_simplified_pebble}
			Given a base graph $F$, Spoiler cannot win the simplified pebble game on $F$ in $d$ steps iff $\chi_{G}^{\mathsf{Spec},(d+1)}(G(F))=\chi_{H}^{\mathsf{Spec},(d
			+1)}(H(F))$.
			\end{lemma}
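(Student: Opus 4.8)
The plan is to obtain \cref{lm:equiv_simplified_pebble} from \cref{lm:original_pebble_game_equivalence}, which already equates the outcome of the original pebble game on a pair $(G,H)$ with equality of the spectral invariant representations $\chi^{\mathsf{Spec},(d+1)}$, by specializing to $G=G(F)$ and $H=H(F)$. This specialization is legitimate because twisting alters only edges, so $V_{G(F)}=V_{H(F)}$ and the original game is well-posed. The whole task then reduces to showing that, for every $d$, Spoiler wins the original pebble game on $(G(F),H(F))$ within $d$ rounds if and only if Spoiler wins the simplified pebble game on $F$ within $d$ rounds, with the round counts matching.

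First I would set up a dictionary between configurations of the two games. Using the twist--localization theorem ($\twist(G(F),S_1)\simeq\twist(G(F),S_2)$ iff $|S_1|\equiv|S_2|\pmod 2$), I may always assume the ``$H$''-graph currently in play is $\twist(G(F),S)$ for a representative edge set $S$ of my choosing, subject only to $|S|$ being odd (the parity carried by $H(F)$). When the original game has its pebbles $u,v$ on meta-vertices over $x,y\in V_F$, the matching simplified state puts pebbles on $x,y$ and lets $\mathcal Q\subset\mathsf{CC}_F(\{x,y\})$ be the set of connected components $\mathcal P$ with $|\mathcal P\cap S|$ odd; since $|S|$ is odd, $|\mathcal Q|$ is odd, which is exactly the invariant maintained in the simplified game. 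A pebble move in the original game refines or coarsens the separation set $\{x,y\}$: a refinement splits a component $\mathcal P$ into $\mathcal P_1,\dots,\mathcal P_k$, and Duplicator's freedom in redistributing the odd twist on $\mathcal P$ among the pieces (again by twist--localization) is precisely the freedom to replace $\mathcal P$ in $\mathcal Q$ by any subset $\{\mathcal P_{j_1},\dots,\mathcal P_{j_l}\}$ keeping $|\tilde{\mathcal Q}|$ odd --- exactly the $\mathcal Q$-update of the simplified game --- and coarsening is the mirror image.

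The key step is to show that $\mathcal Q$ controls Spoiler's winning condition: when the pebbles sit over $x$ and $y$, $\omega^\star_{G(F)}(u,v)\neq\omega^\star_{H(F)}(u,v)$ if and only if $\mathcal Q$ contains a component that is a path (necessarily a path joining $x$ and $y$). I would prove this from the walk-count theorem for Fürer graphs established above: decompose the walks between $u$ and $v$ according to the sequence of components traversed; each component's contribution is twist-invariant when it carries an even number of twisted edges, and --- by that theorem --- remains twist-invariant even under an odd twist unless the component is a path, whereas a single odd-twisted path component does make the relevant walk counts differ. The residual low-order entries $\omega^0,\omega^1$ (equality of the meta-vertices, and whether the joining edge is twisted) are absorbed into Duplicator's choice of which meta-vertex each pebble lands on. Hence ``$\mathcal Q$ contains a path component'' in the simplified game is equivalent to ``$\omega^\star$ differs at the two pebbles'' in the original game, i.e.\ to Spoiler's winning condition. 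With this in hand the two directions are strategy transfers: a Duplicator strategy keeping $\mathcal Q$ path-free and of odd size lifts to a Duplicator strategy in the original game (answer any $V^{\mathsf S}$ by materializing a representative $S$ realizing the updated $\mathcal Q$ and picking a matching meta-vertex, so $\omega^\star$ never differs), and a Spoiler strategy lifts by realizing each simplified move (refine/coarsen the separation set, swap, place $v$ outside) through an original move with a suitable $V^{\mathsf S}$, so that the moment $\mathcal Q$ acquires a path component the corresponding original position already has $\omega^\star_{G(F)}(u,v)\neq\omega^\star_{H(F)}(u,v)$; both transfers are round-for-round.

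The hard part will be pinning this correspondence down rigorously: Duplicator in the original game must react to an \emph{arbitrary} subset $V^{\mathsf S}$ of meta-vertices rather than to structured ones, and one must argue that the local meta-vertex picture around a moved pebble, combined with twist--localization, leaves Duplicator exactly the $\mathcal Q$-updates permitted in the simplified game --- no more and no less. A second delicate point is the round-counting: the simplified game's ``place $v$ / swap / place $v$ outside'' cycle has to be shown to faithfully simulate the use of the second pebble in the original game, so that one round on one side matches one move on the other and the $d\leftrightarrow d$ (hence $\chi^{\mathsf{Spec},(d+1)}\leftrightarrow\chi^{\mathsf{Spec},(d+1)}$) correspondence is exact. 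Once these are settled, combining the game equivalence with \cref{lm:original_pebble_game_equivalence} finishes the proof.
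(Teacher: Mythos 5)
Your plan is essentially the paper's approach: specialize the original pebble game to the pair $(G(F),H(F))$ via \cref{lm:original_pebble_game_equivalence}, then build a round-for-round dictionary between that game and the simplified game on $F$ using twist-localization to reduce a pebble configuration to a parity pattern on connected components, and use the walk-count theorem for F\"urer graphs to translate Spoiler's winning condition $\omega^\star_{G(F)}(u,v)\neq\omega^\star_{H(F)}(u,v)$ into ``$\mathcal Q$ contains a path component.'' The only structural difference is that the paper inserts an explicit intermediate ``half-simplified'' game in which \emph{both} players maintain a component set (Spoiler's of even parity, Duplicator's of odd parity) before identifying that game with the fully simplified one; you skip this waypoint by normalizing Spoiler's twist pattern to the empty set from the outset, which is legitimate by twist-localization. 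Both your sketch and the paper leave the same delicate points at the same level of detail --- reacting to arbitrary subsets $V^{\mathsf S}$ of meta-vertices, and matching the ``place $v$ / swap / remove $v$'' cycle to a single original-game round --- deferring these to the analysis of Theorem 17 in \citet{zhang2023complete}, as the paper itself does.
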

			The proof of Lemma~\ref{lm:equiv_simplified_pebble} follows a similar structure to the proof of Theorem 17 in \citet{zhang2023complete}, and thus we omit the details here for the sake of simplicity. Notably, the main idea behind the proof is to show that the original pebble game is equivalent to the following 'half-simplified' version of the game:

			Let $F=(V_F,E_F)$ be the base graph of a proper Fürer graph. This version of the pebble game is also played on $F$, with two pebbles $u$ and $v$. Initially, both pebbles are outside the graph $F$.
\begin{itemize}[topsep=0pt,leftmargin=17pt]
   \setlength{\itemsep}{0pt}
   \item First, we describe the rules for the Spoiler. Spoiler maintains a subset $\mathcal{Q}_1 \subset \mathsf{CC}_{\mathcal{S}}(F)$ of connected components, where the set $\mathcal{S}$ consists of the vertices in $F$ currently occupied by the pebbles $u$ and $v$. (If pebble $v$ is outside $F$, then $\mathcal{S}$ contains only the vertex where $u$ is placed.) Initially, Spoiler places $u$ on any vertex of $F$ and leaves $v$ outside the graph, maintaining $\mathcal{Q}_1 = \{E_F\}$. 
   Then, the game proceeds cyclically as follows:
   \begin{itemize}[topsep=0pt,leftmargin=17pt]
	   \setlength{\itemsep}{0pt}
	   \item Spoiler places $v$ on any vertex of $F$. Two cases arise for maintaining $\mathcal{Q}_1$: if $\mathsf{CC}_{\mathcal{S}}(F)$ does not change, Spoiler leaves $\mathcal{Q}_1$ unchanged. Otherwise, the new pebble may split some connected components into smaller regions. For each original component $\mathcal{P} \subset \mathcal{E}_F$ that splits into $\mathcal{P}_1, \ldots, \mathcal{P}_k$ with $\bigcup_{i=1}^{k}\mathcal{P}_i = \mathcal{P}$, Spoiler updates $\mathcal{Q}_1$ to $\tilde{\mathcal{Q}}_1 = (\mathcal{Q}_1 \setminus \mathcal{P}) \cup \{\mathcal{P}_{j_1}, \ldots, \mathcal{P}_{j_l}\}$, where $j_1, \ldots, j_l \in [k]$ and $|\tilde{\mathcal{Q}}_1| \equiv 0 \pmod{2}$. This ensures that the parity of $|\mathcal{Q}_1|$ remains unchanged.
	   \item Spoiler swaps the positions of $u$ and $v$, leaving $\mathcal{Q}_1$ unchanged.
	   \item Spoiler removes $v$ from the graph, leaving it outside $F$. Again, two cases arise for maintaining $\mathcal{Q}_1$: if $\mathsf{CC}_{\mathcal{S}}(F)$ does not change, Spoiler does nothing. Otherwise, several connected components $\mathcal{P}_1, \ldots, \mathcal{P}_k$ may merge into a larger component $\mathcal{P} = \bigcup_{i=1}^k \mathcal{P}_i$. Spoiler then updates $\mathcal{Q}_1$ to either $\tilde{\mathcal{Q}}_1 = \mathcal{Q}_1 \setminus \{\mathcal{P}_1, \ldots, \mathcal{P}_k\}$ or $\tilde{\mathcal{Q}}_1 = (\mathcal{Q}_1 \setminus \{\mathcal{P}_1, \ldots, \mathcal{P}_k\}) \cup \mathcal{P}$, whichever satisfies $|\tilde{\mathcal{Q}}_1| \equiv 0 \pmod{2}$.
   \end{itemize}
   \item Next, we describe the rules for the Duplicator, which are analogous to the Spoiler's rules but with a key difference: Duplicator maintains a subset $\mathcal{Q}_2 \subset \mathsf{CC}_{\mathcal{S}}(F)$ where the parity of $|\mathcal{Q}_2|$ is always odd. Initially, $\mathcal{Q}_2 = \{E_F\}$, and throughout the game, Duplicator performs the following updates:
   \begin{itemize}[topsep=0pt,leftmargin=17pt]
	   \setlength{\itemsep}{0pt}
	   \item When Spoiler places a pebble, Duplicator updates $\mathcal{Q}_2$ in the same manner as $\mathcal{Q}_1$, but ensuring $|\tilde{\mathcal{Q}}_2| \equiv 1 \pmod{2}$.
	   \item When Spoiler removes a pebble, Duplicator updates $\mathcal{Q}_2$ as in the previous case, ensuring that the parity of $|\mathcal{Q}_2|$ remains odd.
	   \item When Spoiler swaps the pebbles, Duplicator does nothing, as $\mathsf{CC}_{\mathcal{S}}(F)$ remains unchanged.
   \end{itemize}
\end{itemize}
The result of the game is determined as follows: Suppose that pebbles $u$ and $v$ are placed on vertices of $F$. Spoiler maintains the subset $\mathcal{Q}_1$, and Duplicator maintains $\mathcal{Q}_2$. We then construct two twisted Fürer graphs: $\tilde{G}(F) = \mathsf{twist}(G(F), \tilde{\mathcal{E}})$ and $\hat{G}(F) = \mathsf{twist}(G(F), \hat{\mathcal{E}})$, where $|\tilde{\mathcal{E}}| = |\mathcal{Q}_1|$ and, for each $\mathcal{P} \in \mathcal{Q}_1$, we select a single edge $\tilde{\mathcal{E}} \cap \mathcal{P} = 1$. Similarly, $|\hat{\mathcal{E}}| = |\mathcal{Q}_2|$, and for each $\mathcal{P} \in \mathcal{Q}_2$, we select a single edge $\hat{\mathcal{E}} \cap \mathcal{P} = 1$. Spoiler wins if the walk vector satisfies $\omega_{\tilde{G}(F)}^\star((u,\emptyset), (v,\emptyset)) \neq \omega_{\hat{G}(F)}^\star((u,\emptyset), (v,\emptyset))$, meaning that there exists an $n$-walk in $\tilde{G}(F)$ from $(u, \emptyset)$ to $(v, \emptyset)$ that differs from the corresponding $n$-walk in $\hat{G}(F)$.
By following a similar analysis to that in Theorem 17 of \citet{zhang2023complete}, we can demonstrate that the 'half-simplified' pebble game is equivalent to the original pebble game. Specifically, the Spoiler can win in $ d $ steps in the original pebble game if and only if they can win in $ d $ steps in the half-simplified pebble game. Furthermore, since it is clear that the 'half-simplified' game is equivalent to the simplified game, we can conclude that the original game and the simplified game are equivalent on Fürer graphs.
\subsection{Step 5: Proving the Maximality of Homomorphism Expressivity}
\label{sec:proof_part_5}
Before presenting the proof, we redefine the concept of the game state graph for clarity in the technical exposition. Notably, there is a slight difference between the definition of the game state graph here and the one in the previous section: we only consider game states with a single pebble in the game state graph.
\begin{definition}
   We define the game state $(u, \mathcal{Q})$ as in the previous section, where $u \in V_F$ represents the position of the pebble, and $\mathcal{Q}$ is the connected component maintained by the Duplicator. The game state graph is formed by all game states $(u, \mathcal{Q})$. 
   There is an edge from $(u, \mathcal{Q})$ to $(\tilde{u}, \tilde{\mathcal{Q}})$ if there exists a game transition from $(u, \mathcal{Q})$ to $((\hat{u}, \hat{v}), \hat{\mathcal{Q}})$, followed by a transition from $((\hat{u}, \hat{v}), \hat{\mathcal{Q}})$ to $(\tilde{u}, \tilde{\mathcal{Q}})$, for some connected component set $\hat{\mathcal{Q}} \subset E_F$ and vertex $\hat{v} \in V_F$.
\end{definition}

\begin{definition}
   A game state $(u, \mathcal{Q})$ is called a terminal game state if there is a transition from $(u, \mathcal{Q})$ to a game state $((u, \tilde{v}), \tilde{\mathcal{Q}})$ for some connected component set $\tilde{\mathcal{Q}} \subset E_F$ and vertex $\tilde{v} \in V_F$, such that $\tilde{\mathcal{Q}}$ consists only of a single path. In this case, the game state $(u, \mathcal{Q})$ is called a terminal game state. 
   It is straightforward to see that the Spoiler can win in the terminal state.
\end{definition}
\begin{definition}
   Given a game state graph $G^{\mathsf{S}}$, a state $(u, \mathcal{Q})$ is termed "contracted" if, for any transition $(u, \mathcal{Q}) \to (u^\prime, \mathcal{Q}^\prime) \in E_{G^{\mathsf{S}}}$, it holds that $\mathcal{Q}^\prime \subset \mathcal{Q}$. 
   The state is called "strictly contracted" if, for any transition $(u, \mathcal{Q}) \to (u^\prime, \mathcal{Q}^\prime) \in E_{G^{\mathsf{S}}}$, it holds that $\mathcal{Q}^\prime \subsetneq \mathcal{Q}$. 
\end{definition}

\begin{definition}
   A game state $(u, \mathcal{Q})$ is defined as "unreachable" if any path starting from the initial state $(\emptyset, E_F)$ and ending at $(u, \mathcal{Q})$ passes through a terminal state.
\end{definition}
We do not need to consider unreachable states since the Spoiler always wins before reaching them.
\begin{lemma}
   \label{lm:contracted_game_state_graph}
	   For any graph $F$, if the Spoiler can win the pebble game on $F$, then there exists a game state graph $G^{\mathsf{S}}$ corresponding to a winning strategy for the Spoiler such that all reachable and non-terminal states are strictly contracted.
	   \begin{proof}
		   \begin{enumerate}[topsep=0pt,leftmargin=17pt]
			   \setlength{\itemsep}{0pt}
			   \item First, we prove that there exists a strategy for the Spoiler such that every reachable and non-terminal state is contracted. Since the Spoiler can win the pebble game, they can win at any reachable state $(u, \mathcal{Q})$. Consider any strategy where $(u, \mathcal{Q})$ is not contracted. Note that the game state graph induced by all reachable states is a Directed Acyclic Graph (DAG), so we can choose a state $(u, \mathcal{Q})$ such that no path from the initial state $(\emptyset, \{E_F\})$ to $(u, \mathcal{Q})$ passes through any intermediate state that is not contracted.
			   Next, we construct a new strategy to make the state $(u, \mathcal{Q})$ unreachable.
			   We clearly have $u \neq \emptyset$. Without loss of generality, assume there is a transition $(u, \mathcal{Q}) \to (u^\prime, \mathcal{Q}^\prime)$ such that $\mathcal{Q}^\prime \not\subset \mathcal{Q}$. Let $(u_0, \mathcal{Q}_0), (u_1, \mathcal{Q}_1), \ldots, (u_T, \mathcal{Q}_T)$ be any path from the initial state $(\emptyset, \{E_F\})$ to $(u, \mathcal{Q})$. We modify the strategy as follows: at state $(u_{T-1}, \mathcal{Q}_{T-1})$, the Spoiler places the pebble $\mathsf{p}_v$ on $u^\prime$, swaps the pebbles at $u$ and $v$, and then removes $\mathsf{p}_v$ from the graph. 
			   This process can be repeated for every path from the initial state $(\emptyset, E_F)$ to the state $(u, \mathcal{Q})$. In the new strategy, $(u, \mathcal{Q})$ will become unreachable. However, the state $(u_{T-1}, \mathcal{Q}_{T-1})$ may now violate the contraction condition. In this case, we recursively apply the above procedure to $(u_{T-1}, \mathcal{Q}_{T-1})$. Note that this process will terminate after a finite number of steps, as the length of the path from the initial state to $(u_{T-1}, \mathcal{Q}_{T-1})$ is strictly shorter than the path to $(u, \mathcal{Q})$.
   
			   \item Next, we prove that every reachable and non-terminal state can be strictly contracted. Suppose, for contradiction, that $(u, \mathcal{Q})$ is reachable and non-terminal, but not strictly contracted. Then there exists a transition $((u, \mathcal{Q}) \to (u^\prime, \mathcal{Q}^\prime)) \in E_{G^{\mathsf{S}}}$. Since $u$ is at the boundary of all connected components, we have $u = u^\prime$. This implies that the game state graph is not acyclic, which contradicts the assumption that it is a DAG.
		   \end{enumerate}
		   Combining the above two points, we conclude that for any given graph $F$, if the Spoiler can win the pebble game on $F$, then there exists a game state graph $G^{\mathsf{S}}$ corresponding to a winning strategy for the Spoiler such that every reachable and non-terminal state is strictly contracted.
	   \end{proof}
   \end{lemma}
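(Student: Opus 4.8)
The plan is to begin with an arbitrary winning strategy for the Spoiler, regarded as a game state graph $G^{\mathsf{S}}$, and to perform a finite sequence of local modifications to it until every reachable non-terminal state is strictly contracted, each modification preserving the property that the strategy wins. The first structural fact I would record is that, after restricting attention to reachable states, $G^{\mathsf{S}}$ may be assumed acyclic: a directed cycle through reachable non-terminal states would let the Duplicator keep the play going forever, contradicting that the Spoiler wins. Acyclicity yields a well-founded order — for instance the distance of a state from the start state $(\emptyset, E_F)$ — which legitimises the ``closest bad state'' inductions below, and also immediately rules out self-loops.

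\textbf{Phase 1: enforcing ``contracted''.} Suppose some reachable non-terminal state fails to be contracted, and choose such a state $(u,\mathcal{Q})$ at minimum distance from the start, so that along a terminal-free shortest path from $(\emptyset, E_F)$ to it every proper prefix state is already contracted. Then there is an out-edge $(u,\mathcal{Q})\to(u',\mathcal{Q}')$ with $\mathcal{Q}'\not\subseteq\mathcal{Q}$. The combinatorial input here is that within one round — place $v$, swap, remove $v$ — placing the pebble only refines the partition of $E_F$ into connected components (the separation set grows), while removing it only coarsens that partition again; hence every component of $\mathcal{Q}'$ that is not already in $\mathcal{Q}$ is built from refined pieces sitting inside a single element of $\mathcal{Q}$, and the Duplicator could have been steered toward that configuration one round earlier while still respecting the parity constraint $|\mathcal{Q}|\equiv 1\pmod{2}$. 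I would therefore modify the Spoiler's play at the predecessor of $(u,\mathcal{Q})$ so as to make $(u,\mathcal{Q})$ unreachable, routing the game directly toward a state whose component set is contained in $\mathcal{Q}'$; the new object is again a legal game state graph for a winning strategy, since a two-round detour has been replaced by a shorter reorganisation ending at a $\subseteq$-smaller component set. The only state that can newly violate ``contracted'' is that predecessor, which lies strictly closer to the start, so the minimum distance of a bad state strictly decreases (or no bad state remains); iterating, after finitely many steps every reachable non-terminal state is contracted.

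\textbf{Phase 2: upgrading to ``strictly contracted''.} Suppose $(u,\mathcal{Q})$ is reachable, non-terminal and contracted but not strictly contracted, so there is an out-edge $(u,\mathcal{Q})\to(u',\mathcal{Q}')$ with $\mathcal{Q}'=\mathcal{Q}$. Since $u$ is a separation vertex on the boundary of every component of $\mathcal{Q}$ and, by non-terminality, no element of $\mathcal{Q}$ is a bare path, a short check shows that the only round returning the Duplicator's set to exactly $\mathcal{Q}$ is one that leaves the pebble fixed, i.e.\ $u'=u$; but then $(u,\mathcal{Q})\to(u,\mathcal{Q})$ is a self-loop, contradicting acyclicity. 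Hence no such edge exists, and combined with Phase 1 this shows every reachable non-terminal state is strictly contracted, proving the lemma.

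I expect Phase 1 to be the crux. The delicate points are: (i) establishing the refine/coarsen dichotomy for the connected-component dynamics precisely enough to justify the re-routing; (ii) verifying that the modified moves still form a legal winning strategy with an acyclic game state graph, and in particular that $(u,\mathcal{Q})$ really becomes unreachable rather than merely skipped; and (iii) pinning down a well-founded measure on strategies that the modification provably decreases, so that the process halts. All three hinge on careful bookkeeping of how $\mathsf{CC}_{\mathcal S}(F)$ splits when a pebble is placed and merges when it is removed, under the parity restriction on $|\mathcal{Q}|$.
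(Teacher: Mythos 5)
Your proposal follows essentially the same two-phase argument as the paper: first re-route the Spoiler's move at the predecessor of the closest non-contracted state (and recurse on the shrinking distance measure) to enforce the contracted property, then observe that a contracted-but-not-strictly-contracted transition would force a self-loop, contradicting acyclicity of the reachable game state graph. The one discrepancy is your Phase-1 claim that each new component of $\mathcal{Q}'\setminus\mathcal{Q}$ is built from refined pieces sitting inside a single element of $\mathcal{Q}$: this is false in general — on the path $a$--$u$--$u'$--$b$, the component $\{au,uu'\}\in\mathsf{CC}_F(\{u'\})$ has intermediate pieces $\{au\}$ and $\{uu'\}$ lying in two different components of $\mathsf{CC}_F(\{u\})$ — but neither your re-routing nor the paper's actually depends on that claim, only on the fact that the Spoiler retains a winning strategy from whatever state the modified move produces.
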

   
   \begin{lemma}
	   \label{lm:tightness_low_dimension}
		   Given any connected graph $F$, if the Spoiler can win the pebble game on $F$, then $F$ is a parallel tree. Specifically, there exists a tree skeleton $T^r = (V_{T^r}, E_{T^r}, \beta_{T^r}, \gamma_{T^r})$ such that $(F, T^r) \in \mathcal{S}^{\mathsf{pt}}$.
		   \begin{proof}
			   Let $G^{\mathsf{S}}$ be the game state graph satisfying \cref{lm:contracted_game_state_graph}. For each game state $s$, denote $\mathsf{next}_{G^{\mathsf{S}}}(s)$ as the set of states $s^\prime$ such that $(s, s^\prime)$ is a transition in $G^{\mathsf{S}}$ and $s^\prime$ contains only a single component, i.e., $s^\prime$ has the form $(u, \{P\})$. By definition, $\mathsf{next}_{G^{\mathsf{S}}}(\emptyset, \{E_F\}) = \{(u, \mathsf{Q}_1), \ldots, (u, \mathsf{Q}_m)\}$ for some $u \in V_F$, where $Q_1, \ldots, Q_m$ is the finest partition of $\mathsf{CC}_F(\{u\})$. \\ \\
			   The tree $T^r$ will be recursively constructed as follows. First, create the tree root $r$ with $\beta_T(r) = u$. As will be explained later, the root node will be associated with the set of states $S(r) := \mathsf{next}_{G^{\mathsf{S}}}(\emptyset, \{E_F\})$. We then proceed with the following procedure: \\ \\
			   Let $t$ be a leaf node in the current tree associated with a non-empty set of game states $S(t)$ such that $|\cup_{(u, \{P\}) \in S(t)} P| > 1$. For each state $(u, \{P\}) \in S(t)$, create a new node $\tilde{t}$ and set its parent to be $t$. Pick any state $(v, \{P^\prime\}) \in \mathsf{next}_{G^{\mathsf{S}}}(u, \{P\})$, and set $\beta_T(\tilde{t}) = v$. Then, node $\tilde{t}$ will be associated with the set of states $S(\tilde{t}) = \{(v, \{\tilde{P}\}) : (v, \{\tilde{P}\}) \in \mathsf{next}_{G^{\mathsf{S}}}(u, \{P\})\}$. \\ \\
			   We now prove that $T^r$ is indeed a valid tree skeleton for $F$. By definition of a parallel tree, when constructing $T^r$ and defining the label function $\beta_T : V_{T^r} \rightarrow V_F$, we can naturally define the label function for edges $\gamma_T : E_{T^r} \rightarrow 2^{E_F}$. For any edge $(t_1, t_2) \in E_{T^r}$, there exist only paths connecting $\beta_T(t_1)$ and $\beta_T(t_2)$ in $F$. Therefore, the image of $(t_1, t_2)$ is naturally defined as the set of paths connecting $\beta_T(t_1)$ and $\beta_T(t_2)$. Since $\beta_T$ is already defined for the nodes of $T^r$, it remains to prove that for every edge $(t_1, t_2) \in E_{T^r}$, there exist only paths connecting $\beta_T(t_1)$ and $\beta_T(t_2)$ in $F$.
	   
			   We revisit the construction of $T^r$. Let $t$ be a leaf node associated with the game states $S(t)$. For each game state $(u, \{P\}) \in S(t)$, create a new node $\tilde{t}$ and set its parent to $t$. Pick any state $(v, \{P^\prime\}) \in \mathsf{next}_{G^{\mathsf{S}}}(u, \{P\})$, and set $\beta_T(\tilde{t}) = v$. Since $(v, \{P^\prime\}) \in \mathsf{next}_{G^{\mathsf{S}}}(u, \{P\})$, the transition $((u, \{P\}), (v, \{P^\prime\}))$ is a legal move in the pebble game. 
	   
			   Moreover, since we assume that $G^{\mathsf{S}}$ satisfies \cref{lm:contracted_game_state_graph}, we can conclude that the game state $(u, \{P\})$ is strictly contracted. In other words, $P^\prime \subset P$. This implies that when the Spoiler places the pebble $\mathsf{p}_v$ on vertex $v \in V_F$, the Duplicator can only choose a strictly contracted connected component set. Hence, we deduce that there are only paths connecting $u$ and $v$. Consequently, there exist only paths connecting $\beta_T(t)$ and $\beta_T(\tilde{t})$.
	   
			   By recursively applying this analysis throughout the construction of $T^r$, we conclude that for every edge $(t_1, t_2) \in E_{T^r}$, there exist only paths connecting $\beta_T(t_1)$ and $\beta_T(t_2)$ in graph $F$.
		   \end{proof}
	   \end{lemma}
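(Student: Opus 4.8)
The plan is to turn a winning strategy for the Spoiler in the (simplified) pebble game on $F$ into an explicit parallel-tree skeleton of $F$: the tree nodes will be the successive ``articulation vertices'' on which the Spoiler pins a pebble, and the edge labels $\gamma$ will be exactly the connected components the Duplicator is forced to carry along the way. First I would normalize the strategy by invoking \cref{lm:contracted_game_state_graph} to fix a winning game state graph $G^{\mathsf{S}}$ in which every reachable non-terminal state is \emph{strictly contracted}; this is the one structural property the argument really uses. For a state $s$, let $\mathsf{next}_{G^{\mathsf{S}}}(s)$ denote the set of its successors (after a full round: place $\mathsf{p}_v$, swap, remove $\mathsf{p}_v$) that consist of a single component $(u,\{P\})$. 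Starting from $(\emptyset,\{E_F\})$, the Spoiler's opening move pins some vertex $u$, so $\mathsf{next}_{G^{\mathsf{S}}}(\emptyset,\{E_F\})=\{(u,\{P\}) : P\in\mathsf{CC}_F(\{u\})\}$. I then build $T^r$ recursively: the root $r$ gets $\beta_{T^r}(r)=u$ and carries the state set $S(r):=\mathsf{next}_{G^{\mathsf{S}}}(\emptyset,\{E_F\})$; while a current leaf $t$ carries $S(t)\neq\emptyset$ with $|\bigcup_{(u,\{P\})\in S(t)}P|>1$, I add one child $\tilde t$ per state $(u,\{P\})\in S(t)$, choose any $(v,\{P'\})\in\mathsf{next}_{G^{\mathsf{S}}}(u,\{P\})$, set $\beta_{T^r}(\tilde t)=v$, and let $\tilde t$ carry $S(\tilde t)=\{(v,\{\tilde P\})\in\mathsf{next}_{G^{\mathsf{S}}}(u,\{P\})\}$. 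Finally I define $\gamma_{T^r}(t_1,t_2)$ to be the family of internally vertex-disjoint paths in $F$ from $\beta_{T^r}(t_1)$ to $\beta_{T^r}(t_2)$ comprising the component attached to that tree edge.

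The claims that then need verification are: (i) the recursion terminates, which is immediate from strict contraction since each step replaces a component $P$ by some $P'\subsetneq P$, so the edge sets carried down the branches strictly shrink; (ii) $\beta_{T^r}$ is injective and the components assigned to the tree edges partition $E_F$ so that each $e\in E_F$ lies on a unique path of a unique $\gamma_{T^r}(t_1,t_2)$ --- I would establish this from the invariant that the states in $S(t)$ are precisely the connected components of $F$ relative to the set of articulation vertices placed so far on the branch down to $t$, so distinct subtrees consume disjoint edge sets and nothing is missed or repeated; and (iii), the crux, that for every tree edge $(t,\tilde t)$ the attached component is a \emph{parallel edge} with endpoints $\bigl(\beta_{T^r}(t),\beta_{T^r}(\tilde t)\bigr)$, i.e.\ a union of internally disjoint paths from $\beta_{T^r}(t)$ to $\beta_{T^r}(\tilde t)$ with no internal branching.

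I expect (iii) to be the main obstacle. I would argue it by contradiction from strict contractedness of the state $(u,\{P\})$ (with $u=\beta_{T^r}(t)$, $v=\beta_{T^r}(\tilde t)$): if $P$ were not a union of internally disjoint $u$-to-$v$ paths, it would contain a vertex other than $u$ and $v$ that is a cut vertex of $P$ lying off the $u$–$v$ ``core''. Tracing the game from $(u,\{P\})$, after the Spoiler pins $v$ the component $P$ splits, and the Duplicator's parity-preserving update is then forced to retain a family of components whose union is \emph{not} a proper subset of $P$ (the part hanging off the extra cut vertex survives intact); this contradicts the fact, guaranteed by \cref{lm:contracted_game_state_graph}, that every successor of $(u,\{P\})$ keeps only a strict sub-component. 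Hence no such internal cut vertex exists and every component attached to a tree edge is a parallel edge. Consequently $(V_{T^r},E_{T^r},\beta_{T^r},\gamma_{T^r})$ satisfies both defining conditions of a parallel-tree decomposition, so $(F,T^r)\in\mathcal{S}^{\mathsf{pt}}$; in particular $F$ is a parallel tree, which is what we wanted.
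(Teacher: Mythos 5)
Your proposal follows the paper's proof exactly in its overall architecture: normalize the Spoiler's strategy via \cref{lm:contracted_game_state_graph} so that every reachable non-terminal state is strictly contracted, then recursively build the skeleton $T^r$ by letting tree nodes be the pinned vertices and letting the edge labels $\gamma$ be read off from the components the Duplicator carries, and finally invoke strict contraction to argue that each attached component is a parallel edge. Your items (i)--(ii) (termination and the partition property) are correct and even make explicit some things the paper leaves implicit.

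The problem is your justification of item (iii), which is the crux. The intermediate claim you lean on --- ``if $P$ is not a union of internally vertex-disjoint $u$--$v$ paths, then $P$ has a cut vertex other than $u,v$ lying off the $u$--$v$ core'' --- is simply false. For a concrete counterexample take $P = K_4$ minus the edge $\{u,v\}$ (vertices $u,a,b,v$; edges $\{u,a\},\{u,b\},\{a,v\},\{b,v\},\{a,b\}$). This $P$ is $2$-connected, so it has no cut vertices at all, and yet it admits no partition of its edges into internally disjoint $u$--$v$ paths (the chord $\{a,b\}$ cannot be absorbed). The conclusion that strict contraction fails here is actually right --- pinning $v$ does not split $P$, so when the pebble at $u$ is lifted the Duplicator's parity-forced reconstitution returns all of $P$ (or more, if $u$ is a cut vertex of $F$), hence $\mathcal{Q}'\not\subsetneq\{P\}$ --- but your cut-vertex reasoning produces no contradiction in this case and thus does not cover it. More generally, the escape from strict contraction is not caused by a dangling sub-piece ``hanging off an extra cut vertex''; it is caused either by $P$ failing to split at all when $v$ is pinned, or by the Duplicator being able to single out an odd number of pieces that merge back (through $u$) into a component that is not a strict subset of $P$. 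The argument needs to be organized around those two mechanisms rather than around cut vertices. (To be fair, the paper's own proof is also a sketch here --- it writes ``Hence, we deduce that there are only paths connecting $u$ and $v$'' without spelling out the case analysis --- but it does not commit to a false intermediate claim.)

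A second, more minor imprecision: you phrase this as the Duplicator being ``forced to retain'' a bad component, but in the game-state graph the Duplicator \emph{chooses} among odd subsets at the pinning step; the ``forcing'' only happens at the removal step where the parity constraint uniquely determines the update. The correct statement is that strict contraction requires \emph{every} admissible Duplicator choice to lead to a strict sub-component, so you must exhibit a single bad choice, not argue that all choices are bad.
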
	
		   We now prove finite-iteration version of \cref{lm:tightness_low_dimension} as follows:
		   \begin{lemma}
			   \label{lm:tightness_low_dimension_finite_iteration}
		   Given any base graph $F$, Spoiler can win the simplified pebble game on $F$ in $d$ steps iff there exsits a parallel tree skeleton $T^r$ of $F$ such that $T^r$ has depth at most $d+1$.
	   \end{lemma}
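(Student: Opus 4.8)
The plan is to prove both implications by adding round-counting to machinery already in place: for the forward direction I revisit the proof of \cref{lm:tightness_low_dimension}, and for the reverse direction I exhibit an explicit top-down Spoiler strategy whose length I bound by the skeleton depth. For the forward implication, suppose Spoiler wins the simplified pebble game on $F$ in $d$ steps. First I upgrade \cref{lm:contracted_game_state_graph}: the normalization there — which makes a non-strictly-contracted state unreachable by having Spoiler, one round earlier, play $\mathsf{p}_v$ on $u'$, swap, and remove $\mathsf{p}_v$ — replaces a round by a round and re-uses the old tail, so it never lengthens any play (it can only shorten plays, and the recursion terminates). Hence we may fix a winning strategy, realized by a game state graph $G^{\mathsf{S}}$, in which every reachable non-terminal state is strictly contracted \emph{and} every root-to-terminal play has length at most $d$. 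Running the skeleton construction from the proof of \cref{lm:tightness_low_dimension} on this $G^{\mathsf{S}}$, the root of $T^r$ records the initially pebbled vertex and each subsequent tree edge records exactly one application of $\mathsf{next}_{G^{\mathsf{S}}}$, i.e. one game round; therefore the length of any root-to-leaf path in $T^r$ is at most the round-length of the play that produced it plus one, so $\Dep(T^r)\le d+1$. By \cref{lm:tightness_low_dimension}, $(F,T^r)$ is a legitimate parallel-tree decomposition, so $F$ admits a parallel tree skeleton of depth at most $d+1$.

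For the reverse implication, suppose $F$ has a parallel tree skeleton $T^r$ with $\Dep(T^r)\le d+1$ and root $r$, and let Spoiler play ``walk down $T^r$''. He opens with $\mathsf{p}_u$ on $\beta_{T^r}(r)$, and maintains the invariant that when $\mathsf{p}_u$ rests on $\beta_{T^r}(t)$ for a node $t$ at depth $j$, Duplicator's family $\mathcal{Q}$ is a nonempty odd-size set of components of $\mathsf{CC}_{\{\beta_{T^r}(t)\}}(F)$, each of the shape (the parallel edge joining $\beta_{T^r}(t)$ to $\beta_{T^r}(t_i)$) $\cup$ (the sub-parallel-tree hanging at a child $t_i$ of $t$). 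Pick $C\in\mathcal{Q}$, toward child $t_i$. If $t_i$ is a leaf: either the corresponding parallel edge is a single simple path, so $C$ is already a path in $\mathcal{Q}$ and Spoiler has won, or it is a genuine bundle, in which case Spoiler plays $\mathsf{p}_v$ on $\beta_{T^r}(t_i)$, whereupon $C$ breaks into its constituent simple paths and Duplicator, forced to keep an odd (hence nonzero) number of them, must keep a path — Spoiler wins. If $t_i$ is not a leaf, Spoiler plays $\mathsf{p}_v$ on $\beta_{T^r}(t_i)$, swaps, and removes $\mathsf{p}_v$: after the split at $\beta_{T^r}(t_i)$ Duplicator may not keep any newly exposed simple path (that would lose), so he keeps an odd number of the sub-parallel-trees at $t_i$'s children, and the merge caused by removing $\mathsf{p}_v$ then forces, by parity, the large upward component to be dropped, so $\mathcal{Q}$ becomes an odd-size set of subtrees at $t_i$'s children — the invariant at depth $j+1$. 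Since $\Dep(T^r)\le d+1$, every branch of $T^r$ terminates at depth at most $d+1$, so Spoiler reaches a winning configuration within $d$ rounds.

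The conceptual content above is light; the real work is the round bookkeeping in both directions — reconciling ``initialization plus rounds'', the fact that a terminal state sits a half-round before an actual win, the finishing placement of $\mathsf{p}_v$ that splits a bundle, and the convention $\Dep(T^r)=\max_t\dep_{T^r}(t)$ — so as to land the additive constant exactly on ``$d$ steps $\leftrightarrow$ depth $d+1$'' uniformly across the two implications. Pinning down this constant, together with verifying that the \cref{lm:contracted_game_state_graph} normalization genuinely preserves the round budget and that the invariant above is literally respected by the simplified-game update rules, is the crux of the proof.
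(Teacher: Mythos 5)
Your proposal is essentially correct but follows a genuinely different route from the paper's. For the direction ``Spoiler wins in $d$ steps $\Rightarrow$ depth $\le d+1$'', the paper proceeds by induction on $d$: after invoking \cref{lm:tightness_low_dimension} as a black box to conclude $F$ is a parallel tree, it performs a case split on Spoiler's opening move, has Duplicator retreat into the deepest child's subtree, and applies the inductive hypothesis to that subgame. You instead make a \emph{single} pass: upgrade \cref{lm:contracted_game_state_graph} to a round-budget--preserving normalization, observe that in the skeleton construction of \cref{lm:tightness_low_dimension} every tree edge is one application of $\mathsf{next}_{G^{\mathsf{S}}}$, i.e.\ one round, and read the depth bound off directly. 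This is cleaner in that it avoids the patching step the paper's induction glosses over (the inductive hypothesis produces a skeleton only of the subgame graph, which then has to be grafted back together with the other children's subtrees into a single skeleton rooted at $\beta_T(r)$), but it leans harder on the round-preservation claim for the contraction normalization, which the paper never states explicitly; you correctly flag this as the crux. Note that your argument needs the additional observation that, once a terminal state is reached, $\mathsf{next}_{G^{\mathsf{S}}}$ returns $\emptyset$ (so the construction cannot push past depth $d+1$ even when the exposed path has more than one edge) --- worth saying out loud, since the construction's stopping condition $|\cup P|>1$ does not by itself stop at multi-edge paths.

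For the converse, the paper dismisses it as ``evident,'' whereas you supply the explicit top-down strategy with the invariant on $\mathcal{Q}$ and verify the parity bookkeeping through the split/merge updates. This is genuinely useful added content: in particular you correctly note that when Spoiler reaches a node whose $\mathcal{Q}$ contains a leaf-bound component, the terminal half-round placement of $\mathsf{p}_v$ forces Duplicator (odd, hence nonempty, subset) to retain a simple path, and that the merge upon removing $\mathsf{p}_v$ drops the large upward component by parity so the invariant is preserved at the next depth. The one place you should be more careful is the off-by-one ledger: with the paper's convention a terminal state is won by a half-round, so ``$d$ full rounds then one splitting placement'' must be identified with ``$d$ steps,'' and your invariant at depth $j$ should be stated as holding \emph{after} $j$ completed rounds, so that depth $d$ corresponds to $d$ rounds and the leaf children sit at depth $d+1=\Dep(T^r)$. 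These are exactly the ``pinning down the constant'' points you flag, and with them resolved your proof is sound.
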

	   \begin{proof}
		   Initially, it is evident that if $F$ is a parallel tree with a tree skeleton of depth at most $d+1$, then the Spoiler has a winning strategy in $d$ steps. Therefore, we are left to consider the converse direction of the lemma.
		   Now, consider the case where, for a base graph $F$, the Spoiler has a winning strategy in $d$ steps. According to the analysis in \cref{lm:contracted_game_state_graph}, if the Spoiler has a winning strategy in $d$ steps, then he can guarantee that all reachable non-terminal states in the game state graph $G^{\mathsf{S}}$ are strictly contracted.
		   We will prove this statement by induction. The statement trivially holds for $d=1$. Assume that if the Spoiler has a winning strategy in $d-1$ steps, then the base graph is a parallel tree with a tree skeleton of depth at most $d$. Now, we consider the case where the Spoiler can win in $d$ steps.
	   
		   By \cref{lm:tightness_low_dimension}, the Spoiler can win the game on $F$, implying that $F$ is a parallel tree. Let $T^r$ be the tree skeleton of $F$. At the beginning of the game, we first consider the case where the Spoiler places a pebble on a vertex $u$ such that $u \notin \{v : \exists t \in V_T, \beta_T(t) = v\}$. We assume that the Duplicator selects connected component $P$ (since $F$ is a parallel tree, the Duplicator can only select one connected component in this case).
		   Assume further that there exist $t, t^\prime \in V_T$ such that $u$ is on a path connecting $\beta_T(t)$ and $\beta_T(t^\prime)$. We now consider two separate cases:
		   \begin{itemize}[topsep=0pt,leftmargin=17pt]
			   \setlength{\itemsep}{0pt}
			   \item If there is more than one path connecting $\beta_T(t)$ and $\beta_T(t^\prime)$ in $F$, i.e., $|\gamma_T(t, t^\prime)| > 1$, then placing the pebble on $u$ does not split $F$, and it remains as one connected component. In this case, we can directly eliminate $(u, P)$ from the game state graph, and the remaining game state graph still represents a winning strategy for the Spoiler.
			   \item If there is only one path connecting $\beta_T(t)$ and $\beta_T(t^\prime)$ in $F$, i.e., $|\gamma_T(t, t^\prime)| = 1$, then placing the pebble on $u$ splits the base graph $F$ into two connected components. In this case, we replace the game state $(u, \{P\})$ in the game state graph with $\{u^\prime, \{P^\prime\}\}$, where $u^\prime \in \{\beta_T(t), \beta_T(t^\prime)\}$ and $P \subset P^\prime$.
		   \end{itemize}
		   Following this discussion, we only need to consider the case where, at the beginning of the game, the Spoiler places the pebble on a vertex $u \in V_F$ such that there exists $t \in V_T$ and $u = \beta_T(t)$. Without loss of generality, assume $u = \beta_T(r)$, and the children of $r$ are $\{t_1, \ldots, t_n\}$. Further, assume that among all subtrees induced by $t_1, t_2, \ldots, t_n$, the subtree induced by $t_1 \in V_T$ has the greatest depth. We now consider the case where the Duplicator picks the connected component formed by the subtree induced by $t_1$ and the path in $\gamma_T(r, t_1)$.
		   If the Spoiler must ensure that the subsequent game state is strictly contracted, he must place the pebble on $t_1$. The remaining game now reduces to a game played on the graph induced by the subtree formed by all descendants of $t_1$. By the induction hypothesis, the subtree induced by $t_1$ has depth at most $d$. Thus, $T^r$ has depth at most $d+1$.
	   \end{proof}
	   We now prove \cref{lm:maximal} from the main paper.
\begin{lemma}
   For any \( F \notin \mathcal{F}^{\mathsf{Spec},(d)} \), the Spoiler cannot win the simplified pebble game on \( F \) in $d-1$ steps. Consequently, \( \chi_{G}^{\mathsf{Spec},(d)}(G(F)) = \chi_{H}^{\mathsf{Spec},(d)}(H(F)) \).
\end{lemma}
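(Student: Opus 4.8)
The proof is a short synthesis of the machinery assembled in Steps~1--5, so my plan is to chain three already-established facts and track the index shifts carefully, rather than do any new combinatorics.

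\textbf{Unpacking the hypothesis.} First I would recall that Steps~1 and 2 (culminating in \cref{thm:equivalence_homomorphism_tree_count_finite_iteration} and \cref{thm:equivalence_cnt_hom}) already identify $\mathcal{F}^{\mathsf{Spec},(d)}$ with the family of connected graphs of parallel tree depth at most $d$. Hence the hypothesis $F\notin\mathcal{F}^{\mathsf{Spec},(d)}$ says precisely that $F$ admits no parallel tree skeleton of depth $\le d$; by \cref{lm:tightness_low_dimension} this also correctly subsumes the degenerate case in which $F$ is not a parallel tree at all, since then the Spoiler cannot win in any number of steps.

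\textbf{The pebble game cannot be won in $d-1$ steps.} Next I would apply \cref{lm:tightness_low_dimension_finite_iteration} with its parameter set to $d-1$: the Spoiler wins the simplified pebble game on $F$ in $d-1$ steps if and only if $F$ has a parallel tree skeleton of depth at most $(d-1)+1=d$. By the previous paragraph no such skeleton exists, so the Spoiler cannot win in $d-1$ steps, which is the first assertion of the lemma. Then I would invoke \cref{lm:equiv_simplified_pebble} (equivalently \cref{lm:simplified_pebble_game}), again with parameter $d-1$, which states that the Spoiler fails to win in $d-1$ steps iff $\chi_{G}^{\mathsf{Spec},(d)}(G(F))=\chi_{H}^{\mathsf{Spec},(d)}(H(F))$; combining this with the first assertion yields the claimed equality of spectral colors. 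To close the maximality clause of \cref{thm:main_theorem}, I would finally take $G=G(F)$ and $H=H(F)$: they have identical $d$-iteration spectral colors by the above, while $\hom(F,G(F))\neq\hom(F,H(F))$ holds for every connected $F$ by the standard oddomorphism property of F\"urer graphs \citep{roberson2022oddomorphisms,zhang2024beyond}, so $F$ is a genuine witness that $\mathcal{F}^{\mathsf{Spec},(d)}$ cannot be enlarged.

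\textbf{Main obstacle.} There is essentially no new content; the only thing that can go wrong is an off-by-one error in the three-way translation among ``Spoiler wins in $k$ steps'', ``$k+1$ spectral iterations'', and ``parallel tree skeleton of depth $k+1$''. So the single step requiring genuine care is checking that the parameter substitutions in \cref{lm:tightness_low_dimension_finite_iteration} and \cref{lm:equiv_simplified_pebble} compose to deliver exactly $\chi^{\mathsf{Spec},(d)}$ on one side and parallel tree depth exceeding $d$ on the other.
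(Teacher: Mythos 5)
Your proof is correct and follows essentially the same route as the paper's: chain \cref{lm:tightness_low_dimension_finite_iteration} (to conclude the Spoiler cannot win in $d-1$ steps from the absence of a depth-$d$ parallel tree skeleton) with \cref{lm:equiv_simplified_pebble}/\cref{lm:simplified_pebble_game} (to convert that into equality of $d$-iteration spectral colors on the F\"urer pair). You are in fact more careful than the paper's own two-line proof, which invokes the lemmas without explicitly instantiating the step parameter at $d-1$; your off-by-one bookkeeping makes the composition transparent, and the observation that \cref{lm:tightness_low_dimension} absorbs the degenerate non-parallel-tree case is a helpful clarification the paper leaves implicit.
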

\begin{proof}
   By \cref{lm:tightness_low_dimension_finite_iteration}, since \( F \notin \mathcal{F}^{\mathsf{Spec},(d)} \), the Spoiler cannot win the simplified pebble game on the base graph $F$. Thus, by \cref{lm:simplified_pebble_game}, we conclude that \( \chi_{G}^{\mathsf{Spec},(d)}(G(F)) = \chi_{H}^{\mathsf{Spec},(d)}(H(F)) \).
\end{proof}
Combining all the results from steps 1 through 5, we now conclude the proof of our main theorem.
\begin{theorem}
   The homomorphism expressivity of spectral invariant GNNs with $d$ iterations can be characterized as follows:
   \begin{align*}
	   \mathcal{F}^{\mathsf{Spec},(d)} = \{F \mid \text{$F$ has parallel tree depth at most $d$}\}.
   \end{align*}
   Specifically, the following properties hold:
   \begin{itemize}[topsep=0pt,leftmargin=17pt]
	   \setlength{\itemsep}{-3pt}
	   \item For graphs $G$ and $H$, $\chi_G^{\mathsf{Spec},(d)}(G) = \chi_H^{\mathsf{Spec},(d)}(H)$ if and only if, for all graphs $F$ with parallel tree depth at most $d$, $\hom(F, G) = \hom(F, H)$.
	   \item $\mathcal{F}^{\mathsf{Spec},(d)}$ is maximal; that is, for any graph $F \notin \mathcal{F}^{\mathsf{Spec},(d)}$, there exist graphs $G$ and $H$ such that $\chi_G^{\mathsf{Spec},(d)}(G) = \chi_H^{\mathsf{Spec},(d)}(H)$ and $\hom(F, G) \neq \hom(F, H)$.
   \end{itemize}
\end{theorem}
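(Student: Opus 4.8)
The plan is to stitch together the five steps developed above into the two bullet points. For the forward characterization (the first bullet), I would first apply \cref{lm:equivalence_walk_spectral} to replace the spectral invariant $\chi^{\mathsf{Spec},(d)}$ with the walk-encoding invariant $\chi^{\mathsf{Walk},(d)}$, which induces the same partition of graphs at every depth $d$ but is purely combinatorial. Next, \cref{thm:equivalence_unfolding_tree_coloring} identifies $\chi_G^{\mathsf{Walk},(d)}(u)$ with the isomorphism type of the depth-$d$ spectral unfolding tree $(F_G^{(d)}(u),T_G^{(d)}(u))$, a parallel-tree decomposed graph whose skeleton has depth at most $d$. Summing over vertices and invoking the finite-iteration corollary following \cref{coro:1-WL_spectral_treecount_equivalence_color_refinement} gives that $\chi_G^{\mathsf{Walk},(d)}(G) = \chi_H^{\mathsf{Walk},(d)}(H)$ iff $\treecount((F,T^r),G) = \treecount((F,T^r),H)$ for every parallel-tree decomposed $(F,T^r)$ with $\Dep(T^r)\le d$. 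Finally, \cref{thm:equivalence_cnt_hom} (finite-iteration case) converts equality of all such tree-counts into equality of $\hom(F,\cdot)$ over all parallel trees $F$ of parallel tree depth at most $d$; this conversion is the triangular linear-algebra inversion built from \cref{thm:equivalence_homomorphism_tree_count_finite_iteration} and \cref{lm:homomorphism_property_finite_iteration}, using that $\mM^{\strsurj,(d)}$ is lower-triangular with positive diagonal, $\mM^{\strinj,(d)}$ upper-triangular with positive diagonal, and $\mM^{\aut,(d)}$ diagonal with positive entries. Chaining these equivalences yields exactly the first bullet, and in particular the family $\{F : \text{parallel tree depth of } F \le d\}$ satisfies condition (a) of \cref{def:homo_expressivity}.

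For maximality (the second bullet), I would fix a connected $F$ with parallel tree depth strictly greater than $d$ and take $G = G(F)$, $H = H(F)$, the Fürer graph and the twisted Fürer graph of base $F$. By the classical odd-homomorphism property of Fürer graphs, $\hom(F,G(F))\ne\hom(F,H(F))$, so it only remains to show $\chi_G^{\mathsf{Spec},(d)}(G(F)) = \chi_H^{\mathsf{Spec},(d)}(H(F))$. By \cref{lm:original_pebble_game_equivalence}, this is equivalent to the spoiler losing the $(d-1)$-round pebble game on $G(F),H(F)$; by \cref{lm:simplified_pebble_game}, whose proof passes through the half-simplified game and the walk-counting behaviour of twisted Fürer graphs, it suffices that the spoiler cannot win the \emph{simplified} pebble game on $F$ within $d-1$ rounds. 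But \cref{lm:tightness_low_dimension_finite_iteration} says the spoiler wins the simplified game in $d-1$ rounds exactly when $F$ admits a parallel-tree skeleton of depth at most $d$; since we assumed otherwise, the spoiler loses, giving the required indistinguishability. This is precisely \cref{lm:maximal}, which discharges condition (b) of \cref{def:homo_expressivity} and, together with the forward direction, completes the proof.

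The hard part will be Step 5, namely \cref{lm:tightness_low_dimension} and its finite-depth refinement \cref{lm:tightness_low_dimension_finite_iteration}: deducing from a spoiler win in the simplified pebble game that $F$ is a parallel tree of the correct depth. The delicate move is to normalize an arbitrary spoiler-winning strategy into one whose game-state graph is a DAG in which every reachable non-terminal state is \emph{strictly contracted} (\cref{lm:contracted_game_state_graph}); only then can one read a parallel-tree skeleton off the successor structure of the contracted states, checking that every tree edge corresponds to a bundle of internally vertex-disjoint paths between its two $\beta$-images and that the recursion depth is controlled by the number of pebble rounds. A secondary burden is the reduction underlying \cref{lm:simplified_pebble_game}: one must verify carefully, via the walk-counting property of (twisted) Fürer graphs, that the spoiler gains nothing by exploiting the internal structure of the Fürer gadgets, so that useful play is confined to splitting and merging connected components of the base graph $F$ — this is what makes the original and simplified games coincide on Fürer graphs.
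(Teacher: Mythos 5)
Your proposal reproduces the paper's own five-step argument — replace the spectral invariant with the walk-encoding one (\cref{lm:equivalence_walk_spectral}), pass through unfolding trees and tree-counts (\cref{thm:equivalence_unfolding_tree_coloring}, \cref{coro:1-WL_spectral_treecount_equivalence_color_refinement}), invert the triangular strong-homomorphism system (\cref{thm:equivalence_homomorphism_tree_count_finite_iteration}, \cref{lm:homomorphism_property_finite_iteration}, \cref{thm:equivalence_cnt_hom}), and close maximality via the pebble game on Fürer graphs, with the contraction normalization (\cref{lm:contracted_game_state_graph}) and its finite-depth refinement (\cref{lm:tightness_low_dimension_finite_iteration}). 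The indexing between pebble-game rounds and iteration counts is handled correctly, and you correctly flag the normalization of a winning strategy to strictly contracted states as the delicate core; this matches the paper's proof in both structure and substance.
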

\begin{proof}
   By \cref{thm:equivalence_cnt_hom} and \cref{coro:1-WL_spectral_treecount_equivalence_color_refinement}, we obtain that for graphs $G$ and $H$, $\chi_G^{\mathsf{Spec},(d)}(G) = \chi_H^{\mathsf{Spec},(d)}(H)$ if and only if, for all graphs $F$ with parallel tree depth at most $d$, $\hom(F, G) = \hom(F, H)$.
   Furthermore, by \cref{lm:maximal}, there exist counterexamples \( G \) and \( H \) for any \( F \notin \mathcal{F}^{\mathsf{Spec},(d)} \) such that \( \chi_G^{\mathsf{Spec},(d)}(G) = \chi_H^{\mathsf{Spec},(d)}(H) \) and \( \hom(F, G) \neq \hom(F, H) \). Thus, we conclude the proof of the main theorem.
\end{proof}
\section{Proof of \cref{thm:eigenvalues_homomorphic_expressivity}}
\label{sec:proof_eigenvalues_homomorphism_expressivity}
In this section, we provide the proof of \cref{thm:eigenvalues_homomorphic_expressivity} from the main paper.
\begin{theorem}
    The homomorphism expressivity of graph spectra is the set of all cycles $C_n$ ($n\ge 3$) plus paths $P_1$ and $P_2$, i.e., $\{C_n|n\ge 3\}\cup\{P_1,P_2\}$.
    \end{theorem}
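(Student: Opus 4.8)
The plan is to follow the template of \cref{thm:main_theorem}, exploiting the elementary fact that the plain spectrum records precisely the closed-walk counts of all lengths. For a graph $G$ with adjacency matrix $\mA$ on $n$ vertices one has $\hom(C_k,G)=\Tr(\mA^k)$ for every $k\ge 1$, $\hom(P_1,G)=n=\Tr(\mA^0)$, and $\hom(P_2,G)=2|E_G|=\Tr(\mA^2)$ (the last identity uses $\mA_{uv}\in\{0,1\}$), whereas $\Tr(\mA^1)=0$ for a simple graph. Hence ``$\hom(F,G)=\hom(F,H)$ for all $F\in\{C_k:k\ge 3\}\cup\{P_1,P_2\}$'' is exactly ``$\Tr(\mA_G^k)=\Tr(\mA_H^k)$ for all $0\le k\le n$'' (the equality $|V_G|=|V_H|=n$ is forced by the $P_1$ count). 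By Newton's identities the power sums $p_0,\dots,p_n$ of the $n$ eigenvalues determine, and are determined by, the characteristic polynomial; so the displayed condition is equivalent to $\mA_G$ and $\mA_H$ having the same characteristic polynomial, i.e.\ to $G$ and $H$ being cospectral. This gives \cref{def:homo_expressivity}(a).

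For maximality, \cref{def:homo_expressivity}(b), I must produce, for each connected $F$ outside $\{C_k:k\ge 3\}\cup\{P_1,P_2\}$, a cospectral pair $G,H$ with $\hom(F,G)\neq\hom(F,H)$. I split on whether $F$ is a parallel tree. Since every cycle, as well as $P_1$ and $P_2$, is a parallel tree, the target family is contained in the class of parallel trees; thus if $F$ is not a parallel tree then $F\notin\mathcal{F}^{\mathsf{Spec},(1)}$, and \cref{lm:maximal} yields the F\"urer graph $G(F)$ and twisted F\"urer graph $H(F)$ with $\chi_G^{\mathsf{Spec},(1)}(G(F))=\chi_H^{\mathsf{Spec},(1)}(H(F))$ and, by the F\"urer-graph property recalled in \cref{sec:proof_sketch}, $\hom(F,G(F))\neq\hom(F,H(F))$. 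Because $\chi^{\mathsf{Spec},(1)}$ already encodes every eigenvalue together with its multiplicity $\Tr(\mP_\lambda)=\sum_u\mP_\lambda(u,u)$, the first condition forces $G(F)$ and $H(F)$ to be cospectral, completing this case.

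It remains to handle connected parallel trees $F$ that are not cycles, $P_1$, or $P_2$; such an $F$ has at least $3$ vertices and, failing to be $2$-regular, has a vertex of degree $1$ or of degree at least $3$. Here I would establish the dichotomy that a connected graph $F$ has $\hom(F,\cdot)$ constant on cospectrality classes if and only if $F\in\{C_k:k\ge 3\}\cup\{P_1,P_2\}$. The ``if'' direction is part (a). For ``only if'': by part (a), cospectrality coincides with homomorphism indistinguishability over the disjoint-union closed family $\mathcal{D}$ of all disjoint unions of cycles, $P_1$'s, and $P_2$'s; so any cospectrality-invariant count $\hom(F,\cdot)$ lies in the linear span of $\{\hom(D,\cdot):D\in\mathcal{D}\}$ — these span the cospectrality-invariant polynomials, each being a product of traces $\Tr(\mA^{k})$ and, via Newton's identities, every polynomial in the characteristic-polynomial coefficients being such a combination — and, since the functions $\{\hom(H,\cdot)\}_H$ are linearly independent over graphs of bounded order, this forces $F\in\mathcal{D}$, hence $F\in\{C_k:k\ge 3\}\cup\{P_1,P_2\}$ by connectedness. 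As a more explicit alternative, in the spirit of the counterexamples in \cref{thm:main_theorem}, one can instead build the cospectral pair by hand: attach a tunable ``amplifier'' gadget near a vertex of $F$ of degree other than $2$ to a rooted cospectral pair of small graphs and coalesce at the roots; Schwenk's coalescence formula keeps the two graphs cospectral, while the amplifier can be chosen so that the $F$-homomorphism counts differ.

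The main obstacle is exactly this last sub-case. Unlike in \cref{thm:main_theorem}, F\"urer graphs are of no use here — for instance $G(C_3)$ and $H(C_3)$ already differ in triangle number and hence are not cospectral — so one must either justify carefully that a cospectrality-invariant polynomial in the entries of an adjacency matrix is a polynomial in the power-sum traces (the one delicate point in the span argument above, the subtlety being that adjacency matrices form only a discrete subset of the symmetric matrices, on which orthogonal conjugations preserving the adjacency property need not act transitively within a cospectrality class), or carry out the explicit coalescence construction and verify the resulting homomorphism-count inequality for the given $F$. The remaining ingredients — the Newton/trace computation of part (a) and the reduction of the non-parallel-tree case to \cref{lm:maximal} — are routine.
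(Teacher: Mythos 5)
Your part (a) follows the paper exactly: $\hom(C_k,G)=\Tr(\mA_G^k)$ for $k\ge 3$, together with $\hom(P_1,G)=\Tr(\mA_G^0)$ and $\hom(P_2,G)=\Tr(\mA_G^2)$ (and $\Tr(\mA_G)=0$ always), and the power sums of the eigenvalues determine the spectrum. The genuine gap is in part (b), and it is exactly the one you flag. Your reduction of the case $F\notin\gF^{\mathsf{Spec},(1)}$ to \cref{lm:maximal} is sound, and since $\chi^{\mathsf{Spec},(1)}$ refines cospectrality the resulting F\"urer pair is indeed cospectral with $\hom(F,G(F))\neq\hom(F,H(F))$. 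But the residual case --- connected $F\in\gF^{\mathsf{Spec},(1)}\setminus(\{C_n|n\ge 3\}\cup\{P_1,P_2\})$, i.e.\ depth-one parallel trees other than cycles, $P_1$, $P_2$ --- is exactly where you stop. Neither of your two sketches is a proof. The span argument would need that every cospectrality-invariant function $\hom(F,\cdot)$ lies in the linear span of the functions $\hom(D,\cdot)$ with $D$ a disjoint union of cycles and $P_1$'s, $P_2$'s; but part (a) only shows that these hom-counts \emph{determine} cospectrality, and passing from ``determines'' to ``spans'' on the discrete set of $0$--$1$ symmetric matrices is precisely the content of what you are trying to prove, as you yourself note. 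The coalescence sketch is not carried out.

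The paper's (b) uses a different split --- cycle/path versus neither --- that shrinks the residual case far more. For any connected $F$ that is neither a cycle nor a path it picks an edge $\{x,y\}$ with $\deg_F(x)>2$, sets $H(F)=\twist(G(F),\{x,y\})$, and proves $\Tr(\mA_{G(F)}^n)=\Tr(\mA_{H(F)}^n)$ for all $n$ via the F\"urer-graph walk-count lemma of Step 4: the only closed walks whose counts could differ between $G(F)$ and $H(F)$ pass only through degree-$2$ vertices and cross the twist edge an odd number of times, and twisting at a degree-$\ge 3$ vertex blocks this. In particular this handles parallel edges with three or more paths, theta graphs, and the like, so your assertion that ``F\"urer graphs are of no use here'' underestimates them; the $G(C_3)$-vs-$H(C_3)$ example is beside the point, since $C_3$ is a cycle and the paper never feeds a cycle into the F\"urer construction. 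The place the F\"urer route genuinely fails is when $F$ is a forest (then $G(F)\cong H(F)$), which within $\gF^{\mathsf{Spec},(1)}$ means paths, and there the paper closes the argument with the explicit cospectral pair of \cref{fig:counterexample_eigenvalue} and a direct computation of $\hom(P_n,\cdot)$. So what is concretely missing from your proposal is a cospectral pair $(G,H)$ with $\hom(F,G)\neq\hom(F,H)$ for each depth-one parallel tree $F$ outside the target family; the paper supplies this via the twisted F\"urer graph whenever $F$ has a vertex of degree $\ge 3$, and by an explicit small example when $F$ is a longer path.
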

\begin{proof}
	We separately prove that the set of all cycles satisfies the two conditions of homomorphism expressivity. For a graph \( G \), we denote \( \mA_G \in \mathbb{R}^{|V_G| \times |V_G|} \) as the adjacency matrix of \( G \), and \( \mathsf{Spec}(G) = \{\lambda_{G,1}, \lambda_{G,2}, \ldots, \lambda_{G, |V_G|}\} \) as the spectrum of \( G \).
	\begin{itemize}[topsep=0pt,leftmargin=17pt]
		\setlength{\itemsep}{-3pt}
		\item We first prove that for any two graphs \( G \) and \( H \), their spectra are identical if and only if for every  \( F\in \{C_n|n\ge 3\}\cup\{P_1,P_2\}\), \( \hom(F, G) = \hom(F, H) \). Let \( \mathcal{C}_n \) denote a cycle with \( n \) vertices. For any graph \( G \), we have \( \hom(C_n, G) = \mathsf{tr}(\mA_G^n) \) for all \( n \in \mathbb{N}_{\geq 3}\), and for $n=2$, we denote $\mathcal{C}_2=P_2$. Moreover, by a basic result from linear algebra, we further obtain:
		\begin{align*}
		\hom(\mathcal{C}_n, G) = \mathsf{tr}(\mA_G^n) = \lambda_{G,1}^n + \lambda_{G,2}^n + \cdots + \lambda_{G, |V_G|}^n.
		\end{align*}
		Therefore, if \( \hom(\mathcal{C}_n, G) = \hom(\mathcal{C}_n, H) \) for all \( n \in \mathbb{N}^+ \), then we have:
		\begin{align*}
		\lambda_{G,1}^n + \lambda_{G,2}^n + \cdots + \lambda_{G, |V_G|}^n = \lambda_{H,1}^n + \lambda_{H,2}^n + \cdots + \lambda_{H, |V_H|}^n, \quad \text{for all } n \in \mathbb{N}^+.
		\end{align*}
		Thus, \( \mathsf{Spec}(G) = \mathsf{Spec}(H) \). Conversely, if we are given that \( \mathsf{Spec}(G) = \mathsf{Spec}(H) \), then:
		\begin{align*}
		\hom(\mathcal{C}_n, G) = \mathsf{tr}(\mA_G^n) = \mathsf{tr}(\mA_H^n) = \hom(\mathcal{C}_n, H), \quad \text{for all } n \in \mathbb{N}^+.
		\end{align*}
		Therefore, we have proven that for any two graphs \( G \) and \( H \), their spectra are identical if and only if for every \( F\in \{C_n|n\ge 3\}\cup\{P_1,P_2\} \), \( \hom(F, G) = \hom(F, H) \).
		\item We now prove that for any graph \( F \) that is not a cycle nor a path, there exists a pair of graphs \( G \) and \( H \) such that their spectra are identical, but \( \hom(F, G) \neq \hom(F, H) \). Specifically, we show that for any graph \( F \) that is not a cycle, \( \mathsf{Spec}(G(F)) = \mathsf{Spec}(H(F)) \) holds, where \( G(F) \) and \( H(F) \) denote the pair of F\"urer graphs constructed with \( F \) as the base graph.

		If \( F \) is not nor a path, then there exist vertices \( x, y \in V_F \) such that the degree of \( x \) is greater 2. We then consider the F\"urer graph \( G(F) \) and the twisted F\"urer graph \( H(F) = \mathsf{twist}(G(F), \{x, y\}) \). According to \cref{thm:walk_count_Furer_graph}, for vertices \( v, x_2, \ldots, x_n \in V_F \) and \( \mathcal{V} \subset V_F \), the number of \( n \)-walks passing through \( (v, \mathcal{V}), \mathsf{Meta}(x_2), \ldots, \mathsf{Meta}(x_n), (v, \mathcal{V}) \) sequentially in \( G(F) \) and \( H(F) \) is unequal. Specifically, \( x_2, \ldots, x_n \) satisfy the following properties:
		
		\begin{enumerate}[topsep=0pt, leftmargin=12pt]
			\setlength{\itemsep}{0pt}
			\item \( (v, x_2), (x_2, x_3), \ldots, (x_{n-1}, x_n), (x_n, v) \in E_F \).
			\item The degree of \( x_2, \ldots, x_n \) is 2 in the base graph \( F \).
			\item Let \( x_1 = x_{n+1} = v \), then:
			\begin{align*}
			\left| \left\{ \left\{ x_i, x_{i+1} \right\}, i = 1, 2, \ldots, n \right\} \cap \{ \{ x, y \} \} \right| \equiv 1 \pmod{2}.
			\end{align*}
		\end{enumerate}
		From this, we deduce that \( v = x \), and we have:
		\begin{align}
		\label{eq:eigenvalue_homomorphic_middle_step}
			\sum_{(v, \mathcal{V^\prime}) \in \mathsf{Meta}(v)} c_{G(F)}^n((v, \mathcal{V^\prime}), x_2, \ldots, x_n) = \sum_{(v, \mathcal{V^\prime}) \in \mathsf{Meta}(v)} c_{H(F)}^n((v, \mathcal{V^\prime}), x_2, \ldots, x_n),
		\end{align}
		where for any vertex \( (v, \mathcal{V^\prime}) \in \mathsf{Meta}(v) \), we use notations \( c_{G(F)}^n((v, \mathcal{V^\prime}), x_2, \ldots, x_n) \) and \( c_{H(F)}^n((v, \mathcal{V^\prime}), x_2, \ldots, x_n) \) to denote the number of \( n \)-walks passing through \( (v, \mathcal{V^\prime}), \mathsf{Meta}(x_2), \ldots, \mathsf{Meta}(x_n), (v, \mathcal{V^\prime}) \) in \( G(F) \) and \( H(F) \), respectively.
		If \( x_2, \ldots, x_n \) do not satisfy the above properties, then for all \( (v, \mathcal{V^\prime}) \in \mathsf{Meta}(v) \), the number of \( n \)-walks passing through \( (v, \mathcal{V^\prime}), \mathsf{Meta}(x_2), \ldots, \mathsf{Meta}(x_n), (v, \mathcal{V^\prime}) \) in \( G(F) \) and \( H(F) \) is equal. Thus, \eqref{eq:eigenvalue_homomorphic_middle_step} holds for all \( v, x_2, \ldots, x_n \in V_F \).
		Consequently, we observe the following property in terms of walk counts:
		\begin{align*}
		\sum_{(v, \mathcal{V^\prime}) \in \mathsf{Meta}(v)} \omega_{G(F)}^n((v, \mathcal{V^\prime}), (v, \mathcal{V^\prime})) = \sum_{(v, \mathcal{V^\prime}) \in \mathsf{Meta}(v)} \omega_{H(F)}^n((v, \mathcal{V^\prime}), (v, \mathcal{V^\prime})),
		\end{align*}
		where \( \omega_{G(F)}^n((v, \mathcal{V^\prime}), (v, \mathcal{V^\prime})) \) and \( \omega_{H(F)}^n((v, \mathcal{V^\prime}), (v, \mathcal{V^\prime})) \) denote the number of \( n \)-walks starting and ending at \( (v, \mathcal{V^\prime}) \) in \( G(F) \) and \( H(F) \), respectively. This holds for all \( n \in \mathbb{N}^+ \) and all \( (v, \mathcal{V}) \in \mathsf{Meta}(v) \).
		Thus, we conclude that \( \mathsf{tr}(\mA_{G(F)}^n) = \mathsf{tr}(\mA_{H(F)}^n) \) for all \( n \in \mathbb{N} \). By a basic result from linear algebra, this implies that \( \mathsf{Spec}(G(F)) = \mathsf{Spec}(H(F)) \). However, since \( \hom(F, G(F)) \neq \hom(F, H(F)) \), we have proven that for any graph \( F \) that is not a cycle, there exists a pair of graphs \( G \) and \( H \) such that their spectra are identical, but \( \hom(F, G) \neq \hom(F, H) \).
		\item We now prove that for any path $F$ of length at least 2, there exist graphs $G$ and $H$ such that $\hom(F, G) = \hom(F, H)$. A pair of counterexamples is provided in \cref{fig:counterexample_eigenvalue}. Initially, we observe that the two graphs are cospectral. Furthermore, for any path $P$ of length $k$ ($k \geq 2$), $\hom(F, G) = 4 \cdot 2^k + 2$. For the graph $H$, let the number of $k$-walks starting from the vertex with degree 3 be denoted as $a_k$. We then have the following recurrence relation:
		\begin{align*}
		a_k = a_{k-1} + 2 \cdot a_{k-2}, \quad a_0 = 1, \quad a_1 = 3.
		\end{align*}
		From this relationship, we can deduce that:
		\begin{align*}
		a_{2k+1} =& 2^{2k+1} - 2^{2k+1} + \cdots + 2^2 - a_0 = 1 + 2 \cdot (1 + 4 + \cdots + 2^{2k}) = \frac{1}{3}\left(2^{2k+3} + 1\right),\\
		a_{2k} = &2^{2k+2} - \frac{1}{3}\left(2^{2k+3} + 1\right) = \frac{1}{3}\left(2^{2k+2} - 1\right).
		\end{align*}
		Therefore, the total number of homomorphisms from a path of length $2k+1$ to $H$ is given by:
		\begin{align*}
		\hom(P_{2k+2}, H) =& 4 \cdot a_{2k} + 2 \cdot a_{2k+1}\\
		 = &\frac{1}{3}\left(2 \cdot 2^{2k+3} - 4\right) + \frac{1}{3}\left(2 \cdot 2^{2k+3} + 2\right) + 3\\
		  =& \frac{1}{3}\left(4 \cdot 2^{2k+3} - 2\right).
		\end{align*}
		Similarly, the total number of homomorphisms from a path of length $2k+2$ to $H$ is:
		\begin{align*}
		\hom(P_{2k+2}, H) = &4 \cdot a_{2k+1} + 2 \cdot a_{2k+2}\\
		 =& \frac{4}{3}\left(2^{2k+3} + 1\right) + \frac{2}{3}\left(4 \cdot 2^{2k+5} - 2\right)\\
		  =& 3 \cdot 2^{2k+5}.
		\end{align*}
		Thus, for all $k \geq 3$, we conclude that:
		\begin{align*}
		\hom(P_k, G) \neq \hom(P_k, H), \quad \text{for all $k \geq 3$}.
		\end{align*}
		
	\end{itemize}
	Combining the previous three results, we have proven that homomorphic expressivity is $\{C_n|n\ge 3\}\cup\{P_1,P_2\}$..
\end{proof}

\begin{figure}[t]
	\vspace{-15pt}
	\centering
	\small
	\begin{tabular}{cc}
		\includegraphics[height=0.14\textwidth]{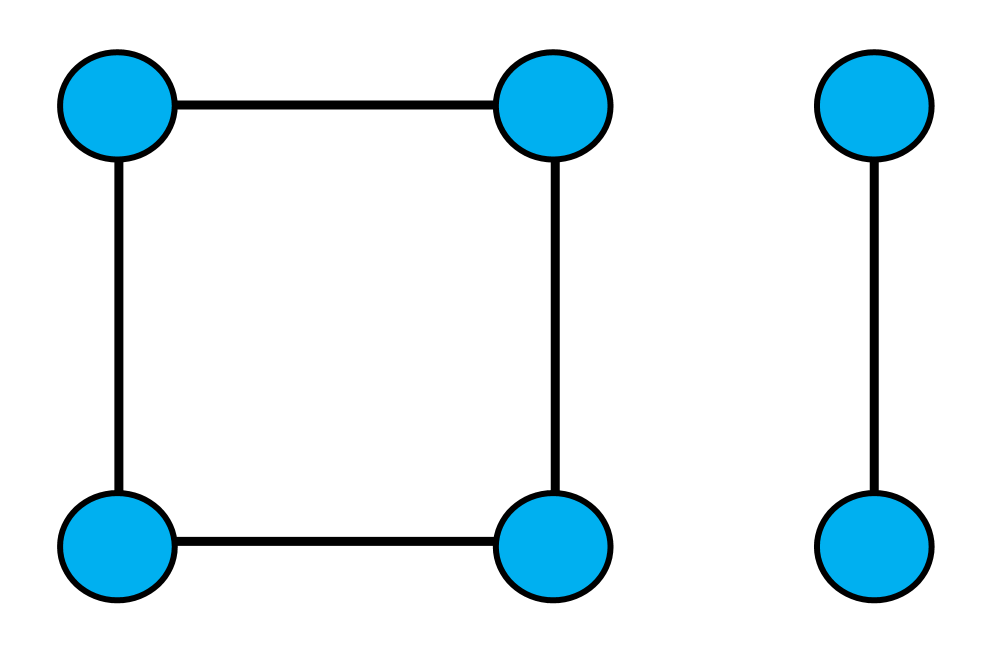} & \includegraphics[height=0.14\textwidth]{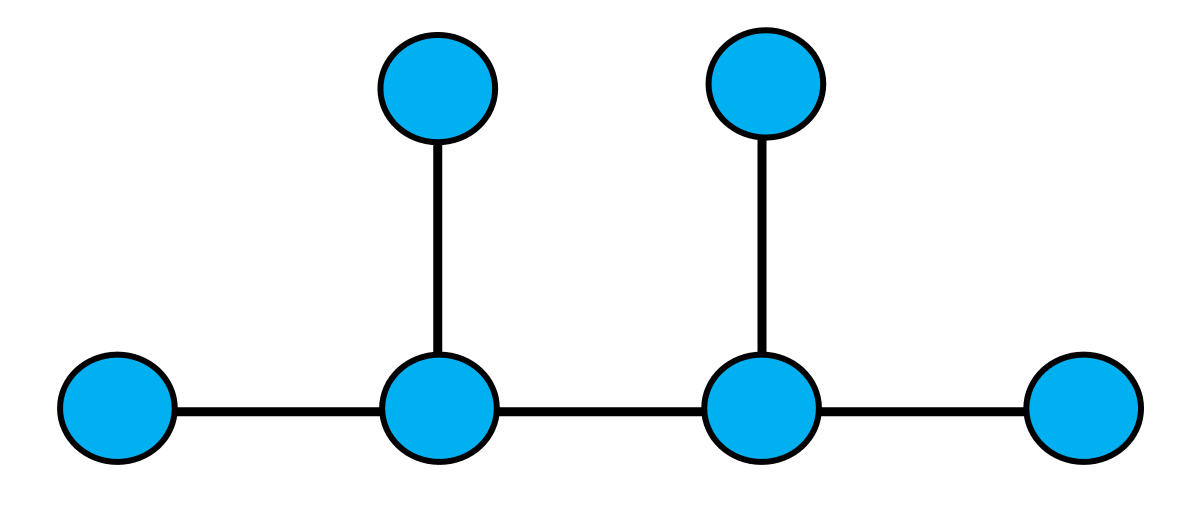} \\
		(a) Counterexample for \cref{thm:eigenvalues_homomorphic_expressivity} (Graph $G$) & \hspace{-5pt}(b) Counterexample for \cref{thm:eigenvalues_homomorphic_expressivity} (Graph $H$)
	\end{tabular}
	\vspace{-7pt}
	\caption{Counterexample for \cref{thm:eigenvalues_homomorphic_expressivity}}
	\label{fig:counterexample_eigenvalue}
	\vspace{-7pt}
\end{figure}
\section{Experimental Details}
\label{sec:gnn_architecture}

In this section, we provide details on the experiments in Section~\ref{sec:experiment}. For dataset setup and training parameters, we follow \citet{zhang2024beyond}. We also use exactly the same model architecture for MPNN, subgraph GNN, and local 2-GNN as \citet{zhang2024beyond} did.

\paragraph{Model architecture of spectral invariant GNN.} For spectral invariant GNN, we use the same feature initialization and final pooling layer as other models. The feature propogation in each layer is implemented to incorporate the eigenvalues and their projection matrices of the graph. Specifically, suppose $\ldblbrace (\lambda,\mP_\lambda(u,v))\rdblbrace$ are all eigenvalues and eigenvectors of the input graph, $h^{l}(u) \in \mathbb R^d$ is the feature vector of node $u$ in layer $l$. Then, the feature in next layer $l+1$ is updated according to the following rule:
\begin{equation}
\begin{aligned}
    h^{(l+1)}(u) &= \mathsf{ReLU}(\mathsf{BN}^{(l)}(\mathsf{MLP}_1^{(l)}((1 + \epsilon^{(l)}) h^{(l)}(u) + f^{(l)}(u))), \\
    f^{(l)}(u) &= \sum_{v \in \gV} \mathsf{ReLU}(h^{(l)}(v) + \sum_\lambda \mathsf{MLP}_2^{(l)}(\lambda) P_\lambda(u, v)),
\end{aligned}
\end{equation}
where $\mathsf{MLP}_{1,2}$ are two-layer feed-forward networks with batch normalization in the hidden layer.

Similar to \citet{zhang2024beyond}, for graphs with edge features, we maintain a learnable edge embedding, $g^{(l)}(u, v)$, for each type of edges, and add them to the aggregation rule $f^{(l)}(u)$. The number of layers and hidden dimensions is set to match MPNN, such that all four models have roughly the same, and obey the 500K parameter budget in ZINC, as \citet{zhang2024beyond} did.

\section{Higher Order Spectral Invariant GNN}
\subsection{Update Rule of Higher-Order Spectral Invariant GNN}

A natural update rule for higher-order spectral invariant GNNs is as follows:

  \begin{definition}[\textbf{Higher-Order Spectral Invariant GNN}]
        For any $k\in\mathbb N_+$, the \( k \)-order spectral invariant GNN maintains a color \(\chi_G^{k\text{-}\mathsf{Spec}}(\vu)\) for each vertex \( k \)-tuple \(\vu = (u_1, \ldots, u_k) \in V_G^k\). Initially, \(\chi_G^{k\text{-}\mathsf{Spec},(0)}(\vu) = (\mathcal{P}(u_1, u_2), \ldots, \mathcal{P}(u_1, u_k), \ldots, \mathcal{P}(u_{k-1}, u_k))\). In each iteration \( t+1 \), the color is updated as follows:
        \begin{align*}
            \chi_G^{k\text{-}\mathsf{Spec},(t+1)}(\vu)=\hash(\chi_G^{k\text{-}\mathsf{Spec},(t)}(\vu),\ldblbrace(\chi_G^{k\text{-}\mathsf{Spec},(t)}(v,u_2,\ldots,u_k),\mathcal{P}(u_1,v)):v\in V_G\rdblbrace,\cdots,\\
            \ldblbrace(\chi_G^{k\text{-}\mathsf{Spec},(t)}(u_1,u_2,\ldots,u_{k-1},v),\mathcal{P}(u_k,v)):v\in V_G\rdblbrace).
        \end{align*}
        Denote the stable color of vertex tuple $\vu\in V_G^k$ as \(\chi_G^{k\text{-}\mathsf{Spec}}(\vu)\). The graph representation is defined as $\chi^{k\text{-}\mathsf{Spec}}_G(G):=\ldblbrace\chi^{k\text{-}\mathsf{Spec}}_G(\vu):\vu\in V_G^k\rdblbrace$.
        \end{definition}

\subsection{Homomorphism Expressivity of Higher-Order Spectral Invariant GNN}

To describe the homomorphism expressivity of higher-order spectral invariant GNNs, we draw inspiration from the concept of "strong nested ear decomposition" from \citet{zhang2024beyond}. For the reader's convenience, we restate the relevant definitions here:

\begin{definition}[\textbf{$k$-order Ear}]
   \label{def:higher-order_ear}
   A $k$-order ear is a graph $G$ formed by the union of $k$ paths \(P_1,\cdots,P_k\) (possibly of zero length), along with an edge set \(Q\), satisfying the following conditions:
   \begin{itemize}[topsep=0pt,leftmargin=17pt]
	   \setlength{\itemsep}{0pt}
	   \item For each path \(P_i\), let its two endpoints be \(u_i\) (outer endpoint) and \(v_i\) (inner endpoint). All edges in \(Q\) are between inner endpoints, i.e., \(Q \subset \{\{v_i,v_j\} : 1 \leq i, j \leq k, v_i \neq v_j\}\).
	   \item Any two distinct paths \(P_i\) and \(P_j\) intersect only at their inner endpoints (if \(v_i = v_j\)).
	   \item \(G\) is a connected graph.
   \end{itemize}
   The endpoints of the $k$-order ear are the outer endpoints \(u_1, \cdots, u_k\).
\end{definition}

\begin{definition}[\textbf{Nested Interval}]
   Let \(G\) and \(H\) be two $k$-order ears with \(\mathsf{inner}(G) = \{v_1,\cdots,v_k\}\), \(\mathsf{outer}(G) = \{u_1,\cdots,u_k\}\), and \(\mathsf{outer}(H) = \{w_1,\cdots,w_k\}\), where each \(\{u_i,v_i\}\) corresponds to the endpoints of a path \(P_i \in \mathsf{path}(G)\). We say \(H\) is nested on \(G\) if at least one endpoint \(w_i\) of \(H\) ($i \in [k]$) lies on the path \(P_i\), and all other vertices of \(H\) are not part of \(G\). The nested interval is defined as the union of the subpaths \(\mathsf{subpath}_{P_i}(w_i,v_i)\) for all \(i \in [k]\) such that \(w_i\) lies on \(P_i\).
\end{definition}

\begin{definition}[\textbf{$k$-Order Strong Nested Ear Decomposition (NED)}]
   \label{def:higher-order_strong_ned}
   A $k$-order strong NED \(\gP\) of a graph \(G\) is a partition of the edge set \(E_G\) into a sequence of edge sets \(Q_1, \cdots, Q_m\), satisfying the following conditions:
   \begin{itemize}[topsep=0pt,leftmargin=17pt]
	   \setlength{\itemsep}{0pt}
	   \item Each \(Q_i\) is a $k$-order ear.
	   \item Any two ears \(Q_i\) and \(Q_j\) with indices \(1 \leq i < j \leq c\) do not intersect, where \(c\) is the number of connected components of \(G\).
	   \item For each \(Q_j\) with index \(j > c\), it is nested on some $k$-order ear \(Q_i\) with index \(1 \leq i < j\). Moreover, except for the endpoints of \(Q_j\) on \(Q_i\), no other vertices in \(Q_j\) belong to any previous ear \(Q_k\) for \(1 \leq k < i\).
	   \item Denote by \(I(Q_j) \subset Q_i\) the \emph{nested interval} of \(Q_j\) in \(Q_i\). For all \(Q_j\) and \(Q_k\) with \(c < j < k \leq m\), if \(Q_j\) and \(Q_k\) are nested on the same ear, then \(I(Q_j) \subset I(Q_k)\).
   \end{itemize}
\end{definition}
\begin{definition}[\textbf{Parallel $k$-Order Strong NED}]
	A graph \( F \) is said to have a parallel \( k \)-order strong nested ear decomposition (NED) if there exists a graph \( G \) such that \( F \) can be obtained from \( G \) by replacing each edge \( (u,v) \in E_G \) with a parallel edge that has endpoints \( (u,v) \).
\end{definition}

With the definition of parallel $k$-order strong NED, we now state the homomorphism expressivity of $k$-spectral invariant GNN as follows:
\begin{theorem}
\label{thm:main_theorem_higher_order}
   The homomorphism expressivity of a $k$-spectral invariant GNN is characterized by the set of all graphs that possess a parallel $k$-order strong NED.
\end{theorem}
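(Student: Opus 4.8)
The plan is to mirror, almost line for line, the five-step proof of \cref{thm:main_theorem}, replacing "parallel tree" by "graph admitting a parallel $k$-order strong NED" throughout and carrying the parallelization operation along at every stage. In other words, the $k$-order spectral invariant GNN should behave like "Local $k$-GNN with walk labels," and Step 1 is the step that turns the spectral information into walk information so that the remaining four steps become a purely combinatorial analysis of a walk-encoding, tuple-based GNN.

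\textbf{Step 1 (reduction to walks).} Exactly as in \cref{sec:proof_step_1}, I would invoke the Cayley–Hamilton theorem: given that two graphs are cospectral, $\mathcal{P}(x,y)$ and $\omega^*_G(x,y)$ determine each other, and cospectrality itself follows from equality of the diagonal walk multisets. Defining the $k$-order walk-encoding GNN by substituting $\omega^*_G(u_i,v)$ for $\mathcal{P}(u_i,v)$ in each of the $k$ aggregation slots, an induction on iterations — applied coordinatewise — shows it has the same distinguishing power as the $k$-order spectral invariant GNN. This is a routine generalization.

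\textbf{Step 2 (homomorphism characterization, the positive direction).} This is the technical heart. I would introduce a \emph{parallel $k$-order-ear decomposed graph} $(F,\mathcal D)$ — the analog of the parallel-tree decomposed graph — where $\mathcal D$ records a $k$-order strong NED skeleton (in the sense of \cref{def:higher-order_strong_ned}) together with the assignment of a bundle of parallel paths to each skeleton edge. I would then define the unfolding object of the $k$-order walk-encoding GNN at a tuple $\vu$ (generalizing the unfolding tree, but now with an ear/nesting structure), prove that two tuples get the same color iff their unfoldings are isomorphic (analog of \cref{thm:equivalence_unfolding_tree_coloring}), transfer this to a $\treecount$ statement (analog of \cref{coro:1-WL_spectral_treecount_equivalence_color_refinement}), establish the identity $\hom(F,G)=\sum \strhom((F,\mathcal D),(\tilde F,\tilde{\mathcal D}))\cdot\treecount((\tilde F,\tilde{\mathcal D}),G)$ (analog of \cref{thm:equivalence_homomorphism_tree_count}), the surjective–injective factorization through $\aut$ (analog of \cref{lm:homomorphism_property}), and finally the matrix-inversion argument (analog of \cref{thm:equivalence_cnt_hom}), concluding that $\chi^{k\text{-}\mathsf{Spec}}_G(G)=\chi^{k\text{-}\mathsf{Spec}}_H(H)$ iff $\hom(F,G)=\hom(F,H)$ for every $F$ admitting a parallel $k$-order strong NED. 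The delicate point is defining "strong homomorphism" between parallel $k$-order-ear decomposed graphs so that it respects both the nesting order of ears and the parallel-path labeling, and checking that $\mathsf{M}^{\strinj}$, $\mathsf{M}^{\strsurj}$, $\mathsf{M}^{\aut}$ remain triangular/diagonal with respect to a well-ordering (by $k$-order ear count, then vertex count).

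\textbf{Steps 3–4 (pebble game and Fürer graphs).} I would define the $k$-order pebble game whose moves mimic the local update rule — one coordinate changes per round, the new vertex recorded against the walk vector $\omega^*$ — with the spoiler winning when $\omega^*$ differs on the pebbled pair; its outcome equals the distinguishing power of the $k$-order spectral invariant GNN (analog of \cref{lm:original_pebble_game_equivalence}). Restricting to Fürer pairs $G(F),H(F)$ over a common base graph $F$, the walk-count property of Fürer graphs proved in \cref{sec:proof_part_4} lets me collapse the game onto $F$, with the same connected-component bookkeeping as in the first-order simplified game (analog of \cref{lm:simplified_pebble_game}).

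\textbf{Step 5 (maximality).} As in \cref{sec:proof_part_5}, a winning strategy may be taken "strictly contracted," and from the game-state graph one reads off a $k$-order strong NED of $F$: each single-component contraction corresponds to placing an ear together with the parallel paths filling it, and the ordering of contractions enforces exactly the nested-interval containments of \cref{def:higher-order_strong_ned}; conversely a parallel $k$-order strong NED yields an explicit winning strategy by playing ears in nesting order. Combined with Step 2, this gives both clauses of homomorphism expressivity. I expect Step 5 to be the main obstacle: unlike the tree case, the spoiler can be forced to keep several components alive simultaneously, and verifying that the resulting strategy graph decomposes into a \emph{valid} $k$-order strong NED — with the right ordering and nested-interval relations — is the part most likely to require genuinely new arguments; the secondary difficulty is engineering the $k$-order unfolding object in Step 2 so that the triangularity underlying the inversion argument survives the ear structure.
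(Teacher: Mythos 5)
Your outline is correct and matches the paper's proof almost step for step: the paper also reduces spectral information to walks, builds a tuple-based unfolding object, pushes the hom/$\treecount$/strong-homomorphism machinery through, and then runs the same Fürer-graph pebble game with $k+1$ pebbles and connected-component parity bookkeeping for maximality. The only organizational difference is that the paper parameterizes the intermediate objects by a width-$k$ \emph{parallel tree decomposition} (a tree-decomposition-style structure with even/odd-depth bags of size $k$ and $k{+}1$ plus sub-bags of paths) and only afterwards proves that admitting such a decomposition is equivalent to admitting a parallel $k$-order strong NED, whereas you propose to carry the NED/ear skeleton directly in the decomposed object; the paper's choice makes the unfolding-tree and triangularity arguments you worried about inherit more or less verbatim from the Local $2k$-GNN analysis of \citet{zhang2024beyond}, which is worth keeping in mind if you actually write out Step 5.
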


\subsection{Proof of \cref{thm:main_theorem_higher_order}}
The proof of \cref{thm:main_theorem_higher_order} follows a similar structure to the analysis of \cref{thm:main_theorem} and Theorem 3.4 in \citet{zhang2024beyond}. Therefore, we provide only a brief sketch, emphasizing the key differences between the proof of \cref{thm:main_theorem_higher_order} and the previous analyses.
\begin{lemma}
\label{lm:find_homomorphism_higher_order}
   For any given graphs $G$ and $H$, we have $\chi_G^{k-\mathsf{Spec}}(G) = \chi_H^{k-\mathsf{Spec}}(H)$ if and only if, for every graph $F$ that has a parallel $k$-order strong NED, $\hom(F, G) = \hom(F, H)$.
\end{lemma}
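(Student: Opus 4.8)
\textbf{Proof proposal for \cref{lm:find_homomorphism_higher_order}.}

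The plan is to mirror the three-module structure used to prove \cref{thm:main_theorem}, adapting each module from the single-pebble ($k=1$) case to the $k$-tuple case. First, I would establish the higher-order analogue of \cref{lm:equivalence_walk_spectral}: the $k$-order spectral invariant GNN is equivalent in expressive power to a \emph{$k$-order walk-encoding GNN}, where the edge-like feature $\mathcal{P}(u_i,v)$ in each of the $k$ update slots is replaced by the walk tuple $\omega^\star_G(u_i,v)$. The argument is identical in spirit: the Cayley--Hamilton theorem shows that, assuming $G$ and $H$ are cospectral (which is forced by the $k$-order color-stable condition restricting to the diagonal), $\mathcal{P}(x,y)=\mathcal{P}(x',y')$ iff $\omega^\star_G(x,y)=\omega^\star_H(x',y')$, and an induction on the iteration count transfers this equivalence to the full refinement. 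Cospectrality is obtained by specializing to $k$-tuples of the form $(u,u,\ldots,u)$ and reading off traces of powers of $\mathA$ exactly as in Step 1 of the main proof.

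Second, I would characterize the $k$-order walk-encoding GNN via a suitable notion of \emph{unfolding structure}. Instead of an unfolding tree, the unfolding object here is built by the $k$-tuple refinement, so at each step one expands $k$ ``slots'' simultaneously and glues in parallel edges obtained from the walk tuples; this produces exactly the graphs admitting a \emph{parallel $k$-order strong NED} with bounded ``depth'' (in the $k$-order-ear sense used in \citet{zhang2024beyond}). The key combinatorial identity to prove is the analogue of \cref{thm:equivalence_homomorphism_tree_count}: for any graph $F$ with a parallel $k$-order NED decomposition, $\hom(F,G)$ decomposes as a nonnegative integer combination (via strong homomorphisms of decomposed structures) of the unfolding-structure counts in $G$. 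Then the triangularity/invertibility argument of \cref{thm:equivalence_cnt_hom} and \cref{lm:homomorphism_property} — relating $\strhom$, $\strsurj$, $\strinj$, and $\aut$ through an upper-triangular and a diagonal matrix — carries over essentially verbatim, because the ordering of decomposed structures by size still makes the relevant matrices triangular with positive diagonal. This yields: $\chi_G^{k\text{-}\mathsf{Spec}}(G)=\chi_H^{k\text{-}\mathsf{Spec}}(H)$ iff $\hom(F,G)=\hom(F,H)$ for all $F$ with a parallel $k$-order strong NED. Combining with Theorem 3.4 of \citet{zhang2024beyond} (which identifies Local $2k$-GNN homomorphism expressivity with graphs admitting a $k$-order strong NED) gives the clean statement, modulo the maximality direction.

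Third, for maximality I would set up a \emph{$k$-tuple pebble game} on F\"urer graphs — the higher-order generalization of the simplified pebble game in \cref{sec:proof_part_4,sec:proof_part_5}. One shows (following the pattern of \cref{lm:equiv_simplified_pebble}, which itself reduces to Theorem 17 of \citet{zhang2023complete}) that the game on the base graph $F$ is equivalent to distinguishing $G(F)$ from $H(F)$ by the $k$-order spectral invariant GNN, using the walk-count property of F\"urer graphs (\cref{thm:walk_count_Furer_graph}) that controls when twisting changes walk multiplicities. Then one proves the structural dichotomy: the Spoiler wins the $k$-tuple simplified pebble game on $F$ iff $F$ admits a parallel $k$-order strong NED, the ``contracted game state graph'' argument of \cref{lm:contracted_game_state_graph,lm:tightness_low_dimension} being replaced by one where the Duplicator maintains parities on $k$ simultaneously-tracked components and the winning strategy is read off as a $k$-order ear decomposition. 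Since $\hom(F,G(F))\neq\hom(F,H(F))$ for every connected $F$, this delivers the required counterexamples for every $F$ outside the claimed family.

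The main obstacle I expect is the structural dichotomy in the third module: correctly defining the $k$-tuple pebble game so that a winning Spoiler strategy decodes into precisely a parallel $k$-order strong NED (with the nested-interval condition of \cref{def:higher-order_strong_ned} respected), and conversely. The interaction between the $k$ pebble ``slots'' and the nesting of ears is subtle — in the $k=1$ case the decomposition is simply a tree, but here one must reconstruct the full NED bookkeeping from the game transcript, which is where the heaviest new work lies. The first two modules, by contrast, should be routine generalizations once the right unfolding object is pinned down.
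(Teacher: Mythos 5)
The lemma you are proving asserts only the homomorphism-equivalence direction (property~(a) of Definition~\ref{def:homo_expressivity}); the maximality property~(b) is handled separately in Lemma~\ref{lm:maximality_higher_order}. Your third module --- the $k$-tuple pebble game on F\"urer graphs --- therefore belongs to that other lemma and is out of scope here. Including it is not wrong per se, but it indicates some confusion about what the current lemma actually claims.

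Your first two modules do track the paper's route: reduce the $k$-order spectral refinement to a walk-encoding refinement via Cayley--Hamilton (the paper bakes walk information directly into the higher-order unfolding structure, which amounts to the same thing), then relate $\hom$ counts to unfolding-structure counts through the $\strhom$/$\strsurj$/$\strinj$/$\aut$ triangularity machinery lifted from the $k=1$ case. The genuine gap is at the end. You assert that the $k$-tuple unfolding process ``produces exactly the graphs admitting a parallel $k$-order strong NED,'' and then propose to ``combine with Theorem~3.4 of \citet{zhang2024beyond}.'' These two moves do not fit together, and the citation is misplaced: Theorem~3.4 of \citet{zhang2024beyond} characterizes Local~$2k$-GNN expressivity by graphs admitting a \emph{plain} (non-parallel) $k$-order strong NED, and the $k$-order spectral invariant GNN is strictly weaker than Local~$2k$-GNN (already for $k=1$; see Corollary~\ref{cor:2_iteration_local_gnn}), so that theorem simply does not apply here. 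What the paper actually does --- and what your proof must do --- is prove directly, by induction on vertex count in each direction, that a graph admits a $k$-spectral ``parallel tree decomposition'' (a width-$k$ canonical tree decomposition whose even-depth bags additionally carry sub-bags of paths between bag vertices, matching the $k$-order unfolding structure) if and only if it admits a parallel $k$-order strong NED. That structural identification is the new combinatorial content of the lemma; your plan hand-waves it and reaches for a result that cannot substitute for it.
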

\begin{proof}
   We first define a parallel tree decomposition, which is a variant of the standard tree decomposition. Given a graph $G = (V_G, E_G)$, its tree decomposition is represented as a tree $T^r = (V_T, E_T, \beta_T, \gamma_T)$. The label functions $\beta_T: V_T \rightarrow 2^{V_G}$ and $\gamma_T: V_T \rightarrow 2^{P_G}$ are defined, where $P_G$ denotes the set of paths in $G$. The tree $T = (V_T, E_T, \beta_T, \gamma_T)$ satisfies the following conditions:
   \begin{enumerate}[topsep=0pt,leftmargin=17pt]
	   \setlength{\itemsep}{0pt}
	   \item Each tree node $t \in V_T$ is associated with a non-empty subset of vertices $\beta_T(t) \subset V_G$ in $G$, referred to as a bag. Each node $t \in V_T$ is also associated with a set of paths $\gamma_T(t)$, called a sub-bag, which includes paths in $G$ that begin and end with vertices in $\beta_T(t)$. We say that a tree node $t$ contains a vertex $u$ if $u \in \beta_T(t)$, and contains a path $p$ if $p \in \gamma_T(t)$.
	   \item For each path $(u_1, u_2, \ldots, u_n)$ with $u_i \in V_G$ for $i \in [n]$, there exists a tree node $t \in V_T$ that contains the path, i.e., $(u_1, \ldots, u_n) \in \gamma_T(t)$.
	   \item For each vertex $u \in V_G$, the set of tree nodes $t$ that contain $u$, denoted by $B_T(u) = \{t \in V_T : u \in \beta_T(t)\}$, forms a non-empty connected subtree of $T$.
	   \item The depth of $T$ is even, i.e., $\max_{t \in V_T} \operatorname{depth}_{T^r}(t)$ is an even number.
	   \item $|\beta_T(t)| = k$ if $\operatorname{depth}_{T^r}(t)$ is even, and $|\beta_T(t)| = k+1$ if $\operatorname{depth}_{T^r}(t)$ is odd.
	   \item For all tree edges $\{s, t\} \in E_T$, where $\operatorname{depth}_{T^r}(s)$ is even and $\operatorname{depth}_{T^r}(t)$ is odd, we have $\beta_T(s) \subset \beta_T(t)$.
   \end{enumerate}
   
   We refer to $(G, T^r)$ as a parallel tree-decomposed graph and $k$ as the width of $G$'s parallel tree decomposition. The set of parallel tree-decomposed graphs with width at most $k$ is denoted as $\gS^{k-\mathsf{Spec}}$.
   
   Similar to the low-dimensional case, we define the unfolding tree of a $k$-spectral invariant graph neural network as follows. Given a graph $G$, a vertex $k$-tuple $\mathbf{u} = (u_1, \ldots, u_k) \in V_G^k$, and a non-negative integer $D$, the depth-$2D$ spectral $k$-spectral invariant tree of $G$ at $\mathbf{u}$, denoted $(F_G^{k-\mathsf{Spec},(D)}(\mathbf{u}), T_G^{k-\mathsf{Spec},(D)}(\mathbf{u}))$, is a parallel tree-decomposed graph $(F, T^r) \in \gS^{k-\mathsf{Spec}}$ constructed as follows:
   \begin{enumerate}[topsep=0pt,leftmargin=17pt]
	   \setlength{\itemsep}{0pt}
	   \item \emph{Initialization.} Initialize $F = G[\mathbf{u}]$, and $T$ with a root node $r$ such that $\beta_T(r) = \{u_1, \ldots, u_k\}$. Define a mapping $\pi: V_F \rightarrow V_G$ by setting $\pi(\mathbf{u}) = \mathbf{u}$. For all $i, j \in [k]$ with $i \neq j$ and $r \in [n]$, if there exists an $r$-length walk $(v_1, \ldots, v_r)$ with $v_1 = u_i$ and $v_r = u_j$, we add a path $(w_1, \ldots, w_r)$ with $w_1 = u_i$ and $w_r = u_j$ to $F$, and include $(w_1, \ldots, w_r)$ in the sub-bag $\gamma_T(r)$. Moreover, we extend $\pi$ by setting $\pi(w_i) = v_i$ for all $i \in [r]$.
	   \item \emph{Iterate for $D$ rounds.} For each leaf node $t \in T^r$, execute the following for each $j \in [n]$:
	   \begin{enumerate}[topsep=0pt,leftmargin=17pt]
		   \setlength{\itemsep}{0pt}
		   \item If $w \notin \{\pi(u_1), \ldots, \pi(u_k)\}$, add a new vertex $z$ to $F$ and extend $\pi$ by setting $\pi(z) = w$. Set $\beta_T(t_w) = \beta_T(t) \cup \{z\}$. 
		   
		   Initialize $\gamma_T(t_w) = \gamma_T(t)$. For all $i \in [k]$ and $r \in [n]$, if there exists a path of length $r$, $(v_1, \ldots, v_r)$, where $v_1 = w$ and $v_r = \pi(w_i)$, we construct a corresponding path $(w_1, \ldots, w_r)$, with $w_1 = z$ and $w_r = u_i$, and include $(w_1, \ldots, w_r)$ in the sub-bag $\gamma_T(t_w)$.
		   \item If $w = \pi(u_r)$ for some $r \in [k]$, set $\beta_T(t_w) = \beta_T(t) \cup \{u_r\}$ without modifying $F$.
	   \end{enumerate}
	   For each $t_w$, add a child node $t_w'$ to $T^r$, designate $t_w$ as its parent, and update $\beta_T(t_w')$ based on the following cases:
	   \begin{enumerate}[topsep=0pt,leftmargin=17pt]
		   \setlength{\itemsep}{0pt}
		   \item If $w \notin \{\pi(u_1), \ldots, \pi(u_k)\}$, set $\beta_T(t_w') = \{u_1, \ldots, u_{j-1}, w, u_{j+1}, \ldots, u_k\}$.
		   \item If $w = \pi(u_r)$ for some $r \in [k]$, set $\beta_T(t_w') = \{u_1, \ldots, u_{j-1}, u_r, u_{j+1}, \ldots, u_k\}$.
	   \end{enumerate}
	   Finally, set $\gamma_T(t_w')$ as the set of all paths in $F$ that connect pairs of vertices in $\beta_T(t_w')$.
   \end{enumerate}
   
   Following a similar analysis as in the low-dimensional setting, we can first prove that for any two graphs $G$ and $H$, $\chi_G^{k-\mathsf{Spec}}(G) = \chi_H^{k-\mathsf{Spec}}(H)$ if and only if $\cnt^{k-\mathsf{Spec}}((F, T^r), G) = \cnt^{k-\mathsf{Spec}}((F, T^r), H)$ holds for all $(F, T^r) \in \gS^{k-\mathsf{Spec}}$. We define
   \begin{align*}
   &\cnt^{\mathsf{Spec}}((F, T^r), G)\\
    :=& \left|\left\{\mathbf{u} \in V_G^k : \exists D \in \mathbb{N}_{+} \text{ such that } \left(F_G^{k-\mathsf{Spec}, (D)}(\mathbf{u}), T_G^{k-\mathsf{Spec}, (D)}(\mathbf{u})\right) \cong (F, T^r)\right\}\right|.
   \end{align*}
   With similar arguments as in Theorem 3.4 in \citet{zhang2024beyond}, we can further prove that for any two graphs $G$ and $H$, $\cnt((F, T^r), G) = \cnt((F, T^r), H)$ holds for all tree-decomposed graphs $(F, T^r)$ if and only if $\hom(F, G) = \hom(F, H)$ holds.
   We now prove that a graph $F$ has a parallel tree decomposition with width at most $k$ if and only if $F$ admits a parallel $k$-order strong NED. We prove each direction separately.
   First, we use induction on the number of vertices in $F$ to show that for any $(F, T^r) \in \gS^{k-\mathsf{Spec}}$ with $\beta_T(r) = \{u_1, u_2, \ldots, u_k\}$, there exists a graph $\tilde{F}$ with a strong NED such that $\{u_1, \ldots, u_k\}$ are the endpoints of the first ear. We can construct $F$ by replacing edges in $\tilde{F}$ with parallel edges.
   For the converse direction, assume that $F$ admits a parallel $k$-order strong NED. We aim to prove that there exists a parallel tree decomposition $T^r$ of $F$ such that $(F, T^r) \in \gS^{k-\mathsf{Spec}}$.
   We proceed by induction on the number of vertices and prove a stronger statement. For any connected graph \( F \), if \( F \) can be constructed by replacing edges in a graph \( \tilde{F} \) with parallel edges, where \( \tilde{F} \) has a \( k \)-order strong NED and the endpoints of the first ear are \( \{u_1, u_2, \ldots, u_k\} \), then there exists a tree decomposition \( T^r \) of \( F \). This decomposition satisfies \( (F, T^r) \in \gS^{k-\mathsf{Spec}} \), and \( \beta_T(r) = \{u_1, u_2, \ldots, u_k\} \).
   By combining the proofs for both directions, we conclude the proof of the lemma.
\end{proof}
We then prove the maximality of homomorphism expressivity as follows.
\begin{lemma}
   \label{lm:maximality_higher_order}
	   For any connected graph $F \notin \gF^{k-\mathsf{Spec}}$, there exist graphs $G$ and $H$ such that $\hom(F, G) \neq \hom(F, H)$ and $\chi^{k-\mathsf{Spec}}_G(G) = \chi^{k-\mathsf{Spec}}_H(H)$.
   \end{lemma}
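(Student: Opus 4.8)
The plan is to adapt the five-step argument used to prove the maximality part of \cref{thm:main_theorem} (Steps 3--5 in \cref{sec:proof_step_3}, \cref{sec:proof_part_4}, \cref{sec:proof_part_5}) to the higher-order setting, combining it with the nested-ear-decomposition machinery that \citet{zhang2024beyond} developed for Local $2k$-GNNs. Concretely, given a connected graph $F\notin\gF^{k-\mathsf{Spec}}$, we take $G=G(F)$ and $H=H(F)$ to be the F\"urer graph and a suitable twisted F\"urer graph built on base graph $F$. Since $F$ is connected, $\hom(F,G(F))\neq\hom(F,H(F))$ \citep{roberson2022oddomorphisms,zhang2024beyond}, so it remains only to establish $\chi^{k-\mathsf{Spec}}_G(G(F))=\chi^{k-\mathsf{Spec}}_H(H(F))$.

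First I would introduce a pebble game characterizing $k$-order spectral invariant GNNs, in the same spirit as the game of \cref{sec:proof_step_3} but now played with $k$ pebble pairs on the two graphs (this is the natural bijective $2k$-pebble game of \citet{cai1992optimal}, with the termination condition replaced by the walk-vector test $\omega^\star$ from \cref{lm:equivalence_walk_spectral}, applied coordinate-wise to the pebbled positions exactly as dictated by the $k$-order update rule). A straightforward induction on the number of rounds --- identical in structure to the two lemmas preceding \cref{lm:original_pebble_game_equivalence} --- shows that the spoiler cannot win in $d$ steps iff $\chi^{k-\mathsf{Spec},(d+1)}_G(G)=\chi^{k-\mathsf{Spec},(d+1)}_H(H)$. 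Next I would specialize the game to F\"urer graphs. Using the walk-counting property of F\"urer graphs proved in \cref{sec:proof_part_4} (the number of $n$-walks through a prescribed sequence of meta-vertex blocks agrees between $G(F)$ and $H(F)$ unless the underlying component is a path carrying an odd twist), the pebble game on $G(F)$ and $H(F)$ collapses to a simplified game played on the base graph $F$: the duplicator maintains an odd-parity subset of the connected components of $F$ induced by the pebbled vertices, the spoiler moves the $k$ pebbles around $F$, and the spoiler wins exactly when this subset ever contains a component that is a single path. This is the $k$-order analogue of the simplified game of \cref{sec:proof_part_5}, and the reduction follows the same ``half-simplified game'' passage used there and in \citet[Theorem 17]{zhang2023complete}.

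The final and hardest step is to show that the spoiler wins this simplified $k$-pebble game on $F$ if and only if $F$ admits a parallel $k$-order strong NED --- equivalently, by \cref{thm:main_theorem_higher_order}, iff $F\in\gF^{k-\mathsf{Spec}}$; combined with \cref{lm:find_homomorphism_higher_order} and the pebble-game equivalence above, this immediately yields the lemma. The ``if'' direction is constructive: given a parallel $k$-order strong NED, the spoiler peels off ears one at a time, always forcing the duplicator's component set to strictly contract, exactly as in \cref{lm:tightness_low_dimension_finite_iteration}, using the extra $k-1$ pebbles to pin down the inner endpoints of a $k$-order ear together with the edges $Q$ among them (cf.\ \cref{def:higher-order_ear}). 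For the converse, I would first show (as in \cref{lm:contracted_game_state_graph}) that any winning strategy can be taken to be strictly contracting, then read off a $k$-order strong NED of a skeleton $\tilde F$ from the game-state graph: each maximal strictly-contracting play segment corresponds to a $k$-order ear, the nesting of the duplicator's component subsets yields the nested-interval condition of \cref{def:higher-order_strong_ned}, and the parallel structure (each skeleton edge blown up into a bundle of parallel paths) arises because only path-components are losing for the duplicator, so multi-path bundles survive. The main obstacle is the bookkeeping of the $k$-order ear structure --- in particular the inner-endpoint edge set $Q$ and the ``no earlier ear'' conditions of \cref{def:higher-order_strong_ned} --- against the moves of $k$ simultaneous pebbles, which requires a careful generalization of the single-pebble ``next-state'' analysis of \cref{lm:tightness_low_dimension}. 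Once this equivalence is in place, $F\notin\gF^{k-\mathsf{Spec}}$ forces a duplicator win, hence $\chi^{k-\mathsf{Spec}}_G(G(F))=\chi^{k-\mathsf{Spec}}_H(H(F))$, completing the proof.
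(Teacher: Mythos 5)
Your proposal follows essentially the same route as the paper's own proof: take $G(F)$ and $H(F)$ as the counterexample pair, set up a higher-order pebble game equivalent to $k$-order spectral refinement, collapse it to a simplified game on the base graph $F$ via the F\"urer walk-counting property, and then show via the strict-contraction argument of \cref{lm:contracted_game_state_graph}/\cref{lm:tightness_low_dimension} that a spoiler win is equivalent to $F$ admitting a parallel $k$-order strong NED. The only small slip is the pebble count --- the paper's game for the $k$-order spectral invariant GNN uses $k+1$ pebbles ($\mathsf{p}_{u_1},\ldots,\mathsf{p}_{u_k},\mathsf{p}_v$) mirroring the update rule, not ``$k$ pebble pairs'' or a ``$2k$-pebble game'' --- but this does not affect the structure of the argument.
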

   \begin{proof}
	   As in the low-dimensional case, we consider a pebble game between two players, the Spoiler and the Duplicator. The game involves a graph $F$ and several pebbles. Initially, all pebbles are placed outside the graph. During the course of the game, some pebbles are placed on the vertices of $F$, which divides the edges $E_F$ into connected components. In each round, the Spoiler updates the position of the pebbles, while the Duplicator manages a subset of connected components, ensuring that the number of selected components is \emph{odd}. There are three main types of operations:
	   \begin{enumerate}[topsep=0pt,leftmargin=17pt]
		   \setlength{\itemsep}{0pt}
		   \item \emph{Adding a pebble $\mathsf{p}$}: the Spoiler places a pebble $\mathsf{p}$ (which was previously outside the graph) on some vertex of $F$. If adding this pebble does not change the connected components, the Duplicator does nothing. Otherwise, some connected component $P$ is divided into several components $P = \bigcup_{i \in [m]} P_i$ for some $m$. the Duplicator updates his selection as follows: if $P$ was selected, he removes $P$ and adds a subset of $\{P_1, \dots, P_m\}$, while ensuring that the total number of selected components remains odd.
		   \item \emph{Removing a pebble $\mathsf{p}$}: the Spoiler removes a pebble $\mathsf{p}$ from a vertex. If this action does not alter the connected components, the Duplicator again does nothing. Otherwise, several connected components $P_1, \dots, P_m$ merge into a single component $P = \bigcup_{i \in [m]} P_i$. the Duplicator updates his selection by removing all selected $P_i$ and optionally adding $P$, while ensuring the total number of selected components is odd.
		   \item \emph{Swapping two pebbles $\mathsf{p}$ and $\mathsf{p}^\prime$}: the Spoiler swaps the positions of two pebbles, which does not affect the connected components, and therefore the Duplicator does nothing.
	   \end{enumerate}
	   the Spoiler wins the game if, at any point, there exists a path $p$ such that both of its endpoints are covered by pebbles and the connected component containing $\{p\}$ is selected by the Duplicator. If the Spoiler cannot achieve this throughout the game, the Duplicator wins.
	   In the case of the $k$-spectral invariant GNN, there are $k+1$ pebbles, denoted $\mathsf{p}_{u_1}, \dots, \mathsf{p}_{u_k}, \mathsf{p}_v$. Initially, all pebbles are placed outside the graph. the Spoiler first sequentially adds the pebbles $\mathsf{p}_{u_1}, \dots, \mathsf{p}_{u_k}$ (using operation 1). The game proceeds in a cyclical manner. In each round, Spoiler selects an \( r \in [k] \) and freely chooses one of the following two actions:
	   \begin{itemize}[topsep=0pt,leftmargin=17pt]
		   \setlength{\itemsep}{0pt}
		   \item For \( r = 1, 2, \dots, k \), Spoiler removes pebble \( \mathsf{p}_{u_r} \) (operation 2), and then re-adds it (operation 1).
		   \item For \( r = 1, 2, \dots, k \), Spoiler adds pebble \( \mathsf{p}_w \) (operation 1) adjacent to \( \mathsf{p}_{u_r} \), swaps \( \mathsf{p}_{u_r} \) with \( \mathsf{p}_w \) (operation 3), and then removes \( \mathsf{p}_w \) (operation 2).
	   \end{itemize}
	   
	   For a given graph $F$, let $G(F)$ and $H(F)$ denote the F{\"u}rer graph and the twisted F{\"u}rer graph with respect to $F$. Using similar reasoning as in the low-dimensional case, we can show that if the Spoiler cannot win the pebble game on $F$, then $\chi_{G(F)}^{k-\mathsf{Spec}}(G(F)) = \chi_{H(F)}^{k-\mathsf{Spec}}(H(F))$. Furthermore, analogous to the analysis of \cref{lm:tightness_low_dimension}, we can conclude that if the Spoiler wins the pebble game on $F$, then there exists a parallel tree decomposition $T^r$ of $F$ such that $(F, T^r) \in \gS^{k-\mathsf{Spec}}$. 
	   Thus, for any connected graph $F \notin \gF^{k-\mathsf{Spec}}$, there exist graphs $G(F)$ and $H(F)$ such that $\hom(F, G(F)) \neq \hom(F, H(F))$ and $\chi^{k-\mathsf{Spec}}_G(G(F)) = \chi^{k-\mathsf{Spec}}_H(H(F))$. This completes the proof of the lemma.
   \end{proof}
   Finally, the proof of \cref{thm:main_theorem_higher_order} is completed by combining the results from \cref{lm:find_homomorphism_higher_order} and \cref{lm:maximality_higher_order}.
   
\section{Proof for Symmetric Power}
\subsection{Properties of Local $k-$GNN}
In this section, we review key properties of the local $k$-GNN as presented in previous works. We begin by formally introducing the update rule for the local $k$-GNN.
\begin{definition}
   Local $k$-GNN maintains a color $\chi^{\mathsf{L}(k)}_G(\vu)$ for each vertex $k$-tuple $\vu\in V_G^k$. Initially, $\chi^{\mathsf{L}(k),(0)}_G(\vu)=\atp_G(\vu)$, called the isomorphism type of vertex $k$-tuple $\vu$, where $\atp_G(\vu)$ is the \emph{atomic type} of $\vu$. Then, in each iteration $t+1$, 
   \begin{equation}
   \begin{aligned}
	   \chi^{\mathsf{L}(k),(t+1)}_G(\vu)=\hash\left(\chi^{\mathsf{L}(k),(t)}_G(\vu),\ldblbrace\chi^{\mathsf{L}(k),(t)}_G(\vv):\vv\in N_G^{(1)}(\vu)\rdblbrace,\cdots,\right.
	   \left.\ldblbrace\chi^{\mathsf{L}(k),(t)}_G(\vv):\vv\in N_G^{(k)}(\vu)\rdblbrace\right),
   \end{aligned}
   \end{equation}
   where $N_G^{(j)}(\vu)=\{(u_1,\cdots,u_{j-1},w,u_{j+1},\cdots,u_k):w\in N_G(u_j)\}$.
   Denote the stable color as $\chi^{\mathsf{L}(k)}_G(\vu)$. The representation of graph $G$ is defined as $\chi^{\mathsf{L}(k)}_G(G):=\ldblbrace\chi^{\mathsf{L}(k)}_G(\vu):\vu\in V_G^k\rdblbrace$. 
\end{definition}
\begin{definition}[\textbf{Canonical Tree Decomposition}]
   \label{def:canonical_tree_decomposition_high_order}
	   Given a graph $G=(V_G,E_G)$, a canonical tree decomposition of width $k$ is a rooted tree $T^r=(V_T,E_T,\beta_T)$ satisfying the following conditions:
	   \begin{enumerate}[topsep=0pt,leftmargin=17pt]
		   \setlength{\itemsep}{0pt}
		   \item The depth of $T$ is even, i.e., $\max_{t\in V_T}\dep_{T^r}(t)$ is even;
		   \item Each tree node $t\in V_T$ is associated to a multiset of vertices $\beta_T(t)\subset V_G$, called a \emph{bag}. Moreover, $|\beta_T(t)|=k$ if $\dep_{T^r}(t)$ is \emph{even} and $|\beta_T(t)|=k+1$ if $\dep_{T^r}(t)$ is \emph{odd};
		   \item For all tree edges $\{s,t\}\in E_T$ where $\dep_{T^r}(s)$ is even and $\dep_{T^r}(t)$ is odd, $\beta_T(s)\subset\beta_T(t)$ (where ``$\subset$'' denotes the multiset inclusion relation);
		   \item For each edge $\{u,v\}\in V_G$, there exists at least one tree node $t\in V_T$ that contains the edge, i.e., $\{u,v\}\subset \beta_T(t)$;
		   \item For each vertex $u\in V_G$, all tree nodes $t$ whose bag contains $u$ form a (non-empty) collection.
	   \end{enumerate}
\end{definition}
We further define set $\gS^{\mathsf{L}(k)}$ as follows:
\begin{definition}
\label{def:local_2k_tree_decomposition}
   $(F,T^r)\in\gS^{\mathsf{L}(k)}$ iff $(F,T^r)$ satisfies \ref{def:canonical_tree_decomposition_high_order} with width $k$, and any tree node $t$ of odd depth has only one child. Moreover, all vertex of $F$ is contained in at least one node of $t$.
\end{definition}
Then, we can obtain the following theorem of the homomorphic expressivity of Local $k$-GNN.
\begin{theorem}
   Any graph $G$ and $H$ have the same representation under Local $k-$GNN (i.e., $\chi^{\mathsf{L}(k)}_{G}(G)=\chi^{\mathsf{L}(k)}_{H}(H)$) iff $\hom(F,G)=\hom(F,H)$ for all $(F,T^r)\in \gS^{\mathsf{L}(k)}$.
\end{theorem}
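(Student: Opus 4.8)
The plan is to reprove this as the tree-decomposition reformulation of the homomorphism-expressivity characterization of Local $k$-GNN (Theorem~3.4 of \citet{zhang2024beyond}), using exactly the ``unfolding tree $\Rightarrow$ tree count $\Rightarrow$ homomorphism count'' pipeline that was applied to $\gF^{\mathsf{Spec},(d)}$ in \cref{sec:proof_step_2}. First I would attach to every vertex tuple $\vu\in V_G^k$ and depth $D$ an \emph{unfolding tree} $(F_G^{\mathsf{L}(k),(D)}(\vu),T_G^{\mathsf{L}(k),(D)}(\vu))$ recording the Local $k$-GNN computation rooted at $\vu$: a node at even depth carries a $k$-element bag equal to the current tuple, and each neighbor-aggregation step along a direction $j\in[k]$ spawns a child at odd depth whose bag is the current bag together with a neighbor $w\in N_G(u_j)$, followed by a unique grandchild at even depth whose bag replaces $u_j$ by $w$ (uniqueness matching the ``single child at odd depth'' clause of \cref{def:local_2k_tree_decomposition}). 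A direct check against \cref{def:canonical_tree_decomposition_high_order} and \cref{def:local_2k_tree_decomposition} shows this object lies in $\gS^{\mathsf{L}(k)}$. Then, by induction on $D$ exactly as in \cref{thm:equivalence_unfolding_tree_coloring}, $\chi^{\mathsf{L}(k),(D)}_G(\vu)=\chi^{\mathsf{L}(k),(D)}_H(\vx)$ holds iff there is a bag-respecting, root-preserving isomorphism between the corresponding depth-$D$ unfolding trees. Summing over tuples and letting $D$ grow (cf.\ \cref{coro:1-WL_spectral_treecount_equivalence_color_refinement}) yields that $\chi^{\mathsf{L}(k)}_G(G)=\chi^{\mathsf{L}(k)}_H(H)$ iff $\cnt((F,T^r),G)=\cnt((F,T^r),H)$ for all $(F,T^r)\in\gS^{\mathsf{L}(k)}$, where $\cnt((F,T^r),G)$ counts the tuples whose unfolding tree at some depth is isomorphic to $(F,T^r)$.

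The second half converts tree counts into homomorphism counts. I would define a \emph{strong homomorphism} $(\rho,\tau)$ between members of $\gS^{\mathsf{L}(k)}$ — a depth-preserving tree homomorphism $\tau$ together with a graph homomorphism $\rho$ respecting bags, i.e.\ carrying $\beta_T(t)$ into $\beta_{\tilde T}(\tau(t))$ and edges inside a bag to edges inside its image bag — and prove the analogue of \cref{thm:equivalence_homomorphism_tree_count}: for any $(F,T^r)\in\gS^{\mathsf{L}(k)}$,
\[
\hom(F,G)=\sum_{(\tilde F,\tilde T^r)\in\gS^{\mathsf{L}(k)}}\strhom\big((F,T^r),(\tilde F,\tilde T^r)\big)\cdot\cnt\big((\tilde F,\tilde T^r),G\big),
\]
via the usual bijection between homomorphisms $F\to G$ and pairs consisting of a strong homomorphism of $(F,T^r)$ into an unfolding tree at some vertex tuple and the tuple itself, using the Local $k$-GNN analogue of \cref{fa:hom_unfolding_tree_to_graph}. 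Factoring strong homomorphisms as surjective-then-injective as in \cref{lm:homomorphism_property}, ordering $\gS^{\mathsf{L}(k)}$ by $(|V_T|,|V_F|)$, and noting that the resulting surjection matrix is lower-triangular with positive diagonal (hence invertible) while the injection and automorphism matrices are upper-triangular and diagonal with positive diagonal, the linear-algebra inversion of \cref{thm:equivalence_cnt_hom} shows that ``$\cnt((F,T^r),\cdot)$ agree on all of $\gS^{\mathsf{L}(k)}$'' is equivalent to ``$\hom(F,\cdot)$ agree for all $(F,T^r)\in\gS^{\mathsf{L}(k)}$''. Chaining the two equivalences proves the theorem.

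\textbf{Main obstacle.} The crux is showing the unfolding-tree construction is faithful to the Local $k$-GNN update — which aggregates separately over each of the $k$ neighbor multisets $N_G^{(j)}(\vu)$ — and produces exactly $\gS^{\mathsf{L}(k)}$, neither more nor fewer tree-decomposed graphs, so that the $\cnt$-vectors have the finite support and triangularity needed for the inversion to be well defined. In particular one must verify that every $(F,T^r)\in\gS^{\mathsf{L}(k)}$ arises as a finite-depth unfolding tree and that the ``one child at odd depth'' condition corresponds precisely to committing to a single replacement coordinate at each aggregation step. This alignment between the combinatorial tree-decomposition definition and the operational GNN recursion is exactly where \citet{zhang2024beyond} does its real work, and carefully adapting their argument is where I expect most of the effort to go.
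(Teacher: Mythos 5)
Your proposal is correct and follows essentially the same route the paper uses for the spectral case (Appendix B, Steps 1--2), which is in turn the proof strategy of \citet{zhang2024beyond}; note that the paper does not reprove this theorem itself but states it as a known result from \citet{zhang2024beyond}, recast in the language of $\gS^{\mathsf{L}(k)}$ as background for \cref{thm:main_theorem_symmetric_power}. One small correction to your ``main obstacle'' paragraph: it is \emph{not} required that every $(F,T^r)\in\gS^{\mathsf{L}(k)}$ arise as a finite-depth unfolding tree --- if some member never does, its $\cnt$ is identically zero for every graph and nothing in the inversion breaks. What the argument actually needs is (a) every Local $k$-GNN unfolding tree lies in $\gS^{\mathsf{L}(k)}$, (b) the expansion $\hom(F,G)=\sum_{(\tilde F,\tilde T^r)}\strhom((F,T^r),(\tilde F,\tilde T^r))\cdot\cnt((\tilde F,\tilde T^r),G)$, and (c) finite support of the $\cnt$-vectors (automatic, since a fixed $G$ has only finitely many non-isomorphic stable unfolding trees); the triangularity used for the inversion comes from ordering $\gS^{\mathsf{L}(k)}$ by $(|V_T|,|V_F|)$, not from the set of unfolding trees coinciding with $\gS^{\mathsf{L}(k)}$.
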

\subsection{Main Result}
Since $2k$-local GNN is strictly weaker than $2k$-WL, we aim to extend previous result by showing that $2k$-local GNN can encode $k$-symmetric power of a graph. We state our main result as follows:
\begin{theorem}
 \label{thm:main_theorem_symmetric_power}
    The Local $2k$-GNN defined in \citet{morris2020weisfeiler,zhang2024beyond} can encode the symmetric \( k \)-th power. Specifically, for given graphs \( G \) and \( H \), if \( G \) and \( H \) have the same representation under Local $2k$-GNN, then \( G^{\{k\}} \) and \( H^{\{k\}} \) have the same representation under the spectral invariant GNN defined in \cref{sec:sepctral_invariant_gnn}.
\end{theorem}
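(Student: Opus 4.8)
The plan is to reduce the claim to a single homomorphism-counting identity and then verify that identity by a token-configuration argument. By \cref{thm:main_theorem} (its stable $d=\infty$ case), the homomorphism expressivity of the spectral invariant GNN is the family $\mathcal{F}^{\mathsf{Spec},(\infty)}$ of \emph{all} parallel trees, so $\chi^{\mathsf{Spec}}_{G^{\{k\}}}(G^{\{k\}}) = \chi^{\mathsf{Spec}}_{H^{\{k\}}}(H^{\{k\}})$ holds iff $\hom(F, G^{\{k\}}) = \hom(F, H^{\{k\}})$ for every parallel tree $F$. On the other side, the homomorphism-expressivity characterization of Local $2k$-GNN recalled above states that $\chi^{\mathsf{L}(2k)}_G(G) = \chi^{\mathsf{L}(2k)}_H(H)$ iff $\hom(F', G) = \hom(F', H)$ for every $(F', T^{r}) \in \gS^{\mathsf{L}(2k)}$. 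Hence it suffices to prove: for each parallel tree $F$ there exist finitely many graphs $F'_1, \dots, F'_s$, each admitting a canonical tree decomposition of width $2k$ (i.e.\ $(F'_i, T^{r_i}) \in \gS^{\mathsf{L}(2k)}$), and integers $c_1, \dots, c_s$, such that $\hom(F, G^{\{k\}}) = \sum_{i=1}^s c_i \hom(F'_i, G)$ for every graph $G$. The target implication is then immediate: equal Local $2k$-GNN representations of $G$ and $H$ force $\hom(F'_i, G) = \hom(F'_i, H)$ for all $i$, hence $\hom(F, G^{\{k\}}) = \hom(F, H^{\{k\}})$ for all parallel trees $F$, hence equal spectral invariant representations of $G^{\{k\}}$ and $H^{\{k\}}$.

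For the identity I would read a homomorphism $f\colon F \to G^{\{k\}}$ as a configuration of $k$ (indistinguishable) tokens moving on $G$: each $v \in V_F$ is sent to a $k$-subset $f(v) \subseteq V_G$, and along each edge $\{u,v\} \in E_F$ the sets $f(u), f(v)$ differ by replacing one element of $f(u)$ with a $G$-neighbour of it, so exactly one token steps along a $G$-edge while the other $k-1$ stay put, and the $k$ tokens always occupy distinct vertices. Because $F$ is a parallel tree with skeleton $T$, the token multiset is frozen at each branch vertex of $T$ (it equals $f$ of the corresponding vertex of $F$), and along each parallel edge with endpoints $a,b$ the $m$ constituent paths become $m$ independent token-trajectory bundles between the frozen $k$-sets $f(a)$ and $f(b)$. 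First I would discard the distinctness constraint by inclusion--exclusion over which tokens ever coincide --- the $\strsurj$/$\strinj$ bookkeeping of \cref{lm:homomorphism_property}, adapted to this setting --- so one may assume the tokens run freely; collapsing tokens only ever reduces the number of active threads, so it stays inside the family we want. Then, naming the $2k$ threads attached to the two endpoints of each parallel edge and recording the interleaving (which token steps at which position along each path), every token-trajectory bundle becomes a system of paths glued to $2k$ boundary vertices, i.e.\ a $2k$-endpoint ear; gluing these ears along $T$ yields each auxiliary graph $F'_i$, and by construction $F'_i$ decomposes into such $2k$-endpoint ears nested according to the tree $T$, which is exactly a width-$2k$ canonical tree decomposition in $\gS^{\mathsf{L}(2k)}$. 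This is the same parallel-ear picture underlying \cref{thm:higher_order_spectral_invariant_GNN}; the reason the width is $2k$ rather than $k$ is that resolving one edge of $G^{\{k\}}$ involves a pair of $k$-subsets, hence up to $2k$ vertices of $G$.

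The step I expect to be the main obstacle is exactly this construction and its verification. Precisely: (i) making the token-naming canonical enough that each $F'_i$ is well defined up to isomorphism and the multiplicities $c_i$ come out as honest integers --- the shuffle recording which of the $k$ tokens moves at each position of a bundle must be accounted for without over- or under-counting, and this interacts with automorphisms of $F$ and of the $F'_i$; and (ii) checking that $F'_i$ genuinely satisfies the bag-size, subset-chain, and connectivity conditions defining $\gS^{\mathsf{L}(2k)}$, which amounts to showing that at any moment only the $2k$ threads belonging to one parallel edge's endpoints need be simultaneously alive. The distinctness constraint on the tokens is a secondary but genuine nuisance, handled as indicated by inclusion--exclusion in the style of \cref{lm:homomorphism_property}. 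With these in hand, the two homomorphism-expressivity characterizations close the argument as in the first paragraph.
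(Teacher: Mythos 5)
Your high-level reduction is sound, but it takes a genuinely different route from the paper's proof, and the obstacle you yourself flag is the entire substance of the result. The paper does not exhibit (nor need) a linear-combination identity $\hom(F,G^{\{k\}}) = \sum_i c_i\hom(F'_i,G)$. Instead it defines an \emph{augmented} Local $2k$-GNN whose initial colour for a $2k$-tuple $\vu$ already carries $\mathcal{P}^S(\ldblbrace u_1,\dots,u_k\rdblbrace,\ldblbrace u_{k+1},\dots,u_{2k}\rdblbrace)$, characterizes the homomorphism expressivity of this augmented refinement by a family $\gS^{\mathsf{SL}(2k)}$ of tree-decomposed graphs carrying $k$-dimensional paths in their sub-bags, and then proves $\gS^{\mathsf{SL}(2k)}=\gS^{\mathsf{L}(2k)}$ (Lemma~\ref{lm:homomorphism_equivalence_symmetric_power}) by serialising each such $k$-dimensional path $(S_1,\dots,S_n)$ into a chain of fresh tree nodes with bags $S_r\cup S_{r+1}\cup S_n$ and $S_{r+1}\cup S_n$; the key arithmetic fact is that $|S_r\triangle S_{r+1}|=2$ forces these bags to have size at most $2k+1$ and $2k$, so the decomposition stays canonical of width $2k$. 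Once that set equality is established, equal Local $2k$-GNN colours force equal augmented colours, which directly subsume the spectral-invariant colouring of $G^{\{k\}}$; no explicit coefficients $c_i$ ever appear.

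Your proposal demands the stronger, more explicit statement that each $\hom(F,G^{\{k\}})$ for a parallel tree $F$ is a \emph{finite} $\mathbb{Q}$-linear combination of $\hom(F'_i,G)$ with every $F'_i$ admitting a canonical width-$2k$ tree decomposition in $\gS^{\mathsf{L}(2k)}$. That is plausible, and if proved it is more informative than the paper's result, but it is not a consequence of the paper's argument and you have not actually established it. The two points you flag are exactly where it would succeed or fail: (i) the inclusion--exclusion over coinciding tokens and the normalisation by endpoint permutations produce rational coefficients and auxiliary graphs whose well-definedness up to isomorphism must be verified, and (ii) the resulting $F'_i$ only obviously have treewidth $\le 2k$; showing they lie in the strictly smaller family $\gS^{\mathsf{L}(2k)}$ (bags of size exactly $2k$ or $2k+1$ alternating with the subset-chain and single-child constraints) is precisely the bag-size computation that the paper's Lemma~\ref{lm:homomorphism_equivalence_symmetric_power} performs. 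As written, those two steps are sketched, not proven, so the proposal has a real gap --- the same gap the paper closes by a different and considerably shorter route.
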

\subsection{Proof of \cref{thm:main_theorem_symmetric_power}}
\begin{definition}
   Let $\mu_1 < \mu_2 < \dots < \mu_m$ represent the distinct eigenvalues of the $k$-th order symmetric power matrix of a graph $G$. Let $E_i$ denote the eigenspace corresponding to $\mu_i$, and $P_i^S$ the orthogonal projection matrix from $\mathbb{R}^{C_n^k}$ onto $E_i$. For $u_1, u_2, \dots, u_{2k} \in V_G$, if both $\ldblbrace u_1, u_2, \dots, u_k \rdblbrace$ and $\ldblbrace u_{k+1}, u_{k+2}, \dots, u_{2k} \rdblbrace$ are multisets of $k$ distinct vertices, then we define 
   \[
   P_*^S(S_1, S_2) = (P_1^S(S_1, S_2), \dots, P_m^S(S_1, S_2)),
   \]
   where $S_1 = \ldblbrace u_1, u_2, \dots, u_k \rdblbrace$ and $S_2 = \ldblbrace u_{k+1}, u_{k+2}, \dots, u_{2k} \rdblbrace$. Otherwise, we define 
   \[
   P_*^S(\ldblbrace u_1, u_2, \dots, u_k \rdblbrace, \ldblbrace u_{k+1}, u_{k+2}, \dots, u_{2k} \rdblbrace) = \mathbf{0}.
   \]
\end{definition}

We encode the spectral information of the symmetric power into the aggregation of local $2k$-GNN, resulting in a variant of the local $2k$-GNN, defined as follows:

\begin{definition}
   A local $2k$-GNN with symmetric power maintains a color $\chi^{\mathsf{SL}(2k)}_G(\vu)$ for each vertex $2k$-tuple $\vu \in V_G^{2k}$. Initially, the color is defined as
   \[
   \chi^{\mathsf{SL}(2k),(0)}_G(\vu) = \left( P_*^S(\ldblbrace u_1, \cdots, u_k \rdblbrace, \ldblbrace u_{k+1}, \cdots, u_{2k} \rdblbrace), \atp_G(\vu) \right).
   \]
   Then, at each iteration $t+1$, the update rule is given by:
   \begin{equation}
   \begin{aligned}
	   \chi^{\mathsf{SL}(2k),(t+1)}_G(\vu) &= \hash\left(\chi^{\mathsf{SL}(2k),(t)}_G(\vu), \ldblbrace \chi^{\mathsf{SL}(2k),(t)}_G(\vv) : \vv \in N_G^{(1)}(\vu) \rdblbrace, \dots, \right. \\
	   &\quad \left. \ldblbrace \chi^{\mathsf{L}(2k),(t)}_G(\vv) : \vv \in N_G^{(k)}(\vu) \rdblbrace \right),
   \end{aligned}
   \end{equation}
   where $N_G^{(j)}(\vu) = \{(u_1, \cdots, u_{j-1}, w, u_{j+1}, \cdots, u_k) : w \in N_G(u_j)\}$.
   
   The stable color is denoted as $\chi^{\mathsf{SL}(k)}_G(\vu)$. The graph representation is then defined as 
   \[
   \chi^{\mathsf{SL}(k)}_G(G) := \ldblbrace \chi^{\mathsf{SL}(k)}_G(\vu) : \vu \in V_G^{2k} \rdblbrace.
   \]
\end{definition}

Next, we define the concept of a $k$-dimensional path as follows:

\begin{definition}
   For a graph \( G \) and vertices \( u_1, \dots, u_k \in V_G \), we define the neighboring multiset of \( \ldblbrace u_1, u_2, \dots, u_k \rdblbrace \) as:
   \[
   N_G\left(\ldblbrace u_1, u_2, \dots, u_k \rdblbrace\right) = \bigcup_{r=1}^k \left\{\ldblbrace u_1, \cdots, u_{r-1}, v, u_{r+1}, \cdots, u_k \rdblbrace \mid v \in N_G(u_r) \right\}.
   \]
   A \( k \)-dimensional walk of length \( n \) is defined as a sequence \( (S_1, S_2, \dots, S_n) \), where each \( S_1, \dots, S_n \) is a multiset of \( k \) elements, and for all \( r \in [n-1] \), \( S_r \in N_G(S_{r+1}) \). If the path further satisfies the condition that for all \( u \in S_r \) with \( r \in \{2, 3, \dots, n-1\} \) and \( v \in V_G \), \( u \in N_G(v) \) implies \( v \in S_i \) for some \( i \in \{r-1, r, r+1\} \), then we denote \( (S_1, \dots, S_n) \) as a \( k \)-dimensional path of length \( n \).
\end{definition}
We then define set $\gS^{\mathsf{SL}(k)}$ base on the definition of set $\gS^{\mathsf{L}(k)}$.
\begin{definition}
$(F,T^r)\in\gS^{\mathsf{SL}(2k)}$ iff $(F,T^r)$ satisfies definition \ref{def:canonical_tree_decomposition_high_order} with width $2k$, and any tree node $t$ of odd depth has only one child. Furthermore, for tree node $t\in V_T$ if $\dep_{T^r}(t)$ is even, we further associate it with a set of $k-$dimensional path $\gamma_T(t)$, called sub-bag. Specifically, for node $t\in V_T$, let $\beta_T(t)=\ldblbrace u_1,\ldots,u_{2k}\rdblbrace$, then $\gamma_T(t)$ contains $k$-dimensional path linking $\ldblbrace u_1,\ldots,u_k\rdblbrace$ and $\ldblbrace u_{k+1},\ldots,u_{2k}\rdblbrace$. Each vertex of $F$ is contained in at least one node of $T^r$, either in bags or sub-bags.
\end{definition}
\begin{lemma}
Any graph $G$ and $H$ have the same representation under Local $2k$-GNN with symmetric power if $\hom(F,G)=\hom(F,H)$ for all $(F,T^r)\in\gS^{\mathsf{SL}(2k)}$.
\begin{proof}
   To prove the theorem, we first define unfolding tree of local $2k$-GNN with symmetric power. Given a graph $G$, $2k$-tuple $\vu\in V_G^{2k}$ and a non-negative integer $D$, the depth-$D$ unfolding tree of graph $G$ at tuple $\vu$, denoted as $(F_G^{\mathsf{SL}(D)}(\vu),T_G^{\mathsf{SL}(D)}(\vu))$ is constructed as follows:
   \begin{enumerate}[topsep=0pt,leftmargin=17pt]
	   \setlength{\itemsep}{0pt}
	   \item \emph{Initialization.} We assume multiset $\vu=\ldblbrace u_1,u_2,\cdots,u_{2k}\rdblbrace$. At the beginning, $F=G[\ldblbrace u_1,u_2,\cdots,u_{2k}\rdblbrace]$, and $T$ only has a root node $r$ with $\beta_T(r)=\ldblbrace u_1,u_2,\cdots,u_{2k}\rdblbrace$.
	   Define a mapping $\pi:V_F\rightarrow V_G$ as $\pi(u_i)=u_i,\forall i\in[2k]$. For every $k$-dimensional walk $\ldblbrace u_1,\ldots,u_k\rdblbrace=S_1,\ldots,S_n=\ldblbrace u_{k+1},\ldots,u_{2k}\rdblbrace$ with $n\leq |V_G|^k$, we introduce a $k$-dimensional path $\ldblbrace u_1,\ldots,u_k\rdblbrace=S_1^\prime,S_2^\prime,\ldots,S_n^\prime=\ldblbrace u_{k+1},\ldots,u_{2k}\rdblbrace$, and we add $(S_1^\prime,S_2^\prime,\ldots,S_n^\prime)$ to sub-bag $\gamma_T(r)$.
	   \item \emph{Loop for $D$ rounds.} For each leaf node $t$ in $T^r$, do the following procedure for all $i\in[2k]$:\\
	   Let $\beta_T(t)=\ldblbrace u_1,\ldots,u_{2k}\rdblbrace$. 
	   For each $w\in V_G$, add a fresh child node $t_w$ to $T$ and designate $t$ as its parent. Then, consider the following two cases:
	   \begin{enumerate}[topsep=0pt,leftmargin=17pt]
		   \setlength{\itemsep}{0pt}
		   \item If $w\notin\ldblbrace\pi(u_1),\ldots,\pi(u_{2k})\rdblbrace$, then add a fresh vertex $z$ to $F$ and extend $\pi$ with $\pi(z)=w$. Define $\beta_T(t_w)=\beta_T(t)\cup\ldblbrace z\rdblbrace$. Then, add edges between $z$ and $\beta_T(t)$, so that $\pi$ is an isomorphism from $F[\beta_T(t_w)]$ to $G[\pi(\beta_T(t_w))]$.
		   \item If $w\in\ldblbrace\pi(u_1),\ldots,\pi(u_{2k})\rdblbrace$, let $w=\pi(u_r)$. Then, we simply set $\beta_T(t_w)=\beta_T(t)\cup \ldblbrace u_r \rdblbrace$ without modifying graph $F$.
	   \end{enumerate}
	   Next, add a fresh child node $t_w^\prime$ in $T^r$, designate $t_w$ as its parent, and set $\beta_T(t_w^\prime)$ and $\gamma_T(t_w^\prime)$ based on the following two cases:
	   \begin{enumerate}[topsep=0pt,leftmargin=17pt]
		   \setlength{\itemsep}{0pt}
		   \item If $w\notin\ldblbrace\pi(u_1),\ldots,\pi(u_{2k})\rdblbrace$, then $\beta_T(t_w^\prime)=\ldblbrace u_1,\ldots,u_{i-1},z,u_{i+1},\ldots,u_{2k}\rdblbrace$. For every $k$-dimensional walk linking $\pi(S_1)$ and $\pi(S_n)$ of length $n$ $(n\leq |V_G|^k)$, we introduce $k-$dimensional path $S_1=S_1^\prime,S_2^\prime,\ldots,S_n^\prime=S_n$. If $i< k$, then $S_1=\ldblbrace u_1,\ldots,u_{i-1},z,\ldots,u_k\rdblbrace $. If $i=k$, then $S_1=\ldblbrace u_1,\ldots,u_{i-1},z\rdblbrace$, while if $i>k$, then $S_1=\ldblbrace u_1,\ldots,u_k\rdblbrace$. We denote $S_2=\ldblbrace u_1,\ldots,u_{i-1},z,u_{i+1},\ldots,u_{2k}\rdblbrace\setminus S_1$. We add $(S_1^\prime,S_2^\prime,\ldots,S_n^\prime)$ into sub-bag $\gamma_T(t_w^\prime)$.
		   \item Conversely, if $w\in\ldblbrace\pi(u_1),\ldots,\pi(u_{2k})\rdblbrace$, we assume that $w=\pi(u_r)$. Then $\beta_T(t_w^\prime)=\ldblbrace u_1,\ldots,u_{i-1},u_r,u_{i+1},\ldots,u_{2k}\rdblbrace$. For every $k$-dimensional walk linking $\pi(S_1)$ and $\pi(S_n)$ of length $n$ $(n\leq |V_G|^k)$, we introduce $k-$dimensional path $S_1=S_1^\prime,S_2^\prime,\ldots,S_n^\prime=S_n$. If $i< k$, then $S_1=\ldblbrace u_1,\ldots,u_{i-1},u_r,\ldots,u_k\rdblbrace $. If $i=k$, then $S_1=\ldblbrace u_1,\ldots,u_{i-1},u_r\rdblbrace$, while if $i>k$, then $S_1=\ldblbrace u_1,\ldots,u_k\rdblbrace$. We denote $S_2=\ldblbrace u_1,\ldots,u_{i-1},u_r,u_{i+1},\ldots,u_{2k}\rdblbrace\setminus S_1$. We add $(S_1^\prime,S_2^\prime,\ldots,S_n^\prime)$ into sub-bag $\gamma_T(t_w^\prime)$.
	   \end{enumerate}
   \end{enumerate}
   We can see from the construction of unfolding tree that for all $k$-tuple $\mathbf{u}\in V_G^{2k}$ and $D>0$, $(F_G^{\mathsf{SL}(D)}(\mathbf{u}),T_G^{\mathsf{SL}(D)}(\mathbf{u}))\in\gS^{\mathsf{SL}(2k)}$.
   Given $(F,T^r),(\tilde{F},\tilde{T}^r)\in \gS^{\mathsf{SL}(2k)}$, we define a pair of mapping $(\rho,\tau)$ as an isomorphism from $(F,T^r)$ to $(\tilde{F},\tilde{T}^r)$, denoted by $(F,T)\cong(\tilde{F},\tilde{T}^r)$, if the following hold:
   \begin{enumerate}[topsep=0pt,leftmargin=17pt]
	   \setlength{\itemsep}{0pt}
	   \item $\rho$ is an isomorphism from $F$ to $\tilde{F}$.
	   \item $\tau$ is an isomorphism from $T^r$ to $\tilde{T}^r$ (ignoring $\beta$ and $\gamma$).
	   \item For any $t\in V_{T^r}$, $\rho(\beta_{T^r}(t))=\beta_{\tilde{T}^r}(\tau(t))$, and $\rho(\gamma_{T^r}(t))=\gamma_{\tilde{T}^r}(\tau(t))$.
   \end{enumerate}
   With similar analysis as \cref{thm:equivalence_unfolding_tree_coloring} we obtain that for any $k$-tuple $\mathbf{u}\in V_G^k$ and $\mathbf{v}\in V_H^k$, $\chi_G^{\mathsf{SL}(2k)(D)}(\mathbf{u})=\chi_H^{\mathsf{SL}(2k)(D)}(\mathbf{v})$ if there exists an isomorphism $(\rho,\tau)$ from $(F_G^{\mathsf{SL}(D)}(\mathbf{u}),T_G^{\mathsf{SL}(D)}(\mathbf{u}))$ to $(F_H^{\mathsf{SL}(D)}(\mathbf{v}),T_H^{\mathsf{SL}(D)}(\mathbf{v}))$.
   Given a graph $G$ and $(F,T^r)\in\gS^{\mathsf{SL}(2k)}$, we define 
   \begin{align*}
	   \cnt^{\mathsf{SL}(2k)}((F,T^r),G):=\left|\left\{\mathbf{u}\in V_G^{2k}:\exists D\in\mathbb{N}_{+}s.t.\left(F_G^{\mathsf{SL}(D)}(\mathbf{u}),T_G^{\mathsf{SL}(D)}(\mathbf{u})\right)\cong (F,T^r)\right\}\right|.
   \end{align*}
   Therefore, we can obtain that if $\cnt^{\mathsf{SL}(2k)}((F,T^r),G)=\cnt^{\mathsf{SL}(2k)}((F,T^r),H)$ for all $(F,T^r)\in\gS^{\mathsf{SL}(2k)}$, then $\chi_G^{\mathsf{SL}(2k)}(G)=\chi_H^{\mathsf{SL}(2k)}(H)$.
   Additionally, with similar analysis as \cref{thm:equivalence_cnt_hom} and \cref{thm:equivalence_homomorphism_tree_count}, we can obtain that $\cnt^{\mathsf{SL}(2k)}((F,T^r),G)=\cnt^{\mathsf{SL}(2k)}((F,T^r),H)$ for all $(F,T^r)\in\gS^{\mathsf{SL}(2k)}$ if and only if 
   $\hom((F,T^r),G)=\hom((F,T^r),H)$ for all $(F,T^r)\in\gS^{\mathsf{SL}(2k)}$. Therefore, we can obtain that if $\hom((F,T^r),G)=\hom((F,T^r),H)$ for all $(F,T^r)\in\gS^{\mathsf{SL}(2k)}$, then $\chi_G^{\mathsf{SL}(2k)}(G)=\chi_H^{\mathsf{SL}(2k)}(H)$. Thus, we finish the proof of the lemma.
\end{proof}
\end{lemma}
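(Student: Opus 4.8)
The plan is to transport the Step~1--Step~2 argument behind \cref{thm:main_theorem} to the symmetric-power setting, in steps that each mirror a result already established in the paper. Throughout, $\mA_{G^{\{k\}}}$ denotes the adjacency matrix of the symmetric $k$-th power, and I use the elementary identity that an $n$-step walk in $G^{\{k\}}$ between two $k$-subsets is exactly a length-$n$ $k$-dimensional walk in $G$ between them. The first move is the analogue of \cref{lm:equivalence_walk_spectral}: the local $2k$-GNN with symmetric power $\chi_G^{\mathsf{SL}(2k)}$ computes the same invariant as the variant in which the initialization datum $P_*^S(\ldblbrace u_1,\dots,u_k\rdblbrace,\ldblbrace u_{k+1},\dots,u_{2k}\rdblbrace)$ is replaced by the tuple of $k$-dimensional walk counts in $G$ from $\ldblbrace u_1,\dots,u_k\rdblbrace$ to $\ldblbrace u_{k+1},\dots,u_{2k}\rdblbrace$ of all lengths up to $|V_G|^k$. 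This is the Cayley--Hamilton argument of Step~1 applied to $\mA_{G^{\{k\}}}$ in place of $\mA$: equality of the projection data is equivalent, via the minimal polynomial of $\mA_{G^{\{k\}}}$, to equality of the walk-count vectors, once the symmetric powers are known to be cospectral, and cospectrality follows from the matching traces $\mathrm{tr}(\mA_{G^{\{k\}}}^n)$ (sums of $k$-dimensional closed-walk counts, pinned down by the hypotheses). From here one works with the walk-encoding version.

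Second, I would define the unfolding tree $(F_G^{\mathsf{SL}(D)}(\vu),T_G^{\mathsf{SL}(D)}(\vu))\in\gS^{\mathsf{SL}(2k)}$ of the walk-encoding local $2k$-GNN at a $2k$-tuple $\vu$ to depth $D$: the root bag is $\vu$ (size $2k$), its sub-bag $\gamma_T(r)$ records a fresh $k$-dimensional path for every $k$-dimensional walk in $G$ between the first-$k$ and last-$k$ components of $\vu$ (tracked through a vertex map $\pi$), and one recursively spawns, for each coordinate $i\in[2k]$ and each $w\in V_G$, an odd-depth child with bag of size $2k+1$ and an even-depth grandchild with bag of size $2k$ carrying fresh $k$-dimensional-path sub-bags. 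By induction on $D$ exactly as in \cref{thm:equivalence_unfolding_tree_coloring}, $\chi_G^{\mathsf{SL}(2k),(D)}(\vu)=\chi_H^{\mathsf{SL}(2k),(D)}(\vv)$ iff these depth-$D$ unfolding trees admit a $\beta$- and $\gamma$-preserving isomorphism sending $\vu$ to $\vv$; setting $\cnt^{\mathsf{SL}(2k)}((F,T^r),G)$ to be the number of $2k$-tuples whose depth-$D$ unfolding tree is isomorphic to $(F,T^r)$ for some $D$, this yields (as in \cref{coro:1-WL_spectral_treecount_equivalence_color_refinement}) that $\chi_G^{\mathsf{SL}(2k)}(G)=\chi_H^{\mathsf{SL}(2k)}(H)$ iff $\cnt^{\mathsf{SL}(2k)}((F,T^r),G)=\cnt^{\mathsf{SL}(2k)}((F,T^r),H)$ for all $(F,T^r)\in\gS^{\mathsf{SL}(2k)}$. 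Finally, defining strong homomorphisms between $\gS^{\mathsf{SL}(2k)}$-decomposed graphs (depth- and bag-preserving on the skeleton, homomorphic on each $k$-dimensional-path sub-bag) and their surjective/injective refinements, I would establish $\hom(F,G)=\sum_{(\tilde F,\tilde T^r)}\strhom((F,T^r),(\tilde F,\tilde T^r))\cdot\cnt^{\mathsf{SL}(2k)}((\tilde F,\tilde T^r),G)$ together with the surjective/injective factorization, and invert the triangular matrices exactly as in \cref{thm:equivalence_homomorphism_tree_count}, \cref{lm:homomorphism_property}, and \cref{thm:equivalence_cnt_hom}. Only one direction is needed: equality of $\hom(F,\cdot)$ over all $(F,T^r)\in\gS^{\mathsf{SL}(2k)}$ forces equality of $\cnt^{\mathsf{SL}(2k)}$, which combined with the previous step gives $\chi_G^{\mathsf{SL}(2k)}(G)=\chi_H^{\mathsf{SL}(2k)}(H)$, as claimed.

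I expect the main obstacle to be the combinatorial bookkeeping that makes the last step legitimate rather than any analytic estimate. One must set up $\gS^{\mathsf{SL}(2k)}$-decomposed graphs so that they simultaneously respect the canonical tree-decomposition shape of the local $2k$-GNN (alternating bags of sizes $2k$ and $2k+1$, the inclusion condition across odd/even tree edges, and the ``unique child at odd depth'' constraint) and carry sub-bags that are $k$-dimensional \emph{paths} (so the object lies in $\gS^{\mathsf{SL}(2k)}$) yet are generated from $k$-dimensional \emph{walks} in $G$ (so they faithfully encode $P_*^S$). Checking that, with these sub-bags present, $\strInj$ can only enlarge the skeleton while $\strSurj$ can only shrink it — hence that the associated matrices $\mM^{\strinj}$ and $\mM^{\strsurj}$ are triangular with positive diagonal — is the delicate point on which the matrix inversion in the final step rests, and verifying the $\hom=\sum\strhom\cdot\cnt$ identity in the presence of both bags and $k$-dimensional-path sub-bags is the other place where care is needed.
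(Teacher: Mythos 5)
Your proposal is correct and follows essentially the same route as the paper: define the unfolding tree of the local $2k$-GNN with symmetric power as an object of $\gS^{\mathsf{SL}(2k)}$, prove the color-equality-iff-isomorphic-unfolding-tree correspondence as in \cref{thm:equivalence_unfolding_tree_coloring}, and then transfer from $\cnt^{\mathsf{SL}(2k)}$ to homomorphism counts via the strong-homomorphism factorization and triangular-matrix inversion of \cref{thm:equivalence_homomorphism_tree_count}, \cref{lm:homomorphism_property}, and \cref{thm:equivalence_cnt_hom}. The only (welcome) difference is that you make explicit the Cayley--Hamilton reduction from the projection data $P_*^S$ to $k$-dimensional walk counts, which the paper's proof uses implicitly when its unfolding tree records $k$-dimensional walks rather than projections.
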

\begin{lemma}
\label{lm:homomorphism_equivalence_symmetric_power}
   For all $k\geq 1$, we have $\gS^{\mathsf{L}(2k)}=\gS^{\mathsf{SL}(2k)}$.
   \begin{proof}
	   We can directly see that $\gS^{\mathsf{L}(2k)}\subset\gS^{\mathsf{SL}(2k)}$, so it is sufficed to prove that for all $(F,T^r)\in\gS^{\mathsf{SL}(2k)}$, there exists an alternative tree decomposition $\tilde{T}^r$ such that $(F,\tilde{T}^r)\in\gS^{\mathsf{SL}(2k)}$.
	   We will prove that for $(F,T^r)\in\gS^{\mathsf{SL}(2k)}$,  if $\max_{t\in V_{T^r}}\left|\gamma_T(t)\right|\geq 1$, then we can construct $\tilde{T}^r$ such that $\max_{t\in V_{\tilde{T}^r}}\left|\gamma_{\tilde{T}}(t)\right|<\max_{t\in V_{T^r}}\left|\gamma_T(t)\right|$. 
	   For $(F,T^r)\in\gS^{\mathsf{SL}(2k)}$, let $t=\arg\max_{\tilde{t}\in V_{T^r}}\left|\gamma_T(\tilde{t})\right|$ and suppose $k$-dimensional path $(S_1,S_2,\ldots,S_n)\in\gamma_T(t)$. We apply the following modification to $T^r$ to construct $\tilde{T}^r$:
	   \begin{enumerate}[topsep=0pt,leftmargin=17pt]
		   \setlength{\itemsep}{0pt}
		   \item We construct tree node $t_1,t_2,\ldots,t_{n-1}$ and $\hat{t}_1,\hat{t}_2,\ldots,\hat{t}_{n-1}$ such that $\beta_{\tilde{T}}(t_r)=S_{r+1}\cup S_r\cup S_n$ and $\beta_{\tilde{T}}(\hat{t}_r)=S_{r+1}\cup S_n$ for all $r\in[n-1]$.
		   \item We add $\hat{t}_r$ as the child node of $t_r$ for all $r\in[n-1]$ and add $t_r$ as the child node of $\hat{t}_{r-1}$ for all $r\in\{2,3,\ldots,n-1\}$. Eventually, we add $t_1$ as the child node of $t$.
		   \item We delete $k$-dimensional path from $\gamma_T(t)$ and keep the bags of all $t\in V_T$ and sub-bags of vertices in $V_T\setminus\{t\}$ unchanged. Namely, we assume that $\gamma_{\tilde{T}}(t)=\gamma_{T}(t)\setminus \{(S_1,\ldots,S_n)\}$. Moreover, $\beta_T(t)=\beta_{\tilde{T}}(t)$ for all $t\in V_T$ and $\gamma_T(t)=\gamma_{\tilde{T}}(t)$ for all $V_T\setminus \{t\}$.
	   \end{enumerate}
	   With the procedure above we can obtain $(F,\tilde{T}^r)\in\gS^{\mathsf{SL}(2k)}$ such that $\max_{t\in \tilde{T}^r}|\gamma_{\tilde{T}}(t)|<\max_{t\in T^r}|\gamma_T(t)|$. If we recursively apply this procedure to modify $\tilde{T}^r$, we can eventually obtain $\hat{T}^r$ such that $\max_{t\in \hat{T}^r}|\gamma_{\hat{T}}(t)|=0$. Therefore, $(F,\hat{T}^r)\in\gS^{\mathsf{L}(2k)}$.
	   Eventually, we have proven that for all $(F,T^r)\in\gS^{\mathsf{SL}(2k)}$, there exists an alternative decomposition $\tilde{T}^r$ such that $(F,\tilde{T}^r)\in \gS^{\mathsf{L}(2k)}$. Thus, for all $k\geq 1$, $\gS^{\mathsf{L}(2k)}=\gS^{\mathsf{SL}(2k)}$.
   \end{proof}
\end{lemma}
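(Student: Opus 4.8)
The plan is to prove the two inclusions separately, with essentially all the content residing in the reverse one. The inclusion $\gS^{\mathsf{L}(2k)}\subseteq\gS^{\mathsf{SL}(2k)}$ is immediate: given $(F,T^r)\in\gS^{\mathsf{L}(2k)}$, attach to every even-depth node $t$ the empty sub-bag $\gamma_T(t)=\emptyset$. The result still satisfies \cref{def:canonical_tree_decomposition_high_order} with width $2k$, every odd node still has a unique child, and every vertex of $F$ already lies in some bag, hence a fortiori in some bag-or-sub-bag; so $(F,T^r)\in\gS^{\mathsf{SL}(2k)}$.

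For the reverse inclusion I would show that any $(F,T^r)\in\gS^{\mathsf{SL}(2k)}$ can be rewritten, keeping the graph $F$ literally unchanged, as some $(F,\tilde T^r)\in\gS^{\mathsf{SL}(2k)}$ all of whose sub-bags are empty --- which is exactly an element of $\gS^{\mathsf{L}(2k)}$ (by \cref{def:local_2k_tree_decomposition}). This is achieved by a local surgery that ``unfolds'' a single sub-bag path into a chain of tree nodes. Concretely, pick an even node $t$ with nonempty sub-bag and a \(k\)-dimensional path $(S_1,\dots,S_n)\in\gamma_T(t)$ in $F$ that links the two halves $S_1,S_n$ of the multiset $\beta_T(t)=S_1\cup S_n$. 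Graft below $t$ a path of fresh nodes $t_1,\hat t_1,t_2,\hat t_2,\dots,t_{n-1},\hat t_{n-1}$, with $t_1$ a child of $t$, $\hat t_r$ the unique child of $t_r$, and $t_{r+1}$ a child of $\hat t_r$, where the bags walk along the \(k\)-dimensional path in the usual ``smooth'' way of a width-$2k$ canonical decomposition: starting from $\beta(\hat t_0):=\beta_T(t)=S_1\cup S_n$, one obtains the odd node $\beta(t_r)=\beta(\hat t_{r-1})\cup\{v_r\}$ by a single introduce step (the vertex $v_r$ that enters when passing from $S_r$ to $S_{r+1}$), and then $\beta(\hat t_r)=\beta(t_r)\setminus\{u_r\}=S_{r+1}\cup S_n$ by a single forget step. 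The new even nodes get empty sub-bags; finally replace $\gamma_T(t)$ by $\gamma_T(t)\setminus\{(S_1,\dots,S_n)\}$ and leave everything else untouched.

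I would then check that this operation produces another element of $\gS^{\mathsf{SL}(2k)}$ with the same $F$: only tree nodes referencing vertices and a \(k\)-dimensional path already present in $F$ are added, so $F$ is unchanged; bag cardinalities are $2k$ at even depth and $2k+1$ at odd depth by construction; the containment $\beta(s)\subset\beta(t)$ holds for the new even-parent/odd-child edges $(t,t_1)$ and $(\hat t_r,t_{r+1})$ since those odd bags are the even bag plus one vertex; odd nodes still have a unique child; the tree depth stays even (the chain hangs $2(n-1)$ levels below an even node); and every internal multiset $S_2,\dots,S_{n-1}$, which the deleted sub-bag path used to be responsible for, now appears inside the bags $\beta(\hat t_1),\dots,\beta(\hat t_{n-2})$, so no vertex of $F$ loses coverage. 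To drive the iteration, introduce the monovariant $\Phi(T^r):=\sum_{t\text{ even}}\lvert\gamma_T(t)\rvert\in\mathbb N$; each surgery removes exactly one path from one sub-bag and introduces only odd nodes and even nodes with empty sub-bags, so $\Phi$ strictly decreases, and after finitely many steps $\Phi=0$, giving the desired $(F,\tilde T^r)\in\gS^{\mathsf{L}(2k)}$. Hence $\gS^{\mathsf{SL}(2k)}\subseteq\gS^{\mathsf{L}(2k)}$, and together with the easy inclusion we get equality for all $k\ge 1$.

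The main obstacle is not conceptual but careful multiset bookkeeping in the surgery: a step $S_r\to S_{r+1}$ may re-use a vertex already present, so the ``one introduce, one forget'' moves and the endpoint bags ($S_1\cup S_n$ at the top, $S_n\cup S_n$ at the bottom) must be spelled out with multiplicities rather than as plain sets, and one must confirm at every new node that the width, the depth parity, the even-parent/odd-child containment, and the global vertex-coverage condition of \cref{def:canonical_tree_decomposition_high_order} and \cref{def:local_2k_tree_decomposition} all still hold. This is entirely analogous to the unfolding-of-ears manipulations in \citet{zhang2024beyond}, so I expect it to go through mechanically once the notation is fixed.
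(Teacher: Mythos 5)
Your proposal is correct and follows essentially the same route as the paper: the identical surgery that unfolds a sub-bag path $(S_1,\dots,S_n)$ into a chain of alternating nodes with bags $S_r\cup S_{r+1}\cup S_n$ (odd depth) and $S_{r+1}\cup S_n$ (even depth) hung below $t$, followed by a termination argument (your summed potential $\Phi$ versus the paper's decreasing maximum sub-bag size is an immaterial difference). Your explicit checklist of width, parity, containment, and coverage, and your caution about multiset bookkeeping, are exactly the verifications the paper's proof performs implicitly.
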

Finally, we can finish the proof of \cref{thm:main_theorem_symmetric_power}.
\begin{theorem}
    The Local $2k$-GNN defined in \citet{morris2020weisfeiler,zhang2024beyond} can encode the symmetric \( k \)-th power. Specifically, for given graphs \( G \) and \( H \), if \( G \) and \( H \) have the same representation under Local $2k$-GNN, then \( G^{\{k\}} \) and \( H^{\{k\}} \) have the same representation under the spectral invariant GNN defined in \cref{sec:sepctral_invariant_gnn}.
\end{theorem}
\begin{proof}
   According to \cref{lm:homomorphism_equivalence_symmetric_power}, the homomorphism expressivity of the vanilla Local $2k$-GNN is equivalent to that of the Local $2k$-GNN with symmetric power. Hence, the expressive power of the Local $2k$-GNN is the same as that of the Local $2k$-GNN with symmetric power.
   If there exist graphs $G$ and $H$ such that $\chi_G^{\mathsf{L}(k)}(G) = \chi_H^{\mathsf{L}(k)}(H)$, then it must follow that $\chi_G^{\mathsf{SL}(k)}(G) = \chi_H^{\mathsf{SL}(k)}(H)$. Therefore, we also have $\chi_G^{\mathsf{SL}(k),(0)}(G) = \chi_H^{\mathsf{SL}(k),(0)}(H)$, meaning that the symmetric $k$-th powers of $G$ and $H$ are cospectral.Moreover, it is straightforward that if graphs \( G \) and \( H \) have the same representation under a Local \( 2k \)-GNN with symmetric power, then \( G^{\{k\}} \) and \( H^{\{k\}} \) also have the same representation under the spectral invariant GNN.
 Thus, the proof of the theorem is complete.
\end{proof}

    \end{document}